\documentclass[letterpaper,10pt]{article}

\usepackage[margin=1in]{geometry}
\usepackage[round,compress]{natbib}

\usepackage{parskip}

\usepackage{amsmath}
\usepackage{amsxtra,amsfonts,amscd,amssymb,bm,bbm,mathrsfs}
\usepackage{nicefrac}       %
\usepackage{amsthm,thmtools,thm-restate}
\usepackage{mathtools}
\usepackage{mathabx}
\newcommand\labelAndRemember[2]
  {\expandafter\gdef\csname labeled:#1\endcsname{#2}%
   \label{#1}#2}
\newcommand\recallLabel[1]
   {\csname labeled:#1\endcsname\tag{\ref{#1}}}
\newcommand\labelr[2]
  {\expandafter\gdef\csname labeled:#1\endcsname{#2}%
   \label{#1}#2}
\newcommand\recall[1]
   {\csname labeled:#1\endcsname}

\usepackage{colortbl}

\usepackage[algo2e,linesnumbered,vlined,ruled]{algorithm2e}
\usepackage{algorithm}
\usepackage[noend]{algorithmic}

\usepackage{hyperref,url}
\hypersetup{
  colorlinks,
  citecolor=blue!70!black,
  linkcolor=blue!70!black,
  urlcolor=blue!70!black}
\usepackage[capitalise,noabbrev]{cleveref}
\crefformat{equation}{#2(#1)#3}
\Crefformat{equation}{#2(#1)#3}

\usepackage[titletoc,title]{appendix}
\usepackage{cancel}
\usepackage{enumerate}
\usepackage{enumitem}
\usepackage{comment}

\usepackage{booktabs}       %
\usepackage{multirow}
\usepackage{multicol}
\usepackage{makecell}
\usepackage{array}
\newcolumntype{H}{>{\setbox0=\hbox\bgroup}c<{\egroup}@{}}
\newcolumntype{Z}{>{\setbox0=\hbox\bgroup}c<{\egroup}@{\hspace*{-\tabcolsep}}}
\usepackage{color}
\usepackage{colortbl}
\definecolor{light-gray}{gray}{0.9}
\usepackage{hhline}
\usepackage{tablefootnote}
\usepackage{makecell}

\usepackage[normalem]{ulem}
\usepackage{exscale}
\usepackage{tikz}
\usepackage{float}
\usepackage{graphicx,graphics}
\graphicspath{ {./fig/} }
\allowdisplaybreaks

\usepackage{apptools}

\newtheorem{theorem}{Theorem}
\AtAppendix{\counterwithin{theorem}{section}}
\newtheorem{lemma}[theorem]{Lemma}
\AtAppendix{\counterwithin{lemma}{section}}
\newtheorem{corollary}[theorem]{Corollary}
\AtAppendix{\counterwithin{corollary}{section}}
\newtheorem{proposition}[theorem]{Proposition}
\AtAppendix{\counterwithin{proposition}{section}}

\newtheorem{definition}[theorem]{Definition}
\AtAppendix{\counterwithin{definition}{section}}

\theoremstyle{definition}
\newtheorem{remark}[theorem]{Remark}
\AtAppendix{\counterwithin{remark}{section}}

\AtAppendix{\counterwithin{example}{section}}

\renewcommand{\hat}{\widehat}
\renewcommand{\tilde}{\widetilde}
\renewcommand{\epsilon}{\varepsilon}

\usepackage{pifont}

\newcommand{\id}{I}

\newcommand{\nS}{|\mathcal{S}|}
\newcommand{\nA}{|\mathcal{A}|}

\newcommand{\eps}{\varepsilon}

\newcommand{\II}{\mathbb{I}}
\newcommand{\EE}{\mathbb{E}}
\newcommand{\PP}{\mathbb{P}}
\newcommand{\QQ}{\mathbb{Q}}

\newcounter{cnt}
\setcounter{cnt}{0}
\foreach \num in {1,2,...,26}{%
  \stepcounter{cnt}%
  \expandafter\xdef \csname c\Alph{cnt}\endcsname {\noexpand\mathcal{\Alph{cnt}}}%
  \expandafter\xdef \csname b\Alph{cnt}\endcsname {\noexpand\mathbb{\Alph{cnt}}}%
}

\newcommand{\spa}{\mathrm{span}}

\DeclareMathOperator*{\argmax}{arg\,max}

\newcommand{\diag}{\operatorname{diag}}

\newcommand{\geqsim}{\gtrsim}

\newcommand{\T}{\top}  
\newcommand{\iprod}[2]{\left\langle #1, #2 \right\rangle}
\newcommand{\nrm}[1]{\left\|#1\right\|}

\newcommand{\abs}[1]{\left|#1\right|}

\newcommand{\cond}[2]{\mathbb{E}\left[\left.#1\right|#2\right]}

\newcommand{\bigO}[1]{\mathcal{O}\!\left(#1\right)}
\newcommand{\tbO}[1]{\tilde{\mathcal{O}}\!\left(#1\right)}

\newcommand{\Om}[1]{\Omega\!\left(#1\right)}
\newcommand{\ThO}[1]{\Theta\!\left(#1\right)}
\newcommand{\ceil}[1]{\left\lceil #1\right\rceil}
\newcommand{\floor}[1]{\left\lfloor #1\right\rfloor}
\DeclarePairedDelimiterX{\ddiv}[2]{(}{)}{%
  #1\;\delimsize\|\;#2%
}
\newcommand{\KL}{\operatorname{KL}\ddiv}
\newcommand{\chis}{\chi^2\ddiv}

\newcommand{\clog}{\iota}

\newcommand{\poly}{\operatorname{poly}}

\newcommand{\piout}{\pi^{\mathrm{out}}}

\newcommand{\norm}[1]{\left\|{#1}\right\|} %
\newcommand{\lone}[1]{\norm{#1}_1} %

\renewcommand{\cO}{\mathcal{O}}
\newcommand{\tO}{\widetilde{\cO}}
\newcommand{\wt}{\widetilde}

\newcommand{\otau}{\overline{\tau}}
\renewcommand{\II}{\mathbbm{1}}

\newcommand{\fA}{\mathfrak{A}}
\newcommand{\fB}{\mathfrak{B}}

\newcommand{\bd}{\mathbf{q}}
\newcommand{\unif}{\mathrm{Unif}}
\newcommand{\Unif}{\mathrm{Unif}}

\newcommand{\Bs}{\B^\star}

\newcommand{\bds}{\bd^\star}

\newcommand{\otheta}{{\bar{\theta}}}

\newcommand{\doac}{\mathrm{do}}

\newcommand{\BB}{\mathbf{B}}
\newcommand{\bq}{\mathbf{q}}
\newcommand{\bx}{\mathbf{x}}

\newcommand{\QAh}{\mathcal{U}_{A,h}}

\newcommand{\Uone}{\cU_1}
\newcommand{\Uh}{{\cU_h}}
\newcommand{\Uhp}{{\cU_{h+1}}}

\newcommand{\UAh}{\mathcal{U}_{A,h}}

\newcommand{\oUh}{\overline{\cU}_h}

\newcommand{\nUh}{\abs{\Uh}}

\newcommand{\nUA}{U_A}
\newcommand{\nUAh}{\abs{\UAh}}

\newcommand{\dec}{\operatorname{dec}}
\newcommand{\psc}{\operatorname{psc}}

\newcommand{\odec}{\overline{\operatorname{dec}}}

\newcommand{\dH}{D_{\mathrm{H}}}
\newcommand{\dTV}{D_{\mathrm{TV}}}

\renewcommand{\DH}[1]{D_{\mathrm{H}}^2\left(#1\right)}

\newcommand{\DTV}[1]{D_{\mathrm{TV}}\left(#1\right)}

\newcommand{\co}{\operatorname{co}}

\newcommand{\etod}{\textsc{E2D-TA}}

\newcommand{\mops}{\textsc{MOPS}}
\newcommand{\omle}{\textsc{OMLE}}

\newcommand{\eetod}{\textsc{Explorative E2D}}

\newcommand{\barpi}{\bar \pi}

\newcommand{\M}{\mathbb{M}}
\renewcommand{\O}{\mathbb{O}}
\renewcommand{\T}{\mathbb{T}}

\newcommand{\colspan}[1]{ {\rm colspan}(#1)}

\newcommand{\Test}{\mathfrak{T}}
\newcommand{\dPSR}{d_{\sf PSR}}

\newcommand{\logNt}{\log\cN_{\Theta}}
\newcommand{\Nt}{\cN_{\Theta}}

\newcommand{\nrmpi}[1]{\left\|#1\right\|_{\Pi}}

\newcommand{\tmu}{\tilde{\mu}}
\newcommand{\nrmop}[1]{\left\|#1\right\|_{*}}
\newcommand{\nrmst}[1]{\nrm{#1}_{*}}

\newcommand{\Vs}{V_{\star}}
\newcommand{\ths}{{\theta^{\star}}}

\newcommand{\Bpara}{B-representation}

\newcommand{\tauhm}{\tau_{h-1}}

\newcommand{\thp}{t_{h+1}}

\newcommand{\arev}{\alpha_{\sf rev}}

\newcommand{\nrmpip}[1]{\nrm{#1}_{\Pi'}}

\newcommand{\cBHh}{\cB_{H:h}}
\newcommand{\cBHhp}{\cB_{H:h+1}}

\newcommand{\efunc}{f}

\newcommand{\cEs}{\cE^{\star}}

\newcommand{\stab}{\Lambda_{\sf B}}

\newcommand{\dum}{\rm dum}

\newcommand{\rb}{R_{\sf B}}

\newcommand{\linv}{+}

\renewcommand{\sp}{s_\oplus}
\newcommand{\sq}{s_{\ominus}}

\renewcommand{\ss}{{\star}}
\newcommand{\oba}{{\sf good}}
\newcommand{\obb}{{\sf bad}}

\newcommand{\oN}{\overline{N}}

\newcommand{\as}{\a^{\star}}

\newcommand{\Ac}{\cA_c}

\newcommand{\acs}{a^\star}
\newcommand{\hs}{h^\star}

\newcommand{\Stree}{\cS_{\sf tree}}
\newcommand{\Sl}{\cS_{\sf leaf}}
\renewcommand{\ss}{s^\star}
\newcommand{\aleft}{{\sf left}}
\newcommand{\aright}{{\sf right}}
\newcommand{\await}{{\sf wait}}

\newcommand{\taut}{\tau^{(t)}}

\newcommand{\acrev}{{\sf reveal}}
\newcommand{\rev}{{\sf rev}}

\newcommand{\Arev}{\cA_{\rev}}

\newcommand{\Regret}{\mathrm{\mathbf{Regret}}}
\newcommand{\Reg}{\mathrm{\mathbf{Regret}}}

\newcommand{\corr}{{\sf correct}}

\newcommand{\otaut}{\overline{\tau}^{(t)}}
\newcommand{\oP}{\overline{\PP}}
\newcommand{\oE}{\overline{\EE}}
\newcommand{\oaug}{o^{\sf aug}}
\newcommand{\Ereach}{E_{\sf reach}}
\newcommand{\Erev}{E_{\sf rev}}
\newcommand{\Ecor}{E_{\sf correct}}
\newcommand{\odum}{{\sf dummy}}
\newcommand{\brcond}[2]{\left[\left.#1\right|#2\right]}

\newcommand{\supp}{\operatorname{supp}}
\renewcommand{\nrmop}[1]{\nrm{#1}_{*\to 1}}
\newcommand{\nrmstst}[1]{\nrm{#1}_{*\to *}}

\newcommand{\olock}{{\sf lock}}

\newcommand{\Mhm}{\M_{h,m}}
\newcommand{\Mhone}{\M_{h,1}}
\newcommand{\Mhmp}{\M_{h,m+1}}

\newcommand{\Apass}{\cA_{{\sf code}, \hs}}
\newcommand{\revs}{a_{\rev}^\star}
\newcommand{\Atr}{\cA_{\sf tr}}

\newcommand{\ep}{e_{\oplus}}
\newcommand{\eq}{e_{\ominus}}
\newcommand{\termin}{{\sf terminal}}

\newcommand{\nO}{|\cO|}

\newcommand{\loneone}[1]{\nrm{#1}_{1\to1}}

\newcommand{\IIc}[1]{{\II\paren{#1}}}
\newcommand{\Nc}[1]{N\paren{#1}}

\newcommand{\stoptime}{\mathsf{T}}
\newcommand{\cexit}{\mathsf{exit}}

\def\edec{{\rm edec}}

\renewcommand{\arev}{\alpha}
\newcommand{\arevm}{\alpha_{m}}
\newcommand{\arevmp}{\alpha_{m+1}}
\newcommand{\arevone}{\alpha_{1}}

\newcommand{\Emu}{\EE_{\mu\sim{\sf unif}}}
\newcommand{\Emumu}{\EE_{\mu,\mu'\sim{\sf unif}}}

\newcommand{\mup}{\mu_{\oplus}}
\newcommand{\muq}{\mu_{\ominus}}

\newcommand{\paren}[1]{{\left( #1 \right)}}
\newcommand{\brac}[1]{{\left[ #1 \right]}}
\newcommand{\set}[1]{{\left\{ #1 \right\}}}
\newcommand{\cdbrac}[2]{{\left[\left. #1 \right|#2\right]}}

\newcommand{\defeq}{\mathrel{\mathop:}=}
\newcommand{\vect}[1]{\ensuremath{\mathbf{#1}}}
\newcommand{\mat}[1]{\ensuremath{\mathbf{#1}}}

\newcommand{\rank}{\mathrm{rank}}

\newcommand{\E}{\mathbb{E}}

\renewcommand{\P}{\mathbb{P}}

\newcommand{\Z}{\mathbb{Z}}
\newcommand{\N}{\mathbb{N}}
\newcommand{\R}{\mathbb{R}}

\newcommand{\B}{\mat{B}}

\newcommand{\e}{\vect{e}}

\renewcommand{\a}{\vect{a}}
\renewcommand{\o}{\vect{o}}

\newcommand{\ctO}{\tilde{\mathcal O}}

\newenvironment{proof-sketch}{\noindent{\bf Proof Sketch}
  \hspace*{1em}}{\qed\bigskip\\}
\newenvironment{proof-idea}{\noindent{\bf Proof Idea}
  \hspace*{1em}}{\qed\bigskip\\}
\newenvironment{proof-of-lemma}[1][{}]{\noindent{\bf Proof of Lemma {#1}}
  \hspace*{1em}}{\qed\bigskip\\}
\newenvironment{proof-of-proposition}[1][{}]{\noindent{\bf
    Proof of Proposition {#1}}
  \hspace*{1em}}{\qed\bigskip\\}
\newenvironment{proof-of-theorem}[1][{}]{\noindent{\bf Proof of Theorem {#1}}
  \hspace*{1em}}{\qed\bigskip\\}
\newenvironment{inner-proof}{\noindent{\bf Proof}\hspace{1em}}{
  $\bigtriangledown$\medskip\\}
\newenvironment{proof-attempt}{\noindent{\bf Proof Attempt}
  \hspace*{1em}}{\qed\bigskip\\}

\newenvironment{proofof}[1][{}]{\noindent{\bf Proof of \cref{#1}}
  \hspace*{1em}}{\qed\bigskip\\}

\newenvironment{proof-of}[1][{}]{\noindent{\bf #1}
  \hspace*{1em}}{\qed\bigskip\\}

\usetikzlibrary{matrix,decorations.pathreplacing,calc}

\pgfkeys{tikz/mymatrixenv/.style={decoration=brace,every left delimiter/.style={xshift=3pt},every right delimiter/.style={xshift=-3pt}}}
\pgfkeys{tikz/mymatrix/.style={matrix of math nodes,left delimiter=[,right delimiter={]},inner sep=2pt,column sep=1em,row sep=0.5em,nodes={inner sep=0pt}}}
\pgfkeys{tikz/mymatrixbrace/.style={decorate,thick}}

\title{Lower Bounds for Learning in Revealing POMDPs}

\date{\today}
\author{
  Fan Chen\thanks{Peking University. Email: \texttt{chern@pku.edu.cn}}
  \and
  Huan Wang\thanks{Salesforce Research. Email: \texttt{\{huan.wang,cxiong,yu.bai\}@salesforce.com}}
  \and
  Caiming Xiong\footnotemark[2]
  \and
  Song Mei\thanks{UC Berkeley. Email: \texttt{songmei@berkeley.edu}}
  \and
  Yu Bai\footnotemark[2]
}

\def\shownotes{0}  %
\ifnum\shownotes=1
\newcommand{\authnote}[2]{{\scriptsize $\ll$\textsf{#1 notes: #2}$\gg$}}
\else
\newcommand{\authnote}[2]{}
\fi

\begin{document}
\maketitle

\begin{abstract}
This paper studies the fundamental limits of reinforcement learning (RL) in the challenging \emph{partially observable} setting. While it is well-established that learning in Partially Observable Markov Decision Processes (POMDPs) requires exponentially many samples in the worst case, a surge of recent work shows that polynomial sample complexities are achievable under the \emph{revealing condition}---A natural condition that requires the observables to reveal some information about the unobserved latent states. However, the fundamental limits for learning in revealing POMDPs are much less understood, with existing lower bounds being rather preliminary and having substantial gaps from the current best upper bounds.

We establish strong PAC and regret lower bounds for learning in revealing POMDPs. Our lower bounds scale polynomially in all relevant problem parameters in a multiplicative fashion, and achieve significantly smaller gaps against the current best upper bounds, providing a solid starting point for future studies. In particular, for \emph{multi-step} revealing POMDPs, we show that (1) the latent state-space dependence is at least $\Omega(S^{1.5})$ in the PAC sample complexity, which is notably harder than the $\widetilde{\Theta}(S)$ scaling for fully-observable MDPs; (2) Any polynomial sublinear regret is at least $\Omega(T^{2/3})$, suggesting its fundamental difference from the \emph{single-step} case where $\widetilde{\mathcal{O}}(\sqrt{T})$ regret is achievable. Technically, our hard instance construction adapts techniques in \emph{distribution testing}, which is new to the RL literature and may be of independent interest.

\end{abstract}
\section{Introduction}

\begin{table*}[t]
\renewcommand{\arraystretch}{1.4}
\centering
\caption{ \small A summary of lower bounds and current best upper bounds for learning revealing POMDPs, with our contributions highlighted in gray cells. The rates presented here only focus on the dependence in $S, O, A, \arev^{-1}$, and $T$ (or $\epsilon^{-1}$), and omit ${\rm poly}(H)$ and all polylog factors. We also assume $O \ge \Omega(SA)$ (in our upper bounds) and $A^H \gg \poly(H, S, O, A^m, \arev^{-1}, T)$ to simplify the presentation. For regret lower bounds, we additional ignore the min with $T$ (due to the trivial $O(T)$ regret upper bound). $^*$Obtained by an explore-then-exploit conversion.
}
\vskip0.1cm
{
\small
\label{table:POMDP}
\begin{tabular}{|c|c|c|c|c|}
\hline
\multirow{2}{*}{ \textbf{Problem} } & \multicolumn{2}{c|}{\textbf{PAC sample complexity}} & \multicolumn{2}{c|}{\textbf{Regret}} \\
\cline{2-5}
 & \textbf{Upper bound} & \textbf{Lower bound} & \textbf{Upper bound} & \textbf{Lower bound} \\
\hhline{|-----|}
\rule{0pt}{15pt} \multirow{2}{*}{\makecell{ $1$-step \\ $\arev$-revealing }} & 
 $\ctO\Big(\frac{S^2OA}{\arev^2\epsilon^2}\Big)$ & 
\cellcolor{light-gray} $\Omega\Big(\frac{SO^{1/2}A}{\arev^2\epsilon^2}\Big)$ & \cellcolor{light-gray} 
 $\ctO\Big(\sqrt{\frac{S^2O^2A}{\arev^2}\cdot T}\Big)$  & \cellcolor{light-gray} $\Omega\Big(\sqrt{\frac{SO^{1/2}A}{\arev^2}\cdot T} \Big)$ \\ 
 \arrayrulecolor{light-gray}\cline{3-5}\arrayrulecolor{black}
&\citep{chen2022partially}&\cellcolor{light-gray} 
 (\cref{thm:1-step-demo})&\cellcolor{light-gray}  (\cref{thm:regret-upper})&\cellcolor{light-gray}  (\cref{cor:1-step-regret}) \\
\hline
\rule{0pt}{15pt} \multirow{2}{*}{\makecell{ $m$-step ($m\ge 2$) \\ $\arev$-revealing }} & $\ctO\Big(\frac{S^2OA^m}{\arev^2\epsilon^2}\Big)$ & \cellcolor{light-gray} $\Omega\Big(\frac{(S^{3/2}+SA)O^{1/2}A^{m-1}}{\arev^2\epsilon^2}\Big)$ & $\ctO\Big(\paren{\frac{S^2OA^{m}}{\arev^2}}^{1/3} T^{2/3}\Big)$& \cellcolor{light-gray} $\Omega\Big(\paren{\frac{SO^{1/2}A^{m}}{\arev^2}}^{1/3} T^{2/3} \Big)$\\  \arrayrulecolor{light-gray}\cline{3-3}\cline{5-5}\arrayrulecolor{black}
&\citep{chen2022partially}&\cellcolor{light-gray} (\cref{thm:multi-step-pac-demo})&\citep{chen2022partially}$^*$&\cellcolor{light-gray} (\cref{thm:no-regret-demo})\\
\hline
\end{tabular}
}
\end{table*} 

Partial observability---where the agent can only observe partial information about the true underlying state of the system---is ubiquitous in real-world applications of Reinforcement Learning (RL) and constitutes a central challenge to RL~\citep{kaelbling1998planning,sutton2018reinforcement}. It is known that learning in the standard model of Partially Observable Markov Decision Processes (POMDPs) is much more challenging than its fully observable counterpart---Finding a near-optimal policy in long-horizon POMDPs requires a number of samples at least exponential in the horizon length in the worst-case~\citep{krishnamurthy2016pac}. Such an exponential hardness originates from the fact that the agent may not observe any useful information about the true underlying state of the system, without further restrictions on the structure of the POMDP. This is in stark contrast to learning fully observable (tabular) MDPs where polynomially many samples are necessary and sufficient without further assumptions~\citep{kearns2002near,jaksch2010near,azar2017minimax,jin2018q,zhang2020almost,domingues2021episodic}. 

Towards circumventing this hardness result, recent work seeks additional structural conditions that permit sample-efficient learning. One natural proposal is the \emph{revealing condition}~\citep{jin2020sample,liu2022partially}, which at a high level requires the observables (observations and actions) to reveal some information about the underlying latent state, thus ruling out the aforementioned worst-case situation where the observables are completely uninformative. Concretely, the \emph{single-step} revealing condition~\citep{jin2020sample} requires the (immediate) emission probabilities of the latent states to be well-conditioned, in the sense that different states are probabilistically distinguishable from their emissions. The \emph{multi-step} revealing condition~\citep{liu2022partially} generalizes the single-step case by requiring the well conditioning of the multi-step \emph{emission-action} probabilities---the probabilities of observing a \emph{sequence} of observations in the next $m\ge 2$ steps, conditioned on taking a specific \emph{sequence} of actions at the current latent state.

Sample-efficient algorithms for learning single-step and multi-step revealing POMDPs are initially designed by~\citet{jin2020sample} and~\citet{liu2022partially}, and subsequently developed in a surge of recent work~\citep{cai2022reinforcement,wang2022embed,uehara2022provably,zhan2022pac,chen2022partially,liu2022optimistic,zhong2022posterior}. For finding an $\eps$ near-optimal policy in $m$-step revealing POMDPs, these results obtain PAC sample complexities (required episodes of play) that scale polynomially with the number of states, observations, action sequences (of length $m$), the horizon, $(1/\arev)$ where $\arev>0$ is the \emph{revealing constant}, and $(1/\eps)$, with the current best rate given by~\citet{chen2022partially}.

Despite this progress, the fundamental limit for learning in revealing POMDPs remains rather poorly understood. First, lower bounds for revealing POMDPs are currently scarce, with existing lower bounds either being rather preliminary in its rates~\citep{liu2022partially}, or following by direct reduction from fully observable settings, which does not exhibit the challenge of partial observability (cf.~\cref{section:known} for detailed discussions). Such lower bounds leave open many fundamental questions, such as the dependence on $\arev$ in the optimal PAC sample complexity: the current best lower bound scales in $\arev^{-1}$ while the current best upper bound requires $\arev^{-2}$. Second, the current best upper bounds for learning revealing POMDPs are mostly obtained by general-purpose algorithms not specially tailored to POMDPs~\citep{chen2022partially,liu2022optimistic,zhong2022posterior}. These algorithms admit unified analysis frameworks for a large number of RL problems including revealing POMDPs, and it is unclear whether these analyses (and the resulting upper bounds) unveil fundamental limits of revealing POMDPs.

This paper establishes strong sample complexity lower bounds for learning revealing POMDPs. Our contributions can be summarized as follows.
\begin{itemize}[leftmargin=1em, topsep=0pt, itemsep=0pt]
\item We establish PAC lower bounds for learning both single-step (\cref{section:1-step-pac}) and multi-step (\cref{section:m-step-pac}) revealing POMDPs. Our lower bounds are the first to scale with all relevant problem parameters in a multiplicative fashion, and settles several open questions about the fundamental limits for learning revealing POMDPs. 
Notably, our PAC lower bound for the multi-step case scales as $\Omega(S^{1.5})$, where $S$ is the size of the latent state-space, which is notably harder than fully observable MDPs where $\wt{\Theta}(S)$ is the minimax optimal scaling.
Further, our lower bounds exhibit rather mild gaps from the current best upper bounds, which could serve as a starting point for further fine-grained studies.
\item We establish regret lower bounds for the same settings. Perhaps surprisingly, we show an $\Omega(T^{2/3})$ regret lower bound for multi-step revealing POMDPs (\cref{section:regret}). Our construction unveils some new insights about the multi-step case, and suggests its fundamental difference from the single-step case in which $\tO(\sqrt{T})$ regret is achievable.
\item Technically, our lower bounds are obtained by embedding \emph{uniformity testing} problems into revealing POMDPs, in particular into an \emph{$m$-step revealing combination lock} which is the core of our hard instance constructions (\cref{section:proof-overview}). The proof further uses information-theoretic techniques such as Ingster's method for bounding certain divergences, which are new to the RL literature.
\item We discuss some additional interesting implications to RL theory in general, in particular to the Decision-Estimation Coefficients (DEC) framework (\cref{section:dec-implications}).
\end{itemize}

We illustrate our main results against the current best upper bounds in~\cref{table:POMDP}.

\subsection{Related work}

\paragraph{Hardness of learning general POMDPs}
It is well-established that learning a near-optimal policy in POMDPs is computationally hard in the worst case \cite{papadimitriou1987complexity, mossel2005learning}. %
With regard to learning, \citet{krishnamurthy2016pac, jin2020sample} used the combination lock hard instance to show that learning episodic POMDPs requires a sample size at least exponential in the horizon $H$. \citet{kearns1999approximate, even2005reinforcement} developed algorithms for learning episodic POMDPs that admit sample complexity scaling with $A^H$. A similar sample complexity can also be obtained by bounding the Bellman rank~\citep{jiang2017contextual,du2021bilinear,jin2021bellman} or coverability~\citep{xie2022role}.

\paragraph{Revealing POMDPs}
\citet{jin2020sample} proposed the single-step revealing condition in under-complete POMDPs and
showed that it is a sufficient condition for sample-efficient learning of POMDPs by designing a spectral type learning algorithm. \citet{liu2022partially, liu2022sample} proposed the multi-step revealing condition to the over-complete POMDPs and developed the optimistic maximum likelihood estimation (OMLE) algorithm for efficient learning. \citet{cai2022reinforcement, wang2022embed} extended these results to efficient learning of linear POMDPs under variants of the revealing condition. \citet{golowich2022planning, golowich2022learning} showed that approximate planning under the observable condition, a variant of the revealing condition, admits quasi-polynomial time algorithms. 

The only existing lower bound for learning revealing POMDPs is provided by \citet{liu2022partially}, which modified the combination lock hard instance \cite{krishnamurthy2016pac} to construct an $m$-step $1$-revealing POMDP and show an $\Omega(A^{m-1})$ sample complexity lower bound for learning a $1/2$-optimal policy. Our lower bound improves substantially over theirs using a much more sophisticated hard instance construction that integrates the combination lock with the tree hard instance for learning MDPs \citep{domingues2021episodic} and the hard instance for uniformity testing \citep{paninski2008coincidence, canonne2020survey}. Similar to the lower bound for uniformity testing, the proof of our lower bound builds on Ingster's method~\citep{ingster2012nonparametric}.

\paragraph{Other structural conditions}
Other conditions that enable sample-efficient learning of POMDPs include reactiveness~\citep{jiang2017contextual}, decodablity~\citep{efroni2022provable}, structured latent MDPs~\citep{kwon2021rl}, learning short-memory policies~\citep{uehara2022provably}, deterministic transitions~\citep{uehara2022computationally}, and regular predictive state representations (PSRs)~\citep{zhan2022pac}. \citet{chen2022partially, liu2022optimistic, zhong2022posterior} propose unified structural conditions for PSRs, which encompasses most existing tractable classes including revealing POMDPs, decodable POMDPs, and regular PSRs.

\section{Preliminaries}
\label{section:prelim}

\paragraph{POMDPs} An episodic Partially Observable Markov Decision Process (POMDP) is specified by a tuple $M=\{H,\cS,\cO,\cA,\{\T_h\}_{h \in [H]},\{\O_h\}_{h \in [H]},\{r_{h}\}_{h \in [H]},\mu_1 \}$, where 
$H\in\Z_{\ge 1}$ is the horizon length; 
$(\cS,\cO,\cA)$ are the spaces of (latent) states, observations, and actions with cardinality $(S,O,A)$ respectively; 
$\O_h(\cdot|\cdot):\cS\to\Delta(\cO)$ is the emission dynamics at step $h$ (which we identify as an emission matrix $\O_h\in\R^{\cO\times \cS}$); 
$\T_h(\cdot|\cdot,\cdot):\cS\times\cA\to\Delta(\cS)$ is the transition dynamics over the latent states (which we identify as a transition matrix $\T_h\in\R^{\cS\times (\cS\times\cA)}$); 
$r_h(\cdot,\cdot):\cO\times\cA\to[0,1]$ is the (possibly random) reward function; 
$\mu_1=\T_0(\cdot)\in\Delta(\cS)$ specifies the distribution of initial state. 
At each step $h\in[H]$, given latent state $s_h$ (which the agent does not observe), the system emits observation $o_h\sim \O_h(\cdot|s_h)$, receives action $a_h\in\cA$ from the agent, emits reward $r_h(o_h,a_h)$, and then transits to the next latent state $s_{h+1}\sim \T_h(\cdot|s_h, a_h)$ in a Markovian fashion. 

We use $\tau=(o_1,a_1,\dots,o_H,a_H)=(o_{1:H}, a_{1:H})$ to denote a full history of observations and actions observed by the agent, and $\tau_h=(o_{1:h}, a_{1:h})$ to denote a partial history up to step $h\in[H]$. A policy is given by a collection of distributions over actions $\pi=\set{\pi_h(\cdot|\tau_{h-1},o_h)\in\Delta(\cA)}_{h,\tau_{h-1},o_h}$, where $\pi_h(\cdot | \tau_{h-1},o_h)$ specifies the distribution of $a_h$ given the history $(\tau_{h-1},o_h)$. We denote $\Pi$ as the set of all policies. The value function of any policy $\pi$ is denoted as $V_M(\pi)=\E_M^\pi[\sum_{h=1}^H r_h(o_h, a_h)]$, where $\E_M^\pi$ specifies the law of $(o_{1:H}, a_{1:H})$ under model $M$ and policy $\pi$. The optimal value function of model $M$ is denoted as $V^\star_M=\max_{\pi\in\Pi} V_M(\pi)$. Without loss of generality, we assume that the total rewards are bounded by one, i.e. $\sum_{h\in[H]} r_h(o_h, a_h)\le 1$ for any $(o_{1:H},a_{1:H})\in(\cO\times\cA)^H$.

\paragraph{Learning goals}
We consider learning POMDPs from bandit feedback (exploration setting) where the agent plays with a fixed (unknown) POMDP model $M$ for $T \in \N_+$ episodes. In each episode, the agent plays some policy $\pi^{(t)}$, and observes the trajectory $\tau^{(t)}$ and the rewards $r^{(t)}_{1:H}$. 

We consider the two standard learning goals of PAC learning and no-regret learning. In PAC learning, the goal is to output a near-optimal policy $\hat{\pi}$ so that $V^\star_M - V_M(\hat{\pi})\le \eps$ within as few episodes of play as possible. In no-regret learning, the goal is to minimize the regret
\begin{align*}
\textstyle
\Reg(T) \defeq \sum_{t=1}^T \paren{V^\star_M - V_M\paren{\pi^{(t)}}},
\end{align*}
and an algorithm is called no-regret if $\Reg(T)=o(T)$ is sublinear in $T$. It is known that no-regret learning is no easier than PAC learning, as any no-regret algorithm can be turned to a PAC learning algorithm by the standard online-to-batch conversion (e.g.~\citet{jin2018q}) that outputs the average policy $\hat{\pi}\defeq \frac{1}{T}\sum_{t=1}^T \pi^{(t)}$ after $T$ episodes of play.

\subsection{Revealing POMDPs}

We consider revealing POMDPs~\citep{jin2020sample,liu2022partially}, a structured subclass of POMDPs that is known to be sample-efficiently learnable.
For any $m\ge 1$, define the \emph{$m$-step emission-action matrix} $\Mhm\in\R^{\cO^m\cA^{m-1}\times \cS}$ of a POMDP $M$ at step $h\in[H-m+1]$ as
\begin{align}
    [\Mhm]_{(\o,\a), s}
    \defeq  \P_M(o_{h:h+m-1} = \o | s_h = s, a_{h:h+m-2} = \a). \label{eqn:def-m-emi}
\end{align}
In the special case where $m=1$ (the \emph{single-step} case), we have $\Mhone=\O_h \in \R^{\cO \times \cS}$, i.e. the emission-action matrix reduces to the emission matrix. For $m\ge 2$, the $m$-step emission-action matrix $\Mhm$ generalizes the emission matrix by encoding the \emph{emission-action probabilities}, i.e. probabilities of observing any observation sequence $\o\in\cO^m$, starting from any latent state $s\in\cS$ and taking any action sequence $\a\in\cA^{m-1}$ in the next $m-1$ steps.

A POMDP is called $m$-step revealing if its emission-action matrices $\{\Mhm\}_{h \in [H - m + 1]}$ admit \emph{generalized left inverses} with bounded operator norm.

\begin{definition}[$m$-step $\arev$-revealing POMDPs]
\label{definition:m-step-revealing}
For $m\ge 1$ and $\arev>0$, a POMDP model $M$ is called \emph{$m$-step revealing}, if there exists matrices $\Mhm^+\in\R^{\cS\times\cO^m\cA^{m-1}}$ satisfying $\Mhm^+\Mhm \T_{h-1}=\T_{h-1}$ (generalized left inverse of $\Mhm$) for any $h\in[H-m+1]$. 
Furthermore, the POMDP model $M$ is called $m$-step $\alpha$-revealing if each $\Mhm^+$ further admits $(*\to 1)$-operator norm bounded by $\alpha^{-1}$:
\begin{align}\label{eqn:alpha-revealing}
    \| \Mhm^+ \|_{* \to 1} \defeq \max_{\nrmst{\bx}\le 1} \| \Mhm^+\bx \|_1 \le \arev^{-1},
\end{align}
where for any vector $\bx=(\bx(\o,\a))_{\o\in\cO^m,\a\in\cA^{m-1}}$, we denote its star-norm by
\[\textstyle
\nrmst{\bx}\defeq \Big[\sum_{\a\in\cA^{m-1}} \Big(\sum_{\o\in\cO^m} \abs{\bx(\o,\a)} \Big)^2 \Big]^{1/2}. 
\]
Let $\arevm(M)$---the \emph{$m$-step revealing constant} of model $M$---denote the maximum possible $\arev>0$ such that \cref{eqn:alpha-revealing} holds, so that $M$ is $m$-step $\arev$-revealing iff $\arevm(M) \ge \arev$.
\end{definition}

In~\cref{definition:m-step-revealing}, the existence of a generalized left inverse requires the matrix $\Mhm$ to have full rank in the column space of $\T_{h-1}$, 
which ensures that different states reachable from the previous step are information-theoretically distinguishable from the next $m$ observations and $m-1$ actions. The revealing condition---as a quantitative version of this full rank condition---ensures that states can be probabilistically ``revealed'' from the observables, and enables sample-efficient learning~\citep{liu2022partially}.

The choice of the particular norm in~\cref{eqn:alpha-revealing} is not important when only polynomial learnability is of consideration, due to the equivalence between norms. Our choice of the $(*\to 1)$-norm is different from existing work~\citep{liu2022partially,liu2022optimistic,chen2022partially}; however, it enables a tighter gap between our lower bounds and existing upper bounds.

\paragraph{Single-step vs. multi-step}
We highlight that when $m=1$, the emission-action matrix $\Mhone=\O_h$ does not involve the effect of actions. This turns out to make it qualitatively different from the \emph{multi-step} cases where $m\ge 2$, which will be reflected in our results. 

Additionally, we show that any $m$-step $\arev$-revealing POMDP is also $(m+1)$-step $\arev$-revealing, but not vice versa (proof in~\cref{appendix:proof-m-step-mp1-step}; this result is intuitive yet we were unable to find it in the literature). Therefore, as $m$ increases, the class of $m$-step revealing POMDPs becomes strictly larger and thus no easier to learn.
\begin{proposition}[$m$-step revealing $\subsetneq$ $(m+1)$-step revealing]
\label{prop:m-step-mp1-step}
For any $m\ge 1$ and any POMDP $M$ with horizon $H\ge m+1$, we have $\arevmp(M)\ge \arevm(M)$. Consequently, any $m$-step $\arev$-revealing POMDP is also an $(m+1)$-step $\arev$-revealing POMDP. Conversely, there exists an $(m+1)$-step revealing POMDP that is not an $m$-step revealing POMDP.
\end{proposition}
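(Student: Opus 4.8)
The plan is to prove the two directions separately. For the forward (monotonicity) direction, I want to show $\arevmp(M)\ge\arevm(M)$ by constructing an explicit generalized left inverse of $\Mhmp$ out of a given one for $\Mhm$. The key observation is that the $(m+1)$-step emission-action matrix "contains" the $m$-step one: conditioning on $s_h=s$ and $a_{h:h+m-1}=\a'$ (a length-$m$ action sequence), the probability of seeing $o_{h:h+m}$ can be marginalized over the last observation $o_{h+m}$ to recover $\P_M(o_{h:h+m-1}=\o\mid s_h=s, a_{h:h+m-2}=\a)$, i.e. a fixed sub-block of $\Mhmp$ (after marginalizing $o_{h+m}$ and fixing the extra action $a_{h+m-1}$ to some default value $\mathsf{a}_\circ$) equals $\Mhm$. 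Concretely, there is a linear "marginalize-and-select" map $\Phi\colon\R^{\cO^{m+1}\cA^m}\to\R^{\cO^m\cA^{m-1}}$ with $\Phi\,\Mhmp=\Mhm$. Then $\Mhmp^+:=\Mhm^+\Phi$ satisfies $\Mhmp^+\Mhmp\,\T_{h-1}=\Mhm^+\Mhm\,\T_{h-1}=\T_{h-1}$, so it is a valid generalized left inverse; and I need $\nrmop{\Mhmp^+}\le\nrmop{\Mhm^+}$, which will follow provided $\Phi$ is a contraction from the star-norm on $\R^{\cO^{m+1}\cA^m}$ to the star-norm on $\R^{\cO^m\cA^{m-1}}$. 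Checking $\|\Phi\bx\|_{\star}\le\|\bx\|_{\star}$ is the one genuine computation: for each fixed $\a\in\cA^{m-1}$ the inner $\ell_1$-sum of $\Phi\bx$ over $\o\in\cO^m$ is $\sum_{\o}|\sum_{o_{h+m}}\bx(\o,o_{h+m},(\a,\mathsf{a}_\circ))|\le \sum_{\o,o_{h+m}}|\bx(\o,o_{h+m},(\a,\mathsf{a}_\circ))|$, which is the inner sum of $\bx$ for the single action sequence $(\a,\mathsf{a}_\circ)$; squaring and summing over $\a$ then gives at most $\sum_{\a'\in\cA^m}(\sum_{\o'}|\bx(\o',\a')|)^2=\|\bx\|_{\star}^2$. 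Hence $\arevmp(M)\ge\arevm(M)$, which immediately yields the "$m$-step $\arev$-revealing $\Rightarrow$ $(m+1)$-step $\arev$-revealing" inclusion.

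For the converse (strictness), I need to exhibit, for each $m\ge1$, a POMDP that is $(m+1)$-step revealing but not $m$-step revealing — i.e. its $m$-step emission-action matrix $\Mhm$ fails to have a generalized left inverse relative to $\colspan{\T_{h-1}}$ at some step $h$, while the $(m+1)$-step matrix does. The cleanest route is a small deterministic/near-deterministic construction: take $\cS$ with two states that are indistinguishable over any $m$ observations regardless of the $m-1$ actions played, but become distinguishable once an $(m+1)$-st observation (hence an extra action) is available. For instance, place the two states at step $h$, make their emissions identical for the next $m$ steps under every action sequence (so the two relevant columns of $\Mhm$ coincide, both states being reachable from step $h-1$, forcing rank-deficiency on $\colspan{\T_{h-1}}$ and ruling out any generalized left inverse), but arrange the transition after $m-1$ actions so that a suitable $m$-th action routes the two states to states with distinct emissions at step $h+m$ — making the two columns of $\Mhmp$ distinct, and in fact making $\Mhmp$ full-rank on $\colspan{\T_{h-1}}$ with a quantitatively bounded inverse. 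One then pads with extra states/observations so the construction has a well-defined horizon $H\ge m+1$ and all other steps are trivially revealing (e.g. by making every other emission matrix the identity, or injective). A concrete realization is a "delayed combination lock": I expect a two-state gadget with $O=2$ observations suffices, with the discriminating observation deferred to the $(m+1)$-th step.

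The main obstacle is the converse direction: one must be careful that the counterexample is genuinely a valid POMDP of the stated horizon with $\Mhmp$ admitting a \emph{generalized} left inverse (left-invertible only on $\colspan{\T_{h-1}}$, not necessarily on all of $\R^{\cS}$) of finite $(*\to1)$-norm at \emph{every} step $h\in[H-m]$, while $\Mhm$ fails to have one at \emph{some} step $h\in[H-m+1]$. Getting the reachability bookkeeping right — that both offending states lie in $\colspan{\T_{h-1}}$ so that the rank deficiency cannot be "explained away" by $\T_{h-1}$, yet the $(m+1)$-step matrix recovers full rank there — is the delicate part; everything else (the contraction estimate for $\Phi$, padding the other steps) is routine. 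I would handle it by writing the gadget over a minimal state/observation alphabet and verifying the two emission-action matrices at the critical step by direct inspection, then invoking the forward direction's contraction lemma on the padded steps to certify overall $(m+1)$-step revealing-ness.
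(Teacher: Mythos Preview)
Your proposal is correct and essentially identical to the paper's proof: your marginalize-and-select map $\Phi$ is exactly the paper's operator $F_{\wt a}$ (fix a default last action, sum out the last observation), and the star-norm contraction $\|\Phi\bx\|_{\star}\le\|\bx\|_{\star}$ is verified by the same inequality chain; for the converse, the paper likewise uses two reachable lock states $\sp,\sq$ emitting identical observations for $m$ steps but distinguishable after an $(m{+}1)$-st step, pointing to its hard-instance construction rather than a standalone gadget.
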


\subsection{Known upper and lower bounds}
\label{section:known}

\paragraph{Upper bounds}
Learning revealing POMDPs is known to admit polynomial sample complexity upper bounds~\citep{liu2022partially,liu2022optimistic,chen2022partially}. The current best PAC sample complexity for learning revealing POMDPs is given in the following result, which follows directly by adapting the results of~\citet{chen2022partially,chen2022unified} to our definition of the revealing condition (cf.~\cref{appendix:proof-PAC-upper}).
\begin{theorem}[PAC upper bound for revealing POMDPs~\citep{chen2022partially}]
\label{thm:PAC-upper}
There exists algorithms (\omle{}, \eetod{} \& \mops{}) that can find an $\epsilon$-optimal policy of any $m$-step $\arev$-revealing POMDP w.h.p. within 
\begin{align}
\label{eqn:PAC-upper}
    T\le \tbO{\frac{S^2OA^m(1+SA/O)H^3}{\arev^2\epsilon^2}} 
\end{align}
episodes of play.
\end{theorem}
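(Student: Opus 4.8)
The plan is to prove Theorem~\ref{thm:PAC-upper} by invoking the unified PAC guarantees from \citet{chen2022partially,chen2022unified} and verifying that $m$-step $\arev$-revealing POMDPs (under the $(*\to 1)$-norm revealing condition of \cref{definition:m-step-revealing}) instantiate the abstract structural assumptions those results require. The two ingredients I would assemble are: (i) a bound on the relevant complexity measure (the generalized eluder / PSR-rank type quantity, or equivalently the bound controlling the decision-estimation coefficient) in terms of $S,O,A,m,\arev$; and (ii) a bound on the statistical estimation error of the model class, typically via a log-covering number $\log\Nt$ of the POMDP family. The final sample complexity \cref{eqn:PAC-upper} should then follow by plugging these two quantities into the master bound of the cited framework.

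\textbf{First I would} recall the structural reduction: any $m$-step $\arev$-revealing POMDP is a PSR (predictive state representation) whose core test set can be taken to be the length-$m$ emission-action futures, with PSR rank at most $S$. The revealing condition \cref{eqn:alpha-revealing} supplies exactly the well-conditioning needed to translate one-step Hellinger/TV error of the \emph{observable} trajectory distribution back into error on the latent-state beliefs: concretely, the generalized left inverse $\Mhm^+$ with $\|\Mhm^+\|_{*\to1}\le\arev^{-1}$ converts an error measured in the observation-action sequence distribution (the star-norm, which is the natural norm dual to the $\ell_1$ control afforded by the policy's branching over action sequences) into an $\ell_1$ error on the state distribution, losing only a factor $\arev^{-1}$. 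This is the step that manufactures the $\arev^{-2}$ dependence once squared inside the DEC/eluder bound. I would state this as a norm inequality and defer its routine verification to \cref{appendix:proof-PAC-upper}, emphasizing that our $(*\to1)$-norm choice is what makes the dimensional dependence come out as stated rather than larger.

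\textbf{Next I would} bound the two complexity quantities. The coverability / eluder-type term contributes the $S^2OA^m(1+SA/O)$ factor: the $A^m$ comes from the size of the length-$m$ action-sequence test set, one factor of $S$ from the PSR rank, and the remaining $S O (1+SA/O) = SO + S^2 A$ from covering the emission and transition parameters at the resolution dictated by $\arev$ and $\eps$. The $H^3$ factor arises from summing per-step errors across the horizon together with the standard value-range normalization (recall we assume total reward bounded by one). I would present these as two displayed bounds,
\begin{align*}
\text{(complexity)} \;\lesssim\; S\cdot OA^m(1+SA/O)\cdot\poly(H),\qquad
\log\Nt \;\lesssim\; S(O+SA)\cdot\polylog,
\end{align*}
and then state that the master theorem of \citet{chen2022unified} combines them, with the $\arev^{-2}\eps^{-2}$ scaling entering through the DEC bound for revealing PSRs, to yield \cref{eqn:PAC-upper}.

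\textbf{The hard part will be} the bookkeeping in the reduction from the abstract framework's hypotheses to our specific definitions: the cited works use a different normalization of the revealing constant (they work with a $2\to1$ or Frobenius-type operator norm rather than our $*\to1$ norm), so I must carefully track how the change of norm in \cref{definition:m-step-revealing} propagates through their DEC bound. This is exactly where the claimed tighter gap comes from, so it cannot be black-boxed; I would isolate it as a single lemma comparing $\|\cdot\|_{*\to1}$ to the norm used upstream on the space $\R^{\cS\times\cO^m\cA^{m-1}}$, show the comparison factor is dimension-independent (or absorbs cleanly into the stated powers of $S,O,A$), and then the remainder of the argument is a direct citation. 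All genuinely new content is confined to this norm-comparison lemma; everything else is assembly of \citet{chen2022partially,chen2022unified}.
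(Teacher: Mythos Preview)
Your high-level plan is the same as the paper's: reduce to the B-stable PSR framework of \citet{chen2022partially}, show $m$-step $\arev$-revealing POMDPs are $\stab$-stable with $\stab\le\arev^{-1}$ and PSR rank $\le S$, then plug $\stab^2=\arev^{-2}$, $d=S$, $AU_A=A^m$, and $\log\cN_\Theta=\tbO{H(S^2A+SO)}$ into the master bound $T\le\tbO{\stab^2 d A U_A H^2\log\cN_\Theta/\eps^2}$.

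Two corrections. First, your factor accounting is tangled: the $SO(1+SA/O)=S^2A+SO$ factor comes \emph{entirely} from the log covering number of the tabular POMDP class (emission parameters contribute $SO$, transition parameters $S^2A$, times $H$ for the horizon). It does not sit inside the ``coverability/eluder'' term; that term contributes only $\stab^2 d A U_A H^2=\arev^{-2}SA^m H^2$. Keeping these separate makes the final product $\arev^{-2}\cdot S\cdot A^m\cdot H^2\cdot H(S^2A+SO)/\eps^2$ transparent.

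Second, and more importantly, the ``hard part'' you anticipate---a norm-comparison lemma between $\|\cdot\|_{*\to1}$ and whatever norm the cited framework uses---does not exist. The B-stability condition in \citet{chen2022partially} is already phrased in terms of the star-norm $\nrmst{\cdot}$ (see \cref{def:B-stable}), and their Lemma~B.13 gives $\nrmpi{\cB_{H:h}x}\le\max\{\lone{\M_h^+x},\nrmpip{x}\}$ directly. The only remaining step is $\lone{\M_h^+x}\le\arev^{-1}\nrmst{x}$, which is the \emph{definition} of $\|\M_h^+\|_{*\to1}\le\arev^{-1}$. So the $(*\to1)$-norm was chosen precisely to make this a one-line plug-in, not to create a new lemma; there is no dimension-dependent comparison factor to track. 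The entire proof is \cref{prop:rev-to-psr} (a direct citation) followed by arithmetic.
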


\paragraph{Lower bounds}
Existing lower bounds for learning revealing POMDPs are scarce and preliminary. The only existing PAC lower bound %
for $m$-step $\alpha$-revealing POMDPs is 
\[\textstyle
\Omega(\min\set{\frac{1}{\arev H}, A^{H-1}} +A^{m-1})
\]
given by~\citet[Theorem 6 \& 9]{liu2022partially} for learning an $\eps=\Theta(1)$-optimal policy, which does not scale with either the model parameters $S,O$ or $(1/\eps)$ for small $\eps$.

In addition, revealing POMDPs subsume two fully observable models as special cases: (fully observable) MDPs with $H$ steps, $\min\set{S,O}$ states, and $A$ actions (with $\arev=1$); and contextual bandits with $O$ contexts and $A$ actions. By standard PAC lower bounds~\citep{dann2015sample,lattimore2020bandit,domingues2021episodic} in both settings\footnote{With total reward scaled to $[0,1]$.}, this implies an
\begin{align*}
\Omega\paren{ (H\min\set{S,O}A + OA)/\eps^2 }
\end{align*}
PAC lower bound for $m$-step $\arev$-revealing POMDPs for any $m\ge 1$ and $\arev\le 1$. 

Both lower bounds above exhibit substantial gaps from the upper bound~\eqref{eqn:PAC-upper}. Indeed, the upper bound scales \emph{multiplicatively} in $S,A^m,O,\arev^{-1}$ and $1/\eps^2$, whereas the lower bounds combined are far smaller than this multiplicative scaling.

\section{PAC lower bounds}

We establish PAC lower bounds for both single-step (\cref{section:1-step-pac}) and multi-step (\cref{section:m-step-pac}) revealing POMDPs. We first state and discuss our results, and then provide a proof overview for the multi-step case in~\cref{section:proof-overview}.

\subsection{Single-step revealing POMDPs}
\label{section:1-step-pac}

We begin by establishing the PAC lower bound for the single-step case. The proof can be found in~\cref{appdx:1-step-pac}.

\begin{theorem}[PAC lower bound for single-step revealing POMDPs]
\label{thm:1-step-demo}
For any $O\geq S\geq 5$, $A\geq 3$, $H\geq 4\log_2 S$, $\arev\in(0,\frac1{5H}]$, $\epsilon\in(0,0.01]$, there exists a family $\cM$ of single-step revealing POMDPs with $\nS\leq S$, $\nO\leq O$, $\nA=A$, and $\arevone(M)\geq \arev$ for all $M\in\cM$, such that for any algorithm $\fA$ that interacts with the environment for $T$ episodes and returns a $\piout$ such that 
$V_M^\star-V_M(\piout)<\epsilon$ with probability at least $3/4$ for all $M\in\cM$, we must have
\begin{align}
\label{eqn:1-step-pac}
    T\geq c \cdot \min\set{
        \frac{SO^{1/2}AH}{\arev^2\epsilon^2}, \frac{SA^{H/2}H}{\epsilon^2}
    },
\end{align}
where $c>0$ is an absolute constant. 
\end{theorem}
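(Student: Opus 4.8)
The plan is to build the hard family $\cM$ as an embedding of the \emph{uniformity testing} problem into a revealing ``combination lock'' POMDP. Concretely, I would lay out $H$ steps where the first $\Theta(\log_2 S)$ steps form a binary tree whose leaves index $S$ distinct latent ``branches'' (this is the tree hard instance of \citet{domingues2021episodic}, which forces the $S$-dependence and lets the agent visit any single branch only with probability $\approx 1/S$ per episode). At the leaf of each branch we attach a short gadget in which the POMDP, depending on whether we are in the ``null'' instance or in one of the ``alternative'' instances indexed by a perturbation vector, emits observations according to either the uniform distribution on (a block of) $\cO$ or a $\pm$-perturbed distribution; distinguishing these is exactly uniformity testing over a domain of size $\approx O$, which has sample complexity $\Theta(\sqrt{O}/\eps^2)$ (rescaled appropriately). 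The reward is placed so that acting optimally requires identifying, on each of the $S$ branches, which of the perturbed alternatives (or the null) is in force — and because a reward bonus of size $\approx\eps$ can be hidden behind a single correct action at each branch, an $\eps$-optimal policy must ``solve'' a constant fraction of the $S$ testing sub-problems. Multiplying the per-branch testing cost $\sqrt{O}/\eps^2$ by the $S$ branches and by the $1/S$ visitation dilution (which cancels one $S$ but the reward-granularity argument reinstates an $S$), together with an extra $A$ from an action-guessing component and an $H$ from reward rescaling, should yield the target $SO^{1/2}AH/(\arev^2\eps^2)$; the $\arev^{-2}$ enters because the emission matrix must be made $\arev$-revealing, which forces the perturbations (and hence the signal) to be scaled down by $\arev$. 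The second term $SA^{H/2}/\eps^2$ in the min is the ``trivial'' regime: if the horizon is too short the tree cannot have $S$ leaves, so one instead gets the combination-lock-style $A^{H/2}$ bound.

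The execution would proceed in these steps. First, I would specify the POMDP family precisely: the latent state space (tree nodes $\oplus$ leaf gadget states), the emission matrices $\O_h$, the transitions $\T_h$ (deterministic tree descent chosen by actions, then absorbing gadget dynamics), the reward, the initial distribution, and the index set parametrizing $\cM$ (a packing of perturbation vectors, one per branch, in the style of the Paninski uniformity-testing lower bound \citep{paninski2008coincidence}). Second, I would verify the revealing condition: compute (or lower-bound) $\sigmin$ of each $\O_h$ in the column space of $\T_{h-1}$, equivalently bound $\|\Mhone^{+}\|_{*\to 1}$, showing it is $\le \arev^{-1}$ provided the perturbation magnitude is $\lesssim \arev$ and $H\arev\lesssim 1$ — this is where the hypotheses $\arev\le \tfrac1{5H}$ and $O\ge S$ get used. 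Third, I would set up the information-theoretic reduction: any algorithm that outputs an $\eps$-optimal $\piout$ with probability $3/4$ must, on a constant fraction of branches, correctly decide the local testing instance; by an averaging/Fano-type or a direct per-coordinate argument, this forces the collected trajectory distribution to statistically separate null from alternative on $\Omega(S)$ branches. Fourth — the technical heart — I would bound the relevant divergence (the $\chi^2$ or KL between the mixture-over-alternatives and the null, as seen through $T$ episodes) using \emph{Ingster's method}: expand the $\chi^2$ as an expectation over two independent draws of the perturbation, exploit the independence across branches and across the $\le 1$ visits per episode, and show the divergence stays $o(1)$ unless $T \gtrsim SO^{1/2}AH/(\arev^2\eps^2)$. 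Finally I would assemble these into the stated bound, handling the $\min$ by casing on whether $H\ge 4\log_2 S$ makes the tree-term or the $A^{H/2}$-term binding.

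The main obstacle I anticipate is the divergence computation in the partially observable, multi-episode setting. Unlike vanilla uniformity testing, here the ``samples'' the agent obtains are not i.i.d. draws from the tested distribution: they are filtered through an adaptive policy, through the $1/S$ tree-visitation bottleneck, and through the coupling between which branch is visited and which actions were played. Making Ingster's second-moment method go through requires carefully arguing that (i) conditioned on reaching a given branch, the emitted observations are a controlled number of near-i.i.d. samples from the local distribution, (ii) the adaptivity of the algorithm does not help — a point usually handled by a chain-rule / martingale decomposition of the likelihood ratio over episodes, bounding $\EE[\chi^2]$ by a sum of per-episode conditional $\chi^2$'s — and (iii) the cross terms between distinct branches in the second moment vanish by the independence of the per-branch perturbations. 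Getting the constants and the exact exponent of $O$ (namely $O^{1/2}$, matching the testing rate rather than a cruder $O$ or $O^{1/4}$) is delicate and is precisely the place where the ``distribution testing'' technology, new to the RL literature, does the real work. A secondary but nontrivial obstacle is simultaneously satisfying the revealing condition \emph{and} keeping the reward signal large enough to be detectable: these pull in opposite directions (revealing wants well-conditioned, near-uniform emissions; a detectable reward wants a noticeable perturbation), and threading this needle is what fixes the $\eps \le 0.01$ and $\arev \le 1/(5H)$ ranges in the statement.
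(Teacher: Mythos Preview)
Your plan assembles the right raw ingredients (tree, uniformity testing, Ingster), but the way you glue them misses the mechanism that produces the $\arev^{-2}\eps^{-2}$ scaling, and you misread the second term in the $\min$. In the paper's construction the tree leaves do \emph{not} each carry an independent testing sub-problem; there is a \emph{single} hidden tuple $(\hs,\ss,\acs,\as)$, and only taking $\acs$ at leaf $\ss$ at step $\hs$ transitions to the ``good'' lock state $\sp$ --- and even then only with probability $\eps$. The emission columns for $\sp,\sq$ differ by $\sigma\approx\arev$ (this is what certifies $\arev$-revealing), but because $\sp$ is reached with probability $\eps$ the \emph{observed} distribution at the lock is the mixture $\eps\mup+(1-\eps)\muq$ versus $\muq$, with effective signal $\sigma\eps$; this is why the per-candidate testing cost is $\sqrt{O}/(\sigma\eps)^2=\sqrt{O}/(\arev^2\eps^2)$. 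Your proposal never articulates this $\eps$-mixing: if the emission perturbation itself were of order $\eps$ you would violate revealing, and if it were of order $\arev$ with no mixing the testing cost would be only $\sqrt{O}/\arev^2$. Relatedly, the ``product of $S$ independent sub-problems'' framing is fragile under the reward accounting (with one branch visited per episode and reward $\approx\eps$ per branch, learning any single branch already suffices for $\eps$-optimality); the paper instead runs a needle-in-haystack argument: for each candidate $(\hs,\ss,\acs)$ it lower-bounds the required visits via Ingster, then sums over the $\Theta(SAH)$ disjoint events $\{o_{\hs}=\ss,\,a_{\hs}=\acs\}$ to obtain the $SAH$ factor.

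The $SA^{H/2}H/\eps^2$ term is not a ``tree too short'' regime. After the $\eps$-transition the paper places a \emph{combination lock}: at $\sp$ one must enter an unknown password $\as\in\cA^{H-\hs-1}$ to remain at $\sp$ and collect the reward, so the agent can always bypass uniformity testing by brute-forcing the password --- that is the origin of the $\min$. The lock also structures the Ingster computation: the per-step increment $I(\tau_l)$ is nontrivial only when the password prefix has been correct so far, and the hypothesis $\arev\le 1/(5H)$ (equivalently $\sigma\le 1/(2H)$) is there solely to keep the accumulated likelihood-ratio factor $(1+\sigma)^{2H}\le e$ bounded in that computation, not to ``thread the needle'' between revealing and detectability. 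Finally, the $H$ and $A$ factors come from $\hs$ ranging over $\Theta(H)$ steps and $\acs$ over $A-1$ actions in the disjoint-events sum, not from reward rescaling.
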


The lower bound in \cref{thm:1-step-demo} (and subsequent lower bounds) involves the minimum over two terms, where the second term ``caps'' the lower bound by an exponential scaling\footnote{A $\tO({\rm poly}(S,O,H)A^H/\eps^2)$ PAC upper bound is indeed achievable for \emph{any} POMDP (not necessarily revealing)~\citep{even2005reinforcement}; see also the discussions in~\citet{uehara2022provably}.} in $H$ and is less important. The main term $\Omega(S\sqrt{O}AH/(\arev^2\eps^2))$ scales polynomially in $1/\arev^2$, $1/\eps^2$, and $(S,O,A)$ in a \emph{multiplicative} fashion. This is the first such result for revealing POMDPs and improves substantially over existing lower bounds (cf.~\cref{section:known}).

\paragraph{Implications}
\cref{thm:1-step-demo} shows that, the multiplicative dependence on $(S,A,O,1/\arev,1/\eps)$ in the the current best PAC upper bound $\tO(S^2OA(1+SA/O)/(\arev^2\eps^2))$ (\cref{thm:PAC-upper}; ignoring $H$) is indeed necessary, and settles several open questions about learning revealing POMDPs:
\begin{itemize}[leftmargin=1em, topsep=0pt, itemsep=0pt]
\item It settles the optimal dependence on $\arev$ to be $\Theta(\arev^{-2})$ (combining our lower bound with the $\cO(\arev^{-2})$ upper bound), whereas the previous best lower bound on $\alpha$ is $\Omega(\arev^{-1})$~\citep{liu2022partially}.
\item For joint dependence on $(\arev,\eps)$, it shows that $1/(\arev^2\eps^2)$ samples are necessary. This rules out possibilities for better rates---such as the $\tO(\max\{1/\arev^2, 1/\eps^2\})$ upper bound for single-step revealing POMDPs with \emph{deterministic transitions}~\citep{jin2020sample}---in the general case. %
\item It necessitates a ${\rm poly}(O)$ factor as multiplicative upon the other parameters (most importantly $1/(\alpha^2\eps^2)$) in the sample complexity, which confirms that large observation spaces do impact learning in a strong sense. 
\end{itemize}

Finally, compared with the current best PAC upper bound, the lower bound $\Omega(SO^{1/2}A/(\arev^2\eps^2))$ captures all the parameters and is a $S\sqrt{O}$-factor away in the rich-observation regime where $O\ge \Omega(SA)$. This provides a solid starting point for future studies. %

\subsection{Multi-step revealing POMDPs}
\label{section:m-step-pac}

Using similar hard instance constructions (more details in~\cref{section:proof-overview}), we establish the PAC lower bound for the multi-step case with $m\ge 2$ (proof in~\cref{appdx:multi-step-pac}).
\begin{theorem}[PAC lower bound for multi-step revealing POMDPs]
\label{thm:multi-step-pac-demo}
For any $m\geq 2$, $O\geq S\geq 10$, $A\geq 3$, $H\geq 8\log_2 S+2m$, $\arev\in(0,0.1]$, $\epsilon\in(0,0.01]$, there exists a family $\cM$ of $m$-step revealing POMDPs with $\nS\leq S$, $\nO\leq O$, $\nA=A$, and $\arevm(M)\geq \arev$ for all $M\in\cM$, such that any algorithm $\fA$ that interacts with the environment and returns a $\piout$ such that 
$V_M^\star-V_M (\piout) <\epsilon$ with probability at least $3/4$ for all $M\in\cM$, we must have
\begin{align*}
    T\geq c_m \cdot \min\set{
        \frac{(S^{1.5}\vee SA)O^{1/2}A^{m-1}H}{\arev^2\epsilon^2}, \frac{SA^{H/2}H}{\epsilon^2}
    },
\end{align*}
where $c_m=c_0/m$ for some absolute constant $c_0>0$. %
\end{theorem}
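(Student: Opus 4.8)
The plan is to prove \cref{thm:multi-step-pac-demo} by reducing $\eps$-PAC learning to a family of composite hypothesis tests, with a hard instance built by composing three gadgets and tailored to exploit the slack the $m$-step revealing condition allows over the single-step one. Each POMDP $M\in\cM$ would be: (i) a balanced binary \emph{navigation tree} of depth $\Theta(\log S)$, after the MDP hard instance of \citet{domingues2021episodic}, whose $\Theta(S)$ leaves index independent sub-instances; (ii) at each leaf, an \emph{$m$-step combination lock} --- a corridor in which the agent must execute a hidden action sequence $\ass\in\cA^{m-1}$ to reach a ``good'' state, any deviation dumping it into an absorbing state with uninformative emissions; (iii) at each good state, an emission distribution that is a planted multiplicative perturbation of $\Unif(\cO)$, exactly as in the uniformity-testing lower bound of \citet{paninski2008coincidence,canonne2020survey}. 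Rewards are earned only at good states, and are calibrated --- jointly with the perturbation scale and the number of reward-bearing leaves --- so that an $\eps$-optimal policy must have effectively resolved the planted perturbation at $\Omega(\sqrt S)$ of the leaves; hence any $\eps$-PAC algorithm solves that many embedded tests, and $T$ is at least their combined sample complexity. Gadget (ii) produces the $A^{m-1}$ factor and is exactly where single-step revealing fails but $m$-step revealing survives; gadget (iii) produces the $O^{1/2}$ factor and the $\arev^{-2}\eps^{-2}$ dependence.

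First I would fix the cardinalities so $\nS\le S$, $\nO\le O$, $\nA=A$, and check that $H\ge 8\log_2 S+2m$ accommodates the tree depth plus the lock. The key structural claim is $\arevm(M)\ge\arev$ for every $M\in\cM$: the single-step emissions of the corridor states in a lock are made identical (so $M$ is \emph{not} single-step revealing), but after the correct length-$(m-1)$ action suffix one lands in a state whose emission is $\approx\Unif(\cO)$ and hence far from degenerate, so $\Mhm$ is injective on $\colspan{\T_{h-1}}$; I would exhibit an explicit generalized left inverse $\Mhm^{+}$ that reads the latent state off the ``revealing'' block of coordinates and bound $\nrmop{\Mhm^{+}}$, which stays $\ge\arev$ provided the planted perturbation is bounded away from $1$ and navigation is deterministic --- this is where the $(*\to1)$-norm of \cref{eqn:alpha-revealing} is used to keep the constant tight.

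The analytic heart --- and the main obstacle --- is the $T$ lower bound, for which I would use Ingster's (second-moment / $\chi^2$) method rather than Le Cam or Fano, since only the $\chi^2$ method yields the $O^{1/2}$ (and, for this construction, an extra $S^{1/2}$) scaling. Writing $\PPs$ for the transcript law of a fixed algorithm under the unperturbed reference model and $\oP$ for the mixture over i.i.d.\ planted sign patterns, I would expand $\chi^2(\oP\,\|\,\PPs)$ by Ingster's method into a product over per-leaf likelihood-ratio increments and kill the cross terms between distinct leaves and sign coordinates by independence and symmetry. The two features absent from plain uniformity testing are adaptivity and the lock: a leaf-$\ell$ perturbation enters the transcript only on episodes where the algorithm both routes to $\ell$ and plays $\ass$ --- probability $\lesssim A^{-(m-1)}$ per episode, with the routing mass over leaves at most $1$ --- so $T$ episodes supply only $\Theta(T/A^{m-1})$ effective samples to be divided among the leaves, which produces the $A^{m-1}$ factor; the term $SA^{H/2}H/\eps^2$ appears separately as the cost of brute-forcing length-$\Theta(H)$ action sequences. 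The genuinely delicate point is the $S^{3/2}$ exponent: exploiting that the locks let single-step emissions be uninformative, one nests the uniformity-testing gadgets so that --- subject to certifying $\arevm(M)\ge\arev$ on \emph{every} instance --- each per-leaf test costs $\Theta(S\cdot O^{1/2}A^{m-1}/(\arev^2\eps^2))$ routed episodes (a further $\Theta(S)$ over the single-step cost), and $\eps$-optimality still forces resolving $\Omega(\sqrt S)$ leaves, so the routing budget gives $T\gtrsim S^{3/2}O^{1/2}A^{m-1}H/(\arev^2\eps^2)$; a parallel Domingues-style variant with tests indexed by actions at $\Theta(S)$ states gives instead $T\gtrsim SA\cdot O^{1/2}A^{m-1}H/(\arev^2\eps^2)$, and taking the better of the two yields the $S^{3/2}\vee SA$ in the statement. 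The crux is to choose all parameters (test alphabet, perturbation scale, lock depth, number and value of reward-bearing leaves) so as to simultaneously realize this $\chi^2$ gain, keep $\arevm(M)\ge\arev$, make $\eps$-optimality force the right number of tests, and leave the $\chi^2$ cross terms provably negligible under an adaptive policy over the product of $\Theta(S)$ leaf sub-instances and the $m$-step lock; the $1/m$ loss in $c_m$ is the price of spreading the lock over $m$ steps.
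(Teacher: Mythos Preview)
Your high-level scheme (tree $+$ lock $+$ uniformity-testing emissions, analyzed via Ingster's $\chi^2$ method) matches the paper, but the $S^{3/2}$ mechanism you describe is not the paper's and is not concrete enough to carry the argument. You propose to ``nest the uniformity-testing gadgets'' so that each per-leaf test costs an extra $\Theta(S)$ and then force the algorithm to resolve $\Omega(\sqrt S)$ leaves; neither step is substantiated, and it is unclear how one would realize both while keeping every instance $m$-step $\alpha$-revealing. The paper's device is specific and different: there is a \emph{single} hidden entrance $(\hs,\ss,\acs)$ to identify (the tree contributes the usual $\Theta(|\Sl|\cdot |\Ac|\cdot H/m)$ candidates), but after entering, the agent lands uniformly in one of $L=\Theta(S)$ \emph{parallel locks} that \emph{share the same password} $\as$ yet carry \emph{independent} emission sign-patterns $\mu_j\in\{\pm1\}^K$. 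A reveal then produces a pair $(\olock^j,o)$ with $o$ drawn from the $j$-th perturbation, so the null-vs-mixture test is uniformity testing over an alphabet of size $\Theta(LK)=\Theta(SO)$, and the extra $\sqrt S$ enters through $\sqrt{LK}$ in the Ingster bound (\cref{lemma:multi-step-MGF})---not through resolving multiple leaves.

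A second structural issue: the lock is not of length $m-1$. In the paper the lock runs the remaining $\Theta(H)$ steps, and the $m$-step structure enters through a \emph{designated revealing action} $\acrev$ that may be taken only once every $m$ steps and that \emph{terminates} the episode with zero reward---revealing and earning reward are mutually exclusive. The $A^{m-1}$ factor is the cost of guessing the first $m-1$ password letters before the first reveal opportunity (the event $\Ereach^\theta$ in \cref{lemma:multi-step-alternative}), while the $A^{H/2}$ branch is the cost of brute-forcing the full-length password; a length-$(m-1)$ lock gives the former but neither the latter nor the $H$ factor (which here comes from the $\Theta(H/m)$ choices of $\hs$). Separately, the $SA\cdot O^{1/2}A^{m-1}$ term in the max arises from a \emph{second} hard family (\cref{prop:no-regret-prop}, claim~5) in which the single revealing action is replaced by a set $\Arev$ of $\Theta(A)$ revealing actions; the extra $A$ is revealing-action multiplicity, not a tree-action count as your ``Domingues-style variant'' suggests.
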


The main difference in the multi-step case (\cref{thm:multi-step-pac-demo}) is in its higher $A$ dependence $\Omega(A^{m-1})$, which suggests that the $A^m$ dependence in the upper bound (\cref{thm:PAC-upper}) is morally unimprovable. Also, the $S^{1.5}$ scaling in~\cref{thm:multi-step-pac-demo} is higher than \cref{thm:1-step-demo}, which makes the result qualitatively stronger than the single-step case even aside from the $A$-dependence. This happens since the hard instance here is actually a strengthening---instead of a direct adaptation---of the single-step case, by leveraging the nature of multi-step revealing; see~\cref{section:actual} for a discussion.

Again, compared with the current best PAC upper bound $S^2OA^m(1+SA/O)/(\arev^2\eps^2)$ (\cref{thm:PAC-upper}), the lower bound in~\cref{thm:multi-step-pac-demo} has an $\sqrt{SO}A\wedge S\sqrt{O}$ gap from the current best upper bound. We believe that the $\sqrt{SO}$ factor in this gap is unimprovable from the lower bound side under the current hard instance; see~\cref{section:close-gap} for a discussion.

\paragraph{$\sqrt{O}$ dependence} Our lower bounds for both the single-step and the multi-step cases scale as $\sqrt{O}$ in its $O$-dependence. Such a scaling comes from the complexity of the \emph{uniformity testing} task of size $\cO(O)$, embedded in the revealing POMDP hard instances, whose sample complexity is $\Theta(\sqrt{O}/\eps^2)$ \citep{paninski2008coincidence,diakonikolas2014testing,canonne2020survey}. The construction of the hard instances will be described in detail in~\cref{section:proof-overview}.

\section{Regret lower bound for multi-step case}
\label{section:regret}

We now turn to establishing regret lower bounds. We show that surprisingly, for $m$-step revealing POMDPs with any $m\ge 2$, a non-trivial polynomial regret (neither linear in $T$ nor exponential in $H$) has to be at least $\Omega(T^{2/3})$. The proof can be found in~\cref{appdx:no-regret}.

\begin{theorem}[$\Omega(T^{2/3})$ regret lower bound for multi-step revealing POMDPs]
\label{thm:no-regret-demo}
For any $m\geq 2, O\geq S\geq 8$, $A\geq 3$, $H\geq 8\log_2 S+2m$, $\arev\in(0,0.1]$, $T\geq 1$, there exists a family $\cM$ of $m$-step revealing POMDPs with $\nS\leq S$, $\nO\leq O$, $\nA=A$, and $\arevm(M)\geq \arev$ for all $M\in\cM$, such that for any algorithm $\fA$, it holds that
\begin{align*}
    & \max_{M\in\cM} \EE^{\fA}_M\brac{\Regret}\geq %
    c_m \cdot  \min\set{
        \paren{\frac{SO^{1/2}A^mH}{\arev^2}}^{1/3} T^{2/3}, \sqrt{SA^{H/2}HT}, T
    },
\end{align*}
where $c_m=c_0/m$ for some absolute constant $c_0 > 0$. %
\end{theorem}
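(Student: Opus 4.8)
The plan is to derive the regret lower bound in \cref{thm:no-regret-demo} from the PAC lower bound in \cref{thm:multi-step-pac-demo} via the standard online-to-batch argument, but run in reverse: a regret bound that is too small would yield a PAC algorithm that is too good. Concretely, fix a target accuracy $\eps$ to be optimized later, and suppose for contradiction that some algorithm $\fA$ achieves $\max_{M\in\cM}\EE^{\fA}_M[\Regret(T)] \le R(T)$ for some $R(T)$ much smaller than the claimed bound. Running $\fA$ for $T$ episodes and outputting the mixture policy $\hatpiout \defeq \frac1T\sum_{t=1}^T \pi^{(t)}$, linearity of the value function in the policy gives $V^\star_M - V_M(\hatpiout) = \frac1T\EE^{\fA}_M[\Regret(T)] \le R(T)/T$ in expectation. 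By Markov's inequality (after an appropriate boosting/truncation, or simply by choosing constants so that $R(T)/T \le \eps/4$), $\hatpiout$ is $\eps$-optimal with probability at least $3/4$, so $\fA$ together with this conversion is a valid PAC learner. \cref{thm:multi-step-pac-demo} then forces $T \ge c_m\min\{(S^{1.5}\vee SA)O^{1/2}A^{m-1}H/(\arev^2\eps^2),\, SA^{H/2}H/\eps^2\}$.

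The key step is to choose $\eps$ as a function of $T$ so as to make this constraint as restrictive as possible while remaining consistent with $R(T)/T \lesssim \eps$. Ignoring the exponential cap, the PAC bound reads (roughly) $\eps^2 \gtrsim c_m D/T$ with $D \defeq SO^{1/2}A^{m}H/\arev^2$ (using $A^{m-1}\cdot$ an extra $A$ from the $SA$ branch — one checks the $SA$ term only helps), i.e. the smallest achievable accuracy after $T$ episodes is $\eps_\star \asymp \sqrt{c_m D/T}$. Substituting into $R(T) \gtrsim T\eps_\star$ then gives $R(T) \gtrsim T\cdot\sqrt{c_m D/T} = \sqrt{c_m D T}$. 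This is the single-step-style $\sqrt{T}$ bound, which is \emph{not} what the theorem claims — so this naive conversion is lossy, and the genuine $T^{2/3}$ phenomenon must come from something sharper in the construction, not from a black-box reduction.

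Consequently the real proof must open up the hard-instance construction of \cref{section:proof-overview} (the $m$-step revealing combination lock with an embedded uniformity-testing instance) rather than invoke \cref{thm:multi-step-pac-demo} as a black box. The mechanism I would implement: in the multi-step combination lock, reaching the ``testing gadget'' requires playing a fixed unknown action sequence, so every episode either (a) spends its budget probing the lock and collects \emph{no} testing samples, or (b) commits to a candidate unlock sequence and, if wrong, incurs $\Omega(1)$ regret that episode. Let $N$ be the (random) number of episodes in which the learner actually reaches the gadget with the correct sequence; these are the only episodes that provide information about the hidden uniform-vs-perturbed distribution, and each such ``successful'' episode that is played \emph{before} the learner has identified the good sub-action also incurs $\Omega(\eps)$-type regret because the optimal policy there depends on the test outcome. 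Uniformity testing with perturbation size $\eps$ over alphabet $\Theta(O)$ needs $N \gtrsim \sqrt{O}/\eps^2$ samples to distinguish, so to ever behave near-optimally the learner must pay regret at least $\Omega(N\cdot \eps) \gtrsim \sqrt{O}/\eps$ on these episodes; meanwhile, on the remaining $T-N$ episodes it either has not yet learned (paying $\Omega(\eps)$ each, total $\Omega((T-N)\eps)$) — wait, that again gives $\sqrt{T}$. The correct bookkeeping, which is the crux, is that the \emph{per-episode} cost of a ``revealing'' experiment in the multi-step lock is itself $\Omega(\text{something})$ larger than $\eps$: committing to an unlock sequence to gather a test sample forfeits the safe $\Omega(1)$ reward available from the ``give-up'' action, so each of the $N \gtrsim \sqrt{O}/\eps^2$ information-gathering episodes costs a constant, not $\eps$. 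Balancing ``information cost'' $\Omega(\sqrt{O}/\eps^2)$ against ``exploitation suboptimality'' $\Omega(T\eps)$ and optimizing over $\eps$ yields $\eps_\star \asymp (\sqrt{O}/T)^{1/3}$ and total regret $\Omega(T\eps_\star) = \Omega((\sqrt{O})^{1/3}T^{2/3})$, then tracking the remaining parameters $S,A^m,H,\arev$ through the lock/tree construction upgrades $\sqrt{O}$ to $SO^{1/2}A^mH/\arev^2$.

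The main obstacle, and where I expect the bulk of the work to lie, is exactly this last point: rigorously establishing that in the $m$-step revealing combination lock one \emph{cannot} gather test information ``for free'' (i.e. while simultaneously playing a near-optimal policy), so that information acquisition is forced to cost $\Omega(1)$ regret per episode rather than $\Omega(\eps)$. This requires a careful chain-rule / change-of-measure argument along trajectories — tracking, episode by episode, the KL-divergence between the perturbed and unperturbed models restricted to the event of reaching the gadget, and lower-bounding the regret contributed on that event — combined with Ingster's method to control the relevant $\chi^2$-type divergence for the uniformity-testing component (as in the PAC proof). A clean way to package this is a ``regret-to-information'' lemma: if the algorithm's expected number of successful-gadget-visits is $N$, then (i) its regret is at least $\Omega(\eps) \cdot \EE[(T-N)_+ \wedge T]$ from sub-optimality when uninformed and at least $\Omega(1)\cdot\EE[N]$ — no, from forfeiting the safe reward — and (ii) being $\eps$-optimal on the majority of episodes forces $\EE[N] \gtrsim \sqrt{O}/(\arev^2\eps^2)$ via the testing lower bound. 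Optimizing $\eps$ closes the argument; the exponential cap $\sqrt{SA^{H/2}HT}$ and the trivial $T$ term are handled exactly as in the PAC theorem by noting the lock has depth $\Theta(\log S)$ and width $A$ so a brute-force learner needs $A^{\Theta(H)}$ episodes.
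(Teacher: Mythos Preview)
Your proposal, after correctly abandoning the black-box PAC-to-regret reduction, lands on essentially the paper's approach: balance an $\Omega(1)$-per-episode information cost against $\Omega(\eps T)$ exploitation suboptimality, optimize $\eps$, and get $T^{2/3}$. The one point on which you visibly waver---what exactly makes an information-gathering episode cost $\Omega(1)$ rather than $\Omega(\eps)$---is not ``committing to an unlock sequence and being wrong,'' and getting this wrong would derail the argument. The paper's construction instead introduces a dedicated \emph{revealing action set} $\Arev$, disjoint from the password alphabet $\Atr \ni \as_h$: taking any $a\in\Arev$ at a lock state deterministically transits to a terminal-absorbing component that emits one test observation and then yields reward $0$ for the remainder of the episode. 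Since the safe option (stay at $s_0$) earns $(1+\eps)/4$, every reveal episode forfeits $\Theta(1)$ regardless of whether any password guess was correct; this is what makes exploration and exploitation genuinely disjoint in the multi-step case and is precisely why the single-step case (where the emission itself reveals, with no action needed) does not exhibit the $T^{2/3}$ barrier.

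The paper packages this as a two-term regret decomposition under the null model, $\EE_0[\Regret] \ge \tfrac{\eps}{4}\EE_0[N(o_H\neq s_0)] + \tfrac14\EE_0[N(\Erev)]$ (\cref{lemma:no-regret-value-func}), followed by an Ingster-based alternative (\cref{lemma:no-regret-alternative}) showing that distinguishing the models forces either $\EE_0[N(\Erev^\theta)] \gtrsim \sqrt{K}/(\sigma^2\eps^2)$ or $\EE_0[N(\Ecor^\theta)] \gtrsim 1/\eps^2$ for each parameter $\theta$. A counting argument over the $\Theta(|\Sl|\cdot|\Ac|\cdot H/m)$ choices of $(\hs,\ss,\acs)$ and the $|\Arev|\cdot A^{m-1}$ choices of reveal action and password prefix then aggregates these into a single bound of the form $\EE_0[N(\Erev)] + r\,\EE_0[N(o_H\neq s_0)] \gtrsim \tfrac{|\Sl|H}{m}\min\{A^{m+1}\sqrt{K}/(\sigma^2\eps^2),\, rA^{H/2}/\eps^2\}$ for all $r\ge 0$; taking $r=\eps$ and combining with the regret decomposition gives the $\eps$-dependent lower bound, and optimizing $\eps$ via \cref{lemma:balance-eps} finishes. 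Your ``regret-to-information lemma'' sketch is in the right spirit but should track the reveal-action count $N(\Erev)$ rather than a ``successful gadget visit'' count, since the $\Omega(1)$ cost attaches to \emph{any} reveal, not just informative ones.
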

Currently, the best sublinear regret (polynomial in other problem parameters) is indeed $T^{2/3}$ by a standard explore-then-exploit style conversion from the PAC result~\citep{chen2022partially}. \cref{thm:no-regret-demo} rules out possibilities for obtaining an improvement (e.g. to $\sqrt{T}$) by showing that $T^{2/3}$ is rather a fundamental limit.

\paragraph{Proof intuition}
The hard instance used in~\cref{thm:no-regret-demo} is the same as one of the PAC hard instances (see~\cref{section:proof-overview}). However,~\cref{thm:no-regret-demo} relies on a key new observation 
that leads to the $\Omega(T^{2/3})$ regret lower bound. Specifically, for multi-step revealing POMDPs, we can design a hard instance such that 
the following two kinds of action sequences (of length $m-1$) are \emph{disjoint}:
\begin{itemize}[topsep=0pt, itemsep=0pt]
\item \emph{Revealing} action sequences, which yield observations that reveal information about the true latent state;
\item \emph{High-reward} action sequences.
\end{itemize}
The multi-step revealing condition (\cref{definition:m-step-revealing}) permits such constructions. Intuitively, this is since its requirement that $\Mhm\in\R^{\cO^m\cA^{m-1}\times\cS}$ admits a generalized left inverse is fairly liberal, and can be achieved by carefully designing the emission-action probabilities over a \emph{subset} of action sequences. In other words, the multi-step revealing condition allows only \emph{some} action sequences to be revealing, such as the ones that receive rather suboptimal rewards.

Such a hard instance forbids an efficient exploration-exploitation tradeoff, as exploration (taking revealing actions) and exploitation (taking high-reward actions) cannot be simultaneously done. Consequently, the best thing to do is simply an explore-then-exploit type algorithm\footnote{Alternatively, a bandit-style algorithm that does not take revealing actions but instead attempts to identify the optimal policy directly by brute-force trying, which corresponds to the $\sqrt{A^HT}$ term in~\cref{thm:no-regret-demo}.} whose regret is typically $\Theta(T^{2/3})$~\citep{lattimore2020bandit}.

\paragraph{Difference from the single-step case} 
\cref{thm:no-regret-demo} demonstrates a fundamental difference between the multi-step and single-step settings, as single-step revealing POMDPs are known to admit $\tO(\sqrt{T})$ regret upper bounds~\citep{liu2022partially}. %
Intuitively, the difference is that in single-step revealing POMDPs, the agent does not need to take specific actions to acquire information about the latent state, so that information acquisition (exploration) and taking high-reward actions (exploitation) \emph{can} always be achieved simultaneously. %

\paragraph{Towards $\sqrt{T}$ regret under stronger assumptions}
It is natural to ask whether the $\Omega(T^{2/3})$ lower bound can be circumvented by suitably strengthening the multi-step revealing condition (yet still weaker than single-step revealing). Based on our intuitions above, a possible direction is to additionally require that \emph{all} action sequences (of length $m-1$) must reveal information about the latent state. We leave this as a question for future work.

\begin{figure*}[t]
\centering
\includegraphics[width=0.96\textwidth]{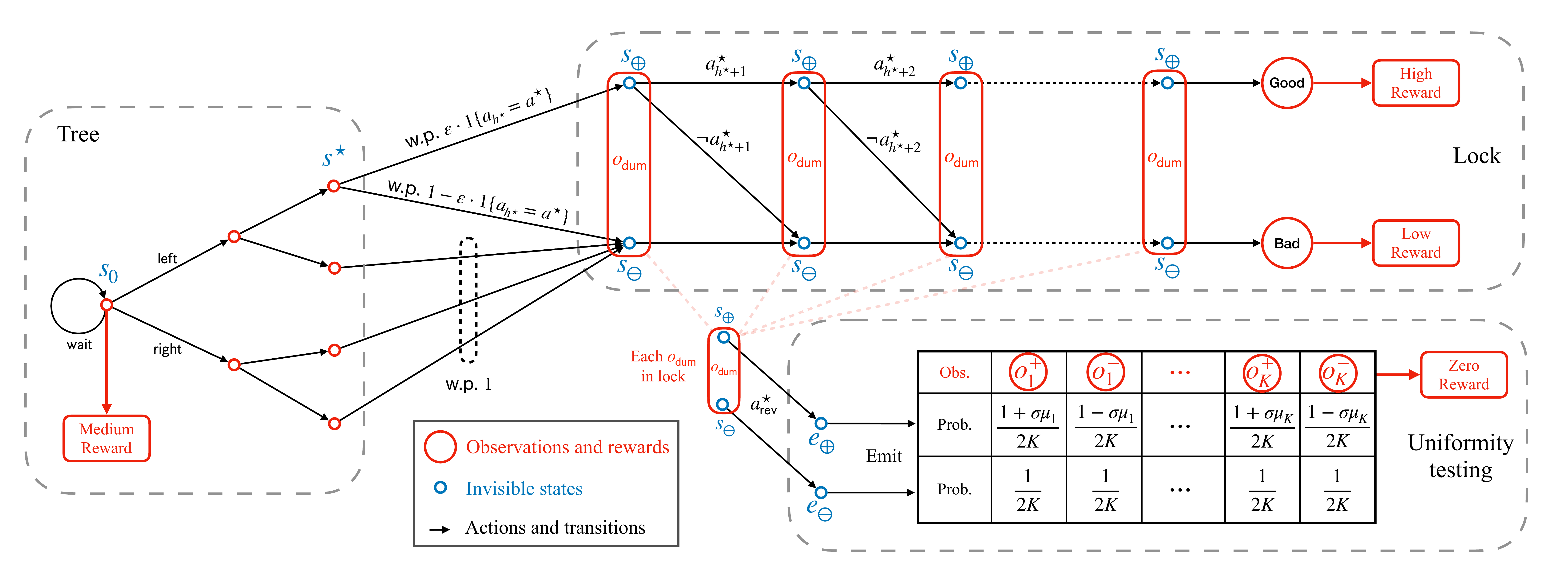}
\vspace{-1em}
  \caption{
  Schematic plot of a simplified version of our hard instance for $2$-step revealing POMDPs. The instance consists of three components: tree, lock, and uniformity testing. In the tree, all transitions are deterministic and fully observable, and the agent fully controls how to transit from $s_0$ to a leaf node. The tree transits stochastically to the lock if any action is taken at any leaf of the tree, but there is a unique (unknown) state $s^\star$, step $h^\star$, and action $a^\star$ at which the agent to transit to $s_\oplus$ with positive probability. In the lock, the agent cannot observe the latent states $\{\sp, \sq\}$, and they need to enter the correct password $\a^\star$ to stay at $s_\oplus$ to eventually receive a high reward. The agent may also take the revealing action $\revs$ at any $o_{\sf dum}$ to transit to the uniformity testing component, in which they will receive an observation that slightly reveals whether the previous latent state is $s_\oplus$ or $s_\ominus$. See \cref{section:proof-overview-construction} for a more detailed description.
  }
  \label{fig:tree-lock}
\end{figure*}

\section{Proof overview}
\label{section:proof-overview}

We now provide a technical overview of the hard instance constructions and the lower bound proofs. We present a simplified version of the multi-step revealing hard instance in~\cref{appdx:no-regret} that is used for proving both the PAC and the regret lower bounds (\cref{thm:multi-step-pac-demo} \&~\ref{thm:no-regret-demo}). For simplicity, we describe our construction in the 2-step case ($m=2$); a schematic plot of the resulting POMDP is given in~\cref{fig:tree-lock}.

\subsection{Construction of hard instance}\label{section:proof-overview-construction}

A main challenge for obtaining our lower bounds---compared with existing lower bounds in fully observable settings---is to characterize the difficulty of partial observability, i.e. the dependence on $O$ and $\arev^{-1}$.

\paragraph{2-step revealing combination lock}
To reflect this difficulty, the basic component we design is a ``2-step revealing combination lock'' (cf. the ``Lock'' part in~\cref{fig:tree-lock}), which is a modification of the non-revealing combination lock of \citet{liu2022partially, jin2020sample}. This lock consists of two hidden states $\sp,\sq$ and an (unknown) sequence of ``correct'' actions (i.e. the ``password'') $\as_{h^\star+1:H}$. The only way to stay at $\sp$ is to take the correct action $\as_{h}$ at each step $h$, and only state $\sp$ at step $H$ gives a high reward. Therefore, the task of learning the optimal policy is equivalent to identifying the correct action $\as_{h}$ at each step. We make the hidden states $\sp,\sq$ non-observable (emit dummy observations $o_{\sf dum}$), so that a naive strategy for the agent is to guess the sequence $\as$ from scratch, which incurs an $\exp(\Om{H})$ sample complexity.

A central ingredient of our design is a unique (known) \emph{revealing action} $\revs$ at each step that is always distinct from the correct action. Taking $\revs$ will transit from latent state $\sp$ to $\ep$ which then emits an observation from distribution $\mup\in\Delta(\cO)$, and similarly from $\sq$ to $\eq$ which then emits an observation from distribution $\muq\in\Delta(\cO)$. After this (single) emission, the system deterministically transits to an absorbing terminal state with reward $0$.

\paragraph{Uniformity testing}
We adapt techniques from the uniformity testing~\citep{canonne2020survey,canonne2022topics} literature to pick $\{\mup,\muq\}$ that are as hard to distinguish as possible, yet ensuring that the POMDP still satisfies the $\arev$-revealing condition. Concretely, picking $\muq=\Unif(\cO)$ to be the uniform distribution over $\cO$\footnote{Technically, we pick $\mup,\muq$ to be uniformity testing hard instances on \emph{subset} of $\cO$ with size $2K=\Theta(O)$. Here we use the full set $\cO$ for simplicity of presentation.}, it is known that testing $\muq$ from a nearby $\mup$ with $\DTV{\mup,\muq}\asymp \sigma$ requires $\Theta(\sqrt{O}/\sigma^2)$ samples~\citep{paninski2008coincidence}. Further, the worst-case prior for $\mup$ takes form $\mup=\Unif(\cO) + \sigma \mu/O$, where $\mu\sim \Unif(\{(+1,-1), (-1, +1)\}^{O/2})$. We adopt such choices of $\muq$ and $\mup$ in our hard instance (cf. the ``Uniformity testing'' part in~\cref{fig:tree-lock}), which can also ensure that the POMDP is $\ThO{\sigma^{-1}}$-revealing.

\paragraph{Tree MDP; rewards}
To additionally exhibit an $HSA$ factor in the lower bound, we further embed a fully observable \emph{tree MDP}~\citep{domingues2021episodic} before the combination lock. The tree is a balanced binary tree with $S$ leaf nodes, with deterministic transitions (so that which leaf node to arrive at is fully determined by the action sequence) and full observability. All leaf nodes of the tree will transit to the combination lock (i.e. one of $\set{\sp,\sq}$). However, there exists a unique $(\hs,\ss,\as)$ such that only taking $a_{\hs}=\acs$ at $s_{\hs}=\ss$ and step $\hs$ has a probability $\epsilon$ of transiting to $\sp$; all other choices at leaf nodes transit to $\sq$ with probability one (cf. the ``Tree'' part in~\cref{fig:tree-lock}).

We further design the reward function so that the agent must identify the underlying parameters $(\hs,\ss,\acs)$ correctly to learn a $\Theta(\epsilon)$ near-optimal policy.

\subsection{Calculation of lower bound}
Base on our construction, to learn an $\eps$ near-optimal policy in this hard instance, the agent has to identify $(\hs,\ss,\acs)$, which can only be achieved by trying all ``entrances'' $(s,a,h)$ and testing between 
\begin{align*}
    H_0: &~ \PP(s_{h+1}=\sp|s_h=s,a_h=a)=0,\\
    H_1: &~ \PP(s_{h+1}=\sp|s_h=s,a_h=a)=\epsilon.
\end{align*}
for each entrance. As we have illustrated, to achieve this, the agent has to either (1) guess the password $\as$ from scratch (using $\Om{A^{H-h}/\epsilon^2}$ samples), or (2) take $\revs$ and perform uniformity testing using the observations. The latter task turns out to be equivalent to testing between
\begin{align*}
    H_0' = \muq, \quad H_1' = \epsilon\mup+(1-\epsilon)\muq,
\end{align*}
where $\muq$ is the uniform distribution over $2K=\Theta(O)$ elements, and $\mup$ is drawn from the worst-case prior for uniformity testing. Distinguishing between $H_0'$ and $H_1'$ is a uniformity testing task with parameter $\sigma\eps$, which requires $n \ge \Omega(\sqrt{O}/(\epsilon\sigma)^2)$ samples~\citep{paninski2008coincidence}.

With careful information-theoretic arguments, the arguments above will result in a PAC lower bound  
\begin{align*}
\ThO{SAH}\times\Omega\Big(\min \Big\{ \frac{\sqrt{O}}{\sigma^2\epsilon^2}, \frac{A^{\ThO{H}}}{\epsilon^2} \Big\}\Big),
\end{align*}
for learning 2-step $\Theta(\sigma^{-1})$-revealing POMDPs. This rate is similar as (though slightly worse than) our actual PAC lower bound (\cref{thm:multi-step-pac-demo}). The same hard instance further yields a $\Omega(T^{2/3})$ regret lower bound (though slightly worse rate than \cref{thm:no-regret-demo}); see a calculation in~\cref{appdx:heuristic_regret}.

We remark that the above calculations are heuristic; rigorizing these arguments relies on information-theoretic arguments---in our case Ingster's method~\citep{ingster2012nonparametric} (cf.~\cref{appendix:ingster-method} \&~\cref{lemma:1-step-pac-alternative-weak} as an example)---for bounding the divergences between distributions induced by an arbitrary algorithm on different hard instances.

\subsection{Remark on actual constructions}
\label{section:actual}
The above 2-step hard instance is a simplification of the actual ones used in the proofs of~\cref{thm:multi-step-pac-demo} \&~\ref{thm:no-regret-demo} in several aspects. The actual constructions are slightly more sophisticated, with the following additional ingredients:
\begin{itemize}[topsep=0pt, itemsep=0pt, leftmargin=1em]
\item For the $m$-step case, to obtain a lower bound that scales with $A^{m}$, we modify the construction above so that the agent can take $\revs$ only once per $(m-1)$-steps, and replace $\revs$ by a set $|\Arev|=\Theta(A)$ of revealing actions, which collectively lead to an $A^{m-1}\times A=A^m$ factor.
\item We further obtain an extra $\sqrt{S}$ factor in~\cref{thm:multi-step-pac-demo} by replacing the single combination lock with $\Theta(S)$ parallel locks that \emph{share the same password} but \emph{differ in their emission probabilities}. We show that learning in this setting is least as hard as uniformity testing over $\Theta(SO)$ elements, which leads to the extra $\sqrt{S}$ factor.
\end{itemize}

\section{Discussions}

\subsection{Regret for single-step case}
\label{sec:1-step-regret}

As we have discussed, single-step revealing POMDPs cannot possibly admit a $\Omega(T^{2/3})$ regret lower bound like the multi-step case, as a $\tO(\sqrt{T})$ upper bound is achievable. Nevertheless, we obtain a matching $\Omega(\sqrt{T})$ regret lower bound by a direct reduction from the PAC lower bound (\cref{thm:1-step-demo}) using Markov's inequality and standard online-to-batch conversion, which we state as follows.

\begin{corollary}[Regret lower bound for single-step revealing POMDPs]
\label{cor:1-step-regret}
Under the same setting as~\cref{thm:1-step-demo}, the same family $\cM$ of single-step $\arev$-revealing POMDPs there satisfy that for any algorithm $\fA$,
\begin{align}
\label{eqn:1-step-regret}
\begin{aligned}
\max_{M\in\cM} \EE^{\fA}_M\brac{\Regret} \ge c_0\cdot \min\Big\{\sqrt{\frac{SO^{1/2}AH}{\arev^2} T}, \sqrt{SA^{H/2}HT} , T \Big\},
\end{aligned}
\end{align}
where $c_0>0$ is an absolute constant.
\end{corollary}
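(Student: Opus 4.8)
The plan is a black-box reduction from the PAC lower bound in \cref{thm:1-step-demo} via online-to-batch conversion and Markov's inequality; no new information-theoretic argument is required. Fix any algorithm $\fA$ that interacts with some model in $\cM$ for $T$ episodes, producing (history-dependent) policies $\pi^{(1)},\dots,\pi^{(T)}$, and let $\piout\defeq\frac1T\sum_{t=1}^T\pi^{(t)}$ denote the uniform mixture policy (equivalently: draw $t\sim\Unif([T])$ and then play $\pi^{(t)}$ for the whole episode). Because the trajectory law of a mixture of policies is the corresponding mixture of trajectory laws, the value functional is affine under such mixtures, so conditionally on the realized sequence $\pi^{(1)},\dots,\pi^{(T)}$ we have $V_M^\star-V_M(\piout)=\frac1T\Regret(T)$, and therefore $\EE^{\fA}_M[V_M^\star-V_M(\piout)]=\frac1T\EE^{\fA}_M[\Regret(T)]$ for every $M\in\cM$.

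Next I would run the following dichotomy for a target accuracy $\eps\in(0,0.01]$ to be optimized at the end. Suppose $\max_{M\in\cM}\EE^{\fA}_M[\Regret(T)]\le T\eps/4$. Then $\EE^{\fA}_M[V_M^\star-V_M(\piout)]\le\eps/4$ for every $M\in\cM$, and since $V_M^\star-V_M(\piout)\in[0,1]$, Markov's inequality gives $\PP^{\fA}_M(V_M^\star-V_M(\piout)\ge\eps)\le1/4$; that is, $\piout$ is $\eps$-optimal with probability at least $3/4$ \emph{simultaneously for all} $M\in\cM$. Hence $\fA$ composed with the averaging step is a valid PAC algorithm in the sense of \cref{thm:1-step-demo}, which forces $T\ge c\cdot\min\{SO^{1/2}AH/(\arev^2\eps^2),\,SA^{H/2}H/\eps^2\}$. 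Contrapositively: whenever $T<c\cdot\min\{SO^{1/2}AH/(\arev^2\eps^2),\,SA^{H/2}H/\eps^2\}$, the first case is impossible, and therefore $\max_{M\in\cM}\EE^{\fA}_M[\Regret(T)]>T\eps/4$.

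Finally I would optimize over $\eps$. Using $\sqrt{\min\{a,b\}}=\min\{\sqrt a,\sqrt b\}$, the condition $T<c\cdot\min\{SO^{1/2}AH/(\arev^2\eps^2),\,SA^{H/2}H/\eps^2\}$ is equivalent to $\eps<\sqrt c\cdot\min\{\sqrt{SO^{1/2}AH/(\arev^2T)},\,\sqrt{SA^{H/2}H/T}\}$. Choosing $\eps\defeq\tfrac12\min\{0.01,\,\sqrt c\,\min\{\sqrt{SO^{1/2}AH/(\arev^2T)},\,\sqrt{SA^{H/2}H/T}\}\}$ — which lies in $(0,0.01]$ and satisfies the required strict inequality — and substituting into the previous bound yields $\max_{M\in\cM}\EE^{\fA}_M[\Regret(T)]>T\eps/4\ge c_0\cdot\min\{\sqrt{SO^{1/2}AHT/\arev^2},\,\sqrt{SA^{H/2}HT},\,T\}$ after absorbing absolute constants into $c_0$; the ``$T$'' term in the minimum arises precisely from the cap $\eps\le0.01$. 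The proof is essentially bookkeeping: the only points needing care — and the closest thing to an obstacle — are verifying that the online-to-batch step is legitimate for non-stationary, history-dependent POMDP policies (handled by the mixture construction above, which makes the value affine) and checking that the chosen $\eps$ stays in the admissible window $(0,0.01]$ of \cref{thm:1-step-demo} while the $\min$-structure passes cleanly through the square root.
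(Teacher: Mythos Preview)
The proposal is correct and follows exactly the approach the paper describes for this corollary: a black-box reduction from the PAC lower bound of \cref{thm:1-step-demo} via online-to-batch conversion and Markov's inequality, followed by optimizing over $\eps$ (the paper states this in \cref{sec:1-step-regret} without spelling out the details). Your handling of the dichotomy, the affine-value justification for the mixture policy, and the balancing of $\eps$ against the cap $0.01$ to produce the extra ``$T$'' term in the minimum are all as intended.
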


To contrast this lower bound, the current best regret upper bound for single-step revealing POMDPs is $\tO(\sqrt{S^3O^3A^2(1+SA/O)\arev^{-4}\cdot T} \times {\rm poly}(H))$~\citep{liu2022optimistic}\footnote{Converted from their result whose revealing constant is defined in $(2\to 2)$-norm.}, which is at least a $\sqrt{S^2O^{2.5}A\arev^{-2}}$-factor larger than the main term in~\cref{eqn:1-step-regret}. Here we present a much sharper regret upper bound, reducing this gap to $\sqrt{SO^{1.5}}$ and importantly settling the dependence on $\arev$. %
\begin{theorem}[Regret upper bound for single-step revealing POMDPs]
\label{thm:regret-upper}
There exists algorithms (\omle{}, \etod{}, and \mops{}) that can interact with any single-step $\arev$-revealing POMDP $M$ and achieve regret
\begin{align}
\label{eqn:regret-upper}\textstyle
    \Regret\le \ctO \Big(\sqrt{\frac{S^2O^2A(1+SA/O)H^3}{\arev^2}\cdot T}\Big)
\end{align}
with high probability. %
\end{theorem}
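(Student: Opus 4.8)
The plan is to obtain the regret bound by instantiating one of the general-purpose, low-regret algorithms (\omle{}, \etod{}, or \mops{}) whose regret guarantees are stated in terms of a structural complexity measure—for PSRs this is the PSR rank $\dPSR$ together with a covering-number term $\logNt$—and then bounding that complexity measure for single-step $\arev$-revealing POMDPs under our $(*\to 1)$-norm definition of the revealing constant. Concretely, the first step is to recall from \citet{chen2022partially,chen2022unified} (adapted exactly as in \cref{thm:PAC-upper} and its proof in \cref{appendix:proof-PAC-upper}) that these algorithms achieve a regret of the form $\ctO\bigl(\sqrt{\dPSR\cdot\logNt\cdot H^{?}\cdot T}\bigr)$ (up to $\poly(H)$ and polylog factors), where the relevant quantities are controlled by the revealing condition. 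The second step is to translate the revealing condition into a bound on the ``estimation-to-decision'' complexity: a single-step $\arev$-revealing POMDP is a PSR with core test set of size $\cO(O)$ at each step, PSR rank $\dPSR\le S$, and the stability/normalization constant governing the error amplification from observable-distribution estimates to belief/value estimates is $\cO(\arev^{-1})$ in the $(*\to 1)$-norm. The third step is to bound the covering number $\logNt=\ctO(S^2OA(1+SA/O))$ for the natural parametrization of the model class (emission matrices in $\R^{O\times S}$, transition matrices in $\R^{S\times SA}$, rewards), which is where the $S^2OA(1+SA/O)$ factor enters; combined with the $\arev^{-2}$ error-amplification factor and $H^3$ from horizon-dependence, this yields exactly \cref{eqn:regret-upper}.

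More precisely, I would proceed as follows. (1) State the $(*\to 1)$-norm revealing condition and show it is equivalent—up to the explicit dimensional factors we care about—to the $(2\to2)$-norm or $(1\to1)$-norm conditions used in prior work, so that the general PSR regret bound applies; this is the analogue of the remark after \cref{definition:m-step-revealing} that our norm choice only changes polynomial factors. (2) Invoke the regret guarantee of \omle{}/\etod{}/\mops{} for $B$-stable PSRs (or low-rank-revealing-PSRs), which gives $\Regret\le\ctO\bigl(\poly(H)\cdot\sqrt{d_{\sf PSR}\cdot\stab^2\cdot\logNt\cdot T}\bigr)$ where $\stab$ is the relevant stability constant. (3) Plug in $d_{\sf PSR}\le S$, $\stab\le\cO(\arev^{-1}\cdot\poly(H))$—here one must carefully track how the single-step revealing constant propagates through the $H$ steps of the PSR predictive-state recursion, which is where the extra factor of $S$ (giving $S^2$ rather than $S$) and an $O$ (giving $O^2$ via $O\cdot O$ from core-test cardinality times the covering term) come from—and $\logNt=\ctO(S^2OA(1+SA/O))$ from a standard $\eps$-net argument on the emission, transition, and reward parameters. (4) Collect terms to obtain \cref{eqn:regret-upper}. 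For the PAC-to-regret consistency check, note this is the ``square-root'' version of \cref{eqn:PAC-upper}, exactly as expected since the general framework gives $\Regret\lesssim\sqrt{(\text{PAC sample complexity constant})\cdot T}$.

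The main obstacle I anticipate is step (3): precisely bounding the stability/error-amplification constant for \emph{single-step} revealing POMDPs viewed as PSRs, and in particular verifying that the single-step revealing constant $\arev$ (controlling only the one-step emission inversion) suffices to control the multi-step predictive-state recursion with only $\poly(H)$ blow-up and no additional $S,O,A$ factors beyond those already in $\logNt$. This is the technical heart of why single-step revealing is ``nicer'' than multi-step (and is consistent with the $\sqrt{T}$ versus $T^{2/3}$ dichotomy we establish in \cref{thm:no-regret-demo}): the one-step inverse $\O_h^+$ can be applied at \emph{every} step to reconstruct the belief state, so the recursion telescopes cleanly, whereas for $m$-step revealing it does not. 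Making this telescoping quantitative under the $(*\to1)$-norm—and confirming the claimed $S^2O^2A(1+SA/O)H^3$ scaling rather than something larger—is the one genuinely delicate part; the covering-number bound in step (3) and the norm-equivalence in step (1) are routine, and steps (2) and (4) are bookkeeping.
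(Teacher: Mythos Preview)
Your outline—instantiate a general PSR regret bound, verify single-step revealing satisfies the structural condition, plug in the covering number—matches the paper's route at a high level, but step (2) has a genuine gap: you treat the $\sqrt{T}$ regret guarantee with $\stab^2$ dependence for B-stable PSRs as something to invoke from prior work, when no such result exists. Prior regret bounds for single-step revealing POMDPs \citep{liu2022optimistic} scale as $\arev^{-4}$, not $\arev^{-2}$; ordinary B-stability \citep{chen2022partially} gives only PAC guarantees (or $T^{2/3}$ regret via explore-then-commit). The paper's actual contribution here is to introduce a new condition, \emph{strong B-stability} (\cref{def:B-stable-strong}), which tightens ordinary B-stability by requiring the error amplification to be bounded by $\stab\sum_{t_h\in\Uh}\pi(t_h)|x(t_h)|$ rather than $\stab\max\{\nrmst{x},\nrmpip{x}\}$, and then to prove a new regret theorem (\cref{thm:OMLE-regret}) for this condition via a tailored error decomposition (\cref{prop:psr-err-decomp}) that controls the per-step error directly by a conditional Hellinger distance—this is what yields $\stab^2$ rather than $\stab^4$.

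Your telescoping intuition is correct but mis-located. Showing $\stab\le\arev^{-1}$ (your step (3)) is easy and already in \cref{prop:rev-to-psr}; what the single-step structure actually buys is that the core tests $\Uh=\cO$ contain no actions, so B-stability and strong B-stability happen to coincide (both right-hand sides reduce to $\stab\lone{x}$). This is why \cref{prop:rev-to-psr} ``readily implies'' strong B-stability for $m=1$ but not for $m\ge 2$, consistent with the $\sqrt{T}$ versus $T^{2/3}$ dichotomy. Thus the hard part is your step (2), not (3). A minor correction on factor accounting: the extra $S$ and $O$ do not come from how $\arev$ propagates; one $S$ is the PSR rank $d\le S$ and the other is from $\logNt=\tbO{H(S^2A+SO)}$, while one $O$ is from the $OA\cdot U_{\cT}$ factor arising in the generalized $\ell_2$-eluder argument (here $U_{\cT}=1$) and the other from the covering number.
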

We establish~\cref{thm:regret-upper} on a broader class of sequential decision problems termed as \emph{strongly B-stable PSRs} (cf.~\cref{appdx:strong-b-stable}), which include single-step revealing POMDPs as a special case. The proof is largely parallel to the analysis of PAC learning for B-stable PSRs~\citep{chen2022partially}, and can be found in~\cref{appdx:1-step-regret}.

\subsection{Implications on the DEC approach}
\label{section:dec-implications}

The Decision-Estimation Coefficient (DEC)~\citep{foster2021statistical} offers another potential approach for establishing sample complexity lower bounds for any general RL problem. However, here we demonstrate that for revealing POMDPs, any lower bound given by the DEC will necessarily be strictly weaker than our lower bounds. 

For example, for PAC learning, the Explorative DEC (EDEC) of $m$-step revealing POMDPs is known to admit an \emph{upper bound} $\edec_\gamma \le \tO(SA^mH^2\arev^{-2}/\gamma)$ (\citet{chen2022partially}; see also~\cref{prop:rev-to-psr}), and consequently any PAC lower bound obtained by \emph{lower bounding} the EDEC is at most $\Omega(SA^mH^2\arev^{-2}/\eps^2)$~\citep{chen2022unified}. Such a lower bound would be necessarily smaller than our~\cref{thm:multi-step-pac-demo} by at least a factor of $\sqrt{O}(1\vee \sqrt{S}/A)$, and importantly does not scale polynomially in $O$.

Our lower bounds have additional interesting implications on the DEC theory in that, while algorithms such as the E2D achieve sample complexity upper bounds in terms of the DEC and log covering number for the \emph{model class}~\citep{foster2021statistical,chen2022unified}, without further assumptions, this log covering number cannot be replaced by that of either the \emph{value class} or the \emph{policy class}, giving negative answers to the corresponding questions left open in~\citet{foster2021statistical} (cf.~\cref{appdx:discussion-dec} for a detailed discussion).

\subsection{Towards closing the gaps}
\label{section:close-gap}

Finally, as an important open question, our lower bounds still have mild gaps from the current best upper bounds, importantly in the $(S, O)$ dependence. For example, for multi-step revealing POMDPs, the (first term in the) PAC lower bound $\Omega(S^{1.5}\sqrt{O}A^{m-1}/(\arev^2\eps^2))$ (\cref{thm:multi-step-pac-demo}) still has a $\sqrt{SO}A$ gap from the upper bound (\cref{thm:PAC-upper}). While we believe that the $A$ factor is an analysis artifact that may be removed, the remaining $\sqrt{SO}$ factor \emph{cannot} be obtained in the lower bound if we stick to the current family of hard instances---There exists an algorithm \emph{specially tailored to this family} that achieves an $\tO(S^{1.5}\sqrt{O}A^m/(\arev^2\eps^2))$ upper bound, by brute-force enumeration in the tree and uniformity testing in the combination lock (Appendix~\ref{appdx:discussion-algorithm}). 

Closing this $\sqrt{SO}$ gap may require either stronger lower bounds with alternative hard instances---e.g. by embedding other problems in distribution testing~\citep{canonne2020survey}---or sharper upper bounds, which we leave as future work.

\section{Conclusion}
This paper establishes sample complexity lower bounds for partially observable reinforcement learning in the important tractable class of revealing POMDPs. Our lower bounds are the first to scale polynomially in the number of states, actions, observations, and the revealing constant in a multiplicative fashion, and suggest rather mild gaps between the lower bounds and current best upper bounds. Our work provides a strong foundation for future fine-grained studies and opens up many interesting questions, such as closing the gaps (from either side), or strengthening the multi-step revealing assumption meaningfully to allow a $\sqrt{T}$ regret.

\bibliography{note}
\bibliographystyle{plainnat}

\appendix

\section{Technical tools}

\begin{lemma}\label{lemma:balance-eps}
    For positive real numbers $A,B,T,\epsilon_0>0$, it holds that
    \begin{align*}
        \sup_{\epsilon\in(0,\epsilon_0]} \paren{ \epsilon T \wedge \frac{A}{\epsilon^2} \wedge \frac{B}{\epsilon}} \geq A^{1/3}T^{2/3} \wedge \sqrt{BT} \wedge \epsilon_0 T.
    \end{align*}
\end{lemma}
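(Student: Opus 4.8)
The plan is to exhibit an explicit, near-optimal choice of $\epsilon$ and verify directly that it attains the claimed bound, with no case analysis. Write $g(\epsilon) := \epsilon T \wedge \frac{A}{\epsilon^2} \wedge \frac{B}{\epsilon}$ for the quantity being maximized. The guiding intuition is that $g$ trades one \emph{increasing} term, $\epsilon T$, against two \emph{decreasing} terms, $A/\epsilon^2$ and $B/\epsilon$, so its maximum is controlled by the two crossover points where $\epsilon T$ meets each decreasing term, namely $\epsilon = (A/T)^{1/3}$ (where $\epsilon T = A/\epsilon^2$) and $\epsilon = (B/T)^{1/2}$ (where $\epsilon T = B/\epsilon$), truncated into the feasible range by the cap $\epsilon_0$. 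These three values can be combined into a single candidate.

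Concretely, I would take
\[
\epsilon^\star := \min\set{(A/T)^{1/3},\ (B/T)^{1/2},\ \epsilon_0},
\]
which lies in $(0,\epsilon_0]$ since $A,B,T,\epsilon_0>0$, so it is a feasible point in the supremum. Abbreviating the target as $R := A^{1/3}T^{2/3}\wedge \sqrt{BT}\wedge \epsilon_0 T$, it then suffices to show that each of the three terms defining $g(\epsilon^\star)$ is at least $R$. For the first term, direct substitution gives $\epsilon^\star T = \min\set{A^{1/3}T^{2/3},\ \sqrt{BT},\ \epsilon_0 T} = R$. For the second, $\epsilon^\star \le (A/T)^{1/3}$ forces $(\epsilon^\star)^2\le (A/T)^{2/3}$, hence $A/(\epsilon^\star)^2 \ge A^{1/3}T^{2/3}\ge R$. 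For the third, $\epsilon^\star\le (B/T)^{1/2}$ gives $B/\epsilon^\star \ge \sqrt{BT}\ge R$. Taking the minimum of the three bounds yields $g(\epsilon^\star)\ge R$, and since $\epsilon^\star$ is feasible, $\sup_{\epsilon\in(0,\epsilon_0]} g(\epsilon)\ge g(\epsilon^\star)\ge R$, which is the claim.

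I do not expect a genuine obstacle here; the only thing to notice is that merging the two crossover points and the cap into the single candidate $\epsilon^\star$ makes the verification trivial — term one equals $R$ exactly, while terms two and three are each individually $\ge R$ purely by monotonicity — so one never has to branch on which of $A$, $B$, $T$, $\epsilon_0$ is the binding constraint. If desired, one can additionally remark that the bound is tight up to constants, since taking the three quantities $\epsilon T$, $A/\epsilon^2$, $B/\epsilon$ pairwise equal shows no feasible $\epsilon$ can do asymptotically better.
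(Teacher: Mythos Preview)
Your proof is correct and is a genuinely different route from the paper's. The paper argues indirectly: it assumes $R$ is an upper bound on the supremum, deduces that $(0,\epsilon_0]$ must be covered by $(0,R/T]\cup[\sqrt{A/R},\infty)\cup[B/R,\infty)$, and from this covering constraint concludes that $R$ must dominate at least one of $\epsilon_0 T$, $A^{1/3}T^{2/3}$, $\sqrt{BT}$. You instead give a direct constructive argument, exhibiting the single feasible point $\epsilon^\star=\min\{(A/T)^{1/3},(B/T)^{1/2},\epsilon_0\}$ and checking the three terms by monotonicity. Your approach is shorter and avoids the slightly delicate interval-covering step; the paper's approach has the minor advantage of automatically characterizing \emph{every} upper bound $R$ rather than producing one witness, but for the lemma as stated your construction is the cleaner argument.
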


\begin{proof}[Proof of \cref{lemma:balance-eps}]
    Suppose that $R>0$ is such that $R\geq \epsilon T \wedge \frac{A}{\epsilon^2} \wedge \frac{B}{\epsilon}$ for all $\epsilon\in(0,\epsilon_0]$. Then for each $\epsilon\in(0,\epsilon_0]$, either $\epsilon\leq \frac{R}{T}$, or $\epsilon\geq \sqrt{\frac{A}{R}}$, or $\epsilon\geq \frac{B}{R}$.
    Thus,
    $$
    (0,\epsilon_0]\subseteq (0,\frac{R}{T}]\cup [\sqrt{\frac{A}{R}},+\infty) \cup [\frac{B}{R},+\infty).
    $$
    Therefore, either $\frac{R}{T}\geq \epsilon_0$, or $\sqrt{\frac{A}{R}}\leq \frac{R}{T}$, or $\frac{B}{R}\leq \frac{R}{T}$. Combining these three cases together, we obtain $$
    R\geq \epsilon_0 T \wedge A^{1/3}T^{2/3} \wedge \sqrt{BT}.
    $$
\end{proof}

\begin{lemma}\label{lemma:martingale-eqn}
    Suppose that $(R_t)_{t \ge 1}$ is a sequence of positive random variables adapted to filtration $(\cF_t)_{t\ge 1}$ and $\stoptime$ is a stopping time (i.e. for $t\geq 1$, $R_t$ is $\cF_{t}$-measurable and the event $\set{\stoptime\leq t}\in\cF_t$). Then it holds that
    \begin{align*}
        \EE\brac{\prod_{t=1}^\stoptime R_t \times \prod_{t=1}^\stoptime \EE[R_t|\cF_{t-1}]^{-1}}=1.
    \end{align*}
    Equivalently,
    \begin{align*}
        \EE\brac{\prod_{t=1}^\stoptime R_t \times \exp\paren{ -\sum_{t=1}^\stoptime \log \EE[R_t|\cF_{t-1}] }}=1.
    \end{align*}
\end{lemma}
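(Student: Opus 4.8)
The plan is to exhibit
$N_n \defeq \prod_{t=1}^n R_t \cdot \prod_{t=1}^n \EE[R_t \mid \cF_{t-1}]^{-1}$ (with $N_0 \defeq 1$) as a nonnegative $(\cF_n)_{n\ge 0}$-martingale started at $1$, and then conclude by the optional stopping theorem applied at $\stoptime$. Since each $R_t > 0$ we have $\EE[R_t \mid \cF_{t-1}] > 0$ almost surely, so each $N_n$ is a well-defined nonnegative random variable; moreover $N_n$ is $\cF_n$-measurable because $R_t$ is $\cF_t$-measurable and $\EE[R_t \mid \cF_{t-1}]$ is $\cF_{t-1}$-measurable for every $t \le n$.

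First I would verify the martingale property. Writing $N_n = N_{n-1} \cdot R_n / \EE[R_n \mid \cF_{n-1}]$ and using that $N_{n-1}$ and $\EE[R_n \mid \cF_{n-1}]$ are $\cF_{n-1}$-measurable,
\begin{align*}
\EE[N_n \mid \cF_{n-1}] = \frac{N_{n-1}}{\EE[R_n \mid \cF_{n-1}]}\,\EE[R_n \mid \cF_{n-1}] = N_{n-1}.
\end{align*}
Taking expectations and inducting from $\EE[N_0]=1$ shows $\EE[N_n] = 1$ for every $n$, and in particular each $N_n$ is integrable, so $(N_n)$ is a genuine martingale.

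Next I would apply the optional stopping theorem to $(N_n)$ at the stopping time $\stoptime$, obtaining $\EE[N_\stoptime] = \EE[N_0] = 1$, which is exactly
\begin{align*}
\EE\brac{\prod_{t=1}^\stoptime R_t \times \prod_{t=1}^\stoptime \EE[R_t \mid \cF_{t-1}]^{-1}} = 1.
\end{align*}
The equivalent exponential form is then immediate, since $\prod_{t=1}^\stoptime \EE[R_t \mid \cF_{t-1}]^{-1} = \exp\paren{-\sum_{t=1}^\stoptime \log \EE[R_t \mid \cF_{t-1}]}$.

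The only point requiring care is the hypothesis under which optional stopping is legitimate: it applies directly when $\stoptime$ is bounded, which holds in all of the paper's applications, where $\stoptime$ never exceeds the number of episodes $T$. Absent such a bound one should instead work with the stopped martingale $(N_{n\wedge\stoptime})_n$, for which $\EE[N_{n\wedge\stoptime}]=1$ for all $n$, and pass to the limit via monotone convergence on the increasing events $\set{\stoptime \le n}$ together with $\EE\brac{N_n \II\set{\stoptime > n}}\to 0$; it is precisely this last step that needs an integrability/boundedness assumption on $\stoptime$, so I would simply invoke the boundedness available in context. I expect this to be the sole nontrivial point — the martingale computation itself being routine.
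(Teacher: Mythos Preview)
Your proposal is correct and matches the paper's approach: the paper merely states that the lemma ``follows immediately from iteratively applications of the tower properties,'' which is precisely your martingale computation $\EE[N_n\mid\cF_{n-1}]=N_{n-1}$ followed by optional stopping. You are also right to flag that optional stopping requires $\stoptime$ to be bounded and that this holds in every application in the paper (where $\stoptime\le T$); the paper leaves this hypothesis implicit.
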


\cref{lemma:martingale-eqn} follows immediately from iteratively applications of the tower properties.

\begin{lemma}\label{lemma:MGF-abs}
    Suppose that random variable $X$ is $\sigma$-sub-Gaussian, i.e. $\EE\brac{\exp(tX)}\leq \exp\paren{\frac{\sigma^2t^2}{2}}$ for any $t \in \R$. Then for all $t\geq 0$, we have
    $$
\EE\brac{\exp(t\abs{X})}\leq \exp\paren{\max\set{\sigma^2 t^2, \frac43\sigma t}}.
    $$
\end{lemma}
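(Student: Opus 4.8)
The plan is to split the expectation $\EE[\exp(t|X|)]$ according to the sign of $X$, bound each piece by the full (two-sided) sub-Gaussian MGF, and then optimize the resulting inequality in the parameter $t$ by comparing the two natural regimes $\sigma t \lesssim 1$ and $\sigma t \gtrsim 1$. Concretely, first I would write $\exp(t|X|) \le \exp(tX) + \exp(-tX)$, which holds pointwise for $t \ge 0$. Taking expectations and using the sub-Gaussian hypothesis with $t$ and with $-t$, both of which are covered since the hypothesis is assumed for all $t\in\R$, gives $\EE[\exp(t|X|)] \le 2\exp(\sigma^2 t^2 / 2)$.

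Next I would convert the factor $2$ into something absorbable into the exponent. The cleanest route: observe $2 \le \exp(\sigma^2 t^2/2)$ whenever $\sigma^2 t^2 \ge 2\log 2$, i.e. in the ``large $t$'' regime, which yields $\EE[\exp(t|X|)] \le \exp(\sigma^2 t^2)$, matching the first branch $\max\{\sigma^2 t^2, \tfrac43\sigma t\}$. In the complementary ``small $t$'' regime $\sigma t \le \sqrt{2\log 2}$, I would instead keep the bound $2\exp(\sigma^2 t^2/2)$ and show directly that $2\exp(\sigma^2 t^2/2) \le \exp(\tfrac43 \sigma t)$; equivalently $\log 2 + \sigma^2 t^2/2 \le \tfrac43 \sigma t$. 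Writing $u = \sigma t \ge 0$, this is the quadratic inequality $u^2/2 - \tfrac43 u + \log 2 \le 0$, whose roots are $u = \tfrac43 \pm \sqrt{\tfrac{16}{9} - 2\log 2}$; since $\tfrac{16}{9} - 2\log 2 \approx 0.392 > 0$ the inequality holds on an interval around $u = \tfrac43$, and one checks numerically that this interval comfortably contains $[0,\sqrt{2\log 2}]$ (the lower root is about $0.78$... here one must be a bit careful, as $u=0$ gives $\log 2 > 0$, so the naive split by $\sqrt{2\log2}$ is not quite the right threshold).

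The main obstacle — and the step deserving the most care — is choosing the right crossover threshold between the two branches so that the two regimes actually cover all $t \ge 0$. At $u = 0$ the bound $2$ is genuinely larger than both $\exp(\sigma^2 t^2) = 1$ and $\exp(\tfrac43\sigma t) = 1$, so neither branch alone works near $t = 0$ and the pointwise bound $\exp(t|X|) \le \exp(tX) + \exp(-tX)$ must be sharpened there. The fix is to use the better elementary bound $\exp(t|X|) \le \exp(tX) + \exp(-tX) - 1 \le 2\exp(\sigma^2 t^2/2) - 1$ — wait, this still gives $1$ at $t=0$, which is fine — so actually I would instead use Jensen / a direct argument: for the small-$t$ regime, expand and note $\EE[\exp(t|X|)] = \sum_k t^k \EE|X|^k/k!$ and bound moments of $|X|$ via sub-Gaussianity, or more simply use that a $\sigma$-sub-Gaussian variable satisfies $\EE[\exp(t|X|)] \le 2\exp(\sigma^2t^2/2)$ and then verify $2e^{\sigma^2t^2/2}\le e^{\max\{\sigma^2t^2,\,4\sigma t/3\}}$ by checking it reduces, after taking logs and setting $u=\sigma t$, to: if $u \le u_0$ then $\log 2 + u^2/2 \le 4u/3$, and if $u \ge u_0$ then $\log 2 + u^2/2 \le u^2$ i.e. $\log 2 \le u^2/2$, i.e. $u \ge \sqrt{2\log 2}$. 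Choosing $u_0 = \sqrt{2\log 2} \approx 1.177$ makes the second case automatic, and for the first case one must confirm $\log 2 + u^2/2 \le 4u/3$ on $[0, \sqrt{2\log 2}]$; since the quadratic $u^2/2 - 4u/3 + \log 2$ has its larger root at $\tfrac43 + \sqrt{\tfrac{16}{9}-2\log2} \approx 1.96 > \sqrt{2\log 2}$ and its smaller root at $\approx 0.71 <$ ... this fails near $u=0$. So the honest resolution is that the constant must be handled asymmetrically — e.g. bound $\EE[\exp(t|X|)] \le e^{tX}\vee e^{-tX}$ is false, but $\EE[\exp(t|X|)]\le \EE[e^{tX}]+\EE[e^{-tX}\mathbf{1}\{X<0\}] \le e^{\sigma^2t^2/2}+1$ and then $e^{\sigma^2 t^2/2}+1 \le e^{\max\{\sigma^2t^2, 4\sigma t/3\}}$, which at $u=0$ reads $2 \le 1$ — still false. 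I therefore expect the actual proof in the paper to use a slightly different elementary inequality (such as $e^{t|x|} \le e^{tx} + e^{-tx}$ combined with a case analysis that tolerates the constant only in the $\sigma^2 t^2$ branch while covering small $t$ by a first-order Taylor bound $\EE[e^{t|X|}] \le 1 + t\EE|X| + \dots \le e^{c\sigma t}$), and the genuine work is pinning down the constant $4/3$ so both branches glue; this constant-chasing is the crux.
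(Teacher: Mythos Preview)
You have correctly identified the real obstacle: the crude bound $e^{t|X|}\le e^{tX}+e^{-tX}$ gives $\EE[e^{t|X|}]\le 2e^{\sigma^2t^2/2}$, and the factor $2$ cannot be absorbed into either branch of $\max\{\sigma^2t^2,\tfrac43\sigma t\}$ near $t=0$, since both branches vanish there while $\log 2$ does not. Your large-$t$ case ($\sigma t\ge\sqrt{2\log 2}\Rightarrow 2e^{\sigma^2t^2/2}\le e^{\sigma^2t^2}$) is exactly what the paper does, but for small $t$ none of your suggested patches (subtracting $1$, splitting on the sign of $X$, first-order Taylor) closes the gap, and you essentially say so yourself.

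The paper's fix is a one-line trick you are missing: introduce a free parameter $x\ge 1$ via the Lyapunov/power-mean inequality,
\[
\EE[e^{t|X|}]\;\le\;\bigl(\EE[e^{xt|X|}]\bigr)^{1/x}\;\le\;\bigl(2e^{\sigma^2x^2t^2/2}\bigr)^{1/x}\;=\;\exp\!\Big(\tfrac{\sigma^2t^2 x}{2}+\tfrac{\log 2}{x}\Big),
\]
and then optimize over $x\ge 1$. The point is that raising to the $1/x$ power shrinks the problematic additive $\log 2$ to $(\log 2)/x$, at the cost of inflating the quadratic term by $x$. For $\sigma t<\sqrt{2\log 2}$ the optimal choice $x=\sqrt{2\log 2}/(\sigma t)>1$ balances the two terms to give $\exp(\sqrt{2\log 2}\,\sigma t)\le\exp(\tfrac43\sigma t)$; for $\sigma t\ge\sqrt{2\log 2}$ the constraint binds at $x=1$ and you recover your large-$t$ case. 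This extra degree of freedom is the missing idea.
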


\begin{proof}[Proof of \cref{lemma:MGF-abs}]
    For any $x\geq1$, we have
    $$
    \EE\brac{\exp(t\abs{X})} \leq \EE\brac{\exp(xt\abs{X})}^{\frac1x} \leq \paren{\EE\brac{\exp(xtX)}+\EE\brac{\exp(-xtX)}}^{\frac1x}\leq 2^{\frac1x}\exp\paren{\frac{\sigma^2t^2x}{2}}
    = \exp\paren{\frac{\sigma^2t^2x}{2}+\frac{\log 2}x}.
    $$
    We consider two cases: 1. If $\sigma t\geq \sqrt{2\log 2}$, then by taking $x=1$ in the above inequality, we have $\EE\brac{\exp(t\abs{X})} \leq\exp(\sigma^2t^2) $. 2. If $\sigma t< \sqrt{2\log 2}$, then by taking $x=\frac{\sqrt{2\log 2}}{\sigma t}>1$ in the above inequality, we have $\EE\brac{\exp(t\abs{X})} \leq\exp(\sqrt{2\log 2}\sigma t) \leq\exp(\frac43\sigma t)$. Combining these two cases completes the proof.
\end{proof}

For probability distributions $\PP$ and $\QQ$ on a measurable space $(\cX,\cF)$ with a base measure $\mu$, we define the TV distance and the Hellinger distance between $\PP,\QQ$ as
\begin{align*}
    \DTV{\PP,\QQ}&=\sup_{A\in\cF}\abs{\PP(A)-\QQ(A)}=\frac12 \int_{\cX} \abs{\frac{d\PP}{d\mu}(x)-\frac{d\QQ}{d\mu}(x)}d\mu(x),\\
    \DH{\PP,\QQ}&= \int_{\cX} \paren{\sqrt{\frac{d\PP}{d\mu}}-\sqrt{\frac{d\QQ}{d\mu}}}^2d\mu.
\end{align*}
When $\PP\ll\QQ$, we can also define the KL-divergence and the $\chi^2$-divergence between $\PP,\QQ$ as
\begin{align*}
    \KL{\PP}{\QQ}=\EE_{\PP}\brac{\log{\frac{d\PP}{d\QQ}}}, \qquad \chis{\PP}{\QQ}=\EE_{\QQ}\brac{\paren{\frac{d\PP}{d\QQ}}^2}-1. 
\end{align*}
\begin{lemma}\label{lemma:change-alg}
    Suppose $\PP,\QQ,\PP',\QQ'$ are four probability measures on $(\cX,\cF)$, and $\Omega$ is an event such that $\PP|_{\Omega}=\PP'|_{\Omega}$, $\QQ|_{\Omega}=\QQ'|_{\Omega}$. Then it holds that 
    \begin{align*}
        \DTV{\PP',\QQ'}\geq \DTV{\PP, \QQ}-\PP(\Omega^c).
    \end{align*}
\end{lemma}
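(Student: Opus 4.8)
The plan is to unwind the definition of TV distance as a supremum over measurable events and exploit the agreement of the pairs $(\PP,\PP')$ and $(\QQ,\QQ')$ on the event $\Omega$. First I would fix an arbitrary event $A \in \cF$ and write the quantity $\abs{\PP'(A) - \QQ'(A)}$ that we wish to lower bound. The natural move is to split each of $\PP'(A)$ and $\QQ'(A)$ according to whether the sample lies in $\Omega$ or in $\Omega^c$, i.e. $\PP'(A) = \PP'(A\cap\Omega) + \PP'(A\cap\Omega^c)$ and likewise for $\QQ'$. On the piece inside $\Omega$, the hypothesis $\PP|_\Omega = \PP'|_\Omega$ and $\QQ|_\Omega = \QQ'|_\Omega$ lets me replace $\PP'(A\cap\Omega)$ by $\PP(A\cap\Omega)$ and $\QQ'(A\cap\Omega)$ by $\QQ(A\cap\Omega)$.

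Next I would estimate the difference. After the substitution,
\[
\PP'(A) - \QQ'(A) = \bigl(\PP(A\cap\Omega) - \QQ(A\cap\Omega)\bigr) + \bigl(\PP'(A\cap\Omega^c) - \QQ'(A\cap\Omega^c)\bigr).
\]
The second bracket is bounded in absolute value by $\max\{\PP'(A\cap\Omega^c),\QQ'(A\cap\Omega^c)\} \le \PP'(\Omega^c) = \PP(\Omega^c)$ (again using the agreement on $\Omega$, so the complements have equal mass: $\PP'(\Omega^c) = 1 - \PP'(\Omega) = 1 - \PP(\Omega) = \PP(\Omega^c)$). For the first bracket, I would rewrite $\PP(A\cap\Omega) - \QQ(A\cap\Omega) = \bigl(\PP(A) - \QQ(A)\bigr) - \bigl(\PP(A\cap\Omega^c) - \QQ(A\cap\Omega^c)\bigr)$, and the correction term is again at most $\PP(\Omega^c)$ in absolute value. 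Choosing $A$ to be the event achieving (or approaching) the supremum in $\DTV{\PP,\QQ}$ — concretely $A = \{\frac{d\PP}{d\mu} \ge \frac{d\QQ}{d\mu}\}$ — gives $\abs{\PP(A) - \QQ(A)} = \DTV{\PP,\QQ}$, and assembling the pieces yields $\DTV{\PP',\QQ'} \ge \abs{\PP'(A) - \QQ'(A)} \ge \DTV{\PP,\QQ} - 2\PP(\Omega^c)$.

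This gives the bound with a factor of $2$, which is slightly weaker than claimed, so the main (minor) obstacle is sharpening the constant from $2$ to $1$. The fix is to not split off $\Omega^c$ on both the $\PP$ and $\QQ$ sides independently, but to bound $\abs{\PP(A\cap\Omega^c) - \QQ(A\cap\Omega^c)}$ together with $\abs{\PP'(A\cap\Omega^c) - \QQ'(A\cap\Omega^c)}$ more carefully — in particular, one can work directly with the set $A' \defeq A \cap \Omega$ as the test event for $\DTV{\PP',\QQ'}$, so that $\DTV{\PP',\QQ'} \ge \abs{\PP'(A') - \QQ'(A')} = \abs{\PP(A') - \QQ(A')} = \abs{\PP(A\cap\Omega) - \QQ(A\cap\Omega)} \ge \abs{\PP(A) - \QQ(A)} - \PP(\Omega^c)$, where the last step uses $\abs{\PP(A\cap\Omega^c) - \QQ(A\cap\Omega^c)} \le \max\{\PP(\Omega^c), \QQ(\Omega^c)\} = \PP(\Omega^c)$ and the equality $\PP(\Omega^c) = \QQ(\Omega^c)$ need not even hold — we just need one of them, and since $\PP|_\Omega = \PP'|_\Omega$ forces nothing about $\QQ(\Omega^c)$, I would simply take $A' = A \cap \Omega$ and bound $\QQ(A \cap \Omega^c) \le \QQ(\Omega^c)$, then note $\PP(A\cap\Omega^c)\le \PP(\Omega^c)$, and observe that only the $\PP$-side correction survives because $\QQ(A\cap\Omega) - \QQ(A) = -\QQ(A\cap\Omega^c) \le 0$ has a sign that can be arranged favorably by choosing whether $A$ or $A^c$ maximizes the discrepancy. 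Taking $A$ to be the maximizer of $\PP(A)-\QQ(A)$ (not the absolute value) makes $\QQ(A\cap\Omega^c)\ge 0$ work in our favor and drops the factor of $2$, delivering exactly $\DTV{\PP',\QQ'}\ge \DTV{\PP,\QQ} - \PP(\Omega^c)$.
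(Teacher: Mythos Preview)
Your final argument is correct. Taking $A$ to maximize $\PP(A)-\QQ(A)$ (so this equals $\DTV{\PP,\QQ}$) and using $A'=A\cap\Omega$ as the test event, the hypothesis gives
\[
\PP'(A')-\QQ'(A')=\PP(A\cap\Omega)-\QQ(A\cap\Omega)=(\PP(A)-\QQ(A))-\PP(A\cap\Omega^c)+\QQ(A\cap\Omega^c)\ge \DTV{\PP,\QQ}-\PP(\Omega^c),
\]
since $\PP(A\cap\Omega^c)\le\PP(\Omega^c)$ and $\QQ(A\cap\Omega^c)\ge 0$. The exposition meanders through a factor-$2$ false start and some incorrect intermediate claims (e.g.\ $\max\{\PP(\Omega^c),\QQ(\Omega^c)\}=\PP(\Omega^c)$) before you abandon them, but the resolved argument at the end is valid.

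The paper takes a different route: it works with the density formulation $2\DTV{\PP',\QQ'}=\int|\PP'-\QQ'|\,d\mu$, splits the integral over $\Omega$ and $\Omega^c$, and on $\Omega^c$ retains the contribution via the chain
\[
\int_{\Omega^c}|\PP'-\QQ'|\ge|\PP'(\Omega^c)-\QQ'(\Omega^c)|=|\PP(\Omega^c)-\QQ(\Omega^c)|\ge \PP(\Omega^c)+\QQ(\Omega^c)-2\PP(\Omega^c)\ge\int_{\Omega^c}|\PP-\QQ|-2\PP(\Omega^c),
\]
after which the $\QQ(\Omega^c)$ terms cancel. Your event-supremum argument is more elementary---it needs no dominating measure and is a two-line computation once you commit to the signed maximizer $A$---whereas the paper's integral argument avoids having to identify any optimal test set and makes the cancellation of $\QQ(\Omega^c)$ explicit rather than hidden in a sign choice.
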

\begin{proof}[Proof of \cref{lemma:change-alg}] Let $\mu$ be a base measure on $(\cX,\cF)$ such that $\PP, \PP', \QQ, \QQ'$ have densities with respect to $\mu$ (for example, $\mu = (\PP + \PP' + \QQ + \QQ')/4$). For notation simplicity, we use $\PP(x)$ to stand for $d \PP(x)/d \mu(x)$ and use $dx$ to stand for $\mu(dx)$. Then we have
\begin{align*}
    2\DTV{\PP',\QQ'}
    =&
    \int_{\cX} \abs{\PP'(x)-\QQ'(x)} dx
    =
    \int_{\Omega} \abs{\PP'(x)-\QQ'(x)}dx+\int_{\Omega^c} \abs{\PP'(x)-\QQ'(x)}dx\\
    \geq &
    \int_{\Omega} \abs{\PP'(x)-\QQ'(x)}dx+ \abs{\PP'(\Omega^c)-\QQ'(\Omega^c)}\\
    = &
    \int_{\Omega} \abs{\PP(x)-\QQ(x)}dx+ \abs{\PP(\Omega^c)-\QQ(\Omega^c)}\\
    \geq &
    \int_{\Omega} \abs{\PP(x)-\QQ(x)}dx+ \PP(\Omega^c)+\QQ(\Omega^c)-2\PP(\Omega^c)\\
    \geq&
    \int_{\Omega} \abs{\PP(x)-\QQ(x)}dx+\int_{\Omega^c} \abs{\PP(x)-\QQ(x)}dx
    -2\PP(\Omega^c)\\
    =&
    2\DTV{\PP, \QQ}-2\PP(\Omega^c).
\end{align*}
This completes the proof. 
\end{proof}

\begin{lemma}[Divergence inequalities, see e.g. \citet{sason2016f}]\label{lemma:TV-to-chi}
For two probability measures $\PP,\QQ$ on $(\cX,\cF)$, it holds that
\begin{align*}
    2\DTV{\PP,\QQ}^2\leq \KL{\PP}{\QQ}\leq \log\paren{1+\chis{\PP}{\QQ}}.
\end{align*}
\end{lemma}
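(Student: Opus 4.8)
The plan is to prove the two inequalities separately, starting with the right-hand one, which is an immediate consequence of Jensen's inequality. Writing $L \defeq d\PP/d\QQ$ (well-defined $\QQ$-a.e.\ since $\PP\ll\QQ$), we have $\KL{\PP}{\QQ} = \EE_\PP[\log L]$. Because $\log$ is concave, Jensen gives $\EE_\PP[\log L] \le \log \EE_\PP[L]$, and then $\EE_\PP[L] = \int (d\PP/d\QQ)\,d\PP = \int (d\PP/d\QQ)^2\,d\QQ = 1 + \chis{\PP}{\QQ}$ by the stated definition of the $\chi^2$-divergence. Combining these yields $\KL{\PP}{\QQ} \le \log(1+\chis{\PP}{\QQ})$.

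For the left-hand inequality (Pinsker), I would reduce to the two-point case via data processing. Let $A \defeq \set{x : d\PP/d\mu(x) \ge d\QQ/d\mu(x)}$, so that $\DTV{\PP,\QQ} = \PP(A) - \QQ(A)$ with $p \defeq \PP(A) \ge q \defeq \QQ(A)$. Applying the data-processing inequality for KL divergence to the map $x\mapsto \ind\set{x\in A}$ gives $\KL{\PP}{\QQ} \ge \KL{\Bern(p)}{\Bern(q)}$, so it suffices to show the scalar inequality $\phi(p,q) \defeq p\log\tfrac pq + (1-p)\log\tfrac{1-p}{1-q} \ge 2(p-q)^2$ for all $0\le q\le p\le 1$. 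Fixing $p$ and setting $g(q) \defeq \phi(p,q) - 2(p-q)^2$ on $[0,p]$, one computes $g(p)=0$ and $g'(q) = -\tfrac{p-q}{q(1-q)} + 4(p-q) = (p-q)\big(4 - \tfrac{1}{q(1-q)}\big) \le 0$ on $(0,p)$, since $q(1-q)\le \tfrac14$; hence $g(q)\ge g(p)=0$, as needed. Boundary cases ($q=0$ or $p=1$) follow from the conventions $0\log 0 = 0$ and continuity.

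The density manipulations and the one-variable calculus are routine; the only step requiring a standard external fact is the reduction in Pinsker's inequality, where one invokes the data-processing inequality for $f$-divergences (equivalently, joint convexity of $(\PP,\QQ)\mapsto\KL{\PP}{\QQ}$, or the log-sum inequality) to pass from arbitrary $(\PP,\QQ)$ to the Bernoulli pair $(\Bern(p),\Bern(q))$. This is classical, and indeed the whole lemma can simply be cited from \citet{sason2016f}.
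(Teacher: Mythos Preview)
Your proof is correct. Note that the paper does not actually prove this lemma: it is stated with a citation to \citet{sason2016f} and used as a black box, so there is no ``paper's own proof'' to compare against. Your argument---Jensen for the $\KL\le\log(1+\chi^2)$ bound, and data processing plus the one-variable Pinsker computation for the other direction---is the standard self-contained derivation and is entirely fine.
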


\begin{lemma}[Hellinger conditioning lemma, see e.g.~{\citet[Lemma A.1]{chen2022partially}}]\label{lemma:Hellinger-cond}
    For any pair of random variables $(X,Y)$, it holds that
    \begin{align*}
        \EE_{X\sim\PP_X}\brac{\DH{\PP_{Y|X}, \QQ_{Y|X}}}\leq 2\DH{\PP_{X,Y}, \QQ_{X,Y}}.
    \end{align*}
\end{lemma}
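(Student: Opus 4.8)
The plan is to reduce both sides to Hellinger affinities and then apply a single elementary pointwise inequality. Throughout I fix a common $\sigma$-finite dominating measure; for notational convenience take it to be a product $\mu=\mu_X\otimes\mu_Y$ on the space of $(X,Y)$ (the general case is identical after replacing densities by Radon--Nikodym derivatives with respect to $(\PP_{X,Y}+\QQ_{X,Y})/2$). Write $p(x,y),q(x,y)$ for the joint densities, $p_X(x)=\int p(x,y)\,d\mu_Y(y)$ and $q_X$ for the $X$-marginals, and $p(y\mid x)=p(x,y)/p_X(x)$, $q(y\mid x)=q(x,y)/q_X(x)$ for the conditionals, defined as some fixed reference density on the null sets where a marginal vanishes. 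Recall $\DH{\PP,\QQ}=2-2\int\sqrt{p\,q}\,d\mu$ for probability measures, so everything is governed by the Hellinger affinity.

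First I would introduce the conditional affinity $\rho(x)\defeq\int\sqrt{p(y\mid x)\,q(y\mid x)}\,d\mu_Y(y)$, which obeys $0\le\rho(x)\le1$ by Cauchy--Schwarz. Unwinding definitions, the left-hand side becomes
$$\EE_{X\sim\PP_X}\brac{\DH{\PP_{Y\mid X},\QQ_{Y\mid X}}}=\int p_X(x)\bigl(2-2\rho(x)\bigr)\,d\mu_X(x)=2-2\int p_X(x)\,\rho(x)\,d\mu_X(x),$$
while the factorization $\sqrt{p(x,y)q(x,y)}=\sqrt{p_X(x)q_X(x)}\,\sqrt{p(y\mid x)q(y\mid x)}$ together with Fubini gives
$$\DH{\PP_{X,Y},\QQ_{X,Y}}=2-2\int\sqrt{p_X(x)q_X(x)}\,\rho(x)\,d\mu_X(x).$$

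Substituting these two identities into the claimed inequality and cancelling the constant terms, the claim is equivalent to
$$\int\bigl(2\sqrt{p_X(x)q_X(x)}-p_X(x)\bigr)\,\rho(x)\,d\mu_X(x)\le1.$$
To prove this I would invoke the elementary inequality $2\sqrt{ab}-a\le b$ for all $a,b\ge0$ (which is just $(\sqrt a-\sqrt b)^2\ge0$), applied with $a=p_X(x)$, $b=q_X(x)$; multiplying by $\rho(x)\ge0$ and then using $\rho(x)\le1$ yields the pointwise bound $\bigl(2\sqrt{p_Xq_X}-p_X\bigr)\rho\le q_X\rho\le q_X$. Integrating over $x$ and using $\int q_X\,d\mu_X=1$ finishes the proof.

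I do not anticipate any real obstacle; this is essentially a one-page computation. The only point requiring mild care is the treatment of the sets where $p_X$ or $q_X$ vanishes, on which the conditionals are not canonically defined: there the relevant integrand $\bigl(2\sqrt{p_Xq_X}-p_X\bigr)\rho$ is $\le 0$ for any value of $\rho\in[0,1]$ (since $q_X=0$) or is $0$ (since $p_X=0$), so the final inequality is insensitive to the choice of version of the conditional law and the reduction goes through verbatim. As a side remark, the constant $2$ on the right cannot be lowered to $1$ in general, since $\rho$ may concentrate on regions where $q_X$ dominates $p_X$; a two-atom example makes this explicit.
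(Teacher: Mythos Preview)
Your argument is correct. The reduction to the pointwise inequality $\bigl(2\sqrt{p_X q_X}-p_X\bigr)\rho\le q_X$, followed by the AM--GM bound $2\sqrt{ab}-a\le b$ and $\rho\le1$, is clean and handles the null sets properly (as you note, on $\{q_X=0\}$ the integrand is $-p_X\rho\le0$, and on $\{p_X=0\}$ it vanishes, so any version of the conditional works).

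The paper itself does not supply a proof of this lemma; it is simply quoted from \citet[Lemma~A.1]{chen2022partially}. So there is no in-paper argument to compare against. Your self-contained proof via Hellinger affinities is a standard and perfectly adequate route.
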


\section{Basics of predictive state representations and B-stability}
\label{appdx:psr}

The following notations for predictive state representations (PSRs) and the B-stability condition are extracted from \citep{chen2022partially}.

\paragraph{Sequential decision processes with observations} 
An episodic sequential decision process is specified by a tuple $\set{H,\cO,\cA,\PP,\{r_{h}\}_{h \in [H]}}$, where $H\in\Z_{\ge 1}$ is the horizon length; $\cO$ is the observation space; $\cA$ is the action space; $\PP$ specifies the transition dynamics, such that the initial observation follows $o_1\sim \PP_0(\cdot) \in \Delta(\mathcal{O})$, and given the \emph{history} $\tau_h\defeq (o_1,a_1,\cdots,o_h,a_h)$ up to step $h$, the observation follows $o_{h+1}\sim\PP(\cdot|\tau_{h})$;
$r_h:\cO\times\cA\to[0,1]$ is the reward function at $h$-th step, which we assume is a known deterministic function of $(o_h,a_h)$.

In an episodic sequential decision process, a policy $\pi = \{\pi_h: (\cO\times\cA)^{h-1}\times\cO\to\Delta(\cA) \}_{h \in [H]}$ is a collection of $H$ functions. At step $h\in[H]$, an agent running policy $\pi$ observes the observation $o_h$ and takes action $a_{h}\sim \pi_h(\cdot|\tau_{h-1}, o_h)\in\Delta(\cA)$ based on the history $(\tau_{h-1},o_h)=(o_1,a_1,\dots,o_{h-1},a_{h-1},o_h)$. The agent then receives their reward $r_{h}(o_h,a_h)$, and the environment generates the next observation $o_{h+1}\sim\PP(\cdot|\tau_h)$ based on $\tau_h=(o_1,a_1,\cdots,o_h,a_h)$ (if $h < H $). The episode terminates immediately after $a_H$ is taken. 

For any $\tau_h=(o_1,a_1,\cdots,o_h,a_h)$, we write 
\[
\begin{aligned}
\PP(\tau_h)\defeq&~ \PP(o_{1:h}|a_{1:h})=\prod_{h'\le h} \PP(o_{h'}|\tau_{h'-1}), \\
\pi(\tau_h)\defeq&~  \prod_{h'\le h} \pi_{h'}(a_{h'}|\tau_{h'-1}, o_{h'}), \\
\PP^{\pi}(\tau_h)\defeq&~ \PP(\tau_h)\times \pi(\tau_h).
\end{aligned}
\]
Then $\PP^{\pi}(\tau_h)$ is the probability of observing $\tau_h$ (for the first $h$ steps) when executing $\pi$.

\paragraph{PSR, core test sets, and predictive states} 
A \emph{test} $t$ is a sequence of future observations and actions (i.e. $t\in\Test:=\bigcup_{W \in\Z_{\ge 1}}\cO^W \times\cA^{W -1}$). For some test $t_h=(o_{h:h+W-1},a_{h:h+W-2})$ with length $W\ge 1$, we define the probability of test $t_h$ being successful conditioned on (reachable) history $\tau_{h-1}$ as $\PP(t_h|\tau_{h-1})\defeq \PP(o_{h:h+W-1}|\tau_{h-1};\doac(a_{h:h+W-2}))$,
i.e., the probability of observing $o_{h:h+W-1}$ if the agent deterministically executes actions $a_{h:h+W-2}$, conditioned on history $\tau_{h-1}$. We follow the convention that, if $\P^\pi(\tau_{h-1}) = 0$ for any $\pi$, then $\P(t | \tau_{h-1}) = 0$. 

\begin{definition}[PSR, core test sets, and predictive states]\label{def:core-test}
For any $h\in[H]$, we say a set $\Uh\subset\Test$ is a \emph{core test set} at step $h$ if the following holds: For any $W\in\Z_{\ge 1}$, any possible future (i.e., test) $t_h=(o_{h:h+W-1},a_{h:h+W-2})\in\cO^W\times\cA^{W-1}$, there exists a vector $b_{t_h,h}\in\mathbb{R}^{\Uh}$ such that 
\begin{align}
	\label{eqn:psr-def}
	\PP(t_h|\tau_{h-1})=\langle b_{t_h,h},[\PP(t|\tau_{h-1})]_{t\in\Uh}\rangle, \qquad \forall \tau_{h-1} \in \cT^{h-1}:=(\cO \times \cA)^{h-1}. 
\end{align}
We refer to the vector $\bq(\tau_{h-1})\defeq [\PP(t|\tau_{h-1})]_{t\in\Uh}$ as the \emph{predictive state} at step $h$ (with convention $\bq(\tau_{h-1})=0$ if $\tau_{h-1}$ is not reachable), and $\bq_0\defeq [\PP(t)]_{t\in\cU_{1}}$ as the initial predictive state. A (linear) PSR is a sequential decision process equipped with a core test set $\{ \Uh \}_{h \in [H]}$.
\end{definition}

Define $\QAh\defeq \{\a:(\o,\a)\in\Uh~\textrm{for some}~\o\in\bigcup_{W \in\mathbb{N}^+}\cO^{W} \}$ as the set of ``core actions'' (possibly including an empty sequence) in $\Uh$, with $\nUA\defeq \max_{h\in[H]}\nUAh$. Further define $\cU_{H+1}\defeq \set{o_{\dum}}$ for notational simplicity. The core test sets $(\Uh)_{h\in[H]}$ are assumed to be known and the same within a PSR model class.

\begin{definition}[PSR rank]
\label{def:rank}
Given a PSR, its \emph{PSR rank} is defined as $\dPSR:=\max_{h\in [H]} \rank(D_h)$, where $D_h:=\left[ \bd(\tau_{h} ) \right]_{\tau_{h}\in \cT^h}\in\R^{\Uhp\times \cT^{h}}$ is the matrix formed by predictive states at step $h\in[H]$.
\end{definition}
For POMDP, it is clear that $\dPSR\leq S$, regardless of the core test sets.

\paragraph{\Bpara}
\citep{chen2022partially} introduced the notion of \emph{\Bpara} of PSR, which plays a fundamental role in their general structural condition and their analysis.
\begin{definition}[\Bpara]
\label{def:Bpara}
A {\Bpara} of a PSR with core test set $(\Uh)_{h\in[H]}$ is a set of matrices $\{ (\BB_h(o_h,a_h)\in\R^{\Uhp\times\Uh})_{h,o_h,a_h}, \bq_0 \in \R^{\Uone} \}$ such that for any $0\leq h\leq H$, policy $\pi$, history $\tau_{h} = (o_{1:h}, a_{1:h})\in \cT^{h}$, and core test $t_{h+1} = (o_{h+1:h+W}, a_{h+1:h+W-1}) \in \Uhp$, the quantity $\PP(\tau_{h},t_{h+1})$, i.e.
the probability of observing $o_{1:h+W}$ upon taking actions $a_{1:h+W-1}$, admits the decomposition 
\begin{align}\label{eqn:psr-op-prod-h}
    \PP(\tau_{h},t_{h+1}) = \PP(o_{1:h+W} | \doac(a_{1:h+W-1}))
    = \e_{t_{h+1}}^\top \cdot \BB_{h:1}(\tau_{h}) \cdot \bq_0,
\end{align}
where $\e_{t_{h+1}} \in \R^{\Uhp}$ is the indicator vector of $t_{h+1} \in \Uhp$, and 
$$
\BB_{h:1}(\tau_{h}) \defeq \BB_{h}(o_{h},a_{h}) \BB_{h-1}(o_{h-1},a_{h-1}) \cdots \BB_{1}(o_{1},a_{1}).
$$
\end{definition}

Based on the B-representations of PSRs, \cite{chen2022partially} proposed the following structural condition for sample-efficient learning in PSRs.
\begin{definition}[B-stability \citep{chen2022partially}]
\label{def:B-stable}
A PSR is \emph{B-stable with parameter $\stab\ge 1$} (henceforth also \emph{$\stab$-stable}) if it admits a {\Bpara} such that for all step $h\in[H]$, policy $\pi$, and $x\in\R^{\Uh}$, we have
\begin{equation}\label{eqn:B-stable}
\sum_{\tau_{h:H}=(o_h,a_h,\cdots,o_H,a_H)} \pi(\tau_{h:H})\times \abs{\B_{H}(o_H,a_H)\cdots \B_h(o_h,a_h) x} \le \stab\max\set{\nrmst{x},\nrmpip{x}},
\end{equation}
where for any vector $x = (x(t))_{t \in \Uh}$, we denote its $(1, 2)$-norm by
\[\textstyle
\nrmst{x}\defeq \big(\sum_{\a \in \UAh } \big(\sum_{\o: (\o, \a) \in \Uh} \vert x(\o,\a) \vert \big)^2 \big)^{1/2},
\]
and its $\Pi'$-norm by
\[\textstyle
\nrmpip{x}\defeq \max_{\barpi}\sum_{t\in \oUh}\barpi(t)\abs{x(t)},
\]
where $\oUh \defeq \{ t \in \Uh: \not\exists t' \in \Uh \text{ such that $t$ is a prefix of $t'$}\}$.
\end{definition}
Equivalently, \eqref{eqn:B-stable} can be written as $\nrmpi{\cB x}\leq \stab\max\set{\nrmst{x},\nrmpip{x}}$,
where for each step $h$, vector $x\in\R^{\Uh}$, we write
\begin{align}\label{eqn:B-op}
    \nrmpi{\cB_{H:h} x}\defeq \max_{\pi} \sum_{\tau_{h:H}} \pi(\tau_{h:H})\times \abs{\B_{H:h}(\tau_{h:H}) x}.
\end{align}

\citet{chen2022partially} showed that B-stability enables sample efficiency of PAC-learning, and we summarize the results in the following theorem.

\begin{theorem}[PAC upper bound for learning PSRs]
\label{thm:PSR-PAC-upper}
Suppose $\Theta$ is a PSR class with the same core test sets $\{ \Uh\}_{h \in [H]}$, and each $\theta\in\Theta$ admits a {\Bpara} that is $\stab$-stable and has PSR rank at most $d$. Then there exists algorithms (\omle/\eetod/\mops) that can find an $\epsilon$-optimal policy with probability at least $1-\delta$, within 
\begin{align}
\label{eqn:PSR-PAC-upper}
    T\le \tbO{\frac{\stab^2 d A U_AH^2\log(\Nt(1/T)/\delta)}{\epsilon^2}}
\end{align}
episodes of play, where $\cN_\Theta$ is the covering number of $\Theta$ (cf. \citet[Definition A.4]{chen2022partially}).
\end{theorem}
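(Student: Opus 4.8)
The statement is the PAC guarantee of \citet{chen2022partially} recast in the norm conventions of \cref{def:B-stable}, so the plan is to instantiate one of the cited algorithms---I will take \omle{}, the argument for \eetod{}/\mops{} being analogous---and run the now-standard three-part analysis for B-stable PSRs: (i) an MLE concentration step, (ii) an optimism-plus-simulation step powered by B-stability, and (iii) an eluder/pigeonhole step that converts per-round estimation error into a cumulative bound.

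\textbf{Step 1 (algorithm and MLE concentration).} First I would set up \omle{}: at round $k$ maintain the confidence set $\Theta^k=\set{\theta\in\Theta:\ \sum_{i<k}\log\PP_\theta(\tau^i)\ge \max_{\theta'\in\Theta}\sum_{i<k}\log\PP_{\theta'}(\tau^i)-\beta}$ with $\beta\asymp\log(\Nt(1/T)/\delta)$, play the optimistic pair $(\hth^k,\pi^k)=\argmax_{\theta\in\Theta^k,\pi}V_\theta(\pi)$, and---crucially for PSRs---collect data by following $\pi^k$ for $h-1$ steps and then executing a uniformly random core action sequence from $\QAh$ at a uniformly random $h\in[H]$; this exploration over core tests is what introduces the $AU_A$ factor. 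A standard bracketing/martingale argument for the maximum-likelihood estimator (using \cref{lemma:Hellinger-cond} and \cref{lemma:TV-to-chi}) then gives, with probability at least $1-\delta$, that $\ths\in\Theta^k$ for every $k$ and that the cumulative on-exploration Hellinger error satisfies $\sum_{k\le T}\EE_{\tau}\DH{\PP_{\hth^k}(\tau),\PP_{\ths}(\tau)}\lesssim\beta$.

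\textbf{Step 2 (optimism and B-stable simulation).} By optimism, $V_{\hth^k}(\pi^k)\ge V^\star$, so the per-round suboptimality is at most $V_{\hth^k}(\pi^k)-V_{\ths}(\pi^k)$, which is controlled by the total-variation distance between the two trajectory laws induced by $\pi^k$. Here B-stability enters as a simulation lemma: writing the trajectory probabilities through the B-representations and applying \cref{eqn:B-stable}, one obtains $\abs{V_{\theta_1}(\pi)-V_{\theta_2}(\pi)}\lesssim\stab\sum_h\EE_{\tauhm\sim\PP^\pi_{\theta_2}}\nrm{\bq^{\theta_1}(\tauhm)-\bq^{\theta_2}(\tauhm)}$ in the appropriate mixed norm $\max\set{\nrmst{\cdot},\nrmpip{\cdot}}$, so the value gap is governed by the accumulated error in the predictive states along the executed policies. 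Converting TV to Hellinger and linking the on-policy predictive-state error to the on-exploration error (the random core-action rollout hits each core test with probability $\gtrsim 1/(AU_A H)$) relates $\sum_k$(on-policy predictive-state error) to the cumulative Hellinger budget from Step~1, at the cost of the $AU_A$ exploration factor.

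\textbf{Step 3 (eluder/pigeonhole and online-to-batch).} Because the predictive states $\bq^\theta(\tauhm)$ lie in an at-most-$d$-dimensional space (PSR rank $\le d$), an eluder-type pigeonhole argument over this space bounds the number of rounds with large on-policy error, and combined with Cauchy--Schwarz it yields $\sum_{k\le T}\paren{V^\star-V_{\ths}(\pi^k)}\lesssim\sqrt{\stab^2 d A U_A H^2\,\beta\,T}$. Outputting $\piout$ uniformly from $\set{\pi^k}_{k\le T}$ makes $\EE[V^\star-V_{\ths}(\piout)]$ equal to this sum divided by $T$; demanding it be at most $\eps$ gives the claimed $T\le\tbO{\stab^2 d A U_A H^2\log(\Nt(1/T)/\delta)/\eps^2}$.

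\textbf{Main obstacle.} I expect the hard part to be the B-stability simulation lemma and its interface with the MLE error: one must (a) push the value-difference bound through the product of B-operators while keeping the two-norm $\max\set{\nrmst{\cdot},\nrmpip{\cdot}}$ consistent across all $H$ steps, and (b) run the eluder/pigeonhole argument in the PSR setting correctly---the predictive states are not free $d$-dimensional vectors but are coupled across tests through the operators, so this is a structured variant of the linear-class eluder bound rather than an off-the-shelf one. Everything else (the MLE concentration, the TV-to-Hellinger and Cauchy--Schwarz steps, and the online-to-batch conversion) is routine given the tools already stated. Concretely, this is exactly \citet{chen2022partially}'s PAC analysis, and the plan above is that analysis specialized to the present norm conventions.
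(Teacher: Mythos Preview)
The paper does not supply its own proof of this theorem: it is stated purely as a summary of the results of \citet{chen2022partially} and used as a black box (see the sentence immediately preceding the theorem and its invocation in the proof of \cref{thm:PAC-upper}). Your sketch correctly outlines the \omle{} analysis from \citet{chen2022partially}---MLE concentration, B-stability-based performance decomposition, and the $\ell_2$-eluder/pigeonhole step over the $d$-dimensional predictive-state space---and you explicitly acknowledge this at the end, so there is nothing further to compare.
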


When $\Theta$ is a subclass of POMDPs, we have $\log \cN_\Theta(1/T)=\tbO{H(S^2A+SO)}$ \citep{chen2022partially}. Therefore, to deduce \cref{thm:PAC-upper} from the above general theorem, it remains to upper bound $\stab$ for $m$-step $\arev$-revealing POMDPs, which is done in \cref{appendix:proof-PAC-upper}.

\section{Proofs for Section~\ref{section:prelim}}

\subsection{Proof of Proposition~\ref{prop:m-step-mp1-step}}
\label{appendix:proof-m-step-mp1-step}

Fix any POMDP $M$, and we first show that $\arevmp(M)\geq \arevm(M)$. By the definition of $\arevmp(M)$ (\cref{definition:m-step-revealing}), it suffices to show the following result.
\begin{lemma}\label{lemma:rev-inc-step}
For any $h\in[H-m]$, and any choice of generalized left inverse $\Mhm^+$ (of $\Mhm$), the matrix $\Mhmp$ admits a generalized left inverse $\Mhmp^+$ such that
\begin{align*}
\nrmop{\Mhmp^+} \le \nrmop{\Mhm^+}.
\end{align*}
\end{lemma}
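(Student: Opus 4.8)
The plan is to build $\Mhmp^+$ by post-composing the given generalized left inverse $\Mhm^+$ with a fixed linear ``marginalization'' map $L$ that collapses the $(m+1)$-step emission-action matrix down to the $m$-step one, and then to check that $L$ is a contraction for the star-norm so that the $(*\to 1)$-operator norm bound is preserved.

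\emph{Step 1: the marginalization map.} Fix an arbitrary reference action $a_0\in\cA$ and define $L\colon\R^{\cO^{m+1}\cA^{m}}\to\R^{\cO^{m}\cA^{m-1}}$ by
\[
(L\bx)(\o,\a)\defeq\sum_{o'\in\cO}\bx\big((\o,o'),(\a,a_0)\big),\qquad \o\in\cO^{m},~\a\in\cA^{m-1},
\]
where $(\o,o')\in\cO^{m+1}$ and $(\a,a_0)\in\cA^{m}$ denote concatenation. First I would verify the matrix identity $L\Mhmp=\Mhm$; this is exactly the statement that marginalizing the $(m+1)$-step emission probability $\PP_M(o_{h:h+m}\mid s_h=s,\,a_{h:h+m-1})$ over the last observation $o_{h+m}$ returns the $m$-step probability $\PP_M(o_{h:h+m-1}\mid s_h=s,\,a_{h:h+m-2})$, which in particular no longer depends on the last action $a_{h+m-1}$. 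Setting $\Mhmp^+\defeq\Mhm^+L$, the generalized-left-inverse property is then immediate:
\[
\Mhmp^+\Mhmp\T_{h-1}=\Mhm^+\big(L\Mhmp\big)\T_{h-1}=\Mhm^+\Mhm\T_{h-1}=\T_{h-1}.
\]

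\emph{Step 2: the norm bound.} To conclude $\nrmop{\Mhmp^+}\le\nrmop{\Mhm^+}$ it suffices to show $\nrmstst{L}\le1$, i.e.\ $\nrmst{L\bx}\le\nrmst{\bx}$ for every $\bx\in\R^{\cO^{m+1}\cA^{m}}$: granting this, any $\bx$ with $\nrmst{\bx}\le1$ satisfies $\nrmst{L\bx}\le1$, hence $\nrm{\Mhmp^+\bx}_1=\nrm{\Mhm^+(L\bx)}_1\le\nrmop{\Mhm^+}$, and taking the supremum over such $\bx$ gives the claim. For $\nrmstst{L}\le1$, I would apply the triangle inequality to the inner sum over $o'$: for each $\a\in\cA^{m-1}$,
\[
\sum_{\o\in\cO^{m}}\abs{(L\bx)(\o,\a)}\le\sum_{\o\in\cO^{m}}\sum_{o'\in\cO}\abs{\bx\big((\o,o'),(\a,a_0)\big)}=\sum_{\o'\in\cO^{m+1}}\abs{\bx\big(\o',(\a,a_0)\big)};
\]
squaring, summing over $\a\in\cA^{m-1}$, and then enlarging the outer sum from $\{(\a,a_0):\a\in\cA^{m-1}\}$ to all of $\cA^{m}$ (legitimate since every summand is nonnegative) yields $\nrmst{L\bx}^2\le\nrmst{\bx}^2$. (Averaging $L$ over $a_0\in\cA$ rather than fixing a single $a_0$ also works, via convexity of $t\mapsto t^2$.)

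\emph{Expected obstacle.} There is no deep difficulty: designing $L$ is the whole idea, and the rest is bookkeeping. The one point requiring care is the asymmetry of the star-norm --- $\ell_1$ over observation sequences but $\ell_2$ over action sequences --- which is precisely what makes the reduction succeed: the extra observation coordinate carried by $\Mhmp$ is absorbed by the triangle inequality for free, while the extra action coordinate only ever enlarges an index set that is summed over nonnegative terms. Keeping the concatenation indexing of action/observation sequences straight is the only thing that can go wrong.
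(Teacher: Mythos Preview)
Your proposal is correct and matches the paper's proof essentially verbatim: the paper defines exactly your marginalization operator (calling it $F_{\wt{a}}$ rather than $L$), sets $\Mhmp^+=\Mhm^+F_{\wt{a}}$, and proves $\nrmstst{F_{\wt{a}}}\le 1$ by the same triangle-inequality-then-enlarge-the-action-sum argument you outline. Even your remark about the asymmetry of the star-norm captures exactly the mechanism the paper uses.
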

The converse part of \cref{prop:m-step-mp1-step} can be shown directly by examples. In particular, our construction in \cref{appdx:no-regret} readily provides such an example (see \cref{remark:mp-to-m-rev}). %

\begin{proof}[Proof of \cref{lemma:rev-inc-step}]
Fix an arbitrary action $\wt{a}\in\cA$. Consider the matrix $F_{\wt{a}}\in\R^{\cO^m\cA^{m-1}\times \cO^{m+1}\cA^m}$ defined as (the unique matrix associated with) the following linear operator:
\begin{align*}
\brac{F_{\wt{a}}\bx}(\o_{h:h+m-1},\a_{h:h+m-2}) \defeq \sum_{o\in \cO} \bx(\o_{h:h+m-1}o,\a_{h:h+m-2}\wt{a}), \quad \textrm{for all} \; \bx\in\R^{\cO^{m+1}\cA^m}.
\end{align*}
We first show that $F_{\wt{a}}\Mhmp=\Mhm$. Indeed,
\begin{align*}
& \quad \brac{F_{\wt{a}} \Mhmp}_{\o_{h:h+m-1}\a_{h:h+m-2}, s} = \sum_{o\in \cO} \brac{\Mhmp}_{(\o_{h:h+m-1}o)(\a_{h:h+m-2}\wt{a}), s} \\
& = \sum_{o\in \cO} \P\paren{ o_{h:h+m}=\o_{h:h+m-1}o|a_{h:h+m-1}=\a_{h:h+m-2}\wt{a}, s_h=s } \\
& = \P\paren{ o_{h:h+m-1}=\o_{h:h+m-1}|a_{h:h+m-2}=\a_{h:h+m-2}, s_h=s } = \brac{\Mhm}_{\o_{h:h+m-1}\a_{h:h+m-2}, s}
\end{align*}
for any $(\o_{h:h+m-1}\a_{h:h+m-2}, s)$, which verifies the claim. Therefore, for any generalized left inverse $\Mhm^+$, we can take
\begin{align*}
\Mhmp^+ \defeq \Mhm^+F_{\wt{a}}.
\end{align*}
This matrix satisfies $\Mhmp^+\Mhmp \T_{h-1}=\Mhm^+F_{\wt{a}}\Mhmp \T_{h-1}=\Mhm^+\Mhm \T_{h-1}=\T_{h-1}$ and is thus indeed a generalized left inverse of $\Mhmp$. Further,
\begin{align*}
\nrmop{\Mhmp^+} = \nrmop{\Mhm^+F_{\wt{a}}} \le \nrmop{\Mhm^+}\nrmstst{F_{\wt{a}}},
\end{align*}
so it remains to show that $\nrmstst{F_{\wt{a}}}\le 1$. To see this, note that for any $\bx\in\R^{\cO^{m+1}\cA^m}$ with $\nrmst{\bx}^2\le 1$, we have
\begin{align*}
& \quad \nrmst{F_{\wt{a}}\bx}^2 = \sum_{\a_{h:h+m-2}\in\cA^{m-1}} \paren{ \sum_{\o_{h:h+m-1}\in \cO^{m}} \abs{\sum_{o\in\cO} \bx(\o_{h:h+m-1}o,\a_{h:h+m-2}\wt{a})} }^2 \\
& \le \sum_{\a_{h:h+m-2}\in\cA^{m-1}} \paren{ \sum_{\o_{h:h+m}\in \cO^{m+1}} \abs{\bx(\o_{h:h+m},\a_{h:h+m-2}\wt{a})} }^2 \\
& \le \sum_{\a_{h:h+m-1}\in\cA^{m}} \paren{ \sum_{\o_{h:h+m}\in \cO^{m+1}} \abs{\bx(\o_{h:h+m},\a_{h:h+m-1})} }^2 = \nrmst{\bx}^2.
\end{align*}
This proves $\nrmstst{F_{\wt{a}}}\le 1$ and thus the desired result.
\end{proof}

\subsection{Proof of Theorem~\ref{thm:PAC-upper}}\label{appendix:proof-PAC-upper}

We will deduce \cref{thm:PAC-upper} from the general result (\cref{thm:PSR-PAC-upper}) of learning PSRs \citep{chen2022partially}.
To apply \cref{thm:PSR-PAC-upper}, we first invoke the following proposition, which basically states that any $m$-step $\arev$-revealing POMDP is B-stable with $\stab\le \arev^{-1}$.

\begin{proposition}\label{prop:rev-to-psr}
Any $m$-step $\arev$-revealing POMDP is a $\arev^{-1}$-stable PSR with core test set $\Uh=(\cO\times\cA)^{\min\set{m-1,H-h}}\times\cO$, i.e. it admits a $\stab\le \arev^{-1}$-stable \Bpara.
\end{proposition}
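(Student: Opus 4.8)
The plan is to exhibit an explicit $\stab\le\arev^{-1}$-stable \Bpara{} built from the emission and transition matrices of the POMDP together with the generalized left inverses $\Mhm^+$ supplied by \cref{definition:m-step-revealing}, and then to verify the stability inequality \eqref{eqn:B-stable} by telescoping the resulting operators.

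\textbf{Step 1: the construction.} For each step $h$ and $(o,a)\in\cO\times\cA$, I would introduce the \emph{forward operator} $P_h(o,a)\defeq\T_h(\cdot\mid\cdot,a)\,\diag(\O_h(o\mid\cdot))\in\R^{\cS\times\cS}$; it has nonnegative entries, satisfies $\mathbf{1}^\top P_h(o,a)=(\O_h(o\mid s))_{s}$ and hence $\sum_{o}\mathbf{1}^\top P_h(o,a)=\mathbf{1}^\top$, and governs the unnormalized-belief recursion $\b_{h+1}(\tau_h)=P_h(o_h,a_h)\,\b_h(\tau_{h-1})$, where $\b_h(\tau_{h-1})(s)\defeq\PP(o_{1:h-1},s_h=s\mid\doac(a_{1:h-1}))$ and $\b_1=\mu_1$; in particular every reachable $\b_h(\tau_{h-1})$ lies in $\colspan(\T_{h-1})$. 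I would then take $\bq_0\defeq\M_{1,m}\mu_1$ and, for $h\le H-m+1$, set $\B_h(o,a)\defeq\M_{h+1,m}\,P_h(o,a)\,\Mhm^+$ for a fixed choice of generalized left inverse; for the boundary steps $h\ge H-m+2$, where $\Uh$ already enumerates all futures through step $H$, I would instead let $\B_h(o,a)$ be the $\{0,1\}$ matrix that selects, from $[\PP(\tau_{h-1},t)]_{t\in\Uh}$, the coordinates whose test begins with $(o,a)$. Verifying the defining identity $\B_{h:1}(\tau_h)\bq_0=[\PP(\tau_h,t)]_{t\in\cU_{h+1}}$ is then an induction on $h$ whose crux is that $[\PP(\tau_{h-1},t)]_{t\in\Uh}=\M_{h,m}\,\b_h(\tau_{h-1})$ combined with $\Mhm^+\Mhm\,v=v$ for $v\in\colspan(\T_{h-1})$ --- precisely the relation $\Mhm^+\Mhm\T_{h-1}=\T_{h-1}$ --- so that applying $\B_h(o_h,a_h)$ reproduces $\M_{h+1,m}\,\b_{h+1}(\tau_h)=[\PP(\tau_h,t)]_{t\in\cU_{h+1}}$; a short separate check shows $\{\Uh\}$ is a core test set in the sense of \cref{def:core-test} (with $b_{t_h,h}$ obtained by post-composing the conditional test probabilities with $(\Mhm^+)^\top$).

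\textbf{Step 2: the stability bound.} Fix $h$, a policy $\pi$, and $x\in\R^{\Uh}$, and write $z\defeq\Mhm^+x\in\R^{\cS}$. Telescoping $\B_{H:h}(\tau_{h:H})=\B_H(o_H,a_H)\cdots\B_h(o_h,a_h)$, every interior pair $\M_{h'+1,m}^+\M_{h'+1,m}$ (for $h\le h'\le H-m$) acts as the identity on the range of $P_{h'}(o_{h'},a_{h'})$, which sits inside $\colspan(\T_{h'})$; these all cancel, and using $\M_{H+1,\cdot}=\mathbf{1}^\top$ (and, when $m\ge 2$, that the composition of the boundary selection operators amounts to left-multiplication by $\mathbf{1}^\top P_H(o_H,a_H)\cdots P_{H-m+2}(o_{H-m+2},a_{H-m+2})$) collapses the product to
\[
\B_{H:h}(\tau_{h:H})\,x \;=\; \mathbf{1}^\top P_H(o_H,a_H)\cdots P_h(o_h,a_h)\,z .
\]
By nonnegativity of the $P_{h'}$ this is bounded in absolute value by $\mathbf{1}^\top P_H(o_H,a_H)\cdots P_h(o_h,a_h)\,|z|$, and summing the latter over $\tau_{h:H}$ against $\pi(\tau_{h:H})$ --- peeling off steps from $H$ down to $h$, each peel using $\sum_{o_{h'},a_{h'}}\pi_{h'}(a_{h'}\mid\cdot)\,\mathbf{1}^\top P_{h'}(o_{h'},a_{h'})\,v=\mathbf{1}^\top v$ for $v\ge 0$ (valid since $\mathbf{1}^\top P_{h'}(o,a)v$ does not depend on $a$ and emissions sum to one) --- collapses the sum to $\mathbf{1}^\top|z|=\|z\|_1=\|\Mhm^+x\|_1$. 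Finally \eqref{eqn:alpha-revealing} gives $\|\Mhm^+x\|_1\le\nrmop{\Mhm^+}\,\nrmst{x}\le\arev^{-1}\nrmst{x}$, with the star-norm here coinciding with that of \cref{definition:m-step-revealing} because the core-action set $\UAh$ is all of $\cA^{m-1}$. Hence $\sum_{\tau_{h:H}}\pi(\tau_{h:H})\,|\B_{H:h}(\tau_{h:H})x|\le\arev^{-1}\nrmst{x}\le\arev^{-1}\max\{\nrmst{x},\nrmpip{x}\}$, which is \eqref{eqn:B-stable} with $\stab=\arev^{-1}$.

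\textbf{Anticipated obstacle.} The genuinely delicate part is the near-horizon steps $h>H-m+1$: \cref{definition:m-step-revealing} only guarantees a left inverse of $\M_{h,m}$ for $h\le H-m+1$, and an $m$-step revealing POMDP need not be $(m-1)$-step revealing, so $\M_{h,H-h+1}$ may genuinely fail to be left-invertible there. The resolution is to observe that at those steps the core test set already lists \emph{all} surviving futures, so the corresponding $\B$-operators are plain selection matrices and $\sum_{\tau_{h:H}}\pi(\tau_{h:H})|\B_{H:h}(\tau_{h:H})x|\le\nrmpip{x}$ holds immediately --- which is precisely why \eqref{eqn:B-stable} takes the maximum against $\nrmpip{\cdot}$ and not against $\nrmst{\cdot}$ alone. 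The remaining work is bookkeeping: handling the seam at $h=H-m+1$ (where the leftmost surviving factor is $\M_{H-m+2,m-1}$ rather than $\M_{H-m+2,m}$), tracking the $\colspan(\T_{h'})$ memberships that make the interior cancellations legal, and disposing of unreachable histories, on which $\b_h$, $\bq_{h-1}$, and $\B_{H:h}(\tau_{h:H})x$ all vanish.
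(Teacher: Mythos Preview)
Your proposal is correct and follows essentially the same approach as the paper: the paper uses the identical \Bpara{} construction (matrix form $\M_{h+1,m}P_h(o,a)\Mhm^+$ in the interior, selection matrices near the horizon) and then invokes \citet[Lemma B.13]{chen2022partially} to obtain $\nrmpi{\cB_{H:h}x}\le\max\{\lone{\Mhm^+x},\nrmpip{x}\}$, which is precisely the bound your telescoping-and-peeling argument establishes directly. One small bookkeeping point: the paper splits at $h\le H-m$ (not $h\le H-m+1$) for the matrix formula, since $\M_{h+1,m}$ is only defined for $h+1\le H-m+1$; but you already flag the seam at $h=H-m+1$ in your obstacle section, and handling that step via the selection-matrix/$\nrmpip{\cdot}$ route, as you propose, is exactly what is needed.
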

Therefore, for $\cM$ a class of $m$-step $\arev$-revealing POMDPs, $\cM$ is also a class of PSRs with common core test sets, such that each $M\in\cM$ is $\arev^{-1}$-stable, has PSR rank at most $S$ and $U_A=A^{m-1}$. Then, \cref{thm:PSR-PAC-upper} implies that an $\epsilon$-optimal policy of $\cM$ can be learned using \omle, \eetod, or \mops, with sample complexity
\begin{align*}
    \tbO{\frac{SA^mH^2\log(\cN_{\cM}(1/T)/\delta)}{\alpha^2\epsilon^2}},
\end{align*}
and we also have $\log\cN_{\cM}(1/T)=\tbO{H(S^2A+SO)}$ \citep{chen2022partially}. Combining these facts completes the proof of \cref{thm:PAC-upper}. \qed

\begin{proof}[Proof of \cref{prop:rev-to-psr}]
\citet[Appendix B.3.3]{chen2022partially} showed that any $m$-step $\arev$-revealing POMDP $M$ is a $\arev^{-1}$-stable PSR with core test set $\Uh=(\cO\times\cA)^{\min\set{m-1,H-h}}\times\cO$, and explicitly constructed the following \Bpara~for it:
when $h\leq H-m$, set
\begin{align}\label{eqn:Bpara-rev-1}
    \B_h(o,a)=\M_{h+1}\T_{h,a}\diag\paren{\O_h(o|\cdot)}\M_h^+, \qquad h\in[H-m],
\end{align}
and when $h>H-m$, take
\begin{align}\label{eqn:Bpara-rev-2}
    \BB_h(o_h,a_h)=\brac{\II\paren{t_h=(o_h,a_h,t_{h+1})}}_{(t_{h+1},t_h)\in\Uhp\times\Uh}\in\R^{\Uhp\times \Uh},
\end{align}
where $\II\paren{t_h=(o_h,a_h,t_{h+1})}$ is 1 if $t_h$ equals to $(o_h,a_h,\thp)$, and 0 otherwise. 

Then, by \citet[Lemma B.13]{chen2022partially}, for any $1\leq h\leq H$, $x\in\R^{\nUh}$, it holds that
\begin{align*}
    \nrmpi{\cB_{H:h} x} = \max_{\pi}\sum_{\tau_{h:H}} \nrm{ \B_{H}(o_{H},a_{H})\cdots\B_{h}(o_h,a_h) x }_1\times \pi(\tau_{h:H}) 
    \le \max\set{\lone{\M_h^+ x}, \nrmpip{x}} \leq \arev^{-1 }\max\set{\nrmst{x}, \nrmpip{x}}.
\end{align*}
Therefore, \Bpara~provided in \eqref{eqn:Bpara-rev-1} and \eqref{eqn:Bpara-rev-2} is indeed $\arev^{-1}$-stable, and hence completes the proof.
\end{proof}

\section{Basics of Ingster's method}\label{appendix:ingster-method}

In this section, we first introduce the basic notations frequently used in our analysis of hard instances, and then state Ingster's method for proving information-theoretic lower bounds \cite{ingster2012nonparametric}. Recall that we have introduced the formulation of sequential decision process in \cref{appdx:psr}.

\paragraph{Algorithms for sequential decision processes} An algorithm $\fA$ for sequential decision processes (with a fixed number of episodes $T$) is specified by a collection of $HT$ functions $\fA=\{\pi^{\fA}_{t,h} \}_{h \in [H], t \in [T]}$, where $\pi^{\fA}_{t,h}$ maps the tuple of all past histories and the current observation $(\tau^{(1)},\cdots,\tau^{(t-1)},\tau_{h-1}^{(t)},o_h^{(t)})$ to a distribution over actions $\Delta(\cA)$ from which we sample the next action $a_h^{(t)} \sim \pi^{\fA}_{t,h}(\cdot | \tau^{(1:t-1)},\tau_{h-1}^{(t)},o_h^{(t)})$. At the end of interaction, the algorithm output a $\piout\in\Pi$ by taking $\piout=\pi^{\fA}_{\sf output}(\tau^{1:T})$.

For any algorithm $\fA$ (with a fixed number of episodes $T$), we write $\PP^{\fA}_M$ to be the law of $(\tau^{(1)},\tau^{(2)},\cdots,\tau^{(T)})$ under the model $M$ and the algorithm $\fA$. We remark that although our formulation seems only to allow deterministic algorithms where each $\pi^{\fA}_{t,h}$ is a deterministic mapping to $\Delta(\cA)$, our formulation indeed allows randomized algorithms: any randomized algorithm can be written as a mixture of deterministic algorithm $\fB(\omega)$ parameterized by $\omega$ which satisfies a distribution $\omega \sim \zeta$; furthermore, for any $\fB(\omega)$ and $\zeta$, there exists a deterministic algorithm $\fA$ such that the marginal laws of $\tau^{1:T}$ induced by $\fB$ and $\fA$ are the same, i.e., $\EE_{\omega \sim \zeta}[\PP^{ \fB(\omega)}_M(\, \cdot \,)] = \PP^{\fA}_M(\, \cdot \,)$.

\newcommand{\TRUE}{\mathsf{TRUE}}
\newcommand{\FALSE}{\mathsf{FALSE}}

\paragraph{Algorithms with a random stopping time} Our analysis requires us to consider algorithms with a random stopping time. An algorithm $\fA$ with a random stopping time (with at most $T$ interaction) is specified by a collection of $HT$ functions $\{\pi^{\fA}_{t,h}\}_{h \in [H], t \in [T]}$ along with an exit criterion $\cexit$, where $\pi^{\fA}_{t,h}$ is the strategy at $t$-th episode and $h$-th step, and $\cexit$ is a deterministic function such that
\begin{align*}
\cexit(\tau^{(1)},\cdots,\tau^{(t)})\in\set{\TRUE,\FALSE}.
\end{align*}
Once $\cexit(\tau^{(1)},\cdots,\tau^{(\stoptime)})=\TRUE$ or $\stoptime=T$, the algorithm $\fA$ terminates at the end of the $\stoptime$-th episode. The random variable $\stoptime$ (induced by the exit criterion $\cexit$) is clearly a stopping time. We write $\PP^{\fA}_M$ to be the law of $(\tau^{(1)},\tau^{(2)},\cdots,\tau^{(\stoptime)})$ under the model $M$ and the algorithm $\fA$.

The following lemma and discussions hold for algorithms with or without a random stopping time.

\begin{lemma}[Ingster's method]\label{lemma:ingster}
    For a family of sequential decision processes $(\P_M)_{M\in\cM}$, a distribution $\zeta$ over $\cM$, a reference model $0\in\cM$, and an algorithm $\fA$ that interacts with the environment for $\stoptime$ episodes (where $\stoptime$ is  stopping time), it holds that
    \begin{align*}
    1+\chis{\EE_{M\sim\zeta}\brac{\PP_{M}^{\fA}}}{\PP_{0}^{\fA}}
    =
    \EE_{M,M'\sim_{\rm iid} \zeta}\EE_{\tau^{(1)},\cdots,\tau^{(\stoptime)}\sim\PP_{0}^{\fA}}\brac{
        \prod_{t=1}^{\stoptime}\frac{\PP_{M}(\taut)\PP_{M'}(\taut)}{\PP_{0}(\taut)^2}
    }.
    \end{align*}
\end{lemma}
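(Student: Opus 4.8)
The plan is to prove \cref{lemma:ingster} by a direct computation of the $\chi^2$-divergence, expanding the definition and exploiting the fact that the mixture over $\zeta$ makes the ``squared likelihood ratio'' factor through a product over two independent draws $M, M' \sim \zeta$. First I would recall that for any two measures $\P, \Q$ with $\P \ll \Q$, we have $1 + \chis{\P}{\Q} = \EE_{\Q}[(d\P/d\Q)^2] = \EE_{\P}[d\P/d\Q]$, so I should compute the density $d(\EE_{M\sim\zeta}[\PP^{\fA}_M]) / d\PP^{\fA}_0$ at a trajectory-tuple $(\tau^{(1)}, \dots, \tau^{(\stoptime)})$. The key structural observation is that, because the algorithm $\fA$'s action-selection rule $\pi^{\fA}_{t,h}$ depends only on past histories (and not on the model), the policy-induced factor $\pi(\taut)$ is the \emph{same} under every model $M$; hence the likelihood ratio $\PP^{\fA}_M / \PP^{\fA}_0$ restricted to the observed trajectory tuple only involves the environment transition probabilities $\PP_M(\taut)$, and the $\pi$-factors cancel. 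This is exactly the point that needs to be handled carefully in the presence of a random stopping time.

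Concretely, I would proceed as follows. Step one: fix a realization $\tau^{(1:\stoptime)}$ and write down the joint density of $(\tau^{(1)}, \dots, \tau^{(\stoptime)})$ under $\PP^{\fA}_M$ as a telescoping product over episodes $t = 1, \dots, \stoptime$. For each episode $t$, conditioned on the past $\tau^{(1:t-1)}$, the density of $\taut$ factorizes as $\PP_M(\taut) \cdot \pi^{\fA}_t(\taut \mid \tau^{(1:t-1)})$, where $\pi^{\fA}_t(\taut \mid \tau^{(1:t-1)}) = \prod_{h} \pi^{\fA}_{t,h}(a_h^{(t)} \mid \tau^{(1:t-1)}, \tau_{h-1}^{(t)}, o_h^{(t)})$ is the model-independent algorithm factor. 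Since $\stoptime$ is a stopping time determined by $\cexit$ as a deterministic function of $\tau^{(1:t)}$, the event $\{\stoptime = T_0\}$ is measurable with respect to $\tau^{(1:T_0)}$, and its indicator is also model-independent; so it too cancels in the ratio. Step two: therefore
\begin{align*}
\frac{d\PP^{\fA}_M}{d\PP^{\fA}_0}(\tau^{(1:\stoptime)}) = \prod_{t=1}^{\stoptime} \frac{\PP_M(\taut)}{\PP_0(\taut)}.
\end{align*}
Step three: take the mixture — $d(\EE_{M\sim\zeta}[\PP^{\fA}_M])/d\PP^{\fA}_0 = \EE_{M\sim\zeta}\big[\prod_{t=1}^{\stoptime}\PP_M(\taut)/\PP_0(\taut)\big]$ — and then plug into $1 + \chis{\cdot}{\PP^{\fA}_0} = \EE_{\PP^{\fA}_0}\big[(d(\EE_{M\sim\zeta}[\PP^{\fA}_M])/d\PP^{\fA}_0)^2\big]$. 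Squaring the inner expectation produces a product over two i.i.d. copies $M, M' \sim \zeta$ by Fubini, yielding exactly $\EE_{M, M' \sim_{\rm iid}\zeta} \EE_{\PP^{\fA}_0}\big[\prod_{t=1}^{\stoptime} \PP_M(\taut)\PP_{M'}(\taut)/\PP_0(\taut)^2\big]$, which is the claimed identity.

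I expect the main obstacle to be the careful bookkeeping around the random stopping time and the reduction to deterministic algorithms. Specifically, one must verify that $\PP^{\fA}_M \ll \PP^{\fA}_0$ on the relevant $\sigma$-algebra (otherwise the $\chi^2$-divergence is infinite and the identity is vacuous or needs an $\infty = \infty$ reading), and that the factorization of the joint density over the \emph{stopped} trajectory sequence is valid — i.e. that summing/integrating over all values of $\stoptime$ and all trajectories of that length is consistent with the definition of $\PP^{\fA}_M$. This is cleanest if one first handles deterministic $\fA$ (as the excerpt notes, randomized algorithms reduce to a mixture of deterministic ones with matching trajectory law, and both sides of the identity are affine — actually quadratic — in ways that require a short additional argument, or one can simply note the identity is stated for the trajectory law which is unchanged). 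I would also remark that the reference model is denoted $0 \in \cM$ and need not itself be a member of the family in any special way beyond absolute continuity; no revealing or POMDP structure is used here, only the sequential-decision-process formulation from \cref{appdx:psr}. The actual algebra (telescoping, Fubini, the $\EE_\Q[(d\P/d\Q)^2]$ identity) is routine once the measurability points are pinned down.
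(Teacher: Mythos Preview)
Your proposal is correct and follows essentially the same approach as the paper: factor the joint density $\PP_M^{\fA}(\tau^{(1:\stoptime)})$ into model-dependent environment factors $\prod_t \PP_M(\taut)$ times model-independent algorithm factors (which include the stopping-time indicator), cancel the latter in the likelihood ratio, then square the mixture and apply Fubini to obtain the product over two i.i.d.\ draws $M,M'\sim\zeta$. The paper makes the stopping-time bookkeeping explicit by working on the support set $\Omega_0 = \{\tau^{(1:\stoptime)}:\cexit(\tau^{(1:t)})=\FALSE\ \forall t<\stoptime,\ \text{and either }\stoptime=T\text{ or }\cexit(\tau^{(1:\stoptime)})=\TRUE\}$, which is precisely your ``the event $\{\stoptime=T_0\}$ is measurable with respect to $\tau^{(1:T_0)}$'' observation made concrete.
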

\begin{proof}
We only need to consider the case $\fA$ has a random stopping time $\stoptime$. By our definition, $\PP_M^\fA$ is supported on the following set:
\begin{align*}
    \Omega_0\defeq \set{ \omega = \tau^{(1:\stoptime)}: \forall t < \stoptime, \cexit(\tau^{(1:t)}) =\FALSE, \text{ and either }\stoptime=T \text{ or } \cexit(\tau^{(1:\stoptime)}) = \TRUE}.
\end{align*}
For any $(\tau^{(1)},\cdots,\tau^{(\stoptime)})\in\Omega_0$, we have %
\begin{align}\label{eqn:rule-expand}
\begin{aligned}
    \PP_{M}^{\fA}(\tau^{(1)},\cdots,\tau^{(\stoptime)})
    =&~\prod_{t=1}^{\stoptime} \PP_M^{\fA}(\taut|\tau^{(1:t-1)})\\ 
    =&~ \prod_{t=1}^{\stoptime} \prod_{h=1}^H \PP_M(o_h^{(t)}|\taut_{1:h})\times\pi_{t,h}^{\fA}(a_h^{(t)}|\tau^{(1:t-1)},\taut_{1:h},o_h^{(t)})\\
    =&~\prod_{t=1}^{\stoptime} \PP_M(\taut)\times \prod_{t=1}^{\stoptime}\prod_{h=1}^H \pi_{t,h}^{\fA}(a_h^{(t)}|\tau^{(1:t-1)},\taut_{1:h},o_h^{(t)}). 
\end{aligned}
\end{align}

Therefore, by definition of $\chi^2$ divergence, we have
\begin{align*}
    1+\chis{\EE_{M\sim\zeta}\brac{\PP_{M}^{\fA}}}{\PP_{0}^{\fA}}
    =&~
    \EE_{\tau^{(1)},\cdots,\tau^{(\stoptime)}\sim\PP_{0}^{\fA}}\brac{\paren{
        \frac{\EE_{M\sim\zeta}\brac{\PP_{M}^{\fA}(\tau^{(1)},\cdots,\tau^{(\stoptime)})}}{\PP_{0}^{\fA}(\tau^{(1)},\cdots,\tau^{(\stoptime)})}
    }^2}\\
    =&~
    \EE_{M,M'\sim\zeta}\EE_{\tau^{(1)},\cdots,\tau^{(\stoptime)}\sim\PP_{0}^{\fA}}\brac{
        \frac{\PP_{M}^{\fA}(\tau^{(1)},\cdots,\tau^{(\stoptime)})\PP_{M'}^{\fA}(\tau^{(1)},\cdots,\tau^{(\stoptime)})}{\PP_{0}^{\fA}(\tau^{(1)},\cdots,\tau^{(\stoptime)})^2}
    }\\
    =&~
    \EE_{M,M'\sim\zeta}\EE_{\tau^{(1)},\cdots,\tau^{(\stoptime)}\sim\PP_{0}^{\fA}}\brac{
        \prod_{t=1}^{\stoptime}\frac{\PP_{M}(\taut)\PP_{M'}(\taut)}{\PP_{0}(\taut)^2}
    },
\end{align*}
where the last equality is due to \eqref{eqn:rule-expand}. This proves the lemma. 
\end{proof}

Therefore, in order to upper bound $\chis{\EE_{M\sim\zeta}\brac{\PP_{M}^{\fA}}}{\PP_{0}^{\fA}}$, we just need to upper bound the quantity 
\begin{align}\label{eqn:ingster-alg-basic-eq}
    \EE_{\tau^{(1)},\cdots,\tau^{(\stoptime)}\sim\PP_{0}^{\fA}}\brac{
        \prod_{t=1}^{\stoptime}\frac{\PP_{M}(\taut)\PP_{M'}(\taut)}{\PP_{0}(\taut)^2}
    }
    =
    \EE_{\tau^{(1)},\cdots,\tau^{(\stoptime)}\sim\PP_{0}^{\fA}}\brac{
        \prod_{t=1}^{\stoptime}\prod_{h=1}^H \frac{\PP_{M}(o_h^{(t)}|\taut_{h-1})\PP_{M'}(o_h^{(t)}|\taut_{h-1})}{\PP_{0}(o_h^{(t)}|\taut_{h-1})^2}
    }.
\end{align}
At this aim, we will leverage the following fact (which is due to \cref{lemma:martingale-eqn} and \eqref{eqn:ingster-alg-basic-eq}): 
\begin{align}\label{eqn:ingster-prod-to-i}
    \EE_{\tau^{(1)},\cdots,\tau^{(\stoptime)}\sim\PP_{0}^{\fA}}\brac{
        \prod_{t=1}^{\stoptime}\frac{\PP_{M}(\taut)\PP_{M'}(\taut)}{\PP_{0}(\taut)^2}\cdot
        \exp\paren{-\sum_{t=1}^\stoptime\sum_{h=1}^H \log I_{M,M'}(\taut_{h-1})}
    }=1,
\end{align}
where $I_{M,M'}(\tau_{h-1})$ is defined as
\begin{align}\label{eqn:ingster-i-func}
    I_{M,M'}(\tau_{h-1})\defeq \EE_{0}\cdbrac{\frac{\PP_{M}(o_h|\tau_{h-1})\PP_{M'}(o_h|\tau_{h-1})}{\PP_{0}(o_h|\tau_{h-1})^2}}{\tau_{h-1}}. 
\end{align}

\paragraph{Early stopped algorithm} Consider an algorithm $\fA$ that interacts with the environment for a fixed number of episodes $T$ and consider an exit criterion $\cexit$. We define the early stopped algorithm $\fA(\cexit)$, which executes the algorithm $\fA$ until $\cexit=\TRUE$ is satisfied (or $T$ is reached). Clearly, $\fA(\cexit)$ is an algorithm with a random stopping time. We have the following lemma regarding how much the TV distance $\DTV{ \EE_{M\sim\zeta}\brac{\PP_{M}^{\fA}}, \PP_{0}^{\fA}}$ is perturbed after changing the algorithm $\fA$ to its stopped version $\fA(\cexit)$. %
\begin{lemma}\label{lemma:alg-stop}
It holds that
\begin{align*}
    \DTV{ \EE_{M\sim\zeta}\brac{\PP_{M}^{\fA(\cexit)}}, \PP_{0}^{\fA(\cexit)}}
    \geq \DTV{ \EE_{M\sim\zeta}\brac{\PP_{M}^{\fA}}, \PP_{0}^{\fA}} - \PP_0^\fA(\exists t < T, \cexit(\tau^{(1:t)})=\TRUE).
\end{align*}
\end{lemma}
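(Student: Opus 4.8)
The plan is to obtain this as a direct instance of \cref{lemma:change-alg}, taking the four measures to be the trajectory laws under $\fA$ and under $\fA(\cexit)$ (for the reference model $0$ and for the $\zeta$-mixture), and taking the event $\Omega$ to be ``$\fA$ never triggers the exit criterion before the final episode.''

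First I would put all four laws on one measurable space. Since $\PP^{\fA}_M$ is a distribution over length-$T$ trajectory sequences whereas $\PP^{\fA(\cexit)}_M$ is a distribution over trajectory sequences of length at most $T$, I would regard both as measures on $\cX \defeq \bigsqcup_{k=1}^{T} (\cO\times\cA)^{Hk}$ equipped with its natural $\sigma$-algebra; $\PP^{\fA}_M$ is then supported on the $k=T$ stratum. Define
$$ \Omega \defeq \set{\tau^{(1:T)} \in (\cO\times\cA)^{HT} : \cexit(\tau^{(1:t)}) = \FALSE \ \text{for all} \ t \in [T-1]} \subseteq \cX. $$

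The key step is to verify that $\fA$ and $\fA(\cexit)$ induce the same law once restricted to $\Omega$. For any model $M$ and any $\tau^{(1:T)}\in\Omega$, the factorization \eqref{eqn:rule-expand} applies verbatim to both algorithms — $\fA(\cexit)$ simply runs the strategies $\{\pi^{\fA}_{t,h}\}$ and, on $\Omega$, it does not exit before episode $T$ — so $\PP^{\fA}_M(\tau^{(1:T)}) = \PP^{\fA(\cexit)}_M(\tau^{(1:T)})$, whence $\PP^{\fA}_M|_{\Omega} = \PP^{\fA(\cexit)}_M|_{\Omega}$; averaging over $M\sim\zeta$ gives $\EE_{M\sim\zeta}[\PP^{\fA}_M]\big|_{\Omega} = \EE_{M\sim\zeta}[\PP^{\fA(\cexit)}_M]\big|_{\Omega}$. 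I would then apply \cref{lemma:change-alg} with $\PP=\PP^{\fA}_0$, $\QQ=\EE_{M\sim\zeta}[\PP^{\fA}_M]$, $\PP'=\PP^{\fA(\cexit)}_0$, $\QQ'=\EE_{M\sim\zeta}[\PP^{\fA(\cexit)}_M]$, and this event $\Omega$: the hypotheses $\PP|_\Omega=\PP'|_\Omega$ and $\QQ|_\Omega=\QQ'|_\Omega$ are exactly what was just checked, and since $\PP^{\fA}_0$ is supported on the $k=T$ stratum, $\PP(\Omega^c)=\PP^{\fA}_0(\exists t<T,\ \cexit(\tau^{(1:t)})=\TRUE)$. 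The conclusion of \cref{lemma:change-alg}, combined with the symmetry $\DTV{\PP,\QQ}=\DTV{\QQ,\PP}$, is precisely the claimed inequality.

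I expect the only delicate point to be the bookkeeping in the first two steps: making rigorous the statement ``conditioned on never exiting early, $\fA(\cexit)$ produces trajectories with the same law as $\fA$'' by placing the variable-length and the fixed-length laws on one space and matching them stratum-by-stratum through \eqref{eqn:rule-expand}. Once that identification is in place, the inequality is an immediate application of \cref{lemma:change-alg} with no further estimates.
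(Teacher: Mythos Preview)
Your proposal is correct and follows essentially the same approach as the paper: define $\Omega$ as the event that the exit criterion is never triggered before episode $T$, verify via the factorization \eqref{eqn:rule-expand} that $\PP_M^{\fA}|_\Omega = \PP_M^{\fA(\cexit)}|_\Omega$ for every $M$ (hence also for the $\zeta$-mixture), and apply \cref{lemma:change-alg}. Your extra care in placing the variable-length and fixed-length laws on a common space is a point the paper leaves implicit.
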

\begin{proof}
    We consider the event $\Omega=\{\omega = \tau^{(1:T)}: \forall t < T, \cexit(\tau^{(1:t)}) =\FALSE \}$. To prove this lemma, we only need to verify that $\PP_M^{\fA}|_{\Omega}=\PP_M^{\fA(\cexit)}|_{\Omega}$ and then apply \cref{lemma:change-alg}.

   Indeed, for $\omega = \tau^{(1:T)} \in \Omega$, we have that for all $t< T$,  $\cexit(\tau^{(1:t)})=\FALSE$. Then, by \eqref{eqn:rule-expand} we have
\begin{align*}
    \PP_{M}^{\fA(\cexit)}(\tau^{(1:T)})
    =&~\prod_{t=1}^{T} \PP_M(\taut)\times \prod_{t=1}^{T}\prod_{h=1}^H \pi_{t,h}^{\fA}(a_h^{(t)}|\tau^{(1:t-1)},\taut_{1:h},o_h^{(t)})
    =\PP_{M}^{\fA}(\tau^{(1:T)}),
\end{align*}
and thus $\PP_{M}^{\fA(\cexit)}(\omega)=\PP_{M}^{\fA}(\omega)$ for any $\omega\in\Omega$. Applying \cref{lemma:change-alg} proves the lemma. 
\end{proof}

\section{Proof of Theorem \ref{thm:1-step-demo}}\label{appdx:1-step-pac}

We first construct a family of hard instances in~\cref{appendix:construction-1-step}. We state the PAC lower bound of this family of hard instances in~\cref{prop:1-step-pac-prop}.~\cref{thm:1-step-demo} then follows from~\cref{prop:1-step-pac-prop} as a direct corollary. 

\subsection{Construction of hard instances and proof of Theorem~\ref{thm:1-step-demo}}
\label{appendix:construction-1-step}

We consider the following family of single-step revealing POMDPs $\cM$ that admits a tuple of hyperparameters $(\epsilon,\sigma,n,K,H)$. All POMDPs in $\cM$ have the same horizon length $H$, the state space $\cS$, the action space $\cA$, and the observation space $\cO$, defined as follows.
\begin{itemize}[topsep=0pt, itemsep=0pt]
    \item The state space $\cS=\Stree \bigsqcup \set{\sp, \sq}$, where $\Stree$ is a binary tree with level $n$ (so that $\abs{\Stree}=2^{n}-1$).  
    Let $s_0$ be the root of $\Stree$, and $\Sl$ be the set of leaves of $\Stree$, with $\abs{\Sl}=2^{n-1}$.%
    \item The observation space $\cO=\Stree \bigsqcup \set{o_1^+,o_1^-,\cdots,o_K^+,o_K^-}\bigsqcup\set{\oba,\obb}$.
    Note that here we slightly abuse notations, reusing $\Stree$ to denote both a set of states and the corresponding set of observations, in the sense that each state $s\in\Stree\subset \cS$ corresponds to a unique observation $o_s\in\Stree\subset \cO$, which we also denote as $s$ when it is clear from the context.%
    \item The action space $\cA=\set{0,1,\cdots,A-1}$.
\end{itemize}

\paragraph{Model parameters} 
Each non-null POMDP model $M=M_{\theta,\mu}\in\cM \setminus \{ M_0\}$ is specified by two parameters $(\theta, \mu)$. Here $\mu\in\set{-1,+1}^{K}$, and $\theta=(\hs,\ss,\acs,\as)$, where 
\begin{itemize}[topsep=0pt, itemsep=0pt]
    \item $\ss\in\Sl$, $\acs\in \Ac\defeq\set{1,\cdots,A-1}$.
    \item $\hs\in\set{n+1,\cdots,H-1}$.
    \item $\as=(\as_{\hs+1},\dots,\as_{H-1})\in\cA^{H-\hs-1}$ is an action sequence indexed by $\hs+1,\cdots,H-1$.%
\end{itemize}

For any POMDP $M_{\theta, \mu}$, its emmision and transition dynamics $\PP_{\theta,\mu}\defeq \PP_{M_{\theta, \mu}}$ are defined as follows.

\paragraph{Emission dynamics} 
\begin{itemize}[topsep=0pt, itemsep=0pt]
\item At states $s\in\Stree$, the agent always receives (the unique observation corresponding to) $s$ itself as the observation.
    \item At state $\sp$ and steps $h<H$, the emission dynamics is given by
    \begin{align*}
        \O_{h;\mu}(o_i^+|\sp)=\frac{1+\sigma\mu_{i}}{2K}, \qquad 
        \O_{h;\mu}(o_i^-|\sp)=\frac{1-\sigma\mu_{i}}{2K}, \qquad
        \forall i\in[K].
    \end{align*}
    \item At state $\sq$ and steps $h<H$, the observation is uniformly drawn from $\cO_o\defeq \set{o_1^+,o_1^-,\cdots,o_K^+,o_K^-}$:
    \begin{align*}
    \O_{h}(o_i^+|\sq) = \O_{h}(o_i^-|\sq) = \frac{1}{2K}, \qquad \forall i\in[K].
    \end{align*}
    Here we omit the subscript $\mu$ to emphasize that the dynamic does not depend on $\mu$.
    \item At step $H$, the emission dynamics at $\set{\sp,\sq}$ is given by
    \begin{align*}
        &\O_H(\oba|\sp)=\frac34, \qquad \O_H(\obb|\sp)=\frac14, \\
        &\O_H(\oba|\sq)=\frac14, \qquad \O_H(\obb|\sq)=\frac34.
    \end{align*}
\end{itemize}

\paragraph{Transition dynamics} In each episode, the agent always begins at $s_0$. 
\begin{itemize}[topsep=0pt, itemsep=0pt]
    \item At any node $s\in\Stree\setminus \Sl$, there are three types of available actions: $\await=0$, $\aleft=1$ and $\aright=2$, such that the agent can take $\await$ to stay at $s$, $\aleft$ to transit to the left child of $s$, and $\aright$ to transit to the right child of $s$.\footnote{
    For action $a\in\set{3,\cdots,A-1}$, $a$ has the same effect as $\await$.
    }
    \item At any $s\in\Sl$, the agent can take action $\await=0$ to stay at $s$ (i.e. $\PP(s|s,\await)=1$); otherwise, for $s\in\Sl$, $h\in[H-1]$, $a\neq \await$ (i.e. $a\in\Ac$),
    \begin{align*}
        \PP_{h;\theta}(\sp|s,a)&=\epsilon\cdot \II(h=\hs,s=\ss,a=\acs), \\
        \PP_{h;\theta}(\sq|s,a)&=1-\epsilon\cdot \II(h=\hs,s=\ss,a=\acs),
    \end{align*}
    where we use subscript $\theta$ to emphasize the dependence of the transition probability $\P_{h; \theta}$ on $\theta$. In words, at step $h$, state $s \in \Sl$, and after $a \in \Ac$ is taken, any leaf node will transit to one of $\set{\sp,\sq}$, and only taking $\acs$ at state $\ss$ and step $\hs$ can transit to the state $\sp$ with a small probability $\eps$; in any other case, the system will transit to the state $\sq$ with probability one.%
    
    \item At state $\sp$, we set
    \begin{align*}
        \PP_{h;\theta}(\sp|\sp,a)=\begin{cases}
            1, & a=\as_h,\\
            0, & a\neq \as_h,
        \end{cases},\qquad
        \PP_{h;\theta}(\sq|\sp,a)=\begin{cases}
            0, & a=\as_h,\\
            1, & a\neq \as_h.
        \end{cases}
    \end{align*}
    \item The state $\sq$ is an absorbing state, i.e. $\P_{h}(\sq|\sq,a)=1$ for all $a\in\cA$.
\end{itemize}

\paragraph{Reward} The reward function is known (and only depends on the observation): at the first $H-1$ steps, no reward is given; at step $H$, we set $r_H(\oba)=1$, $r_H(\obb)=0$, 
$r_H(s_0)=(1+\epsilon)/4$, and $r_H(o)=0$ for any other $o\in\cO$. 

\paragraph{Reference model}
We use $M_0$ (or simply $0$) to refer to the null model (reference model). The null model $M_0$ has transition and emission the same as any non-null model, except that the agent always arrives at $\sq$ by taking any action $a\neq\await$ at $s \in\Sl$ and $h \in [H - 1]$ (i.e., $\PP_{h;M_0}(\sq|s,a) = 1$ for any $s \in \Sl$, $a \in \Ac$, $h \in [H-1]$). In this model, $\sp$ is not reachable, and hence we do not need to specify the emission dynamics at $\sp$.

We present the PAC-learning sample complexity lower bound of the above POMDP model class $\cM$ in the following proposition, which we prove in~\cref{appendix:proof-1-step-pac-prop}.

\begin{proposition}\label{prop:1-step-pac-prop}
For given $\epsilon \in(0,0.1], \sigma\in(0,\frac1{2H}]$, $n\geq 1$, $K\geq 1$, $H\geq 4n$, the model class $\cM$ we construct above satisfies the following properties: 

\begin{enumerate}[topsep=0pt, itemsep=0pt]
\item $\nS=2^n+1$, $\nO=2^n+2K+1$, $\nA=A$.

\item For each $M\in\cM$ (including the null model $M_0$), $M$ is single-step revealing with $\arevone(M)^{-1}\leq 1+\frac2\sigma$.

\item $\log\abs{\cM}\leq K\log 2 + H\log A + \log(SAH)$. 

\item Suppose algorithm $\fA$ interacts with the environment for $T$ episodes and returns $\piout$ such that %
$$
\PP^{\fA}_M\paren{V_M^\star-V_M(\piout)<\frac{\epsilon}{8}}\geq \frac{3}{4}
$$
for any $M\in\cM$. Then it must hold that
\begin{align*}
    T\geq \frac{1}{20000}\min\set{
        \frac{\abs{\Sl}K^{1/2}AH}{\sigma^2\epsilon^2}, \frac{\abs{\Sl}A^{H/2}H}{\epsilon^2}
    },
\end{align*}
where we recall that $\abs{\Sl}=2^{n-1}$.
\end{enumerate}
\end{proposition}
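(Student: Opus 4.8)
The plan is to treat the four claims separately, with Part~4 carrying essentially all the content. Parts~1 and~3 are bookkeeping: Part~1 just counts states, observations and actions in the construction, and Part~3 counts the free parameters $(\hs,\ss,\acs,\as,\mu)$ together with the one null model. For Part~2 I would exhibit an explicit generalized left inverse $\O_h^+$ of each emission matrix: on the tree observations it returns the basis vector $e_s$ of the corresponding state (tree states emit deterministically); at steps $h<H$ it recovers the indicator of $\sp$ through the sign functional $w$ with $w(o_i^{\pm})=\pm\mu_i$ — so that $\langle w,\O_{h;\mu}(\cdot\mid\sp)\rangle=\sigma$ while $\langle w,\O_h(\cdot\mid\sq)\rangle=0$ — by setting $\O_h^+e_{o_i^+}=\tfrac{\mu_i}{\sigma}e_{\sp}+\big(1-\tfrac{\mu_i}{\sigma}\big)e_{\sq}$ and $\O_h^+e_{o_i^-}=-\tfrac{\mu_i}{\sigma}e_{\sp}+\big(1+\tfrac{\mu_i}{\sigma}\big)e_{\sq}$; and at step $H$ it inverts the $2\times 2$ emission on $\{\oba,\obb\}$. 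One checks $\O_h^+\O_h\T_{h-1}=\T_{h-1}$ on the reachable states, and since for $m=1$ the $(*\to1)$-norm is just the largest column $\ell_1$-norm, this gives $\|\O_h^+\|_{*\to1}\le\max\{1,\,1+\tfrac2\sigma,\,2\}=1+\tfrac2\sigma$ (using $\sigma\le\tfrac1{2H}\le1$). For the null model $\sp$ is unreachable, so the same construction works a fortiori.

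For Part~4 I would first reduce PAC learning to a testing problem. A short value computation shows that the optimal value of $M_{\theta,\mu}$ is $\tfrac14+\tfrac\epsilon2$, that \emph{every} policy attains at most this value, and that a policy can exceed $\tfrac14+\tfrac{3\epsilon}8$ only if it plays, with probability at least $\tfrac14$, a fixed action sequence determined by $\theta$ — ``navigate in the tree to $\ss$, idle until step $\hs$, play $\acs$, then play the password $\as$'' — and distinct $\theta$ give distinct such sequences. Hence any $\tfrac\epsilon8$-correct algorithm $\fA$ yields $\theta$-dependent but $\mu$-independent tests $\psi_\theta(\tau^{1:T})$ with $\PP^{\fA}_{M_{\theta,\mu}}(\psi_\theta=1)\ge\tfrac34$ for all $(\theta,\mu)$; and since under the null model $M_0$ the output is independent of $\theta$ while the relevant events for distinct $\theta$ are disjoint, for every transcript at most four values of $\theta$ can have $\psi_\theta=1$, so $\EE_\theta\PP^{\fA}_{M_0}(\psi_\theta=1)\le 4/N_\theta\le\tfrac14$, where $N_\theta$ is the number of distinct $\theta$ (which is $\ge16$ in the stated regime $S\ge5$, $A\ge3$, $H\ge4\log_2 S$). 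This forces $\EE_\theta\DTV{\EE_\mu[\PP^{\fA}_{M_{\theta,\mu}}],\PP^{\fA}_{M_0}}\ge\tfrac12$, so it remains to upper bound this average total-variation distance.

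To do so, I would fix $\theta$ (write $e=(\hs,\ss,\acs)$) and introduce the exit criterion $\cexit_\theta$ that fires once $e$ has been \emph{probed} — the agent is at $\ss$ at step $\hs$ and plays $\acs$ — a total of $n^\star:=c\sqrt{K}/(\epsilon^2\sigma^2)$ times, or once some episode's action suffix matches $\as$ for at least $H/2$ steps. For the early-stopped algorithm $\fA(\cexit_\theta)$ I apply Ingster's method (\cref{lemma:ingster}) with the uniformity-testing prior on $\mu$, so that $1+\chi^2\big(\EE_\mu[\PP^{\fA(\cexit_\theta)}_{M_{\theta,\mu}}]\,\big\|\,\PP^{\fA(\cexit_\theta)}_{M_0}\big)=\EE_{\mu,\mu'}\EE_0\big[\prod_{t,h}R_{t,h}\big]$. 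The heart of the proof is a direct computation showing that $I_{\mu,\mu'}(\tau_{h-1})=1$ unless $\tau_{h-1}$ lies inside a probe of $e$ (off $e$, at least one of the two models agrees with the reference at that step), and that inside a probe $\log I_{\mu,\mu'}(\tau_{h-1})=\log\big(1+\epsilon_h^2\tfrac{\sigma^2}{K}\sum_i\mu_i\mu_i'\big)$, where $\epsilon_h$ is the posterior mass on $\sp$ — of order $\epsilon$ just after the entrance and decaying by a factor $1/A$ per subsequent step, since staying at $\sp$ requires matching $\as$, about which a reference-sampled transcript carries no information. Summing, each probe contributes $O(\epsilon^2\sigma^2/K)\,|\sum_i\mu_i\mu_i'|$, so over the $\le n^\star$ probes the total is $O(n^\star\epsilon^2\sigma^2/K)\,|\sum_i\mu_i\mu_i'|$; combined with the martingale identity~\eqref{eqn:ingster-prod-to-i} this gives $\EE_0\big[\prod_{t,h}R_{t,h}\big]\le\exp\!\big(O(n^\star\epsilon^2\sigma^2/K)\,|\sum_i\mu_i\mu_i'|\big)$, and averaging over $\mu,\mu'$ via \cref{lemma:MGF-abs} and the $\sqrt{K}$-sub-Gaussianity of $\sum_i\mu_i\mu_i'$ yields $\chi^2\le\tfrac1{16}$ for $c$ a small absolute constant, hence $\DTV{\EE_\mu[\PP^{\fA(\cexit_\theta)}_{M_{\theta,\mu}}],\PP^{\fA(\cexit_\theta)}_{M_0}}\le\tfrac15$.

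Finally, \cref{lemma:alg-stop} transfers this back to $\fA$ at the cost of $\PP^{\fA}_{M_0}(\cexit_\theta\text{ fires before }T)$; averaging over $\theta$ and using that the total number of probes over all episodes is at most $T$ (each episode probes at most one entrance) and that a reference-sampled episode matches the random $\as$ for $\ge H/2$ steps with probability $\le A^{-H/2}$ gives $\EE_\theta\PP^{\fA}_{M_0}(\cexit_\theta)\le \tfrac{T}{n^\star N_{\mathrm{ent}}}+TA^{-H/2}$, where $N_{\mathrm{ent}}=\Theta(|\Sl|AH)$. Combining with the $\tfrac12$ lower bound from the reduction forces $T\gtrsim\min\{n^\star N_{\mathrm{ent}},\,A^{H/2}\}$, and inserting $n^\star$ and $N_{\mathrm{ent}}$ produces the first (non-exponential) term $\tfrac{|\Sl|\sqrt{K}AH}{\sigma^2\epsilon^2}$ up to constants; recovering the exponential term with its exact $\tfrac{|\Sl|H}{\epsilon^2}$ prefactor requires a second, parallel change-of-measure argument that accounts for the $\Omega(1/\epsilon)$ cost of even reaching $\sp$ from a given entrance together with the number of candidate (entrance, password)-pairs. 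I expect the main obstacle to be exactly the within-probe analysis of $\log I_{\mu,\mu'}$: one must control it along arbitrary, adaptively chosen, reference-sampled histories and make the $1/A$ geometric decay rigorous (in expectation over a uniform prior on $\as$, with the long-match event absorbed into the exit probability rather than the divergence bound), since this is precisely where the uniformity-testing rate $\sqrt{K}/(\epsilon^2\sigma^2)$ — hence the $\sqrt{O}$ in the theorem — is produced.
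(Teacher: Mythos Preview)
Your proposal has a genuine gap in the within-probe analysis, and the paper's actual argument differs precisely there. For a \emph{fixed} $\theta=(\hs,\ss,\acs,\as)$, the posterior mass $\PP_{\theta,\mu}(s_{l+1}=\sp\mid\tau_l)$ does not decay geometrically: it is $\approx\epsilon$ whenever $a_{\hs+1:l}=\as_{\hs+1:l}$ and $0$ otherwise (the paper computes it exactly as $\epsilon\beta_{\tau_l}/(\epsilon\beta_{\tau_l}+1-\epsilon)$ with $\beta_{\tau_l}\le(1+\sigma)^{H}$, which is where the assumption $\sigma\le\tfrac1{2H}$ enters). Consequently $\log I_{\mu,\mu'}(\tau_l)\neq0$ exactly on the event $E_{\rev,l}^\theta=\{o_{\hs}=\ss,\,a_{\hs:l}=(\acs,\as_{\hs+1:l})\}$, and the Ingster bound for fixed $\theta$ is in terms of $\sum_h N(E_{\rev,h}^\theta)$ (plus a separate $N(\Ecor^\theta)$ term at step $H-1$), \emph{not} just the number of entrance probes $N(o_{\hs}=\ss,a_{\hs}=\acs)$. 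Your exit criterion bounds the latter, which can be a factor $\Theta(H)$ smaller than the former; even with your ``long-match'' truncation each probe can still contribute $\Theta(H)$ matching steps, so the per-probe contribution to the $\chi^2$ exponent is $\Theta(H)\cdot\epsilon^2\sigma^2/K$ rather than $O(\epsilon^2\sigma^2/K)$, and the argument loses the result. Putting a prior on $\as$ inside Ingster (as you hint) would require handling cross-terms between different $(\as,\as')$ in the $\chi^2$ inner product, which you do not sketch.

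The paper's route is to keep $\theta$ (including $\as$) fixed throughout Ingster, truncate on $\sum_h N(E_{\rev,h}^\theta)\le\oN_o$ and $N(\Ecor^\theta)\le\oN_r$, and obtain the dichotomy ``either $\sum_h\EE_0^{\fA}[N(E_{\rev,h}^\theta)]\gtrsim\sqrt{K}/(\sigma^2\epsilon^2)$ or $\EE_0^{\fA}[N(\Ecor^\theta)]\gtrsim1/\epsilon^2$'' for \emph{every} $\theta$. Only then is the average over $\as$ taken, and the geometric factor you want appears as a purely combinatorial identity: $\sum_{\as}\EE_0^{\fA}[N(E_{\rev,h}^{(\hs,\ss,\acs,\as)})]=A^{H-h-1}\EE_0^{\fA}[N(o_{\hs}=\ss,a_{\hs}=\acs)]$, so summing over $h$ yields $\sum_h A^{H-h-1}\le2A^{H-\hs-1}$, which matches the $|\Apass|$ weight on the $\Ecor$ side and collapses both branches to a lower bound on $\EE_0^{\fA}[N(o_{\hs}=\ss,a_{\hs}=\acs)]$. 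Incidentally, the reduction to testing is also simpler in the paper: the reward $r_H(s_0)=(1+\epsilon)/4$ is engineered so that the single statistic $w(\pi)=\PP_0^{\pi}(o_H=s_0)$ (which is model-independent) separates every $M_{\theta,\mu}$ from $M_0$, yielding $\DTV{\PP_0^{\fA},\EE_\mu[\PP_{\theta,\mu}^{\fA}]}\ge\tfrac12$ for \emph{each} $\theta$, which is what the per-$\theta$ Ingster bound needs.
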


\begin{proofof}[thm:1-step-demo]
In \cref{prop:1-step-pac-prop}, suitably choosing $\sigma, n, K$, and choosing a rescaled $\epsilon$, we obtain \cref{thm:1-step-demo}. More specifically, we can take $n\geq 1$ to be the largest integer such that $2^n\leq \min\set{S-1,(O-1)/2}$, and take $K=\floor{\frac{O-2^n-1}{2}}\geq \frac{O-1}{4}$, $\epsilon'=\epsilon/8$, and $\sigma=\frac{2}{\arev^{-1}-1}\leq \frac{1}{2H}$. Applying \cref{prop:1-step-pac-prop} to the parameters $(\epsilon',\sigma,n,K,H)$ completes the proof of \cref{thm:1-step-demo}.
\end{proofof}

\subsection{Proof of Proposition \ref{prop:1-step-pac-prop}}
\label{appendix:proof-1-step-pac-prop}

All propositions and lemmas stated in this section are proved in~\cref{appendix:proof-1-step-pac-rev}-\ref{appendix:proof-1-step-pac-MGF}.

Claim 1 follows directly by the counting the number of states, observations, and actions in construction of $\cM$. Claim 3 follows as we have $\abs{\cM}=\abs{\set{(h^\star,s^\star,a^\star,\a^\star)}}\times | \{\pm 1 \}^K | + 1 \le HSA\times A^H\times 2^K$. Taking logarithm yields the claim.

Claim 2 follows directly by the following proposition with proof in \cref{appendix:proof-1-step-pac-rev}.

\begin{proposition}
\label{prop:1-step-pac-rev}
 For any $M\in\cM$, $M$ is single-step revealing with $\arevone(M)^{-1} \leq \frac2\sigma+1$.   
\end{proposition}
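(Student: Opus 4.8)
The plan is to construct, for every $h\in[H]$, an explicit generalized left inverse $\O_h^{+}\in\R^{\cS\times\cO}$ of the emission matrix $\O_h=\Mhone$ with $\|\O_h^{+}\|_{*\to1}\le 2/\sigma+1$, which by \cref{definition:m-step-revealing} immediately gives $\arevone(M)^{-1}\le 2/\sigma+1$. The first observation is that when $m=1$ the action-sequence index set $\cA^{m-1}$ is a single empty sequence, so the star-norm on $\R^{\cO}$ collapses to the $\ell_1$-norm, and hence $\|\O_h^{+}\|_{*\to1}=\max_{o\in\cO}\sum_{s\in\cS}\abs{\O_h^{+}(s,o)}$ is simply the largest absolute column sum of $\O_h^{+}$. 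So the whole task reduces to exhibiting a left inverse whose columns are $\ell_1$-small.

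For $h\le H-1$ I would define $\O_h^{+}$ row by row. The row indexed by a tree state $s\in\Stree$ is $\e_{o_s}^{\top}$ (exact, since tree states emit deterministically). The row indexed by $\sp$ is supported on $\set{o_i^{+},o_i^{-}}_{i\in[K]}$ with entries $\O_h^{+}(\sp,o_i^{+})=\mu_i/\sigma$ and $\O_h^{+}(\sp,o_i^{-})=-\mu_i/\sigma$; the row indexed by $\sq$ is supported on the same set with entries $\O_h^{+}(\sq,o_i^{+})=1-\mu_i/\sigma$ and $\O_h^{+}(\sq,o_i^{-})=1+\mu_i/\sigma$. A direct computation using only $\mu_i^2=1$ and $\sum_i\mu_i^2=K$ verifies that $\O_h^{+}\O_h=I_{\cS}$: the $\Stree\times\Stree$ block is the identity because distinct tree states emit distinct observations; the cross blocks between $\Stree$ and $\set{\sp,\sq}$ vanish because $\Stree$ and $\set{o_i^{\pm}}$ are disjoint in $\cO$; and the $2\times2$ block on $\set{\sp,\sq}$ is the identity by the choice of coefficients above (each inner product collapses after using $\mu_i^2=1$). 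In particular $\O_h^{+}\O_h\T_{h-1}=\T_{h-1}$, so $\O_h^{+}$ is a valid generalized left inverse. For the column sums: every tree-observation column has absolute sum $1$; the observations $\set{o_i^{\pm}}$ are emitted only from $\sp,\sq$ at these steps, so the $o_i^{+}$ column has absolute sum $\abs{\mu_i}/\sigma+\abs{1-\mu_i/\sigma}\le 1/\sigma+(1+1/\sigma)=2/\sigma+1$ by the triangle inequality, and likewise for $o_i^{-}$; the columns for $\oba,\obb$ are zero. Hence $\|\O_h^{+}\|_{*\to1}\le 2/\sigma+1$.

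For the terminal step $h=H$, the emissions of $\sp,\sq$ form the $2\times2$ stochastic matrix with rows $(\tfrac34,\tfrac14)$ and $(\tfrac14,\tfrac34)$ over $\set{\oba,\obb}$, whose inverse is $\left(\begin{smallmatrix}3/2&-1/2\\-1/2&3/2\end{smallmatrix}\right)$. I would take $\O_H^{+}$ to keep the rows $\e_{o_s}^{\top}$ on $\Stree$ and to use the two rows of this inverse on $\set{\sp,\sq}$ (and zero columns on $\set{o_i^{\pm}}$, which are not emitted at step $H$); again $\O_H^{+}\O_H=I_{\cS}$, and the largest column sum is $\max\set{1,\,3/2+1/2}=2\le 2/\sigma+1$ since $\sigma\le 1/(2H)<2$. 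Combining the two cases, every $\O_h^{+}$ with $h\in[H]$ is a generalized left inverse of $\Mhone$ with $\|\O_h^{+}\|_{*\to1}\le 2/\sigma+1$, so $M$ is $1$-step revealing with $\arevone(M)^{-1}\le 2/\sigma+1$; the identical construction applies to the null model $M_0$ (where $\sp$ is merely unreachable), proving the claim for all $M\in\cM$.

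I do not expect a genuine obstacle here; the one point that needs a moment of thought is the choice of the $\sq$-row of $\O_h^{+}$. The textbook left inverse of the $2\times 2K$ emission block built out of the $\pm1$ test vector $\mu$ has rows of $\ell_1$-norm $\Theta(K/\sigma)$, which is useless; but since the $(*\to1)$-norm only sees \emph{column} sums, adding the all-ones vector to that row rebalances each column so that both contributions to a given $o_i^{\pm}$ column are $O(1/\sigma)$, which is exactly what yields the clean $2/\sigma+1$ bound.
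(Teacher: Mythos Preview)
Your proof is correct and follows essentially the same approach as the paper: the left inverse you write out row-by-row is exactly the block pseudo-inverse the paper computes (their rows $\frac{1}{\sigma}\tmu^{\top}$ and $(\II-\frac{1}{\sigma}\tmu)^{\top}$ with $\tmu=[\mu;-\mu]$ coincide with your $\sp$- and $\sq$-rows), and the column-sum bound $2/\sigma+1$ is obtained identically. The only cosmetic difference is that the paper handles the null model $M_0$ with a separate, simpler indicator-type inverse, whereas you observe that the same construction (for any fixed $\mu$) still works because $\sp$ is unreachable; both are valid.
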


We now prove Claim 4 (the sample complexity lower bound). We begin by using the following lemma to relate the PAC learning problem to a testing problem, using the structure of $\cM$. Intuitively, the lemma states that a near-optimal policy of any $M\neq 0$ cannot ``stay'' at $s_0$, whereas a near-optimal policy of model $M=0$ has to ``stay'' at $s_0$. The proof of the lemma is contained in \cref{appendix:proof-1-step-pac-to-testing}. 

\begin{lemma}[Relating policy suboptimality to the probability of staying]
\label{lemma:1-step-pac-value-func}
    For any $M\in\cM$ such that $M\neq 0$ and any policy $\pi$, it holds that
    \begin{align}
        V_M^\star-V_M(\pi)\geq \frac{\epsilon}{4}\PP^{\pi}_M\paren{o_H=s_0}.
    \end{align}
    On the other hand, for the reference model $0$ and any policy $\pi$, we have
    \begin{align}
        V_0^\star-V_0(\pi)\geq \frac{\epsilon}{4}\PP^{\pi}_0\paren{o_H\neq s_0}.
    \end{align}
\end{lemma}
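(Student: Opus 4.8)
# Proof Proposal for Lemma \ref{lemma:1-step-pac-value-func}

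The plan is to analyze the value functions of the non-null models $M = M_{\theta,\mu}$ and the reference model $0$ directly from the construction, tracking what rewards are achievable depending on whether the agent ends the episode at $o_H = s_0$ (i.e. "stays" at the root of the tree) or not.

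First I would compute $V_M^\star$ for a non-null model $M = M_{\theta,\mu}$. The key observation is that the only way to collect reward at step $H$ other than the constant $r_H(s_0) = (1+\epsilon)/4$ is to reach $\sp$ at step $H$ (giving expected reward $\tfrac34 r_H(\oba) + \tfrac14 r_H(\obb) = \tfrac34$) or $\sq$ at step $H$ (giving expected reward $\tfrac14$). To reach $\sp$ at step $H$, the agent must: navigate the tree from $s_0$ to the leaf $\ss$ (possible deterministically, using $n-1$ steps of left/right actions), take $\acs$ at step $\hs$ (transiting to $\sp$ with probability $\epsilon$), and then enter the correct password $\as_{\hs+1:H-1}$ to remain at $\sp$. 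Following this optimal play, the agent reaches $\sp$ with probability $\epsilon$ and otherwise ends at the absorbing $\sq$; the resulting value is $\epsilon \cdot \tfrac34 + (1-\epsilon)\cdot \tfrac14 = \tfrac14 + \tfrac{\epsilon}{2}$. On the other hand, the constant-reward option of simply ending at $o_H = s_0$ yields exactly $(1+\epsilon)/4 = \tfrac14 + \tfrac{\epsilon}{4}$. Hence $V_M^\star = \tfrac14 + \tfrac{\epsilon}{2}$, achieved by the "explore-and-exploit" policy, and in particular staying at $s_0$ is strictly suboptimal by $\tfrac{\epsilon}{4}$.

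Next, for an arbitrary policy $\pi$ and non-null $M$, I would decompose $V_M(\pi)$ by conditioning on the terminal observation. On the event $\{o_H = s_0\}$ the agent collects exactly $(1+\epsilon)/4$ (no reward before step $H$, and $r_H(s_0) = (1+\epsilon)/4$). On the complementary event, the best possible expected terminal reward is $\tfrac34$ (the value of being at $\sp$); but more carefully, since only state $\sp$ at step $H$ beats $(1+\epsilon)/4$, and every other terminal observation gives at most $\tfrac34$, we get $V_M(\pi) \le \tfrac{1+\epsilon}{4}\PP^\pi_M(o_H = s_0) + \tfrac34 \PP^\pi_M(o_H \ne s_0)$. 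Actually the tighter route: write $V_M(\pi) \le V_M^\star - \tfrac{\epsilon}{4}\PP^\pi_M(o_H = s_0)$ by noting that conditioned on $\{o_H=s_0\}$ the realized return is $\tfrac{1+\epsilon}{4} = V_M^\star - \tfrac{\epsilon}{4}$, while conditioned on the complement the realized return is at most $V_M^\star$; taking expectations and subtracting from $V_M^\star$ gives the first claim $V_M^\star - V_M(\pi) \ge \tfrac{\epsilon}{4}\PP^\pi_M(o_H = s_0)$. For the reference model $0$, the roles reverse: since $\sp$ is unreachable, every terminal observation gives expected reward at most $\tfrac14$ except $o_H = s_0$ which gives $\tfrac{1+\epsilon}{4}$; hence $V_0^\star = \tfrac{1+\epsilon}{4}$, attained only by staying at $s_0$, and the same conditioning argument on $\{o_H \ne s_0\}$ (where the realized return is at most $\tfrac14 = V_0^\star - \tfrac{\epsilon}{4}$) yields $V_0^\star - V_0(\pi) \ge \tfrac{\epsilon}{4}\PP^\pi_0(o_H \ne s_0)$.

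The main obstacle I anticipate is being careful about exactly which terminal returns are achievable and bounding them correctly — in particular confirming that no policy in a non-null model can exceed $V_M^\star = \tfrac14 + \tfrac{\epsilon}{2}$ (one must check that, because $\as$ is a single fixed password and the transition to $\sp$ has probability only $\epsilon$, there is no way to reach $\sp$ with probability exceeding $\epsilon$, and no intermediate rewards exist before step $H$), and confirming that the "realized return conditioned on the relevant event is at most $V^\star - \tfrac\epsilon4$" bound genuinely holds pathwise rather than merely in expectation. Once these bookkeeping facts about the construction are nailed down, the inequalities follow by the elementary conditioning/tower argument sketched above; there is no deep technical step here, just a careful reading of the reward and transition dynamics.
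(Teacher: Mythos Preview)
Your proposal is correct and follows essentially the same approach as the paper: compute $V_M^\star$ explicitly, decompose $V_M(\pi)$ according to the terminal event $\{o_H=s_0\}$, and use the key fact $\PP_M^\pi(s_H=\sp)\le \epsilon$ (more precisely $\le \epsilon\,\PP_M^\pi(o_H\neq s_0)$, since reaching $\sp$ forces leaving the root). One small note on the obstacle you flagged: the pathwise bound on the complement event does \emph{not} hold (the realized return is $1$ when $o_H=\oba$), but the conditional-expectation version is all you need and follows immediately from the key fact above---this is exactly the computation the paper carries out.
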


Notice that the probability $\PP^{\pi}_M\paren{o_H=s_0}$ actually does not depend on the model $M\in\cM$, i.e.
$$
\PP^{\pi}_M\paren{o_H=s_0}=\PP^{\pi}_0\paren{o_H=s_0}.
$$
This is because once the agent leaves $s_0$, it will never come back (for any model $M\in\cM$). In the following, we define $w(\pi)\defeq \PP^{\pi}_0\paren{o_H=s_0}$. Note that $\piout$ is the output policy that depends on the observation histories $\tau^{1:T}$, and thus $w(\piout)$ is a deterministic function of the observation histories $\tau^{1:T}$.

By~\cref{lemma:1-step-pac-value-func} and our assumption that $\PP^{\fA}_M\paren{V_M^\star-V_M(\piout)<\frac{\epsilon}{8}}\geq \frac{3}{4}$ for any $M \in \cM$, we have
\begin{align*}
    \PP^{\fA}_0\paren{1-w(\piout)< \frac{1}{2}}\geq \frac34,
    \qquad
    \text{while } \qquad\PP^{\fA}_M\paren{w(\piout)< \frac12 }\geq \frac34, \qquad \forall M\neq 0.
\end{align*}
Now we consider $\mu\sim \Unif(\{ \pm 1\}^K)$ to be the uniform prior over the parameter $\mu$.
For any fixed $\theta$, we consider averaging the above quantity over the non-null models $M=(\theta,\mu)$ when $\mu\sim\Unif(\{ \pm 1\}^K)$, 
\begin{align*}
    \Emu\brac{\PP^{\fA}_{\theta,\mu}}\paren{w(\piout)< \frac12}
    =
    \Emu\brac{\PP^{\fA}_{\theta,\mu}\paren{w(\piout)< \frac12}}
    \geq \frac34.
\end{align*}
However, we also have
$$
\PP^{\fA}_0\paren{w(\piout)< \frac12}=1-\PP^{\fA}_0\paren{w(\piout)\geq \frac12}\leq 1-\PP^{\fA}_0\paren{w(\piout)> \frac12} \leq \frac14.
$$
Thus by the definition of TV distance we must have
\begin{align}\label{eqn:1-step-pac-TV-lower}
    \DTV{ \PP^{\fA}_0, \Emu\brac{\PP^{\fA}_{\theta,\mu}} } \geq 
    \abs{\PP^{\fA}_0\paren{w(\piout)< \frac12}-\Emu\brac{\PP^{\fA}_{\theta,\mu}}\paren{w(\piout)< \frac12}}
    \geq
    \frac12.
\end{align}

As the core of the proof, we now use \eqref{eqn:1-step-pac-TV-lower} to derive our lower bound on $T$.
Recall that $\PP^{\fA}_M$ is the law of $(\tau^{(1)},\tau^{(2)},\cdots,\tau^{(\stoptime)})$ induced by letting $\fA$ interact with the model $M$. For any event $E\subseteq (\cO\times\cA)^H$, we denote the visitation count of $E$ as
\begin{align*}
    N(E)\defeq \sum_{t=1}^T \II(\taut\in E). 
\end{align*}
Since $N(E)$ is a function of $\tau^{(1:T)}$, we can talk about its expectation under the distribution $\PP^{\fA}_M$ for any $M\in\cM$. We present the following lemma on the lower bound of the expected visitation count of some good events, whose proofs are contained in \cref{appendix:proof-1-step-pac-visitation-lower-bound}. %

\newcommand{\Erevh}{E_{\rev,h}}

\begin{lemma}\label{lemma:1-step-pac-alternative}
    Fix a $\theta=(\hs,\ss,\acs,\as)$. We consider events
    \begin{align*}
        E_{\rev,h}^\theta&\defeq \set{o_{\hs}=\ss, a_{\hs:h}=(\acs,\as_{\hs+1:h}) },~~~~~~~~~~ \forall h \in \{\hs + 1, \ldots,  H-2\},\\
        E_{\corr}^\theta&\defeq \set{o_{\hs}=\ss, a_{\hs:H-1}=(\acs,\as)}.
    \end{align*}
    Then for any algorithm $\fA$ with $\delta\defeq \DTV{\PP_{0}^{\fA},\Emu\brac{\PP_{\theta,\mu}^{\fA}}}>0$, we have %
    \begin{align*}
        \text{either }\sum_{h=\hs}^{H-2}\EE^{\fA}_0\brac{N(\Erevh^\theta)} \geq \frac{\delta^3\sqrt{K}}{54\epsilon^2\sigma^2}-\frac{H\delta}6, \quad \text{  or }\quad\EE^{\fA}_0\brac{N(\Ecor^\theta)}\geq \frac{\delta^3}{54\epsilon^2}-\frac\delta6.
    \end{align*}
\end{lemma}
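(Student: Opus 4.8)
The plan is to bound the TV distance $\delta = \DTV{\PP_0^{\fA}, \Emu[\PP_{\theta,\mu}^{\fA}]}$ from above by a quantity controlled by the expected visitation counts of the events $\Erevh^\theta$ and $\Ecor^\theta$, and then invert this bound. The first step is to replace $\fA$ by an early-stopped version $\fA(\cexit)$, where $\cexit$ fires as soon as the total count $\sum_h N(\Erevh^\theta) + N(\Ecor^\theta)$ exceeds a threshold $B$ (to be chosen $\asymp \delta^3\sqrt{K}/(\epsilon^2\sigma^2)$ on the ``revealing'' side, or $\asymp \delta^3/\epsilon^2$ on the ``correct'' side). By Markov's inequality, if $\EE_0^{\fA}[\text{count}] \le \delta B / 6$ then $\PP_0^{\fA}(\cexit\text{ fires before }T) \le \delta/6$, so by \cref{lemma:alg-stop} the stopped algorithm still satisfies $\DTV{\PP_0^{\fA(\cexit)}, \Emu[\PP_{\theta,\mu}^{\fA(\cexit)}]} \ge \delta - \delta/6 = 5\delta/6$. (We would split into the two cases depending on which of the two counts is being controlled; the contrapositive gives exactly the dichotomy in the statement.)

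Next I would apply Ingster's method (\cref{lemma:ingster}, together with \eqref{eqn:ingster-prod-to-i}--\eqref{eqn:ingster-i-func}) to the stopped algorithm, taking the reference model $M_0$ and the prior $\zeta = \delta_\theta \otimes \Unif(\{\pm1\}^K)$ over $\mu$. The key point is to compute the pointwise ``information'' factor $I_{\mu,\mu'}(\tau_{h-1}) = \EE_0[\PP_{\theta,\mu}(o_h|\tau_{h-1})\PP_{\theta,\mu'}(o_h|\tau_{h-1}) / \PP_0(o_h|\tau_{h-1})^2 \mid \tau_{h-1}]$. Under the reference model the latent trajectory is deterministic given the actions once it leaves the tree (it always goes to $\sq$), so $\PP_{\theta,\mu}(o_h | \tau_{h-1})$ differs from $\PP_0(o_h|\tau_{h-1})$ only when the history $\tau_{h-1}$ is such that under $M_{\theta,\mu}$ the chain could be at $\sp$ — i.e. only when $\tau_{h-1}$ lies in one of the events $\Erevh^\theta$ (for the emission at $\sp$, step $h<H$) or $\Ecor^\theta$ (for the emission $\{\oba,\obb\}$ at step $H$). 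On those histories the density ratio is $1$ (prob.\ $1-\epsilon$, still at $\sq$) plus an $\epsilon$-weighted contribution from being at $\sp$; a short computation shows $\log I_{\mu,\mu'}(\tau_{h-1}) \le O(\epsilon^2)$ on $\Erevh^\theta$ with an extra $\mu_i\mu'_i \sigma^2$-type cross term summed over $i\in[K]$, and $\log I_{\mu,\mu'} \le O(\epsilon^2)$ on $\Ecor^\theta$, and $=0$ off all these events. Then, exactly as in uniformity testing, averaging over $\mu,\mu' \sim \Unif(\{\pm1\}^K)$ independently kills the linear-in-$\sigma$ terms and leaves $\chi^2$-type contributions: using $\EE_{\mu,\mu'}[\exp(c\sigma^2\sum_i \mu_i\mu'_i N_i)]$-style bounds (Ingster trick; this is the role of \cref{lemma:MGF-abs}-type sub-Gaussian estimates applied to $\sum_i \mu_i\mu'_i$), one obtains via \cref{lemma:TV-to-chi} and \cref{lemma:ingster} that
\begin{align*}
2\DTV{\PP_0^{\fA(\cexit)}, \Emu[\PP_{\theta,\mu}^{\fA(\cexit)}]}^2 \le \log\paren{1 + \chis{\Emu[\PP_{\theta,\mu}^{\fA(\cexit)}]}{\PP_0^{\fA(\cexit)}}} \lesssim \frac{\epsilon^4\sigma^4}{K}\, \EE\brac{\paren{\sum_h N(\Erevh^\theta)}^2} + \epsilon^4 \,\EE\brac{N(\Ecor^\theta)^2},
\end{align*}
where the expectations are over $\PP_0^{\fA(\cexit)}$ and are $\le B^2$ by construction of $\cexit$. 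Rearranging, $B \gtrsim \delta\sqrt{K}/(\epsilon^2\sigma^2)$ (resp.\ $B \gtrsim \delta/\epsilon^2$) is forced, and combined with the $\delta B/6$ threshold from the stopping argument this yields $\EE_0^{\fA}[\sum_h N(\Erevh^\theta)] \gtrsim \delta^3\sqrt{K}/(\epsilon^2\sigma^2) - H\delta/6$ or $\EE_0^{\fA}[N(\Ecor^\theta)] \gtrsim \delta^3/\epsilon^2 - \delta/6$, matching the claim up to the explicit constants.

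The main obstacle is the Ingster computation of the $\log I_{\mu,\mu'}$ factors and carefully tracking how the sub-Gaussian/MGF bound over $\mu,\mu'$ converts the $\sigma$-perturbation of the emission at $\sp$ into the $\sqrt{K}$ factor (this is precisely the $\Theta(\sqrt{K}/\sigma^2)$ uniformity-testing rate) while the emission at step $H$ — which does \emph{not} depend on $\mu$ — contributes only the $\epsilon^2$ term without the $\sqrt{K}$ gain, hence the separate $N(\Ecor^\theta)$ branch. One must also be careful that the events $\Erevh^\theta$ for different $h$ are nested/disjoint-in-time in the right way so that the product over $t$ and $h$ in \eqref{eqn:ingster-prod-to-i} telescopes into a sum of visitation counts rather than something larger; the deterministic-trajectory structure under $M_0$ (once past the tree) and the absorbing nature of $\sq$ are exactly what makes this clean, and referring to the detailed MGF lemma in \cref{appendix:proof-1-step-pac-MGF} handles the remaining bookkeeping.
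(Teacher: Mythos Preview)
Your overall strategy matches the paper's: early-stop via Markov to get almost-sure bounds on the counts, then apply Ingster's method with the $\chi^2$ averaged over $\mu,\mu'\sim\Unif(\{\pm1\}^K)$, then invert. The gap is in the displayed bound you write for $\log(1+\chi^2)$, which is incorrect in both terms.

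For the $\Ecor^\theta$ branch: the step-$H$ emission $(3/4,1/4)$ vs.\ $(1/4,3/4)$ does not depend on $\mu$ at all, so $I_{\mu,\mu'}(\tau_{H-1})=1+\tfrac{4}{3}c_{\tau_{H-1}}^2\epsilon^2$ is constant in $(\mu,\mu')$. It therefore pulls out of the $\EE_{\mu,\mu'}$ entirely and contributes $\tfrac{4}{3}C\epsilon^2\oN_r$ \emph{linearly} to $\log(1+\chi^2)$, not $\epsilon^4\oN_r^2$. There is no averaging-over-$\mu$ gain here because there is nothing to average.

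For the $\Erevh^\theta$ branch: the per-step factor is $I(\tau_l)=1+\tfrac{c_{\tau_l}^2\epsilon^2\sigma^2}{K}\langle\mu,\mu'\rangle$ with a history- and $\mu$-dependent coefficient $c_{\tau_l}\le\sqrt{C}$, so one can only bound $\sum_{t,l}\log I \le \oN_o\cdot\tfrac{C\epsilon^2\sigma^2}{K}\,|\langle\mu,\mu'\rangle|$, with the absolute value. Applying \cref{lemma:MGF-abs} to $|\langle\mu,\mu'\rangle|$ (sub-Gaussian with parameter $\sqrt{K}$) yields
\[
\log(1+\chi^2)\;\le\;\max\Big\{\tfrac{C^2\sigma^4\epsilon^4\oN_o^2}{K},\ \tfrac{4}{3}\tfrac{C\sigma^2\epsilon^2\oN_o}{\sqrt{K}}\Big\}+\tfrac{4}{3}C\epsilon^2\oN_r,
\]
a \emph{maximum} of linear and quadratic in $\oN_o$, not just the quadratic term you wrote. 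And it is the linear regime that governs the conclusion: from $\max\{x^2,\tfrac{4}{3}x\}\ge\delta^2$ one only obtains $x\ge\tfrac{3}{4}\delta^2$, i.e.\ $\oN_o\gtrsim\delta^2\sqrt{K}/(\sigma^2\epsilon^2)$. Your quadratic-only bound would (falsely) claim the stronger $\oN_o\gtrsim\delta\sqrt{K}/(\sigma^2\epsilon^2)$, which is not implied by $\max\{x^2,\tfrac43 x\}\ge\delta^2$.

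With these two corrections---and taking $\oN_o=\lceil 6\delta^{-1}\EE_0^{\fA}[\sum_h N(\Erevh^\theta)]\rceil$, $\oN_r=\lceil 6\delta^{-1}\EE_0^{\fA}[N(\Ecor^\theta)]\rceil$ separately, so that the union bound costs $\delta/3$ (giving $\DTV\ge 2\delta/3$, not $5\delta/6$)---your argument becomes exactly the paper's proof (\cref{lemma:1-step-pac-alternative-weak} plus \cref{lemma:1-step-pac-MGF}) and yields the stated $\delta^3$ constants.
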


Applying \cref{lemma:1-step-pac-alternative} for any parameter tuple $\theta=(\hs,\ss,\acs,\as)$ with $\delta=\frac12$, we obtain 
\begin{align}\label{eqn:1-step-pac-alternative}
    \text{either }\sum_{h=\hs}^{H-2}\EE^{\fA}_0\brac{\Nc{\Erevh^{(\hs,\ss,\acs,\as)}}} \geq \frac{\sqrt{K}}{1000\epsilon^2\sigma^2}, \quad \text{  or } \quad \EE^{\fA}_0\brac{\Nc{\Ecor^{(\hs,\ss,\acs,\as)}}}\geq \frac{1}{1000\epsilon^2},
\end{align}
by our choice that $\epsilon\in(0,0.1]$ and $\sigma\in(0,\frac{1}{2H}]$.

Fix a tuple $(\hs,\ss,\acs)$ with $\ss\in\Sl, \acs\in\Ac, \hs\in[n+1,\frac{H}{2}]$. By \eqref{eqn:1-step-pac-alternative}, we know that for all $\a\in\cA^{H-\hs-1}$, it holds that 
\begin{align}\label{eqn:1-step-pac-alternative-sum}
    \sum_{h=\hs}^{H-2}\EE^{\fA}_0\brac{N\paren{\Erevh^{(\hs,\ss,\acs,\a)}}}+A^{H-\hs-1}\cdot \EE^{\fA}_0\brac{N\paren{\Ecor^{(\hs,\ss,\acs,\a)}}}\geq \frac1{1000}\min\set{ \frac{\sqrt{K}}{\epsilon^2\sigma^2}, \frac{A^{H/2-1}}{\epsilon^2} }=:\omega.
\end{align}
Notice that by definition,
\begin{align*}
    \sum_{\a\in\cA^{H-\hs-1}}\EE^{\fA}_0\brac{N\paren{\Ecor^{(\hs,\ss,\acs,\a)}}}
    =&
    \sum_{\a\in\cA^{H-\hs-1}}\EE^{\fA}_0\brac{N\paren{o_{\hs}=\ss, a_{\hs:H-1}=(\acs,\a)}}\\
    =&
    \EE^{\fA}_0\brac{\sum_{\a\in\cA^{H-\hs-1}}N\paren{o_{\hs}=\ss, a_{\hs:H-1}=(\acs,\a)}}\\
    =&
    \EE^{\fA}_0\brac{N\paren{o_{\hs}=\ss, a_{\hs}=\acs}},
\end{align*}
and similarly for each $h\in[\hs,H-2]$, it holds
\begin{align*}
    \sum_{\a\in\cA^{H-\hs-1}}\EE^{\fA}_0\brac{N\paren{\Erevh^{(\hs,\ss,\acs,\a)}}}
    =&
    \sum_{\a\in\cA^{H-\hs-1}}\EE^{\fA}_0\brac{N\paren{o_{\hs}=\ss, a_{\hs:h}=(\acs,\a_{\hs+1:h})}}\\
    =&
    \sum_{\a_{\hs+1:h}\in\cA^{h-\hs}}\EE^{\fA}_0\brac{N\paren{o_{\hs}=\ss, a_{\hs:h}=(\acs,\a_{\hs+1:h})}}\cdot\sum_{\a_{h+1:H-1}\in\cA^{H-h-1}}1\\
    =&
    \EE^{\fA}_0\brac{N\paren{o_{\hs}=\ss, a_{\hs}=\acs}}\cdot A^{H-h-1}.
\end{align*}
Therefore, summing the bound~\cref{eqn:1-step-pac-alternative-sum} over all $\a\in\cA^{H-h^\star-1}$, we get
\begin{align*}
    A^{H-\hs-1}\omega
    =\sum_{\a\in\cA^{H-\hs-1}}\omega\leq& 
    \sum_{\a\in\cA^{H-\hs-1}} \brac{ \sum_{h=\hs}^{H-2}\EE^{\fA}_0\brac{N\paren{\Erevh^{(\hs,\ss,\acs,\a)}}}+A^{H-\hs-1}\cdot \EE^{\fA}_0\brac{N\paren{\Ecor^{(\hs,\ss,\acs,\a)}}} }\\
    =&
    \paren{\sum_{h=\hs}^{H-2} A^{H-h-1} + A^{H-\hs-1}}\EE^{\fA}_0\brac{N\paren{o_{\hs}=\ss, a_{\hs}=\acs}}\\
    \leq &
    3A^{H-\hs-1}\EE^{\fA}_0\brac{N\paren{o_{\hs}=\ss, a_{\hs}=\acs}},
\end{align*}
where the last inequality is due to $\sum_{h=\hs}^{H-2} A^{H-h-1} = \frac{A^{H-h}-A}{A-1}\leq 2A^{H-h-1}$ for $A\geq 3$.

Therefore, we have shown that $\EE^{\fA}_0\brac{N\paren{o_{\hs}=\ss, a_{\hs}=\acs}}\geq \frac{\omega}{3}$ for each $\ss\in\Sl, \acs\in\Ac, \hs\in[n+1,\frac{H}{2}]$. Taking summation over all such $(\hs,\ss,\acs)$, we derive that
\begin{align*}
    \abs{\Sl}\abs{\Ac} \paren{\floor{\frac{H}{2}}-n}\cdot\frac{\omega}{3} \leq \sum_{\ss\in\Sl}\sum_{\acs\in\Ac}\sum_{\hs=n+1}^{\floor{H/2}-1} \EE^{\fA}_0\brac{N\paren{o_{\hs}=\ss, a_{\hs}=\acs}}\leq T,
\end{align*}
where the second inequality is because events $\set{o_{\hs}=\ss, a_{\hs}=\acs}$ are disjoint. Plugging in $\abs{\Ac}=A-1, H\geq 4n$ and the definition of $\omega$ in \eqref{eqn:1-step-pac-alternative-sum} completes the proof of~\cref{prop:1-step-pac-prop}.
\qed

\subsection{Proof of Proposition \ref{prop:1-step-pac-rev}}
\label{appendix:proof-1-step-pac-rev}

We first consider the case $M=M_{\theta,\mu}$. At the step $h<H$, the emission matrix $\O_{h;\mu}$ can be written as (up to some permutation of rows and columns)
\begin{align*}
    \O_{h;\mu}=\begin{bmatrix}
        \frac{\II_{2K}+\sigma \tmu}{2K} & \frac{\II_{2K}}{2K} & 0_{2K \times \Stree}\\
        0_{\Stree \times 1}&0_{\Stree \times 1} & \id_{\Stree\times\Stree}\\
       0_{2 \times 1} & 0_{2 \times 1} & 0_{2 \times \Stree}
    \end{bmatrix}\in\R^{\cO\times\cS},
\end{align*}
where $\tmu=[\mu;-\mu]\in\set{-1,1}^{2K}$, and $\II=\II_{2K}$ is the column vector in $\R^{2K}$ with all entries being one. A simple calculation shows that 
$$
\brac{\frac{\II+\sigma \tmu}{2K},\frac{\II}{2K}}^{\dagger\top}=\brac{\frac{1}{\sigma}\tmu, \II-\frac{1}{\sigma}\tmu},
$$
whose $1$-norm is bounded by $\frac2\sigma+1$. Hence $\lone{\O_{h;\mu}^{\dagger}}\leq \frac2\sigma+1$.

Similarly, for $h=H$, $\O_H$ has the form (up to some permutation of rows and columns)
\begin{align*}
    \O_H=\begin{bmatrix}
        \frac34 & \frac14 & 0_{1 \times \Stree} \\
        \frac14 & \frac34 & 0_{1 \times \Stree} \\
       0_{\Stree \times 1} & 0_{\Stree \times 1} & I_{\Stree\times\Stree} \\
        0_{2K \times 1}& 0_{2K \times 1} & 0_{2K \times \Stree} \\
    \end{bmatrix}\in\R^{\cO\times\cS}.
\end{align*}
Notice that $\begin{bmatrix}
        \frac34 & \frac14 \\
        \frac14 & \frac34 
    \end{bmatrix}^{-1}=\begin{bmatrix}
        \frac32 & -\frac12 \\
        -\frac12 & \frac32 
    \end{bmatrix}$, and hence $\lone{\O_{H}^{\dagger}}\leq2$.

Finally, by~\cref{definition:m-step-revealing} and noting that $\Mhone=\O_h$ and taking the generalized left inverse $\Mhone^+=\O_h^\dagger$ to be the pseudo-inverse for all $h\in[H]$, this gives $(\arevone(M))^{-1}\le \max\set{\frac{1}{\sigma}+2, 2}=\frac{2}{\sigma}+1$.

We next consider the case $M=0$. In this case, $\sp$ is not reachable, and hence for each step $h$, we can consider the generalized left inverse of $\O_h$ given by
\begin{align*}
    \O_h^+\defeq \brac{ \IIc{ \O_h(o|s)>0 } }_{(s,o)}\in\R^{\cS\times\cO},
\end{align*}
with the convention that $\IIc{ \O_h(o|\sp)>0 }=0$ for all $o\in\cO$ as $\O_h(\cdot|\sp)$ is not defined. Then it is direct to verify $\O_h^+\O_h\e_s=\e_s$ for all state $s\neq \sp$ (because the supports $\supp(\O_h(\cdot|s))$ are disjoint by our construction). It is clear that $\loneone{\O_h^+}\leq 1$, and hence $(\arevone(M))^{-1}\leq 1$, which completes the proof.
\qed

\subsection{Proof of Lemma \ref{lemma:1-step-pac-value-func}}\label{appendix:proof-1-step-pac-to-testing}

By definition, for any model $M\in\cM$ and policy $\pi$,
\begin{align*}
    V_M(\pi)&=\EE^{\pi}_M\brac{r_H(o_H)}
    =\frac{1+\epsilon}{4}\PP_M^{\pi}(o_H=s_0)+\PP_M^{\pi}(o_H=\oba)\\
    &=\frac{1+\epsilon}{4}\PP_M^{\pi}(o_H=s_0)+\frac{3}{4}\PP_M^{\pi}(s_H=\sp)+\frac{1}{4}\PP_M^{\pi}(s_H=\sq),
\end{align*}
where we have used the following equality due to our construction:
\begin{align*}
\PP_M^{\pi}(o_H=\oba)
=&
\PP_M(o_H=\oba|s_H=\sp)\cdot \PP_M^{\pi}(s_H=\sp)
+\PP_M(o_H=\oba|s_H=\sq)\cdot \PP_M^{\pi}(s_H=\sq)\\
=&\frac{3}{4}\PP_M^{\pi}(s_H=\sp)+\frac{1}{4}\PP_M^{\pi}(s_H=\sq).
\end{align*}
We next prove the result for the case $M=0$ and $M\neq 0$ separately.

Case 1: $M=0$. In this case, $\sp$ is not reachable, and hence we have $V_0^\star=\max_{\pi}V_0(\pi)=\max\set{\frac{1+\epsilon}{4}, \frac{1}{4}}=\frac{1+\eps}{4}$, which is attained by staying at $s_0$. Thus, for any policy $\pi$,
\begin{align*}
    V_0^\star-V_0(\pi)
    =&\frac{1+\epsilon}{4}-\frac{1+\epsilon}{4}\PP_0^{\pi}(o_H=s_0)-\frac{1}{4}\PP_0^{\pi}(s_H=\sq)\\
    =&
    \frac{1+\epsilon}{4}\PP_0^{\pi}(o_H\neq s_0)-\frac{1}{4}\PP_0^{\pi}(s_H=\sq)\\
    =&\frac{1}{4}\paren{\PP_0^{\pi}(o_H\neq s_0)-\PP_0^{\pi}(s_H=\sq)}+\frac{\epsilon}{4}\PP_0^{\pi}(o_H\neq s_0)\\
    \geq& \frac{\epsilon}{4}\PP_0^{\pi}(o_H\neq s_0).
\end{align*}

Case 2: $M=(\theta,\mu)$ for some $\theta=(\hs,\ss,\acs,\as)$. In this case, $\sp$ is reachable only when $o_{\hs}=\ss$ and $a_{\hs}=\acs$, and
\begin{align*}
    \PP_M^{\pi}(s_H=\sp)=\PP_M^{\pi}(s_H=\sp|o_{\hs}=\ss,a_{\hs}=\acs)\PP_M^{\pi}(o_{\hs}=\ss,a_{\hs}=\acs)
    \leq \epsilon\PP_M^{\pi}(o_{\hs}=\ss,a_{\hs}=\acs)\leq \epsilon,
\end{align*}
where the equality can be attained when $\pi$ is any deterministic policy that ensure $o_{\hs}=\ss,a_{\hs}=\acs, a_{\hs+1:H-1}=\as$. Thus, in this case $V_M^\star=\max_{\pi}V_M(\pi)=\max\set{\frac{1+\eps}{4}, \frac{3\eps}{4}+\frac{1-\eps}{4}}=\frac{1+2\epsilon}{4}$, and
\begin{align*}
    V_M^\star-V_M(\pi)
    =&\frac{1+2\epsilon}{4}-\frac{1+\epsilon}{4}\PP_M^{\pi}(o_H=s_0)-\frac{3}{4}\PP_M^{\pi}(s_H=\sp)-\frac{1}{4}\PP_M^{\pi}(s_H=\sq)\\
    =&
    \frac{\epsilon}{4}\PP_M^{\pi}(o_H= s_0)+\frac{1+2\epsilon}{4}\PP_M^{\pi}(o_H\neq s_0)-\frac{3}{4}\PP_M^{\pi}(s_H=\sp)-\frac{1}{4}\PP_M^{\pi}(s_H=\sq)\\
    \geq& \frac{\epsilon}{4}\PP_M^{\pi}(o_H= s_0)+\frac{\epsilon}{2}\PP_M^{\pi}(o_H\neq s_0)-\frac{1}{2}\PP_M^{\pi}(s_H=\sp)\\
    \geq& \frac{\epsilon}{4}\PP_M^{\pi}(o_H= s_0),
\end{align*}
where the first inequality is because $\PP_M^{\pi}(s_H=\sp)+\PP_M^{\pi}(s_H=\sq)\leq \PP_M^{\pi}(o_H\neq s_0)$ by the inclusion of events.
\qed

\subsection{Proof of Lemma \ref{lemma:1-step-pac-alternative}}\label{appendix:proof-1-step-pac-visitation-lower-bound}

We first prove the following version of \cref{lemma:1-step-pac-alternative} with an additional condition that the visitation counts are almost surely bounded under $\PP_0^{\fA}$, and then prove \cref{lemma:1-step-pac-alternative} by reducing to this case using a truncation argument. %

\begin{lemma}\label{lemma:1-step-pac-alternative-weak}
Suppose that algorithm $\fA$ (with possibly random stopping time $\stoptime$) satisfies $\sum_{h} N(\Erevh^\theta)\leq \oN_o$ and $N(\Ecor^\theta)\leq\oN_r$ almost surely under $\PP_0^{\fA}$, for some fixed $\oN_o,\oN_r$. Then 
\begin{align*}
    \text{either }\oN_o \geq \frac{\delta^2\sqrt{K}}{4\epsilon^2\sigma^2}, \quad \text{  or } \oN_r\geq \frac{\delta^2}{4\epsilon^2},
\end{align*}
where $\delta=\DTV{\PP_{0}^{\fA},\Emu\brac{\PP_{\theta,\mu}^{\fA}}}$.
\end{lemma}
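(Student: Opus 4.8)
The plan is to control the $\chi^2$-divergence between the mixture alternative $\Emu[\PP_{\theta,\mu}^\fA]$ and the reference $\PP_0^\fA$, then invoke the divergence inequalities. By \cref{lemma:TV-to-chi}, $2\delta^2\le\KL{\Emu[\PP_{\theta,\mu}^\fA]}{\PP_0^\fA}\le\log\big(1+\chis{\Emu[\PP_{\theta,\mu}^\fA]}{\PP_0^\fA}\big)$, so it suffices to show that the negation of the claimed dichotomy --- i.e.\ \emph{both} $\oN_o<\frac{\delta^2\sqrt K}{4\epsilon^2\sigma^2}$ and $\oN_r<\frac{\delta^2}{4\epsilon^2}$ --- forces $1+\chis{\Emu[\PP_{\theta,\mu}^\fA]}{\PP_0^\fA}<e^{2\delta^2}$, a contradiction. (We may assume $\delta>0$, otherwise there is nothing to prove; one also checks $\PP_M^\fA\ll\PP_0^\fA$ on observation--action sequences, since the lock's latent state is unobserved and the emissions at $\sp$, at $\sq$, and under $M_0$'s $\sq$ all share the same support, so every quantity is finite.)

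The heart of the proof is a likelihood-ratio computation. Call a history $\tau_{h-1}$ (reachable under $\PP_0$) \emph{revealing} if $o_{\hs}=\ss$, $a_{\hs}=\acs$, and $a_{\hs+1:h-1}=\as_{\hs+1:h-1}$; I claim $\PP_{\theta,\mu}(\cdot\mid\tau_{h-1})\equiv\PP_0(\cdot\mid\tau_{h-1})$ for every non-revealing $\tau_{h-1}$, because in all such cases the agent is either still inside the deterministic, fully observable tree, or it has already left a leaf by an entrance other than $(\hs,\ss,\acs)$ or played a wrong password action, and then both $M_{\theta,\mu}$ and $M_0$ put $s_h=\sq$ almost surely, so the conditional laws of $o_h$ coincide. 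For a revealing $\tau_{h-1}$ with $\hs<h<H$, setting $p_\mu(\tau_{h-1})\defeq\PP_{\theta,\mu}(s_h=\sp\mid\tau_{h-1})$ and $g(o_i^{\pm},\mu)\defeq\pm\mu_i\in\{-1,+1\}$, a direct computation gives $\PP_{\theta,\mu}(o_h\mid\tau_{h-1})/\PP_0(o_h\mid\tau_{h-1})=1+p_\mu(\tau_{h-1})\sigma g(o_h,\mu)$, and under $\PP_0$ (so $s_h=\sq$ and $o_h\sim\Unif(\cO_o)$) one has the crucial cancellations $\EE_0[g(o_h,\mu)\mid\tau_{h-1}]=0$ and $\EE_0[g(o_h,\mu)g(o_h,\mu')\mid\tau_{h-1}]=\langle\mu,\mu'\rangle/K$. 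Moreover $p_\mu(\tau_{h-1})=\epsilon\rho/(\epsilon\rho+1-\epsilon)$ where $\rho$ is a product of at most $H$ factors of the form $1\pm\sigma\mu_i$, so using $\sigma\le\frac1{2H}$, $H\ge4$ and $\epsilon\le0.1$ one obtains $p_\mu(\tau_{h-1})\le C\epsilon$ for an absolute constant $C<\sqrt3$ --- the tightest point of the constant chase. For the step-$H$ emission on a revealing $\tau_{H-1}$ (which forces $\tau_{H-1}\in\Ecor^\theta$), the same bookkeeping over $\{\oba,\obb\}$ yields, after $\EE_0[\cdot\mid\tau_{H-1}]$, the Ingster coefficient $1+\tfrac43 p_\mu(\tau_{H-1})p_{\mu'}(\tau_{H-1})$.

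Now apply \cref{lemma:ingster} with $\theta$ fixed and $\mu\sim\Unif(\{-1,+1\}^K)$: $1+\chis{\Emu[\PP_{\theta,\mu}^\fA]}{\PP_0^\fA}=\Emumu\EE_{\PP_0^\fA}\big[\prod_{t=1}^{\stoptime}\PP_{\theta,\mu}(\taut)\PP_{\theta,\mu'}(\taut)/\PP_0(\taut)^2\big]$. By the martingale identity \eqref{eqn:ingster-prod-to-i} (with $I_{M,M'}$ as in \eqref{eqn:ingster-i-func}), if $\sum_{t=1}^{\stoptime}\sum_{h=1}^H\log I_{M,M'}(\taut_{h-1})\le B_{M,M'}$ holds almost surely under $\PP_0^\fA$ for a \emph{deterministic} $B_{M,M'}$, then $\EE_{\PP_0^\fA}\big[\prod_{t}\PP_{\theta,\mu}(\taut)\PP_{\theta,\mu'}(\taut)/\PP_0(\taut)^2\big]\le e^{B_{M,M'}}$. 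By the previous paragraph $\log I_{M,M'}(\tau_{h-1})=0$ off revealing histories, $\le C^2\epsilon^2\sigma^2\langle\mu,\mu'\rangle/K$ on revealing histories with $\hs<h<H$, and $\le\tfrac43 C^2\epsilon^2$ on those with $h=H$; and a counting argument --- each episode that follows the correct password through step $h$ is counted exactly once in $N(\Erevh^\theta)$ and produces a step-$H$ revealing event only if it reaches $\Ecor^\theta$ --- bounds the number of revealing $(t,h)$ pairs of the first type by $\sum_h N(\Erevh^\theta)\le\oN_o$ and of the second type by $N(\Ecor^\theta)\le\oN_r$. Hence one may take $B_{M,M'}=C^2\epsilon^2\big(\sigma^2\oN_o\,|\langle\mu,\mu'\rangle|/K+\tfrac43\oN_r\big)$. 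Taking $\Emumu$ and using that $\langle\mu,\mu'\rangle=\sum_{i=1}^K\mu_i\mu_i'$ is a sum of $K$ independent Rademacher variables, hence $\sqrt K$-sub-Gaussian, \cref{lemma:MGF-abs} gives $\Emumu\big[e^{t|\langle\mu,\mu'\rangle|}\big]\le e^{\max\{Kt^2,\frac43\sqrt K t\}}$ with $t=C^2\epsilon^2\sigma^2\oN_o/K$; together with the $\mu$-free factor $e^{\frac43 C^2\epsilon^2\oN_r}$ and the two assumed inequalities (which make $t<\frac{C^2\delta^2}{4\sqrt K}$ and $\frac43 C^2\epsilon^2\oN_r<\frac{C^2\delta^2}{3}$), this yields $1+\chis{\Emu[\PP_{\theta,\mu}^\fA]}{\PP_0^\fA}<e^{2C^2\delta^2/3}<e^{2\delta^2}$ since $C<\sqrt3$ --- the desired contradiction.

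The main obstacle is the likelihood-ratio bookkeeping of the second paragraph: pinning down exactly which histories are revealing, controlling the data-dependent posteriors $p_\mu(\tau_{h-1})$ sharply enough that $p_\mu\le C\epsilon$ with $C<\sqrt3$ (this is where $\sigma\le\frac1{2H}$ and $\epsilon\le0.1$ enter and where the constant $\frac14$ in the statement is used), and extracting the cancellation $\EE_0[g]=0$ so that each revealing step contributes $O(\epsilon^2\sigma^2/\sqrt K)$ to $\chi^2$ rather than $O(\epsilon\sigma)$. The remaining ingredients --- the reduction to $\chi^2$, the martingale identity, the combinatorial accounting of $\oN_o,\oN_r$, and the sub-Gaussian moment bound --- are routine. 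Finally, the unconditional \cref{lemma:1-step-pac-alternative} follows from \cref{lemma:1-step-pac-alternative-weak} by a truncation argument: run $\fA$ with exit criteria that stop once the relevant visitation counts first exceed constant multiples of their $\PP_0^\fA$-expectations, then transfer the TV lower bound back via \cref{lemma:change-alg}.
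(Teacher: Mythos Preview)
Your proposal is correct and follows essentially the same route as the paper: Ingster's $\chi^2$ decomposition (\cref{lemma:ingster}) combined with the martingale identity \eqref{eqn:ingster-prod-to-i}, the same per-step likelihood-ratio computation yielding $I(\tau_l)=1+p_\mu p_{\mu'}\sigma^2\langle\mu,\mu'\rangle/K$ (resp.\ $1+\tfrac43 p_\mu p_{\mu'}$ at step $H$) with $p_\mu\le\sqrt{e}\,\epsilon$, and the sub-Gaussian MGF bound via \cref{lemma:MGF-abs}. The paper packages the $\chi^2$-inner-product bound as a separate lemma (\cref{lemma:1-step-pac-MGF}) and argues directly rather than by contradiction, but the structure, the key cancellations, and the constant chase are identical.
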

\begin{proof}[Proof of Lemma \ref{lemma:1-step-pac-alternative-weak}]
By \cref{lemma:ingster}, we have
\begin{align*}
    1+\chis{\Emu\brac{\PP_{\theta,\mu}^{\fA}}}{\PP_{0}^{\fA}}
    =
    \Emumu\EE_{\tau^{(1)},\cdots,\tau^{(\stoptime)}\sim\PP_{0}^{\fA}}\brac{
        \prod_{t=1}^{\stoptime}\frac{\PP_{\theta,\mu}(\taut)\PP_{\theta,\mu'}(\taut)}{\PP_{0}(\taut)^2}
    }.
\end{align*}
To upper bound the above quantity, we invoke the following lemma, which serves a key step for bounding the above ``$\chi^2$-inner product'' \citep[Section 3.1]{canonne2022topics} between $\P_{\theta, \mu}/\P_0$ and $\P_{\theta,\mu'}/\P_0$ (proof in~\cref{appendix:proof-1-step-pac-MGF}).

\begin{lemma}[Bound on the $\chi^2$-inner product]
\label{lemma:1-step-pac-MGF}
Under the conditions of \cref{lemma:1-step-pac-alternative-weak} (for a fixed $\theta$), it holds that for any $\mu,\mu'\in\set{-1,1}^K$,
\begin{align}\label{eqn:1-step-pac-MGF}
    \EE_0^\fA\brac{
        \prod_{t=1}^{\stoptime}\frac{\P_{\theta,\mu}(\taut)\P_{\theta,\mu'}(\taut)}{\P_{0}(\taut)^2}
    }
    \leq \exp\paren{ \oN_o\cdot \frac{C\sigma^2\epsilon^2}{K}\abs{\iprod{\mu}{\mu'}}+\frac43C\epsilon^2\oN_r }.
\end{align}
where $C\defeq (1+\sigma)^{2H}\leq e$ as $\sigma\leq \frac{1}{2H}$.
\end{lemma}
Now we assume that \cref{lemma:1-step-pac-MGF} holds and continue the proof of \cref{lemma:1-step-pac-alternative-weak}. Taking expectation of \eqref{eqn:1-step-pac-MGF} over $\mu,\mu'\sim \unif(\set{-1,+1}^K)$, we obtain
\begin{align*}
    1+\chis{\Emu\brac{\P_{\theta,\mu}^{\fA}}}{\P_{0}^{\fA}}
    =&
    \Emumu\EE_{\tau^{(1)},\cdots,\tau^{(\stoptime)}\sim\P_{0}^{\fA}}\brac{
        \prod_{t=1}^{\stoptime}\frac{\P_{\theta,\mu}(\taut)\P_{\theta,\mu'}(\taut)}{\P_{0}(\taut)^2}
    }\\
    \leq & \Emumu \brac{\exp\paren{ \oN_o\cdot \frac{C\sigma^2\epsilon^2}{K}\abs{\iprod{\mu}{\mu'}}+\frac43C\epsilon^2\oN_r }}.
\end{align*}
Notice that $\mu_i,\mu_i'$ are i.i.d. $\Unif(\{\pm 1\})$, and hence $\mu_1\mu_1',\cdots,\mu_K\mu_K'$ are i.i.d. $\Unif(\{\pm 1\})$. Then by Hoeffding's lemma, it holds that $\Emumu\brac{\exp\paren{x\sum_{i=1}^K\mu_i\mu_i'}}\leq \exp\paren{Kx^2/2}$ for all $x\in\R$, and thus by \cref{lemma:MGF-abs}, we have 
\begin{align*}
    \Emumu\brac{\exp\paren{\frac{C\oN_o\sigma^2\epsilon^2}{K}\abs{\iprod{\mu}{\mu'}}}}\leq \exp\paren{\max\set{\frac{C^2\sigma^4\epsilon^4\oN_o^2}{K}, \frac43\frac{C\sigma^2\epsilon^2\oN_o}{\sqrt{K}}}}.
\end{align*}
Therefore, combining the above inequalities with \cref{lemma:TV-to-chi}, we obtain
\begin{align*}
    2\delta^2=2 \DTV{\Emu\brac{\PP_{\theta,\mu}^{\fA}}, \PP_{0}^{\fA}}^2
    \leq \log\paren{1+\chis{\Emu\brac{\PP_{\theta,\mu}^{\fA}}}{\PP_{0}^{\fA}}}
    \leq \max\set{ \frac43\frac{\oN_oC\sigma^2\epsilon^2}{\sqrt{K}}, \frac{\oN_o^2C^2\sigma^4\epsilon^4}{K}} +\frac43C\epsilon^2\oN_r.
\end{align*}
Then, we either have $\oN_r \geq \frac{3\delta^2}{4C\epsilon^2}$, or it holds
$$
\max\set{ \frac43\frac{\oN_oC\sigma^2\epsilon^2}{\sqrt{K}}, \frac{\oN_o^2C^2\sigma^4\epsilon^4}{K}} \geq \delta^2,
$$
which implies that $\frac{\oN_oC\sigma^2\epsilon^2}{\sqrt{K}}\geq \min\set{\frac43,\frac34\delta^2}=\frac34\delta^2$ (as $\delta\leq 1$). Using the fact that $C\leq e$ completes the proof of \cref{lemma:1-step-pac-alternative-weak}.
\end{proof}

\begin{proof}[Proof of Lemma \ref{lemma:1-step-pac-alternative}]
We perform a truncation type argument to reduce Lemma \ref{lemma:1-step-pac-alternative} to Lemma \ref{lemma:1-step-pac-alternative-weak}.
Let us take $\oN_o=\ceil{6\delta^{-1}\E_0^\fA \brac{\sum_{h=\hs}^{H-2}N(\Erevh^\theta)}}$ and $\oN_r=\ceil{6\delta^{-1}\E_0^\fA N(\Ecor^\theta)}$.
By Markov's inequality, we have
\begin{align*}
    \PP_{0}^{\fA}\paren{\sum_{h=\hs}^{H-2}N(\Erevh^\theta)\geq \oN_o}\leq \frac\delta6, \qquad 
    \PP_{0}^{\fA}\paren{N(\Ecor^\theta)\geq \oN_r}\leq \frac\delta6.
\end{align*}
Therefore, we can consider the following exit criterion $\cexit$ for the algorithm $\fA$:
\[
{\cexit(\tau^{(1:T')})}=\TRUE \quad \textrm{iff} \quad \sum_{t=1}^{T'} \sum_{h=\hs}^{H-2} {{\mathbb{I}}\paren{\taut\in\Erevh^\theta}\geq \oN_o \text{ or } \sum_{t=1}^{T'} {{\mathbb{I}}\paren{\taut\in \Ecor^\theta}}\geq \oN_r.}
\]
The criterion $\cexit$ induces a stopping time $T_{\cexit}$, and we have
\begin{align*}
    \PP_0^\fA(\exists t < T, \cexit(\tau^{(1:t)})=\TRUE)
    \leq \PP_{0}^{\fA}\paren{\sum_{h=\hs}^{H-2}N(\Erevh^\theta)\geq \oN_o \text{ or } N(\Ecor^\theta)\geq \oN_r}\leq \frac\delta6+\frac\delta6\leq \frac\delta3.
\end{align*}
Therefore, we can consider the early stopped algorithm $\fA(\cexit)$ with exit criterion $\cexit$ (cf. \cref{appendix:ingster-method}), and by \cref{lemma:alg-stop} we have
\begin{align*}
\DTV{\PP_{0}^{\fA(\cexit)},\Emu\brac{\PP_{\theta,\mu}^{\fA(\cexit)}}}
\geq \DTV{\PP_{0}^{\fA},\Emu\brac{\PP_{\theta,\mu}^{\fA}}} - \PP_0^\fA(\exists t < T, \cexit(\tau^{(1:t)})=\TRUE) \geq \frac{2\delta}{3}.
\end{align*}

Notice that by our definition of $\cexit$ and stopping time $T_{\cexit}$, in the execution of $\fA(\cexit)$, we also have
\begin{align*}
    \sum_{t=1}^{T_{\cexit}-1} \sum_{h=\hs}^{H-2} \IIc{\taut\in\Erevh^\theta}< \oN_o, \qquad \sum_{t=1}^{T_{\cexit}-1} \IIc{\taut\in \Ecor^\theta}< \oN_r.
\end{align*}
Therefore, algorithm $\fA(\cexit)$ ensures that
\begin{align*}
    \sum_{h=\hs}^{H-2}N(\Erevh^\theta)=\sum_{t=1}^{T_{\cexit}} \sum_{h=\hs}^{H-2} \IIc{\taut\in\Erevh^\theta} \leq \oN_o+H-1, \qquad N(\Ecor^\theta)=\sum_{t=1}^{T_{\cexit}} \IIc{\taut\in \Ecor^\theta}\leq \oN_r.
\end{align*}
Applying \cref{lemma:1-step-pac-alternative-weak} to the algorithm $\fA(\cexit)$ (and $\delta'=\frac23\delta$), we can obtain
\begin{align*}
    \text{either }~  \frac{\delta^2\sqrt{K}}{9\epsilon^2\sigma^2}
    \leq \oN_o+H-1 \leq 6\delta^{-1}\E_0^\fA \brac{\sum_{h=\hs}^{H-2}N(\Erevh^\theta)} + H, \qquad
    \text{  or }~ \frac{\delta^2}{9\epsilon^2} \leq \oN_r \leq 6\delta^{-1}\E_0^\fA \brac{N(\Ecor^\theta)} +1,
\end{align*}
and rearranging gives the desired result.
\end{proof}

\subsection{Proof of Lemma \ref{lemma:1-step-pac-MGF}}\label{appendix:proof-1-step-pac-MGF}

Throughout the proof, the parameters $\theta,\mu,\mu'$ are fixed.

By our discussions in \cref{appendix:ingster-method}, 
using~\cref{eqn:ingster-prod-to-i}, we have
\begin{align}\label{eqn:1-step-pac-ingster-prod-to-i}
    \EE_{\tau^{(1)},\cdots,\tau^{(\stoptime)}\sim\PP_{0}^{\fA}}\brac{
        \prod_{t=1}^{\stoptime}\frac{\PP_{M}(\taut)\PP_{M'}(\taut)}{\PP_{0}(\taut)^2}\cdot
        \exp\paren{-\sum_{t=1}^\stoptime\sum_{h=1}^H \log I(\taut_{h-1})}
    }=1,
\end{align}
where for any partial trajectory $\tau_l$ up to step $l\in[H]$, $I(\tau_l)$  is defined as
\begin{align*}
    I(\tau_l)\defeq \EE_0\brcond{\frac{\PP_{\theta,\mu}(o_{l+1}|\tau_l)\PP_{\theta,\mu'}(o_{l+1}|\tau_l)}{\PP_{0}(o_{l+1}|\tau_l)^2}}{\tau_l}.
\end{align*}

Notice that the model $\PP_{\theta,\mu}$ and $\PP_0$ are different only at the transition from $s_{\hs}=\ss,a_{\hs}=\acs$ to $\sp$ and the transition dynamic at state $\sp$. Therefore,
for any (reachable) trajectory $\tau_l=(o_1,a_1,\cdots,o_l,a_l)$, $\PP_{\theta,\mu}(o_{l+1}=\cdot|\tau_l)\neq \PP_{0}(o_{l+1}=\cdot|\tau_l)$ only if $o_{\hs}=\ss,a_{\hs}=\acs$. In other words, $I(\tau_l)=1$ if $\tau_l\not\in \set{o_{\hs}=\ss,a_{\hs}=\acs}$. %

We next compute $I(\tau_l)$ for $\tau_l\in \set{o_{\hs}=\ss,a_{\hs}=\acs}$. By our construction, we have
\begin{equation}\label{eqn:1-step-pac-MGF-proof-eq1}
\begin{aligned}
    \PP_{\theta,\mu}(o_{l+1}=o|\tau_l)
    =&~
    \PP_{\theta,\mu}(o_{l+1}=o|s_{l+1}=\sp)\cdot \PP_{\theta,\mu}(s_{l+1}=\sp|\tau_l)\\
    &~+\PP_{\theta,\mu}(o_{l+1}=o|s_{l+1}=\sq)\cdot \PP_{\theta,\mu}(s_{l+1}=\sq|\tau_l)\\
    =&~
    \paren{\O_{l;\mu}(o|\sp)-\O_l(o|\sq)}\cdot \PP_{\theta,\mu}(s_{l+1}=\sp|\tau_l)
    + \O_l(o|\sq).
\end{aligned}
\end{equation}
Notice that if $\tau_l\not \in E_{\rev,l}$, then $s_{l+1}$ must be $\sq$, and hence $\PP_{\theta,\mu}(o_{l+1}=\cdot|\tau_l)=\O_h(\cdot|\sq)=\PP_{0}(o_{l+1}=\cdot|\tau_l)$ which implies that $I(\tau_l) = 1$.

We next consider the case $\tau_l\in E_{\rev,l}$, i.e. $a_{\hs+1:l}=\as_{\hs+1:l}$:
\begin{align*}
    \PP_{\theta,\mu}(s_{l+1}=\sp|\tau_l)
    =&~
    \PP_{\theta,\mu}(s_{l+1}=\sp|o_{\hs}=\ss,a_{\hs}=\acs, o_{\hs+1:l}, a_{\hs+1:l})\\
    =&~
    \frac{ 
        \PP_{\theta,\mu}(o_{\hs+1:l},s_{l+1}=\sp|o_{\hs}=\ss,a_{\hs}=\acs, a_{\hs+1:l}) 
    }{
        \PP_{\theta,\mu}(o_{\hs+1:l}|o_{\hs}=\ss,a_{\hs}=\acs, a_{\hs+1:l}) 
    }\\
    =&~
    \frac{ 
        \epsilon\cdot \PP_{\theta,\mu}(o_{\hs+1:l}|s_{\hs+1}=\sp, a_{\hs+1:l}) 
    }{
        \epsilon\cdot \PP_{\theta,\mu}(o_{\hs+1:l}|s_{\hs+1}=\sp, a_{\hs+1:l}) + (1-\epsilon)\cdot \PP_{\theta,\mu}(o_{\hs+1:l}|s_{\hs+1}=\sq, a_{\hs+1:l}) 
    }\\
    =&~
    \frac{ 
        \epsilon
    }{
        \epsilon + (1-\epsilon)\cdot \frac{\PP_{\theta,\mu}(o_{\hs+1:l}|s_{\hs+1}=\sq, a_{\hs+1:l})}{\PP_{\theta,\mu}(o_{\hs+1:l}|s_{\hs+1}=\sp, a_{\hs+1:l}) } 
    },
\end{align*}
where the third equality is because $\PP_{\theta,\mu}(s_{\hs+1}=\sp|o_{\hs}=\ss,a_{\hs}=\acs)=\epsilon$. Notice that
\begin{align*}
    \beta_{\tau_l}\defeq \frac{\PP_{\theta,\mu}(o_{\hs+1:l}|s_{\hs+1}=\sp, a_{\hs+1:l})}{\PP_{\theta,\mu}(o_{\hs+1:l}|s_{\hs+1}=\sq, a_{\hs+1:l}) }
    =&~
    \prod_{h=\hs+1}^l \frac{\O_{h;\mu}(o_{h}|\sp)}{\O_{h}(o_{h}|\sq)}
    \leq (1+\sigma)^{l-\hs},
\end{align*}
where the inequality holds by our construction of $\O$, as long as $\tau_l$ is reachable (i.e. $o_{\hs+1:l}\in\cO^{l-\hs}$). Thus, for 
$$
c_{\tau_l}\defeq \PP_{\theta,\mu}(s_{l+1}=\sp|\tau_l)=\frac{\beta_{\tau_l}}{\epsilon \beta_{\tau_l} + 1-\epsilon},
$$
we have $c_{\tau_l}\leq (1+\sigma)^H=\sqrt{C}$. Notice that by \cref{eqn:1-step-pac-MGF-proof-eq1} and the equation above we have
\begin{align*}
    &\text{when }l<H-1, \qquad \P_{\theta,\mu}(o_{l+1}=o_i^+|\tau_l)=\frac{1+c_{\tau_l}\epsilon\sigma\mu_{i}}{2K}, \qquad 
    \P_{\theta,\mu}(o_{l+1}=o_i^-|\tau_l)=\frac{1-c_{\tau_l}\epsilon\sigma\mu_{i}}{2K} \qquad \forall i\in[K],\\
    &\text{when }l=H-1, \qquad \P_{\theta,\mu}(o_{H}=\oba|\tau_{H-1})=\frac{1+2c_{\tau_{H-1}}\epsilon}{4}, \qquad 
    \P_{\theta,\mu}(o_{H}=\obb|\tau_{H-1})=\frac{3-2c_{\tau_{H-1}}\epsilon}{4}.
\end{align*}
On the other hand, when $l<H-1$, $\P_{\theta,\mu}(o_{l+1}=\cdot|\tau_l)=\Unif(\set{o_1^+,o_1^-,\cdots,o_K^+,o_K^-})$. Hence, 
\begin{align*}
    I(\tau_l)
    =&
    \EE_0\brcond{\frac{\PP_{\theta,\mu}(o_{l+1}|\tau_l)\PP_{\theta,\mu'}(o_{l+1}|\tau_l)}{\PP_{0}(o_{l+1}|\tau_l)^2}}{\tau_l}\\
    =&
    \frac{1}{2K}\sum_{o\in\cO_o} \frac{\PP_{\theta,\mu}(o_{l+1}=o|\tau_l)\PP_{\theta,\mu'}(o_{l+1}=o|\tau_l)}{\PP_{0}(o_{l+1}=o|\tau_l)^2}\\
    =&
    \frac{1}{2K} \sum_{i=1}^K (1+c_{\tau_l}\epsilon\sigma\mu_i)(1+c_{\tau_l}\epsilon\sigma\mu_i')+(1-c_{\tau_l}\epsilon\sigma\mu_i)(1-c_{\tau_l}\epsilon\sigma\mu_i')\\
    =&1+\frac{c_{\tau_l}^2\epsilon^2\sigma^2}{K}\sum_{i=1}^K \mu_i\mu_i' \leq 1+\frac{C\epsilon^2\sigma^2}{K}\abs{\iprod{\mu}{\mu'}}.
\end{align*}
Similarly, when $l=H-1$, we can compute 
$$
I(\tau_{H-1})
=
\EE_0\brcond{\frac{\PP_{\theta,\mu}(o_{H}|\tau_{H-1})\PP_{\theta,\mu'}(o_{H}|\tau_{H-1})}{\PP_{0}(o_{H}|\tau_{H-1})^2}}{\tau_{H-1}}
=
1+\frac{4}{3}c_{\tau_{H-1}}^2\epsilon^2\leq 1+\frac{4}{3}C\epsilon^2.
$$

Therefore, combining all these facts above, we can conclude that
\begin{align*}
    \begin{cases}
        I(\tau_l)=1, & l\leq \hs, \\
        I(\tau_l)\leq 1+\IIc{\tau_l\in E_{\rev,l}^\theta}\cdot \frac{C\epsilon^2\sigma^2}{K}\abs{\iprod{\mu}{\mu'}}, & \hs<l<H-1,\\
        I(\tau_{H-1})\leq 1+\IIc{\tau_{H-1}\in \Ecor^\theta}\cdot \frac{4}{3}C\epsilon^2, & l=H-1,
    \end{cases}
\end{align*}
where we use the fact that $\Ecor^\theta=E_{\rev,H-1}^\theta$ by definition. Hence, using the fact $\log(1+x)\leq x$, we have
\begin{align*}
    \sum_{t=1}^\stoptime\sum_{l=0}^{H-1} \log I(\taut_{l})
    =&~ \sum_{t=1}^\stoptime\sum_{l=\hs+1}^{H-1} \log I(\taut_{l})\\
    \leq&~ \sum_{t=1}^\stoptime \sum_{l=\hs+1}^{H-2}\IIc{\taut_l\in E_{\rev,l}^\theta}\cdot \frac{C\epsilon^2\sigma^2}{K}\abs{\iprod{\mu}{\mu'}} + \IIc{\taut_{H-1}\in \Ecor^\theta}\cdot \frac{4}{3}C\epsilon^2\\
    =&~
    \sum_{l=\hs+1}^{H-2}\Nc{ E_{\rev,l}^\theta}\cdot \frac{C\epsilon^2\sigma^2}{K}\abs{\iprod{\mu}{\mu'}} + \Nc{\Ecor^\theta}\cdot \frac{4}{3}C\epsilon^2\\
    \leq&~
    \oN_o\cdot \frac{C\epsilon^2\sigma^2}{K}\abs{\iprod{\mu}{\mu'}} + \oN_r\cdot \frac{4}{3}C\epsilon^2.
\end{align*}
Plugging the above inequality into \eqref{eqn:1-step-pac-ingster-prod-to-i} completes the proof of \cref{lemma:1-step-pac-MGF}.
\qed

\section{Proof of Theorem \ref{thm:no-regret-demo}}
\label{appdx:no-regret}

We first construct a family of hard instances in~\cref{appendix:construction-no-regret}. We state the regret lower bound of this family of hard instances in~\cref{prop:no-regret-prop}.~\cref{thm:no-regret-demo} then follows from~\cref{prop:no-regret-prop} as a direct corollary. \cref{prop:no-regret-prop} also implies a part of the PAC lower bound stated in \cref{thm:multi-step-pac-demo}. %

\subsection{Construction of hard instances and proof of Theorem~\ref{thm:no-regret-demo}}
\label{appendix:construction-no-regret}

We consider the following family of $m$-step revealing POMDPs $\cM$ that admits a tuple of hyperparameters $(\epsilon,\sigma,n,m,K,H)$. All POMDPs in $\cM$ share the state space $\cS$, action space $\cA$, observation space $\cO$, and horizon length $H$, defined as following.
\begin{itemize}[topsep=0pt, itemsep=0pt]
    \item The state space $\cS=\Stree \bigsqcup \set{\sp, \sq, \ep, \eq, \termin}$, where $\Stree$ is a binary tree with level $n$ (so that $\abs{\Stree}=2^{n}-1$).  
    Let $s_0$ be the root of $\Stree$, and $\Sl$ be the set of leaves of $\Stree$, with $\abs{\Sl}=2^{n-1}$.
    \item The observation space $\cO=\Stree \bigsqcup \set{o_1^+,o_1^-,\cdots,o_K^+,o_K^-}\bigsqcup\set{\olock,\oba,\obb,\termin}$. \footnote{
    Similarly to \cref{appdx:1-step-pac}, here we slightly abuse notation to reuse $\Stree$ to denote both a set of states and a corresponding set of observations, in the sense that each state $s\in\Stree\subset \cS$ corresponds to a unique observation $o_s\in\Stree\subset \cO$, which we also denote as $s$ when it is clear from the context.}
    \item The action space $\cA=\set{0,1,\cdots,A-1}$.
\end{itemize}
We further define $\Arev=\set{0,1,\cdots,A_1-1}, \Atr=\set{A_1,\cdots,A-1},$ with $A_1=1+\floor{A/6}$.

\paragraph{Model parameters} 
Each non-null POMDP model $M=M_{\theta,\mu}\in\cM \setminus \{ M_0\}$ is specified by two parameters $(\theta, \mu)$. Here $\mu\in\set{-1,+1}^{K}$, and $\theta=(\hs,\ss,\acs,\revs,\as)$, where 
\begin{itemize}[topsep=0pt, itemsep=0pt]
    \item $\ss\in\Sl$, $\acs\in \Ac\defeq\set{1,\cdots,A-1}$, $\revs \in \Arev$. %
    \item $\hs\in\cH\defeq \set{h=n+lm: h<H, l \in \Z_{\ge 0}}$.
    \item $\as=(\as_{\hs+1},\dots,\as_{H-1})\in\cA^{H-\hs-1}$ is an action sequence indexed by $\hs+1,\cdots,H-1$, such that when $h\in\cH$, we have $\as_h\in\Atr$. We use $\Apass$ to denote the set of all such $\as$.
\end{itemize}
Our construction ensures that, only at steps $h\in\cH$ and states $s_h\in\set{\sp,\sq}$, the agent can take actions in $\Arev$ and transits to $\set{\ep,\eq}$. %

For any POMDP $M_{\theta, \mu}$, its system dynamics $\PP_{\theta,\mu}\defeq \PP_{M_{\theta, \mu}}$ is defined as follows.

\paragraph{Emission dynamics} At state $s\in\Stree\cup \set{\termin}$, the agent always receives (the unique observation corresponding to) $s$ itself as the observation.
\begin{itemize}[topsep=0pt, itemsep=0pt]
    \item At state $\ep$, the emission dynamic is given by
    \begin{align*}
        \O_{\mu}(o_i^+|\ep)=\frac{1+\sigma\mu_{i}}{2K}, \qquad 
        \O_{\mu}(o_i^-|\ep)=\frac{1-\sigma\mu_{i}}{2K}, \qquad
        \forall i\in[K],
    \end{align*}
    where we omit the subscript $h$ because the emission distribution does not depend on $h$.
    \item At state $\eq$, the observation is uniformly drawn from $\cO_o\defeq \set{o_1^+,o_1^-,\cdots,o_K^+,o_K^-}$, i.e. $\O(\cdot|\eq)=\Unif(\cO_o)$.
    \item At states $s\in\set{\sp,\sq}$ and steps $h\in[H-1]$, the agent always receives $\olock$ as the observation; At step $H$, the emission dynamics at $\set{\sp,\sq}$ is given by
    \begin{align*}
        &\O_H(\oba|\sp)=\frac34, \qquad \O_H(\obb|\sp)=\frac14, \\
        &\O_H(\oba|\sq)=\frac14, \qquad \O_H(\obb|\sq)=\frac34.
    \end{align*}
\end{itemize}

\paragraph{Transition dynamics} In each episode, the agent always starts at state $s_0$. 
\begin{itemize}[topsep=0pt, itemsep=0pt]
    \item At any node $s\in\Stree\setminus \Sl$, there are three types of available actions: $\await=0$, $\aleft=1$ and $\aright=2$, such that the agent can take $\await$ to stay at $s$, $\aleft$ to transit to the left child of $s$ and $\aright$ to transit to the right child of $s$.
    \item At any $s\in\Sl$, the agent can take action $\await=0$ to stay at $s$ (i.e. $\PP(s|s,\await)=1$); otherwise, for $s\in\Sl$, $h\in[H-1]$, $a\neq \await$,
    \begin{align*}
        \PP_{h;\theta}(\sp|s,a)&=\epsilon\cdot \II(h=\hs,s=\ss,a=\acs), \\
        \PP_{h;\theta}(\sq|s,a)&=1-\epsilon\cdot \II(h=\hs,s=\ss,a=\acs).
    \end{align*}
    where we use subscript $\theta$ to emphasize the dependence on $\theta$. In words, at step $h^\star$, at any leaf node taking any action, the agent will transit to one of $\set{\sp,\sq}$; only by taking $a^\star$ at $s^\star$, the agent can transit to state $\sp$ with a small probability $\eps$; in any other case the agent will transit to state $\sq$ with probability one.

    \item The state $s\in\set{\ep,\eq}$ always transits to $\termin$, regardless of the action taken.
    \item The $\termin$ state is an absorbing state.

\item At state $\sq$:
\begin{itemize}
    \item For steps $h\in\cH$ and $a\in\Arev$, we set $\PP_{h;\theta}(\eq|\sq,a)=1$, i.e. taking $a\in\Arev$ always transits to $\eq$.
    \item For steps $h\not\in\cH$ or $a\in\Atr$, we set $\PP_{h;\theta}(\sq|\sq,a)=1$, i.e. taking such action always stays at $\sq$.
\end{itemize}
 \item At state $\sp$, we only need to specify the transition dynamics for steps $h\geq\hs+1$: %
\begin{itemize}
    \item For steps $h\in\cH_{>\hs}=\cH\cap \set{h>\hs}$ and $a\in\Arev$, we set
    \begin{align*}
        \PP_{h;\theta}(\ep|\sp,a)=\II(a=\revs), \qquad \PP_{h;\theta}(\eq|\sp,a)&=\II(a\neq\revs).
    \end{align*}
    In words, at steps $h\in\cH_{> \hs}$ and states $s_h\in\set{\sp,\sq}$ (corresponding to $o_h=\olock$), the agent can take actions $a_h\in\Arev$ to transit to $\set{\ep,\eq}$; but only by taking $a_h=\revs$ ``correctly'' at $\sp$ the agent can transit to $\ep$; in any other case the agent will transit to state $\eq$ with probability one. Note that $\cH = \set{h=n+lm: h<H, l \in \Z_{\ge 0}}$, so we only allow the agent to take the reveal action $\revs$ every $m$ steps, which ensures that our construction is $(m+1)$-step revealing.
    \item For steps $h\not\in\cH$ or $a\in\Atr$, we set
    \begin{align*}
        \PP_{h;\theta}(\sp|\sp,a)=\II(a=\as_h), \qquad \PP_{h;\theta}(\sq|\sp,a)&=\II(a\neq\as_h).
    \end{align*}
\end{itemize}
\end{itemize}

\paragraph{Reward} The reward function is known (and only depends on the observation): at the first $H-1$ steps, no reward is given; at step $H$, we set $r_H(\oba)=1$, $r_H(\obb)=0$, 
$r_H(s_0)=(1+\epsilon)/4$, and $r_H(o)=0$ for any other $o\in\cO$. 

\paragraph{Reference model}
We use $M_0$ (or simply $0$) to refer to the null model (reference model). The null model $M_0$ has transition and emission the same as any non-null model, except that the agent always arrives at $\sq$ by taking any action $a\neq\await$ at $s \in\Sl$ and $h \in [H - 1]$ (i.e., $\PP_{h;M_0}(\sq|s,a) = 1$ for any $s \in \Sl$, $a \in \Ac$, $h \in [H-1]$). In this model, $\sp$ is not reachable (and so does $\ep$), and hence we do not need to specify the transition and emission dynamics at $\sp, \ep$.

We present the expected regret lower bound and PAC-learning sample complexity lower bound of the above POMDP model class $\cM$ in the following proposition, which we prove in~\cref{appendix:proof-no-regret-prop}.

\begin{proposition}\label{prop:no-regret-prop}
For given $\epsilon \in(0,0.1], \sigma\in(0,1]$, $m,n\geq 1$, $K\geq 2$, $H\geq 8n+m+1$, the above model class $\cM$ satisfies the following properties.

\begin{enumerate}[topsep=0pt, itemsep=0pt]
\item $\nS=2^n+4$, $\nO=2^n+2K+3$, $\nA=A$.

\item For each $M\in\cM$, $M$ is $(m+1)$-step revealing with $\arevmp(M)^{-1}\leq 1+\frac2\sigma$. %

\item $\log\abs{\cM}\leq K\log 2 + H\log A + \log(SAH)$.

\item Suppose algorithm $\fA$ interacts with the environment for $T$ episodes, then
\begin{align*}
    \max_{M\in\cM}\EE^{\fA}_M\brac{\Regret}\geq \frac{1}{120000}\min\set{
        \frac{\abs{\Sl}K^{1/2}A^{m+1}H}{m\sigma^2\epsilon^2}, \frac{\abs{\Sl}A^{H/2}H}{m\epsilon}, \epsilon T
    },
\end{align*}
where we recall that $\abs{\Sl}=2^{n-1}$.

\item Suppose algorithm $\fA$ interacts with the environment for $T$ episodes and returns $\piout$ such that
$$
\PP^{\fA}_M\paren{V_M^\star-V_M(\piout)<\frac{\epsilon}{8}}\geq \frac{3}{4}.
$$
for any $M\in\cM$, then it must hold that
\begin{align*}
    T\geq \frac{1}{60000}\min\set{
        \frac{\abs{\Sl}K^{1/2}A^{m+1}H}{\sigma^2\epsilon^2}, \frac{\abs{\Sl}A^{H/2}H}{\epsilon^2}
    }.
\end{align*}
\end{enumerate}
\end{proposition}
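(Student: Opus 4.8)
The plan is to mirror the structure of the proof of \cref{prop:1-step-pac-prop}, upgraded to handle the $(m+1)$-step construction and the new feature that taking a reveal action destroys the episode's reward. Items 1 and 3 are immediate bookkeeping: $|\cS|,|\cO|,|\cA|$ are read off the construction, and $|\cM|\le 1+\big(H\cdot|\Sl|\cdot(A-1)\cdot A_1\big)\cdot A^{H}\cdot 2^{K}\le HSA\cdot A^{H}\cdot 2^{K}$ by counting the choices of $(\hs,\ss,\acs,\revs)$, of $\as\in\Apass$, and of $\mu\in\{\pm1\}^{K}$; taking logarithms gives item 3. For item 2 I would exhibit explicit generalized left inverses of each $\Mhmp$. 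Tree states and $\termin$ are self-revealing (the emission is a deterministic bijection), so the only nontrivial columns are those of $\sp,\sq$ at steps $h\in\cH$; there, any action sequence whose first action $a$ lies in $\Arev$ routes $\sp\mapsto\ep$ iff $a=\revs$ and $\sq\mapsto\eq$ always, so the two relevant columns of $\Mhmp$, restricted to the rows indexed by that one action sequence, are exactly $\O_\mu(\cdot\,|\ep)$ and $\O(\cdot\,|\eq)=\Unif(\cO_o)$. Reusing the pseudo-inverse computation in the proof of \cref{prop:1-step-pac-rev} for the $2K\times2$ block $[\tfrac{\II+\sigma\tmu}{2K},\tfrac{\II}{2K}]$ furnishes a left inverse supported on a single action sequence, so its $(*\!\to\!1)$-norm equals the $\ell_1\!\to\!\ell_1$ norm of that $\cS\times\cO$ block and is at most $2/\sigma+1$; for the null model $\sp$ is unreachable and the disjoint-support inverse of \cref{prop:1-step-pac-rev} applies verbatim, giving $\arevmp(M_0)^{-1}\le1$. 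The placement $\cH=\{n+\ell m:\ell\ge 0\}$ (one reveal opportunity every $m$ steps) is exactly what obstructs an $m$-step left inverse, so this construction also supplies the example needed for the converse half of \cref{prop:m-step-mp1-step} (cf.\ \cref{remark:mp-to-m-rev}).

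For items 4 and 5 I would first prove a value lemma analogous to \cref{lemma:1-step-pac-value-func}: for non-null $M=M_{\theta,\mu}$ and any $\pi$, $V_M^\star-V_M(\pi)\ge\tfrac14\PP_M^\pi(\text{episode visits }\{\ep,\eq\})+\tfrac{\epsilon}{8}\PP_M^\pi(o_H=s_0)$, and $V_0^\star-V_0(\pi)\ge\tfrac14\PP_0^\pi(\text{visits }\eq)+\tfrac{\epsilon}{8}\PP_0^\pi(o_H\neq s_0)$, with the ``staying at $s_0$'' probability independent of $M\in\cM$. Next, for each entrance $\theta=(\hs,\ss,\acs,\revs,\as)$ I would define the reveal events $E^\theta_{\rev,h}$ (the trajectory visits $\ss$ at step $\hs$, plays $\acs$ there and the correct intervening password up to step $h\in\cH_{>\hs}$, then takes some action in $\Arev$ at $h$) and the correct-password event $E^\theta_{\correct}$ (the trajectory executes all of $\as$), and prove the $\chi^2$-inner-product bound analogous to \cref{lemma:1-step-pac-MGF}: along any trajectory, $I_{M,M'}(\tau_l)=1$ unless $\tau_l$ lies on $E^\theta_{\rev,l}$ with the reveal action equal to $\revs$, where it is $\le 1+\tfrac{C\epsilon^2\sigma^2}{K}|\iprod{\mu}{\mu'}|$, or $l=H-1$ and $\tau_l\in E^\theta_{\correct}$, where it is $\le 1+\tfrac43 C\epsilon^2$. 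The only change from the single-step case is that the reveal signal is available only at $\cH$-steps and only after the correct intervening password, yet $c_{\tau_l}\le(1+\sigma)^H$ still holds. Feeding this into \cref{lemma:ingster} and the truncation/stopping-time argument behind \cref{lemma:1-step-pac-alternative} yields, for every algorithm $\fA$ and every $\theta$, that either $\EE_0^\fA\big[\sum_h N(E^\theta_{\rev,h})\big]\gtrsim\tfrac{\delta^3\sqrt K}{\epsilon^2\sigma^2}$ or $\EE_0^\fA[N(E^\theta_{\correct})]\gtrsim\tfrac{\delta^3}{\epsilon^2}$, where $\delta=\DTV{\PP_0^\fA,\Emu\brac{\PP_{\theta,\mu}^\fA}}$.

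The new $A^{m+1}$ and $H/m$ factors come out of the final summation. For item 5 I would take $\delta=\tfrac12$ (justified by the value lemma and the $3/4$-correctness hypothesis, exactly as in \cref{prop:1-step-pac-prop}) and sum $\sum_h N(E^\theta_{\rev,h})+A^{H-\hs-1}N(E^\theta_{\correct})$ over $\as\in\Apass$; using that the first-$\cH$-step reveal event depends only on the $m-1$ intervening actions and that $E^\theta_{\correct}$ constrains all of $\as$, the weighted sums collapse (as in the single-step proof) to $\EE_0^\fA\big[N(\{o_{\hs}=\ss,\ a_\hs=\acs,\ \text{the }m-1\text{ intervening actions match a fixed pattern},\ a_{\hs+m}=\revs\})\big]\gtrsim\omega$ for every $(\hs,\ss,\acs,\revs)$ and every intervening pattern, with $\omega=\tfrac1c\min\{\tfrac{\sqrt K}{\epsilon^2\sigma^2},\tfrac{A^{H/2}}{\epsilon^2}\}$; summing over the $|\Sl|\cdot(A-1)\cdot A_1\cdot A^{m-1}\cdot|\cH|$ pairwise-disjoint events and bounding the total by $T$ gives item 5 (here $|\cH|\asymp H/m$ and $(A-1)A_1A^{m-1}\asymp A^{m+1}$). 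For item 4 I would instead track regret directly: by the value lemma $\EE_M^\fA[\Regret]\gtrsim\EE_M^\fA[N_{\rev}]+\epsilon\,(T-\EE_M^\fA[N_{\rev}])$, where $N_{\rev}$ counts reveal episodes, so on the null model already $\EE_0^\fA[\Regret]\gtrsim\EE_0^\fA[N_{\rev}]\ge\EE_0^\fA\big[\sum_\theta\sum_h N(E^\theta_{\rev,h})\big]$, while each wrong-password episode costs $\Theta(\epsilon)$ regret on $M_0$; combining this with the per-entrance dichotomy above (applied with $\delta=\Theta(1)$ on each entrance the algorithm must distinguish in order to be near-optimal on $M_\theta$) produces the three-way split — either the reveal count is $\gtrsim$ the first min-term, or the password count forces $\EE_0^\fA[\Regret]\gtrsim$ the second term, or the algorithm distinguishes too few entrances and $\max_\theta\EE_{M_\theta}^\fA[\Regret]\gtrsim\epsilon T$ — which is item 4. (The $T^{2/3}$ form of \cref{thm:no-regret-demo} then follows by optimizing $\epsilon$ via \cref{lemma:balance-eps}.)

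The step I expect to be the main obstacle is the bookkeeping in the last paragraph: choosing the threshold and the set of ``distinguished'' entrances so that the three regret cases exhaust all algorithms without loss, while simultaneously carrying out the collapse of the $\as$-summation separately for each of the $A^{m-1}$ intervening-password patterns so that the full $A^{m+1}$ (rather than only $A^2$) is recovered — this requires checking that the relevant visitation events remain pairwise disjoint across patterns, steps $\hs\in\cH$, and entrances, and a careful accounting of which parts of $\as$ each event actually constrains.
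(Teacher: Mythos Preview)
Your plan matches the paper's structure. Two minor points first: in this construction the emissions at $\sp,\sq$ before step $H$ are the dummy $\olock$, so the posterior $\PP_{\theta,\mu}(s_{l+1}=\ep\mid\tau_l)$ equals $\epsilon$ \emph{exactly} --- no $(1+\sigma)^H$ factor arises, which is why only $\sigma\le 1$ (not $\sigma\le\tfrac1{2H}$) is assumed. And for item 2 you must also handle steps $h\notin\cH$ (and the $\{\ep,\eq\}$ columns): the paper plays $\a=(\as_{h:l-1},\revs)$ to the next $\cH$-step $l\in[h,h+m-1]$ and then applies the same $2K\times2$ block computation.

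The real gap is the summation. You claim the weighted sums collapse to a lower bound on $\EE_0^\fA\big[N(\{o_{\hs}=\ss,a_\hs=\acs,a_{\hs+1:\hs+m-1}=\a,a_{\hs+m}=\revs\})\big]$ for each fixed $(\hs,\ss,\acs,\revs,\a)$. But the dichotomy only lower-bounds $N(E_{\rev}^\theta)$, which is a union over \emph{all} $h\in\cH_{>\hs}$ of ``password correct up to $h{-}1$, then $a_h=\revs$'' --- not just $h=\hs+m$. An algorithm can follow the correct password well past step $\hs+m$ and reveal only later, so your target count can be $0$ while $N(E_{\rev}^\theta)$ is large; the collapse you describe does not hold. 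The paper instead introduces a free weight $r\ge 0$ and, for fixed $(\hs,\ss,\acs)$, sums $|\Arev|A^{m-1}\cdot N(E_{\rev}^\theta)+r|\Apass|\cdot N(E_{\corr}^\theta)$ over all $(\revs,\as)\in\Arev\times\Apass$; via the inclusion $E_{\rev}^\theta\subseteq\{o_\hs=\ss,a_{\hs:\hs+m-1}=(\acs,\as_{\hs+1:\hs+m-1}),a_h=\revs\text{ for some }h\in\cH_{>\hs}\}$ and overcounting, then summing over $(\hs,\ss,\acs)$, this yields $\EE_0^\fA[N(E_{\rev})]+r\,\EE_0^\fA[N(o_H\neq s_0)]\gtrsim |\Sl|\tfrac{AH}{m}\omega_r$ for every $r\ge 0$. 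Item 4 then follows from $r=\epsilon$ together with $\EE_0^\fA[\Regret]\ge\tfrac14\EE_0^\fA[N(E_{\rev})]+\tfrac\epsilon4\EE_0^\fA[N(o_H\neq s_0)]$ and the clean two-way split ``either $\max_M\EE_M^\fA[\Regret]>T\epsilon/32$, or $\DTV{\PP_0^\fA,\Emu\brac{\PP_{\theta,\mu}^\fA}}\ge\tfrac12$ for every $\theta$'' (this replaces your vaguer three-way casework); item 5 follows from $r=1$ and bounding both counts by $T$.
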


\begin{proofof}[thm:no-regret-demo]
We only need to suitably choose parameters when applying \cref{prop:no-regret-prop}. More specifically, given $(S,O,A,H,\arev,m)$, we can let $m'=m-1$, and take $n\geq 1$ to be the largest integer such that $2^n\leq \min\set{S-4,(O-5)/2}$, and take $K=\floor{\frac{O-2^n-3}{2}}\geq \frac{O-5}{4}$, $\epsilon'=\epsilon/8$, and $\sigma=\frac{2}{\arev^{-1}-1}\leq 1$. For any fixed $\epsilon\in(0,0.1]$, applying \cref{prop:no-regret-prop} to the parameters $(\epsilon,\sigma,n,m',K,H)$, we obtain a model class $\cM_\epsilon$ such that for any algorithm $\fA$, 
\begin{align*}
    \max_{M\in\cM_\epsilon}\EE^{\fA}_M\brac{\Regret}\geq c_0\min\set{
        \frac{SO^{1/2}A^{m}H}{m\arev^2\epsilon^2}, \frac{SA^{H/2}H}{m\epsilon}, \epsilon T
    },
\end{align*}
where $c_0$ is a universal constant. We can then take the $\epsilon\in(0,0.1]$ that maximizes the RHS of the above inequality, and applying \cref{lemma:balance-eps} completes the proof of \cref{thm:no-regret-demo}.
\end{proofof}

\begin{remark}
The requirement $S\leq O$ in \cref{thm:no-regret-demo} (and \cref{thm:multi-step-pac-demo}) can actually be relaxed to $S\leq O^m$. The reason why we require $S\leq O$ in the current construction is that we directly embed $\Stree$ directly into the observation space $\cO$, i.e. for each state $s\in\Stree$ it emits the corresponding $o_s\in\cO$. However, when $O^m\geq \abs{\Stree}\gg O$, we can alternatively take an embedding $\Stree\to\cO^m$, i.e. for each state $s\in\Stree$ such that $s\mapsto (o_s^{(1)},\cdots,o_s^{(m)})$, it emits $o_s^{(h~\mathrm{mod}~m)}\in\cO$ at step $h$. %
\end{remark}

\subsection{Proof of Proposition \ref{prop:no-regret-prop}}
\label{appendix:proof-no-regret-prop}

All propositions and lemmas stated in this section are proved in~\cref{appendix:proof-no-regret-revealing}-\ref{appendix:proof-no-regret-MGF}.

Claim 1 follows directly by counting the number of states, observations, and actions in models in $\cM$. Claim 3 follows as we have $\abs{\cM}=\abs{\set{(h^\star,s^\star,a^\star,\revs,\a^\star)}}\times \abs{\set{\pm 1}^K}+1\le HSA^H\times 2^K$. Taking logarithm yields the claim.

Claim 2 follows from this lemma, which is proved in \cref{appendix:proof-no-regret-revealing}.
\begin{lemma}\label{lemma:regret-rev-2-step}
    For each $M\in\cM$, it holds that $\arevmp(M)^{-1}\leq \frac{2}{\sigma}+1$.
\end{lemma}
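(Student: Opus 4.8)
The plan is to exhibit, for every $M\in\cM$ and every step $h\in[H-m]$, an explicit generalized left inverse $\Mhmp^{+}\in\R^{\cS\times\cO^{m+1}\cA^{m}}$ of the $(m+1)$-step emission-action matrix $\Mhmp$ with $\nrmop{\Mhmp^{+}}\le\frac{2}{\sigma}+1$; by \cref{definition:m-step-revealing} this gives $\arevmp(M)\ge(\frac{2}{\sigma}+1)^{-1}$, i.e.\ $\arevmp(M)^{-1}\le\frac{2}{\sigma}+1$. I will spell out the non-null case $M=M_{\theta,\mu}$, $\theta=(\hs,\ss,\acs,\revs,\as)$; the null model $M_0$ is handled identically but is strictly easier since then $\sp,\ep$ are unreachable. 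Let $\cS_h\subseteq\Stree\cup\set{\sp,\sq,\ep,\eq,\termin}$ be the set of states $s$ with $\T_{h-1}(s\,|\,s',a)>0$ for some $(s',a)$, so that $\colspan{\T_{h-1}}\subseteq\spa\set{\e_s:s\in\cS_h}$ and, since the only transitions into $\sp$ occur at step $\hs$ or later, $\sp\in\cS_h$ forces $h\ge\hs+1$. It then suffices to construct a matrix $B$ with rows $(B_s)_s$ such that $B_s=0$ for $s\notin\cS_h$ and $\iprod{B_s}{[\Mhmp]_{\cdot,s'}}=\II(s=s')$ for all $s,s'\in\cS_h$: this yields $B\Mhmp\e_{s'}=\e_{s'}$ for $s'\in\cS_h$, hence $B\Mhmp\T_{h-1}=\T_{h-1}$.

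The heart of the construction is a \emph{single} action sequence $\a=\a^{(h)}\in\cA^{m}$ on which all the rows $B_s$ are supported. When the lock states lie in $\cS_h$, let $h''\in\cH$ be the least element of $\cH$ with $h''\ge h$; since consecutive elements of $\cH$ are $m$ apart we get $0\le h''-h\le m-1$, so $h''+1\le h+m\le H$, and $h''>\hs$ whenever $\sp\in\cS_h$ (as then $h\ge\hs+1$), i.e.\ $h''\in\cH_{>\hs}$. Put $\a=(\as_h,\dots,\as_{h''-1},\revs,\await,\dots,\await)$ (and $\a=(\await,\dots,\await)$ if the lock is not in $\cS_h$). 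The point is that executing $\a$ from $\sp$ keeps the state at $\sp$ through step $h''$ (using that $\as_{h'}\in\Atr$ is absorbing on $\set{\sp,\sq}$ for $h'\in\cH$, and that $\as_{h'}$ is absorbing on $\set{\sp,\sq}$ for $h'\notin\cH$), then sends it to $\ep$ at step $h''+1$ (since $\revs\in\Arev$ reveals from $\sp$ at $h''\in\cH_{>\hs}$) and to $\termin$ afterwards; executing $\a$ from $\sq$ keeps it at $\sq$ through $h''$, then to $\eq$ at $h''+1$, then $\termin$; while the step-$h$ observation of every tree/terminal/emission state is unaffected by $\a$. Hence, conditioned on $s_h=\sp$ (resp.\ $\sq$) and executing $\a$, the coordinate $o_{h''+1}$ of $\o$ is distributed as $\O(\cdot\,|\,\ep)$ (resp.\ $\O(\cdot\,|\,\eq)$), with all other coordinates deterministic.

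With $\tmu=[\mu;-\mu]\in\set{-1,1}^{2K}$ and, on $\cO_o$, the detector vectors $g_\ep\defeq\frac1\sigma\tmu$ and $g_\eq\defeq\II-\frac1\sigma\tmu$ (extended by $0$ on $\cO\setminus\cO_o$), exactly as in the proof of \cref{prop:1-step-pac-rev}, I would set, all on the coordinate block $\a$ and writing $\o=(o_h,\dots,o_{h+m})$:
\begin{align*}
B_s(\o)&=\II(o_h=o_s)\ \ (s\in\Stree),\quad B_\termin(\o)=\II(o_h=\termin),\quad B_\ep(\o)=g_\ep(o_h),\quad B_\eq(\o)=g_\eq(o_h),\\
B_\sp(\o)&=\II(o_h=\olock)\,g_\ep(o_{h''+1}),\qquad B_\sq(\o)=\II(o_h=\olock)\,g_\eq(o_{h''+1}),
\end{align*}
keeping only the rows indexed by $s\in\cS_h$ (and when only one of $\sp,\sq$ --- resp.\ one of $\ep,\eq$ --- lies in $\cS_h$, replacing its detector by the plain indicator $\II(o_h=\olock)$ --- resp.\ $\II(o_h\in\cO_o)$ --- which already isolates it). Correctness $\iprod{B_s}{[\Mhmp]_{\cdot,s'}}=\II(s=s')$ for $s,s'\in\cS_h$ follows from the trajectory description in the previous paragraph together with the orthogonality relations $g_\ep\cdot\O(\cdot|\ep)=g_\eq\cdot\O(\cdot|\eq)=1$ and $g_\ep\cdot\O(\cdot|\eq)=g_\eq\cdot\O(\cdot|\ep)=0$ (a one-line computation from $\tmu\cdot\II=0$ and $\tmu\cdot\tmu=\II\cdot\II=2K$, as in \cref{prop:1-step-pac-rev}), and the mutual exclusivity of $o_h\in\Stree$, $o_h=\termin$, $o_h\in\cO_o$, $o_h=\olock$, which annihilates all cross terms.

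For the norm, since every $B_s$ lives on the single block $\a$,
\begin{align*}
\nrmop{B}
&=\max_{\nrmst{x}\le1}\sum_{s}\abs{\iprod{B_s}{x}}
\le\max_{\nrmst{x}\le1}\sum_{\o}\abs{x(\o,\a)}\sum_{s}\abs{B_s(\o,\a)}\\
&\le\paren{\frac{2}{\sigma}+1}\max_{\nrmst{x}\le1}\sum_{\o}\abs{x(\o,\a)}
\le\frac{2}{\sigma}+1,
\end{align*}
where the second inequality holds because for each fixed $\o$ exactly one of the four cases for $o_h$ occurs, and in each case $\sum_s\abs{B_s(\o,\a)}\le\frac2\sigma+1$ (at most one tree/terminal indicator equals $1$; on $\cO_o$ one has $\abs{g_\ep(\cdot)}+\abs{g_\eq(\cdot)}=\frac1\sigma+\abs{1\mp\frac{\mu_i}{\sigma}}\le\frac2\sigma+1$), and the last step is the defining inequality $\sum_{\o}\abs{x(\o,\a)}\le\big(\sum_{\a'}(\sum_{\o}\abs{x(\o,\a')})^2\big)^{1/2}=\nrmst{x}$. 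Taking $\Mhmp^{+}=B$ finishes the non-null case; for $M_0$ all rows are $\set{0,1}$-valued indicators, giving the sharper $\nrmop{\Mhmp^{+}}\le1$. The main obstacle is the second paragraph: designing one action sequence that simultaneously reveals every state in $\cS_h$ and checking the lock dynamics make this work --- this is precisely where the liberality of the multi-step revealing condition is exploited. A secondary but essential point is the norm step: summing the $\abs{\cS_h}$ detectors state-by-state across different action sequences would cost a spurious $\sqrt{\abs{\cS_h}}$ factor, and concentrating all of them on one action sequence so as to use the $\ell_2$-over-actions / $\ell_1$-over-observations structure of $\nrmst{\cdot}$ is exactly why \cref{definition:m-step-revealing} is phrased with the $(*\to1)$-norm.
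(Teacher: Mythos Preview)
Your proposal is correct and follows essentially the same strategy as the paper's proof: both pick a single action sequence $\a$ that (for $h>\hs$) follows the password $\as$ up to the next element $l=h''$ of $\cH$ and then plays $\revs$, so that $\sp$ and $\sq$ become distinguishable via the emission $\O(\cdot|\ep)$ vs.\ $\O(\cdot|\eq)$ at the following step, while all other states in $\cS_h$ are already separated by their step-$h$ emissions alone. The paper packages this through the helper \cref{lemma:rev-star-to-1,lemma:rev-sep-obs,lemma:rev-inc-step} and treats the range $h\le\hs$ separately as a $1$-step revealing computation; you instead write down the rows of $\Mhmp^{+}$ explicitly and pad $\a$ to length $m$ with $\await$'s. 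These are equivalent --- your ``all rows on one action block'' observation is exactly \cref{lemma:rev-star-to-1}, and your case split on $o_h\in\Stree$ / $\olock$ / $\cO_o$ / $\termin$ is exactly the block-diagonal structure in \cref{lemma:rev-sep-obs}.

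One small point to tighten. Your formula $\a=(\as_h,\dots,\as_{h''-1},\revs,\await,\dots)$ is only well-posed when $h\ge\hs+1$, since $\as$ is indexed from $\hs+1$; and your inequality $h''-h\le m-1$ tacitly uses $h\ge n$, which need not hold under your literal definition of $\cS_h$ (e.g.\ $\T_{h-1}(\sq\,|\,s,a)>0$ for $s\in\Sl$, $a\neq\await$ at every step, so $\sq\in\cS_h$ for all $h\ge2$). Both issues disappear once you invoke the special $\a$ \emph{only when $\sp\in\cS_h$}, which forces $h\ge\hs+1>n$; in every other case your own parenthetical already replaces the relevant detectors by the plain indicators $\II(o_h=\olock)$ and $\II(o_h\in\cO_o)$, which depend on $o_h$ only, so any $\a$ works there. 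This is precisely the $h\le\hs$ vs.\ $h>\hs$ split the paper makes.
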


We now prove Claim 4 \& 5. Similar to the proof of \cref{prop:1-step-pac-prop}, we begin by relating the learning problem to a testing problem. 
Recall that $\PP^{\fA}_M$ is the law of $(\tau^{(1)},\tau^{(2)},\cdots,\tau^{(\stoptime)})$ induced by algorithm $\fA$ and model $M$. For any event $E\subseteq (\cO\times\cA)^H$, we denote the visitation count of $E$ as
\begin{align*}
    N(E)\defeq \sum_{t=1}^T \II(\taut\in E).
\end{align*}
Since $N(E)$ is a function of $\tau^{(1:T)}$, we can talk about its expectation under the distribution $\PP^{\fA}_M$ for any $M\in\cM$. We first relate the expected regret to the expected visitation count of some ``bad'' events, giving the following lemma whose proof is contained in \cref{app:proof-no-regret-value-func}. 

\begin{lemma}[Relating regret to visitation counts]\label{lemma:no-regret-value-func}
    For any $M\in\cM$ such that $M\neq 0$, it holds that 
    \begin{align}\label{eqn:lemma:no-regret-value-func-regret1}
        \EE^{\fA}_M\brac{\Regret}\geq \frac{\epsilon}{4}\EE^{\fA}_M\brac{N(o_H=s_0)}.
    \end{align}
    On the other hand, for the reference model $0$, we have
    \begin{align}\label{eqn:lemma:no-regret-value-func-regret2}
        \EE^{\fA}_0\brac{\Regret}\geq \frac{\epsilon}{4}\EE^{\fA}_0\brac{N(o_H\neq s_0)}+\frac14\EE^{\fA}_0\brac{N(E_{\rev})}.
    \end{align}
    where we define $E_{\rev}\defeq\set{\tau: \text{for some }h\in\cH, o_h=\olock, a_h\in\Arev}$.

On the other hand, for any policy $\pi$, we have    \begin{align}\label{eqn:no-regret-PAC-value-func}
        V_M^\star-V_M(\pi)\geq \frac{\epsilon}{4}\PP^{\pi}_M\paren{o_H=s_0} \ \ \forall M\neq 0, \qquad \text{and}\qquad V_0^\star-V_0(\pi)\geq \frac{\epsilon}{4}\PP^{\pi}_0\paren{o_H\neq s_0}.
    \end{align}
\end{lemma}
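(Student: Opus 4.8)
The plan is to follow the template of the single-step value-function lemma (\cref{lemma:1-step-pac-value-func}): first prove a \emph{pointwise} suboptimality bound valid for an arbitrary fixed policy $\pi$, then lift it to the regret statements by summing over episodes (the pointwise bounds obtained along the way are exactly \cref{eqn:no-regret-PAC-value-func}). \emph{Step 1 (value decomposition).} Because rewards are paid only at step $H$ and depend only on $o_H$, and because the step-$H$ emissions at $\set{\sp,\sq}$ are the ones fixed in the construction, every $M\in\cM$ and every $\pi$ satisfy $V_M(\pi)=\tfrac{1+\epsilon}{4}\PP^{\pi}_M(o_H=s_0)+\tfrac34\PP^{\pi}_M(s_H=\sp)+\tfrac14\PP^{\pi}_M(s_H=\sq)$, with $\PP^{\pi}_0(s_H=\sp)=0$ since $\sp$ is unreachable in the reference model. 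For the optima: remaining at $s_0$ always secures $\tfrac{1+\epsilon}{4}$, and since no state yields a larger expected step-$H$ reward when $\sp$ is unreachable, $V_0^\star=\tfrac{1+\epsilon}{4}$; for $M=M_{\theta,\mu}\neq 0$ one has $\PP^{\pi}_M(s_H=\sp)\le\epsilon$ always (reaching $\sp$ forces $o_{\hs}=\ss$, $a_{\hs}=\acs$, and then the $\epsilon$-probability branch), while the policy that walks to $\ss$, plays $\acs$ at step $\hs$, and then follows the password $\as$ attains $\PP^{\pi}_M(s_H=\sp)=\epsilon$; hence $V_M^\star=\tfrac{1+2\epsilon}{4}$.

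\emph{Step 2 (pointwise bounds).} For $M\neq 0$ this is verbatim Case~2 of \cref{lemma:1-step-pac-value-func}: using that leaving $s_0$ is irreversible (so $\PP^{\pi}_M(s_H=\sp)+\PP^{\pi}_M(s_H=\sq)\le\PP^{\pi}_M(o_H\neq s_0)$) and the inclusion $\set{o_{\hs}=\ss,a_{\hs}=\acs}\subseteq\set{o_H\neq s_0}$ (so $\PP^{\pi}_M(s_H=\sp)\le\epsilon\,\PP^{\pi}_M(o_H\neq s_0)$), the decomposition of Step~1 collapses to $V_M^\star-V_M(\pi)\ge\tfrac{\epsilon}{4}\PP^{\pi}_M(o_H=s_0)$. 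For $M=0$ the new ingredient is that playing a revealing action inside the lock is fatal for ending at $\sq$: on $E_{\rev}$ the agent is deterministically driven from $\sq$ into $\eq$ and then into the absorbing $\termin$, so $\set{s_H=\sq}$ is disjoint from $E_{\rev}$, while both events are contained in $\set{o_H\neq s_0}$; hence $\PP^{\pi}_0(s_H=\sq)\le\PP^{\pi}_0(o_H\neq s_0)-\PP^{\pi}_0(E_{\rev})$. Plugging this into $V_0^\star-V_0(\pi)=\tfrac{1+\epsilon}{4}\PP^{\pi}_0(o_H\neq s_0)-\tfrac14\PP^{\pi}_0(s_H=\sq)$ gives the strengthened pointwise bound $V_0^\star-V_0(\pi)\ge\tfrac{\epsilon}{4}\PP^{\pi}_0(o_H\neq s_0)+\tfrac14\PP^{\pi}_0(E_{\rev})$, which contains \cref{eqn:no-regret-PAC-value-func}.

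\emph{Step 3 (summing over episodes).} Write $\pi^{(t)}$ for the (random, $\tau^{(1:t-1)}$-measurable) policy executed in episode $t$, so that $\Regret=\sum_{t=1}^{T}\paren{V_M^\star-V_M(\pi^{(t)})}$. Since $V_M(\pi^{(t)})$ is $\tau^{(1:t-1)}$-measurable, Step~2 applies to $\pi=\pi^{(t)}$ for each $t$; taking $\EE^{\fA}_M$ and using $\EE^{\fA}_M\brac{\PP^{\pi^{(t)}}_M(o_H=s_0)}=\PP^{\fA}_M(o_H^{(t)}=s_0)$ (tower property) gives $\EE^{\fA}_M\brac{V_M^\star-V_M(\pi^{(t)})}\ge\tfrac{\epsilon}{4}\PP^{\fA}_M(o_H^{(t)}=s_0)$ for $M\neq 0$; summing over $t\in[T]$ and recalling $N(o_H=s_0)=\sum_{t}\II(o_H^{(t)}=s_0)$ yields \cref{eqn:lemma:no-regret-value-func-regret1}. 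The same computation applied to the $M=0$ pointwise bound, with the counts $N(o_H\neq s_0)$ and $N(E_{\rev})$, yields \cref{eqn:lemma:no-regret-value-func-regret2}.

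The bookkeeping in Steps~1--2 is routine and essentially reproduces \cref{lemma:1-step-pac-value-func}; the one genuinely new point, and the place to be careful, is the disjointness $\set{s_H=\sq}\cap E_{\rev}=\emptyset$ in the reference model, which is what produces the extra $\tfrac14\EE^{\fA}_0\brac{N(E_{\rev})}$ term. This term is precisely what later forces the $\Omega(T^{2/3})$ regret rather than $\sqrt{T}$: it makes precise that in this hard instance each revealing action costs a constant amount of reward, so exploration cannot be amortized against exploitation.
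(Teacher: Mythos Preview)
Your proposal is correct and follows essentially the same approach as the paper. The paper works directly with expected counts (expanding $\EE_M^{\fA}[N(o_H=\oba)]$ via conditioning on $\tau_{H-1}$ and noting that $\PP_0(o_H=\oba\mid\tau_{H-1})=0$ for $\tau\in E_{\rev}$), whereas you first establish the pointwise bound for an arbitrary policy and then sum over episodes via the tower property; the underlying observation---that in the reference model a revealing action forces $s_H\in\{\eq,\termin\}$ and hence precludes both $s_H=\sq$ and $o_H=\oba$---is identical, and your organization has the mild advantage that \cref{eqn:no-regret-PAC-value-func} falls out for free rather than requiring a separate (omitted) argument.
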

Therefore, we can relate the regret (or sub-optimality of the output policy) to the TV distance (under $\mu\sim\Unif(\set{-1,+1}^K)$ the prior distribution of parameter $\mu$), by an argument similar to the one in \cref{appendix:proof-1-step-pac-prop}, giving the following lemma whose proof is contained in \cref{app:proof-no-regret-TV-lower}. 
\begin{lemma}\label{lemma:no-regret-TV-lower}
    Suppose that either statement below holds for the algorithm $\fA$:
    
    (a) For any model $M\in\cM$, $\EE^{\fA}_M\brac{\Regret} \leq T\epsilon/32$.

    (b) For any model $M\in\cM$, the algorithm $\fA$ outputs a policy $\piout$ such that $\PP^{\fA}_M\paren{V_M^\star-V_M(\piout)<\frac{\epsilon}{8}}\geq \frac{3}{4}$.

Then we have
\begin{align}\label{eqn:regret-TV-lower}
    \DTV{ \PP^{\fA}_0, \Emu\brac{\PP^{\fA}_{\theta,\mu}} } \geq \frac12, \qquad \forall \theta.
\end{align}
\end{lemma}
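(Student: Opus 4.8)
The plan is to reduce both alternatives (a) and (b) to exhibiting a single event $E$, measurable with respect to the observed trajectory sequence $\tau^{(1:T)}$ and independent of $\mu$, such that $\PP^{\fA}_M(E)\ge 3/4$ for every non-null model $M\neq 0$ while $\PP^{\fA}_0(E)\le 1/4$. Averaging the first bound over $\mu\sim\Unif(\set{-1,+1}^K)$ gives $\Emu\brac{\PP^{\fA}_{\theta,\mu}}(E)\ge 3/4$, whence $\DTV{\PP^{\fA}_0,\Emu\brac{\PP^{\fA}_{\theta,\mu}}}\ge \Emu\brac{\PP^{\fA}_{\theta,\mu}}(E)-\PP^{\fA}_0(E)\ge 1/2$, which is exactly \eqref{eqn:regret-TV-lower}. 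Both reductions use only \cref{lemma:no-regret-value-func}.

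For case (a) I would take $E\defeq\set{N(o_H=s_0)<T/2}$. For $M\neq 0$, \eqref{eqn:lemma:no-regret-value-func-regret1} together with the hypothesis $\EE^{\fA}_M\brac{\Regret}\le T\epsilon/32$ gives $\EE^{\fA}_M\brac{N(o_H=s_0)}\le \frac{4}{\epsilon}\EE^{\fA}_M\brac{\Regret}\le T/8$, so Markov's inequality yields $\PP^{\fA}_M(N(o_H=s_0)\ge T/2)\le 1/4$, i.e. $\PP^{\fA}_M(E)\ge 3/4$. For $M=0$, dropping the nonnegative $\frac{1}{4}\EE^{\fA}_0\brac{N(E_{\rev})}$ term in \eqref{eqn:lemma:no-regret-value-func-regret2} gives $\EE^{\fA}_0\brac{N(o_H\neq s_0)}\le \frac{4}{\epsilon}\EE^{\fA}_0\brac{\Regret}\le T/8$; since $N(o_H=s_0)+N(o_H\neq s_0)=T$ deterministically, Markov's inequality applied to $T-N(o_H=s_0)$ gives $\PP^{\fA}_0(N(o_H=s_0)\le T/2)\le 1/4$, hence $\PP^{\fA}_0(E)\le 1/4$.

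For case (b) I would follow the reasoning of \cref{appendix:proof-1-step-pac-prop}: set $w(\pi)\defeq\PP^{\pi}_0(o_H=s_0)$ and take $E\defeq\set{w(\piout)<1/2}$, which is measurable since $\piout$, and hence $w(\piout)$, is a deterministic function of $\tau^{(1:T)}$. The key observation is that $\set{o_H=s_0}$ forces the agent to remain at $s_0$ for all $H$ steps, an event governed only by the deterministic, model-independent tree dynamics at $s_0$; hence $\PP^{\pi}_M(o_H=s_0)=w(\pi)$ for every $M\in\cM$. Combined with \eqref{eqn:no-regret-PAC-value-func}, for $M\neq 0$ the event $\set{V_M^\star-V_M(\piout)<\epsilon/8}$ is contained in $\set{w(\piout)<1/2}=E$, so the PAC hypothesis gives $\PP^{\fA}_M(E)\ge 3/4$; while $\set{V_0^\star-V_0(\piout)<\epsilon/8}\subseteq\set{w(\piout)>1/2}\subseteq E^c$, so $\PP^{\fA}_0(E)=1-\PP^{\fA}_0(E^c)\le 1/4$. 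The reduction above then yields \eqref{eqn:regret-TV-lower}.

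The whole argument is bookkeeping on top of \cref{lemma:no-regret-value-func}; the one point that needs a moment's care is the model-independence of $\PP^{\pi}_M(o_H=s_0)$ used in case (b), which is precisely what lets the single event $\set{w(\piout)<1/2}$ simultaneously certify a near-optimality failure for every non-null model and certify success for the reference model. In case (a) this issue does not arise at all, since $E$ is literally a function of $\tau^{(1:T)}$ and both tail bounds follow directly from the regret-to-visitation-count inequalities.
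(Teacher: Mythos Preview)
Your proposal is correct and follows essentially the same approach as the paper's own proof. For case (a) you use the event $\set{N(o_H=s_0)<T/2}$ and Markov's inequality on the visitation counts via \cref{lemma:no-regret-value-func}, exactly as the paper does (up to complementing the event); for case (b) you reproduce precisely the $w(\pi)$ argument the paper imports from \cref{appendix:proof-1-step-pac-prop}, including the model-independence of $\PP^{\pi}_M(o_H=s_0)$.
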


By our assumptions in Claim 4 (or 5), in the following we only need to consider the case that \eqref{eqn:regret-TV-lower} holds for all $\theta$.
We will use \eqref{eqn:regret-TV-lower} to derive lower bounds of $\EE^{\fA}_0\brac{N(o_H\neq s_0)}$ and $\EE^{\fA}_0\brac{N(\Erev)}$, giving the following lemma whose proof is contained in \cref{appendix:proof-no-regret-alternative}. 

\begin{lemma}\label{lemma:no-regret-alternative}
    Fix a $\theta=(\hs,\ss,\acs,\revs,\as)$. We consider events
    \begin{align*}
        E_{\rev}^\theta&\defeq \set{o_{\hs}=\ss, a_{\hs:h}=(\acs,\as_{\hs+1:h-1},\revs) \text{ for some }h\in\cH_{>\hs}},\\
        E_{\corr}^\theta&\defeq \set{o_{\hs}=\ss, a_{\hs:H-1}=(\acs,\as)}.
    \end{align*}
    Then for any algorithm $\fA$ with $\delta\defeq \DTV{\PP_{0}^{\fA},\Emu\brac{\PP_{\theta,\mu}^{\fA}}}>0$, we have
    \begin{align*}
        \text{either }\EE^{\fA}_0\brac{N(\Erev^\theta)} \geq \frac{\delta^3\sqrt{K}}{18\epsilon^2\sigma^2}-\frac\delta6, \text{  or }\EE^{\fA}_0\brac{N(\Ecor^\theta)}\geq \frac{\delta^3}{18\epsilon^2}-\frac\delta6.
    \end{align*}
\end{lemma}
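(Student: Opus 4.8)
The plan is to prove \cref{lemma:no-regret-alternative} by essentially the same machinery used for \cref{lemma:1-step-pac-alternative}: reduce to a version with almost surely bounded visitation counts via a truncation/stopping-time argument, then run Ingster's method to upper bound the relevant $\chi^2$-divergence, and finally convert this into a lower bound on the visitation counts via \cref{lemma:TV-to-chi} and the assumed TV lower bound $\delta$. First I would state the bounded-count analogue: if algorithm $\fA$ (with possibly random stopping time) satisfies $N(\Erev^\theta)\le\oN_o$ and $N(\Ecor^\theta)\le\oN_r$ almost surely under $\PP_0^\fA$, then either $\oN_o\ge c\delta^2\sqrt{K}/(\epsilon^2\sigma^2)$ or $\oN_r\ge c\delta^2/\epsilon^2$ for an absolute constant $c$. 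The reduction from this to \cref{lemma:no-regret-alternative} is the truncation argument already carried out in the proof of \cref{lemma:1-step-pac-alternative}: set $\oN_o=\lceil 6\delta^{-1}\EE_0^\fA[N(\Erev^\theta)]\rceil$, $\oN_r=\lceil 6\delta^{-1}\EE_0^\fA[N(\Ecor^\theta)]\rceil$, use Markov to bound the exit probability by $\delta/3$, apply \cref{lemma:alg-stop} to see the stopped algorithm still has TV distance $\ge 2\delta/3$, and invoke the bounded version.

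The core computation is an analogue of \cref{lemma:1-step-pac-MGF}: bounding the $\chi^2$-inner product
\[
\EE_0^\fA\Bigl[\prod_{t=1}^{\stoptime}\frac{\PP_{\theta,\mu}(\taut)\PP_{\theta,\mu'}(\taut)}{\PP_0(\taut)^2}\Bigr]
\le \exp\Bigl(\oN_o\cdot\frac{C\sigma^2\epsilon^2}{K}\abs{\iprod{\mu}{\mu'}}+\tfrac43 C\epsilon^2\oN_r\Bigr)
\]
via \cref{lemma:ingster} and \cref{eqn:ingster-prod-to-i}. Here I would compute the per-step ``information'' quantity $I(\tau_l)=\EE_0[\PP_{\theta,\mu}(o_{l+1}|\tau_l)\PP_{\theta,\mu'}(o_{l+1}|\tau_l)/\PP_0(o_{l+1}|\tau_l)^2\mid\tau_l]$. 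The key structural observation, as in the single-step case, is that $\PP_{\theta,\mu}$ and $\PP_0$ differ only through the possibility of reaching $\sp$, which requires $o_{\hs}=\ss,a_{\hs}=\acs$, and then through the emissions at $\ep$. The new wrinkle is that in the multi-step construction the revealing emission happens only when the agent takes the revealing action $\revs\in\Arev$ at a step $h\in\cH_{>\hs}$, and the posterior probability $c_{\tau}\defeq\PP_{\theta,\mu}(s\text{-chain reaches }\sp)$ at that point is controlled by a likelihood ratio $\beta_\tau=\prod_h \O_\mu(o_h|\ep)/\O(o_h|\eq)$ only over the \emph{at most one} revealing observation accumulated since $\hs$ (since the lock emits only the deterministic $\olock$ otherwise), so in fact $\beta_\tau\le 1+\sigma$ and $c_\tau\le (1+\sigma)\le\sqrt C$ with $C=(1+\sigma)^{2}$ — actually since $\sigma\le 1$ is allowed here one should keep $C=(1+\sigma)^{2H}$ as a crude bound, or argue more carefully that only one revealing emission ever occurs per episode before termination and get $C=O(1)$; this bookkeeping about how many revealing emissions can be ``seen'' is the main place the construction differs from the single-step one. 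One concludes $I(\tau_l)=1$ unless $\tau_l\in\Erev^\theta$-type or $\Ecor^\theta$-type event, in which case $I(\tau_l)\le 1+\frac{C\sigma^2\epsilon^2}{K}\abs{\iprod{\mu}{\mu'}}$ or $I(\tau_l)\le 1+\frac43 C\epsilon^2$ respectively, and sums $\sum_t\sum_l\log I(\taut_l)\le \oN_o\frac{C\sigma^2\epsilon^2}{K}\abs{\iprod{\mu}{\mu'}}+\frac43 C\epsilon^2\oN_r$ using $\log(1+x)\le x$ and the almost-sure count bounds.

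Given the $\chi^2$-inner-product bound, I would take expectation over $\mu,\mu'\sim\Unif(\set{\pm1}^K)$, use Hoeffding's lemma to control $\EE[\exp(x\sum_i\mu_i\mu_i')]\le\exp(Kx^2/2)$, apply \cref{lemma:MGF-abs} to handle the absolute value, and combine with \cref{lemma:TV-to-chi} to obtain $2\delta^2\le \max\{\tfrac43\oN_o C\sigma^2\epsilon^2/\sqrt K,\ \oN_o^2C^2\sigma^4\epsilon^4/K\}+\tfrac43 C\epsilon^2\oN_r$, from which the dichotomy ``$\oN_o\gtrsim\delta^2\sqrt K/(\epsilon^2\sigma^2)$ or $\oN_r\gtrsim\delta^2/\epsilon^2$'' follows exactly as in the proof of \cref{lemma:1-step-pac-alternative-weak}. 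The main obstacle I anticipate is the careful verification that the relevant likelihood ratios stay bounded — i.e., that the definition of $\Erev^\theta$ (as ``$o_{\hs}=\ss$, $a_{\hs:h}=(\acs,\as_{\hs+1:h-1},\revs)$ for some $h\in\cH_{>\hs}$'') correctly captures \emph{all} trajectories on which $\PP_{\theta,\mu}(o_{l+1}\mid\tau_l)$ can differ from $\PP_0$ through the revealing emission, and that on the way to such a revealing step the agent receives no informative observations (only $\olock$), so that $c_\tau$ — the posterior probability of being on the $\sp$-branch — is not inflated beyond a constant. This requires tracing through the transition dynamics at $\sp$ for $h\notin\cH$ and $a\in\Atr$ (deterministic, password-driven) to confirm that reaching a revealing step $h\in\cH_{>\hs}$ from $\sp$ forces $a_{\hs+1:h-1}=\as_{\hs+1:h-1}$, which is exactly why the password prefix appears in the definition of $E_\rev^\theta$.
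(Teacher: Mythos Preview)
Your approach is essentially the paper's: the same truncation-to-bounded-counts reduction (via Markov and \cref{lemma:alg-stop}), then Ingster's method to bound the $\chi^2$-inner product, then Hoeffding plus \cref{lemma:MGF-abs} plus \cref{lemma:TV-to-chi} to extract the dichotomy on $\oN_o,\oN_r$.

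The one point you overcomplicate---and flag yourself as the main obstacle---is the constant $C$. In the multi-step construction the lock states $\sp,\sq$ emit only the deterministic observation $\olock$ at every step $h<H$ (the informative emissions live at $\ep,\eq$, not at $\sp,\sq$). So along any trajectory with $\tau_l\in E_{\rev,l}^\theta$, the observations $o_{\hs+1:l}$ are all $\olock$ and carry \emph{zero} information distinguishing $\sp$ from $\sq$; consequently $\PP_{\theta,\mu}(s_{l+1}=\ep\mid\tau_l)=\epsilon$ \emph{exactly}, i.e.\ your $c_\tau=1$ and $\beta_\tau=1$, not merely $\le 1+\sigma$. The single informative emission happens at step $l+1$, after $I(\tau_l)$ is formed, and the state then goes to $\termin$, so the events $E_{\rev,l}^\theta$ over $l\in\cH_{>\hs}$ are disjoint and each episode contributes at most one such term. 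This yields the MGF bound with $C=1$, which is what produces the constants $\tfrac{1}{18}$ in the lemma; your crude fallback $C=(1+\sigma)^{2H}$ would actually fail here, since the multi-step hard instance allows $\sigma\in(0,1]$ rather than $\sigma\le 1/(2H)$ as in the single-step case.
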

Applying \cref{lemma:no-regret-alternative} for any parameter tuple $\theta=(\hs,\ss,\acs,\revs,\as)$ with $\delta=\frac12$, we obtain 
\begin{align}\label{eqn:no-regret-alternative}
    \text{either }\EE^{\fA}_0\brac{\Nc{\Erev^{(\hs,\ss,\acs,\revs,\as)}}} \geq \frac{\sqrt{K}}{300\epsilon^2\sigma^2}, \quad \text{  or } \quad \EE^{\fA}_0\brac{\Nc{\Ecor^{(\hs,\ss,\acs,\revs,\as)}}}\geq \frac{1}{300\epsilon^2},
\end{align}
as we choose $\epsilon\in(0,0.1]$. %

Fix a tuple $(\hs,\ss,\acs)$ such that $\hs\in\cH$ and $\hs\leq n+m\floor{H/10m}$, $\ss\in\Sl$, $\acs\in\Ac$. By \eqref{eqn:no-regret-alternative}, we know that for all $\as\in\Apass$, $\revs\in\Arev$, $\theta=(\hs,\ss,\acs,\revs,\as)$, real constant $r\geq0$, it holds that
\begin{align*}
    &~\abs{\Arev}A^{m-1}\cdot\EE^{\fA}_0\brac{N\paren{\Erev^{(\hs,\ss,\acs,\revs,\as)}}} + r\abs{\Apass}\cdot\EE^{\fA}_0\brac{N\paren{\Ecor^{(\hs,\ss,\acs,\revs,\as)}}} \\
    \geq&~ \frac1{300}\min\set{\frac{\abs{\Arev}A^{m-1}\sqrt{K}}{\epsilon^2\sigma^2}, \frac{r\abs{\Apass}}{\epsilon^2}}
    \geq \frac1{300}\min\set{\frac{\abs{\Arev}A^{m-1}\sqrt{K}}{\epsilon^2\sigma^2}, \frac{rA^{H/2-1}}{\epsilon^2}}=:\omega_{r},
\end{align*}
where the last inequality follows from a direct calculation (see \cref{lemma:password-number}). Notice that
\begin{align*}
    &~\sum_{\revs\in\Arev,\as\in\Apass} \EE^{\fA}_0\brac{N\paren{\Erev^{(\hs,\ss,\acs,\revs,\as)}}}\\
    =&~
    \sum_{\revs\in\Arev,\as\in\Apass} \EE^{\fA}_0\brac{N\paren{o_{\hs}=\ss, a_{\hs:h}=(\acs,\as_{\hs+1:h-1},\revs) \text{ for some }h\in\cH_{>\hs}}}\\
    \leq&~
    \sum_{\revs\in\Arev,\as\in\Apass} \EE^{\fA}_0\brac{N\paren{o_{\hs}=\ss,a_{\hs:\hs+m-1}=(\acs,\as_{\hs+1:\hs+m-1}), a_h=\revs \text{ for some }h\in\cH_{>\hs}}}\\
    =&~
    \sum_{\revs\in\Arev,\a\in\cA^{m-1}} \EE^{\fA}_0\brac{N\paren{o_{\hs}=\ss,a_{\hs:\hs+m-1}=(\acs,\a), a_h=\revs \text{ for some }h\in\cH_{>\hs}}}\cdot \sum_{\substack{\as\in\Apass\\
    \as\text{ begins with }\a}} 1\\
    =&~
    \sum_{\revs\in\Arev,\a\in\cA^{m-1}} \EE^{\fA}_0\brac{N\paren{o_{\hs}=\ss,a_{\hs:\hs+m-1}=(\acs,\a), a_h=\revs \text{ for some }h\in\cH_{>\hs}}}\cdot \frac{\abs{\Apass}}{A^{m-1}}\\
    =&~
    \frac{\abs{\Apass}}{A^{m-1}}\cdot \EE^{\fA}_0\brac{N\paren{o_{\hs}=\ss,a_{\hs}=\acs, a_h\in\Arev \text{ for some }h\in\cH_{>\hs}}},
\end{align*}
where the second line is due to the inclusion of events, the fourth line follows from our definition of $\Apass$, and the last line is because the events $\set{o_{\hs}=\ss,a_{\hs:\hs+m-1}=(\acs,\a), a_h=\revs \text{ for some }h\in\cH_{>\hs}}$ are disjoint and their union is simply $\set{o_{\hs}=\ss,a_{\hs}=\acs, a_h\in\Arev \text{ for some }h\in\cH_{>\hs}}$.
Similarly we have
\begin{align*}
    \sum_{\revs\in\Arev,\as\in\Apass} \EE^{\fA}_0\brac{N\paren{\Ecor^{(\hs,\ss,\acs,\revs,\as)}}}
    =&~
    \sum_{\revs\in\Arev,\as\in\Apass} \EE^{\fA}_0\brac{N\paren{o_{\hs}=\ss, a_{\hs:H-1}=(\acs,\as)}}\\
    =&~
    \abs{\Arev}\cdot\sum_{\as\in\Apass} \EE^{\fA}_0\brac{N\paren{o_{\hs}=\ss, a_{\hs:H-1}=(\acs,\as)}}\\
    =&~
    \abs{\Arev}\cdot\EE^{\fA}_0\brac{N\paren{o_{\hs}=\ss, a_{\hs}=\acs,a_{\hs+1:H-1}\in\Apass}}.
\end{align*}
Combining all these facts, we obtain
\begin{align*}
    \omega_{r}
    \leq&~ \frac{1}{\abs{\Arev}\abs{\Apass}}
    \sum_{\revs\in\Arev}\sum_{\as\in\Apass} \paren{ \abs{\Arev}A^{m-1}\cdot\EE^{\fA}_0\brac{N\paren{\Erev^{(\hs,\ss,\acs,\revs,\as)}}} + r\abs{\Apass}\cdot\EE^{\fA}_0\brac{N\paren{\Ecor^{(\hs,\ss,\acs,\revs,\as)}}} }\\
    \leq&~
    \EE^{\fA}_0\brac{N\paren{o_{\hs}=\ss,a_{\hs}=\acs, a_h\in\Arev \text{ for some }h\in\cH_{>\hs}}}
    +r\EE^{\fA}_0\brac{N\paren{o_{\hs}=\ss, a_{\hs}=\acs}}
\end{align*}
Notice that the above inequality holds for any given $\ss\in\Sl, \acs\in\Ac, \hs\in\cH$ such that $\hs\leq n+m\floor{H/10m}$, and any $r\geq 0$. Therefore, we can take summation over all $\ss\in\Sl, \acs\in\Ac, \hs=n+lm\in\cH$ with $0\leq l\leq \floor{H/10m}$, and obtain
\begin{align*}
\MoveEqLeft
    \abs{\Sl}\abs{\Ac}(\floor{H/10m}+1)\cdot \min\set{\frac{\abs{\Arev}A^{m-1}\sqrt{K}}{300\epsilon^2\sigma^2}, \frac{rA^{H/2-1}}{300\epsilon^2}}
    =\sum_{\ss\in\Sl}\sum_{\acs\in\Ac}\sum_{\substack{\hs=n+lm:\\0\leq l\leq \floor{H/10m}}} \omega\\
    \leq&~
    \sum_{\ss\in\Sl}\sum_{\acs\in\Ac}\sum_{\substack{\hs=n+lm:\\0\leq l\leq \floor{H/10m}}}
    \EE^{\fA}_0\brac{N\paren{o_{\hs}=\ss,a_{\hs}=\acs, a_h\in\Arev \text{ for some }h\in\cH_{>\hs}}}
    +r\EE^{\fA}_0\brac{N\paren{o_{\hs}=\ss, a_{\hs}=\acs}}\\
    \leq&~\EE^{\fA}_0\brac{N\paren{\Erev}}+r \EE^{\fA}_0\brac{N\paren{o_H\neq s_0}},
\end{align*}
where the last inequality is because
\begin{align*}
    \bigsqcup_{\ss\in\Sl, \acs\in\Ac, \hs\in\cH} \set{o_{\hs}=\ss,a_{\hs}=\acs, a_h\in\Arev \text{ for some }h\in\cH_{>\hs}} \subseteq \set{\text{for some }h\in\cH, o_h=\olock, a_h\in\Arev}=\Erev,
\end{align*}
and $\bigsqcup_{\ss\in\Sl, \acs\in\Ac, \hs\in\cH}\set{o_{\hs}=\ss, a_{\hs}=\acs}\subseteq \set{o_H\neq s_0}$. Plugging in our choice $\abs{\Ac}=A-1\geq \frac{2}{3}A$, $\abs{\Arev}=1+\floor{A/6}\geq A/6$ and $\floor{H/10m}+1\geq H/10m$, we conclude the proof of the following claim:

\textbf{Claim:} as long as \eqref{eqn:regret-TV-lower} holds, we have
\begin{align}\label{eqn:no-regret-visitation-count}
    \EE^{\fA}_0\brac{N\paren{\Erev}}+r \EE^{\fA}_0\brac{N\paren{o_H\neq s_0}}\geq \frac{\abs{\Sl}H}{30000m}\cdot \min\set{\frac{A^{m+1}\sqrt{K}}{\epsilon^2\sigma^2}, \frac{rA^{H/2}}{\epsilon^2}}, \qquad \forall r\geq 0.
\end{align}

To deduce Claim 4 from the above fact, we notice that either (1) $\EE^{\fA}_M\brac{\Regret}>T\epsilon/32$ for some $M\in\cM$, or (2) $\EE^{\fA}_M\brac{\Regret}\leq T\epsilon/32$ for any $M\in\cM$, and then by \cref{lemma:no-regret-TV-lower}, \eqref{eqn:regret-TV-lower} holds, and hence we have
\begin{align*}
    \EE^{\fA}_0\brac{\Regret}\geq \frac14\EE^{\fA}_0\brac{N\paren{\Erev}}+\frac\epsilon4 \EE^{\fA}_0\brac{N\paren{o_H\neq s_0}}\geq \frac{\abs{\Sl}H}{120000m}\cdot \min\set{\frac{A^{m+1}\sqrt{K}}{\epsilon^2\sigma^2}, \frac{A^{H/2}}{\epsilon}}
\end{align*}
by setting $r=\epsilon$ in \eqref{eqn:no-regret-visitation-count}. Combining these two cases, we complete the proof of Claim 4 in \cref{prop:no-regret-prop}.

Similarly, suppose that the condition in Claim 5 holds, which implies \eqref{eqn:regret-TV-lower} (by \cref{lemma:no-regret-TV-lower}). Then we can set $r=1$ in \eqref{eqn:no-regret-visitation-count} to obtain
\begin{align*}
    2T\geq \EE^{\fA}_0\brac{N\paren{\Erev}}+\EE^{\fA}_0\brac{N\paren{o_H\neq s_0}}\geq \frac{\abs{\Sl}H}{30000m}\cdot \min\set{\frac{A^{m+1}\sqrt{K}}{\epsilon^2\sigma^2}, \frac{A^{H/2}}{\epsilon^2}},
\end{align*}
and hence complete the proof of Claim 4. This completes the proof of \cref{appendix:proof-no-regret-prop}. 
\qed

\begin{lemma}\label{lemma:password-number}
    As long as $\abs{\Arev}=A_1\leq 1+\floor{A/6}$, we have $\abs{\Apass}\geq A^{H/2-1}$ for $\hs\in\cH$ such that $\hs\leq n+m\floor{H/10m}$.
\end{lemma}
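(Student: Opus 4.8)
The plan is to evaluate $|\Apass|$ exactly by counting its free and constrained coordinates, and then to verify the resulting inequality using the standing hypotheses $H\ge 8n+m+1$ and $\hs\le n+m\floor{H/10m}$. Write $L\defeq H-1-\hs$ for the length of $\as=(\as_{\hs+1},\dots,\as_{H-1})$. A coordinate index $h\in\set{\hs+1,\dots,H-1}$ is constrained, meaning forced to lie in $\Atr$, exactly when $h\in\cH$; since $\hs\in\cH=\set{n+lm:\,l\in\Z_{\ge 0},\ n+lm<H}$, those indices are $\hs+m,\hs+2m,\dots$, numbering $J\defeq\floor{L/m}$, while the remaining $L-J$ coordinates range over all of $\cA$. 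Hence
\[
|\Apass|=A^{\,L-J}\,|\Atr|^{J},\qquad |\Atr|=A-A_1\ge A-1-\floor{A/6}.
\]

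Next I would record two elementary estimates. First, $|\Atr|\ge\sqrt{A}$ for every $A\ge 3$: for $3\le A\le 5$ this reads $A-1\ge\sqrt A$, and for $A\ge 6$ it follows from $|\Atr|\ge\tfrac{5A}{6}-1\ge\sqrt A$. Second, $n\le (H-m-1)/8\le H/8$ and $m\floor{H/10m}\le H/10$, so the hypothesis on $\hs$ gives $\hs\le\tfrac{H}{8}+\tfrac{H}{10}=\tfrac{9H}{40}$, and therefore $L=H-1-\hs\ge\tfrac{31H}{40}-1$.

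To conclude, combine $|\Atr|\ge\sqrt A$ with $J\le L/m$:
\[
|\Apass|=A^{\,L-J}|\Atr|^{J}\ge A^{\,L-J}A^{\,J/2}=A^{\,L-J/2}\ge A^{\,L(1-\frac{1}{2m})}.
\]
When $m\ge 2$ the exponent is at least $\tfrac34L\ge\tfrac34(\tfrac{31H}{40}-1)\ge\tfrac H2-1$, which gives the claim. When $m=1$ every password coordinate is constrained ($J=L$), so the surrogate $|\Atr|\ge\sqrt A$ is too lossy; there I would instead keep the exact value $|\Atr|=A-1-\floor{A/6}$ and verify the one-variable inequality $|\Atr|^{L}\ge A^{H/2-1}$ directly, using $L\ge\tfrac{31H}{40}-1$.

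The only genuine obstacle is constant-chasing: one must confirm that the slack in $\hs\le n+m\floor{H/10m}$, together with $H\ge 8n+m+1$, absorbs the per-coordinate loss caused by the constrained coordinates (each contributing a factor $|\Atr|<A$ in place of $A$). This is tightest precisely in the $m=1$, small-$A$ regime, where one must track the floor terms and the exact cardinality of $\Atr$ rather than the clean bound $\sqrt A$; away from that corner the estimates above leave ample room.
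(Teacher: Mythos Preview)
For $m\ge 2$ your argument is correct and follows the same strategy as the paper's: both write $|\Apass|=A^{L-J}|\Atr|^{J}$ and lower-bound $|\Atr|$ by a fixed power of $A$. You use $|\Atr|\ge \sqrt A$; the paper uses the sharper $|\Atr|\ge A^{w}$ with $w=\log 2/\log 3\approx 0.631$, valid for all $A\ge 3$ when $A_1\le 1+\floor{A/6}$. Either exponent suffices once $m\ge 2$, given your bound $\hs\le 9H/40$.

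The $m=1$ case has a genuine gap. There $J=L$, so $|\Apass|=|\Atr|^{L}$, and at $A=3$ the requirement becomes $L\ge (H/2-1)\log_2 3\approx 0.793\,H$. Your bound $L\ge 31H/40-1\approx 0.775\,H$, derived from $n\le (H-m-1)/8$, falls short by a constant \emph{factor} of $H$, not by a constant, so floor-term bookkeeping cannot close it. A concrete witness: take $m=1$, $A=3$, $n=10$, $H=82$ (so $H=8n+m+1$ on the nose) and $\hs=n+\floor{H/10}=18$; then $L=63$, but $2^{63}<3^{40}=A^{H/2-1}$.

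The paper's proof sidesteps this by invoking ``our assumption that $H\ge 10n$'', which gives $n\le H/10$, hence $\hs\le H/5$ and $L\ge 4H/5-1$; combined with the sharper exponent $w=\log_3 2$ one gets $wL\ge 0.505\,H-w\ge H/2-1$. Both ingredients are needed: the sharper power $w>1/2$ in place of $1/2$, and the tighter control $n\le H/10$ in place of $n\le H/8$. With only $n\le H/8$, even keeping the exact value of $|\Atr|$ does not rescue the $m=1$, $A=3$ corner, as the example above shows.
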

\begin{proof}
We denote $H_0=\floor{(H-n)/m}$, and assume that $\hs=n+ml$. Recall that
\begin{align*}
    \Apass\defeq \set{ \as=(\as_{\hs+1},\dots,\as_{H-1})\in\cA^{H-\hs-1}: \as_h\in\Atr, \forall h\in\cH_{>\hs}}.
\end{align*}
Hence, noticing that $\abs{\cH_{>\hs}}=H_0-l$, $\abs{\cA}=A$, $\abs{\Atr}=A-A_1$, we have
\begin{align*}
    \abs{\Apass}=A^{H-\hs-1-(H_0-l)}\times (A-A_1)^{H_0-l}.
\end{align*}
Thus, we only need to prove that
\begin{align}\label{eqn:a-password-calc}
    H-\hs-1-(H_0-l)+\frac{\log (A-A_1)}{\log A}(H_0-l) \geq \frac{H}{2}-1.
\end{align}
Notice that as long as $A_1\leq 1+\floor{A/6}$, it holds that $\frac{\log (A-A_1)}{\log A}\geq \frac{\log 2}{\log 3}=:w$. Using this fact and rearranging, we can see \eqref{eqn:a-password-calc} holds if 
$$
l\leq \frac{\frac{H}{2}-n-(1-w)H_0}{m-1+w}=:l_0.
$$
Now, using our assumption that $H\geq 10n$, we have
\begin{align*}
    l_0\geq \frac{\frac{H}{2}-n-(1-w)(H-n)}{mw}=\frac{(w-0.5)H-wn}{mw}\geq \frac{(w-0.5)H-0.1wH}{mw}\geq \frac{H}{10m},
\end{align*}
where the last inequality uses $w>\frac{5}{8}$.  Therefore, as long as $l\leq \floor{H/10m}$ (i.e. $\hs\leq n+m\floor{H/10m}$), we have $l\leq l_0$, which implies \eqref{eqn:a-password-calc} and hence completes the proof. %
\end{proof}

\subsection{Proof of Lemma \ref{lemma:regret-rev-2-step}}\label{appendix:proof-no-regret-revealing}

The idea here is similar to the proof of \cref{prop:1-step-pac-rev}, but as our construction is more involved, the direct description of $\M_{h,m+1}$ can be very complicated (even though actually only a few of its entries are non-zero). Therefore, in order to upper bound $\M_{h,m+1}$, we invoke the following lemmas, which will make our discussion cleaner.
\begin{lemma}\label{lemma:rev-star-to-1}
    For $m\geq 1$, $h\in[H-m]$, $\a\in\cA^{m}$, we consider 
    \begin{align*}
        \M_{h,\a}\defeq \brac{ \PP(o_{h:h+m}=\o|s_h=s,a_{h:h+m-1}=\a) }_{\o\in\cO^{m+1},s\in\cS}\in\R^{\cO^{m+1}\times\cS}.
    \end{align*}
    Then it holds that
    \begin{align*}
        \min_{\M_{h,m+1}^+}\nrmop{\M_{h,m+1}^+}\leq \min_{\M_{h,\a}^+} \loneone{\M_{h,\a}^+},
    \end{align*}
    where $\min_{\M_{h,\a}^+}$ is taken over all $\M_{h,\a}^+$ such that $\M_{h,\a}^+\M_{h,\a}\T_{h-1}=\T_{h-1}$ (cf.~\cref{definition:m-step-revealing}).
\end{lemma}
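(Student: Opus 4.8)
The plan is to realize $\M_{h,\a}$ as a row-block of the larger matrix $\M_{h,m+1}$ and then lift a generalized left inverse of the block to one of $\M_{h,m+1}$, trading the $(1\to1)$ operator norm for the $(*\to1)$ operator norm. For a vector $\bx=(\bx(\o,\b))_{\o\in\cO^{m+1},\,\b\in\cA^m}\in\R^{\cO^{m+1}\cA^m}$, write $\bx|_{\a}\defeq(\bx(\o,\a))_{\o\in\cO^{m+1}}\in\R^{\cO^{m+1}}$ for its restriction to the block of coordinates with action sequence $\a$; this restriction is a left-multiplication $\bx|_\a=R_\a^\top\bx$ by the natural coordinate-inclusion matrix $R_\a\in\R^{\cO^{m+1}\cA^m\times\cO^{m+1}}$. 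Comparing the definitions of the two emission-action matrices, both of which are conditional probabilities of observing $o_{h:h+m}$ given $s_h$ and a fixed action sequence $a_{h:h+m-1}$, gives at once the identity $\M_{h,\a}=R_\a^\top\M_{h,m+1}$, i.e.\ $\M_{h,\a}$ consists exactly of the rows of $\M_{h,m+1}$ indexed by $(\o,\a)$.

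First I would dispose of the trivial case: if there is no $\M_{h,\a}^+$ with $\M_{h,\a}^+\M_{h,\a}\T_{h-1}=\T_{h-1}$, the right-hand side is $+\infty$ and there is nothing to prove, so fix such an $\M_{h,\a}^+$. I then set $\M_{h,m+1}^+\defeq\M_{h,\a}^+R_\a^\top\in\R^{\cS\times\cO^{m+1}\cA^m}$ and check it is a generalized left inverse of $\M_{h,m+1}$: using $R_\a^\top\M_{h,m+1}=\M_{h,\a}$,
\[
\M_{h,m+1}^+\,\M_{h,m+1}\,\T_{h-1}=\M_{h,\a}^+\big(R_\a^\top\M_{h,m+1}\big)\T_{h-1}=\M_{h,\a}^+\M_{h,\a}\T_{h-1}=\T_{h-1}.
\]
(Only the action on $\colspan{\T_{h-1}}$ is required, which is precisely why restricting to a single action block is harmless here.) For the norm bound, take any $\bx$ with $\nrmst{\bx}\le1$; then
\[
\nrm{\M_{h,m+1}^+\bx}_1=\nrm{\M_{h,\a}^+(\bx|_\a)}_1\le\loneone{\M_{h,\a}^+}\cdot\nrm{\bx|_\a}_1=\loneone{\M_{h,\a}^+}\sum_{\o\in\cO^{m+1}}\abs{\bx(\o,\a)}.
\]
The key step is then the elementary observation that, straight from the definition of the star-norm, $\sum_{\o}\abs{\bx(\o,\a)}\le\big[\sum_{\b\in\cA^m}\big(\sum_{\o}\abs{\bx(\o,\b)}\big)^2\big]^{1/2}=\nrmst{\bx}\le1$, so that $\nrmop{\M_{h,m+1}^+}\le\loneone{\M_{h,\a}^+}$. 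Taking the minimum over valid $\M_{h,m+1}^+$ on the left and then over valid $\M_{h,\a}^+$ on the right yields the claimed inequality.

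I do not anticipate a genuine obstacle; the argument is essentially bookkeeping, and the only points requiring care are (i) keeping the generalized-left-inverse condition stated ``relative to $\T_{h-1}$'' in view, so that composition with the non-injective map $R_\a^\top$ does no damage, and (ii) noting that restriction to one action block only shrinks the star-norm --- which is exactly why the $(*\to1)$ norm (rather than, say, the $(1\to1)$ norm) is the correct target quantity on the left. This lemma is the reduction that will allow the proof of \cref{lemma:regret-rev-2-step} to select one convenient action sequence $\a$ (for instance one lying in $\Arev$ at the steps of $\cH$ and arbitrary elsewhere), for which $\M_{h,\a}$ has nearly single-step structure, and then bound $\loneone{\M_{h,\a}^+}$ by a direct computation mirroring \cref{prop:1-step-pac-rev}.
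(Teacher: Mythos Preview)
Your proof is correct and follows essentially the same approach as the paper: both construct $\M_{h,m+1}^+$ by zero-padding $\M_{h,\a}^+$ outside the $\a$-block (the paper writes this as the block matrix $[\II(\a'=\a)\M_{h,\a}^+]_{\a'\in\cA^m}$, which is exactly your $\M_{h,\a}^+R_\a^\top$). Your write-up is in fact more detailed---the paper dispatches the norm bound with a single ``clearly'', whereas you make explicit both the restriction identity $R_\a^\top\M_{h,m+1}=\M_{h,\a}$ and the inequality $\sum_{\o}|\bx(\o,\a)|\le\nrmst{\bx}$.
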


\begin{proof}[Proof of \cref{lemma:rev-star-to-1}]
    Notice that given a $\a\in\cA^m$, $\M_{h,\a}^+$ such that $\M_{h,\a}^+\M_{h,\a}\T_{h-1}=\T_{h-1}$, we can construct a generalized left inverse of $\M_{h,m}$ as follows:
    \begin{align*}
        \M_{h,m}^{\linv}=\begin{bmatrix}
            \vdots\\
            \II(\a'=\a)\M_{h,\a}^+\\
            \vdots
        \end{bmatrix}_{\a'\in\cA^m},
    \end{align*}
    and clearly $\nrmop{\M_{h,m}^{\linv}}\leq \loneone{\M_{h,\a}^+}$.
\end{proof}

In the following, for any matrix $M$, we write
\begin{align*}
    \gamma(M)\defeq \min_{M^+: M^+M=\id} \loneone{M^+}.
\end{align*}

\begin{lemma}\label{lemma:rev-sep-obs}
    Fix a step $h$ and a set of states $\cS_h$. Suppose that $\cS_h$ contains all $s\in\cS$ such that $\exists (s',a)\in\cS\times\cA$,  $\T_{h-1}(s|s',a)>0$. Further, suppose that $\cS_h$ can be partitioned as $\cS_h=\bigsqcup_{i=1}^n\cS_h^i$, such that for each $i\neq j$, $s\in\cS_h^i$, $s'\in\cS_h^j$,
    $$
    \supp(\M_{h,\a}(\cdot|s))\bigcap \supp(\M_{h,\a}(\cdot|s'))=\emptyset,
    $$
    i.e. the observations emitted from different $\cS_h^i$ are different.\footnote{
        In particular, this condition is fulfilled if for each $i\neq j$, $s\in\cS_h^i$, $s'\in\cS_h^j$, we have $\supp(\O_h(\cdot|s))\bigcap \supp(\O_h(\cdot|s'))=\emptyset$.
    }
    Then it holds that 
    $$
    \min_{\M_{h,\a}^+} \loneone{\M_{h,\a}^+}\leq \max\set{ \gamma\paren{\M_{h,\a}(\cS^1_h)}, \cdots, \gamma\paren{\M_{h,\a}(\cS^n_h)}},
    $$
    where
    \begin{align*}
        \M_{h,\a}(\cS')\defeq \brac{ \PP(o_{h:h+m}=\o|s_h=s,a_{h:h+m-1}=\a) }_{\o\in\cO^{m+1},s\in\cS'}\in\R^{\cO^{m+1}\times\cS'}, \qquad \text{for }\cS'\subset \cS_h.
    \end{align*}
\end{lemma}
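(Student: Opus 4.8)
The plan is to reduce the problem of bounding $\loneone{\M_{h,\a}^+}$ for the whole state set to bounding the quantities $\gamma(\M_{h,\a}(\cS^i_h))$ on each block, exploiting the disjointness of the observation supports coming from different blocks. First I would fix, for each $i\in[n]$, a minimizing generalized inverse $G_i \defeq \M_{h,\a}(\cS^i_h)^+ \in \R^{\cS^i_h \times \cO^{m+1}}$ so that $G_i \M_{h,\a}(\cS^i_h) = \id_{\cS^i_h}$ and $\loneone{G_i} = \gamma(\M_{h,\a}(\cS^i_h))$. The key structural observation is that because $\supp(\M_{h,\a}(\cdot|s))$ and $\supp(\M_{h,\a}(\cdot|s'))$ are disjoint whenever $s,s'$ lie in different blocks, the matrix $\M_{h,\a}(\cS_h)$, after reordering columns by block, is ``block-supported'' on disjoint sets of rows (up to sharing some rows that are identically zero across all of them — but those cause no issue). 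Concretely, the set of observation sequences $\o$ with $[\M_{h,\a}(\cS_h)]_{\o,s}>0$ for some $s\in\cS^i_h$ is, by hypothesis, disjoint from the analogous set for $\cS^j_h$ when $i\ne j$.

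The construction of the candidate inverse is then: define $G \in \R^{\cS_h \times \cO^{m+1}}$ by stacking the $G_i$'s, i.e. $G\e_\o$ on the $\cS^i_h$-coordinates equals $G_i\e_\o$ if $\o$ is in the support region of block $i$, and $0$ otherwise; formally $G = \sum_i P_i G_i R_i$ where $P_i$ is the inclusion $\cS^i_h \hookrightarrow \cS_h$ and $R_i$ restricts $\o$-coordinates to block $i$'s support region. Then I would verify $G\,\M_{h,\a}(\cS_h) = \id_{\cS_h}$: for a column $s\in\cS^j_h$, $\M_{h,\a}(\cdot|s)$ is supported in block $j$'s region, so $R_i\M_{h,\a}(\cdot|s)$ vanishes for $i\ne j$ and equals $\M_{h,\a}(\cdot|s)$ (as a vector restricted to that region) for $i=j$, giving $G\M_{h,\a}(\cdot|s) = P_j G_j \M_{h,\a}(\cdot|s) = P_j \e_s = \e_s$. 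Extending $G$ to a full generalized left inverse of $\M_{h,\a}$ over all of $\cS$ is harmless: the hypothesis that $\cS_h$ contains every state reachable through $\T_{h-1}$ means $\M_{h,\a}^+\M_{h,\a}\T_{h-1} = \T_{h-1}$ only constrains the action of $\M_{h,\a}^+$ on columns indexed by $\cS_h$, so we may set the remaining rows/columns of $G$ to zero (matching the convention in \cref{definition:m-step-revealing} and as used in the proof of \cref{prop:1-step-pac-rev}). Finally, for the norm bound: for any $\o$, $G\e_\o$ is nonzero on at most one block $i(\o)$, and there $\loneone{}$-wise $\lone{G\e_\o} = \lone{G_{i(\o)} R_{i(\o)}\e_\o} \le \loneone{G_{i(\o)}} \le \max_i \gamma(\M_{h,\a}(\cS^i_h))$; taking the max over $\o$ gives $\loneone{G} \le \max_i \gamma(\M_{h,\a}(\cS^i_h))$, which is exactly the claimed inequality.

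The main obstacle, such as it is, is bookkeeping rather than mathematics: one has to be careful about observation sequences $\o$ that either lie outside every block's support (where $G\e_\o = 0$, fine) or — in principle — could be claimed by two blocks, which the disjointness hypothesis explicitly forbids, so the partition of the $\o$-space into "region of block $i$" plus "dead zone" is well-defined. I would also make explicit the footnoted remark that if the single-step emission supports $\supp(\O_h(\cdot|s))$ are already disjoint across blocks then so are the $m$-step supports $\supp(\M_{h,\a}(\cdot|s))$ — this is immediate since the first observation $o_h$ already distinguishes the blocks. With that in hand the identity $G\M_{h,\a}(\cS_h) = \id$ and the $\loneone{}$ bound both follow directly, completing the proof.
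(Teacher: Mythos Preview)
Your proposal is correct and follows essentially the same approach as the paper's proof: both exploit the block-diagonal structure of $\M_{h,\a}(\cS_h)$ (after reordering rows and columns by the disjoint observation supports) to build a left inverse from the block inverses $G_i$, and then extend by zeros to a generalized left inverse of $\M_{h,\a}$ using the fact that $\cS_h$ contains all reachable states. The paper's presentation is terser, simply writing out the block-diagonal form of $\M_{h,\a}(\cS_h)$ and its block-diagonal left inverse, whereas you make the inclusion/restriction maps $P_i,R_i$ explicit; but the content is the same.
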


\begin{proof}[Proof of \cref{lemma:rev-sep-obs}]
We first note that $\min_{\M_{h,\a}^+} \loneone{\M_{h,\a}^+}\leq \gamma\paren{\M_{h,\a}(\cS_h)}$, because the matrix $\M_{h,\a}(\cS_h)$ directly gives a generalized left inverse of $\M_{h,\a}$ (because $\cS_h$ contains all $s\in\cS$ such that $\exists (s',a)\in\cS\times\cA$,  $\T_{h-1}(s|s',a)>0$).

Next, as each $\cS_h^i$ has the disjoint set of possible observation, the matrix $\M_{h,\a}(\cS_h)$ can be written as (up to permutation of rows and columns, and any empty entry is zero)
\begin{align*}
    \M_{h,\a}(\cS_h)
    =\begin{bmatrix}
        \M_{h,\a}(\cS^1_h) & & & \\
        & \M_{h,\a}(\cS^2_h) & & \\
        & & \ddots & \\
        & & & \M_{h,\a}(\cS^n_h) \\
        & & &
    \end{bmatrix}.
\end{align*}
Therefore, suppose that for each $i$ we have a left inverse $\M_{h,\a}(\cS^i_h)^+$ of $\M_{h,\a}(\cS^i_h)$, then we can form a left inverse of $\M_{h,\a}(\cS_h)$ as
\begin{align*}
    \M_{h,\a}(\cS_h)^+
    =\begin{bmatrix}
        \M_{h,\a}(\cS^1_h)^+ & & & \\
        & \M_{h,\a}(\cS^2_h)^+ & & \\
        & & \ddots & \\
        & & & \M_{h,\a}(\cS^n_h)^+ \\
        & & &
    \end{bmatrix},
\end{align*}
and hence we derive that $\gamma\paren{\M_{h,\a}(\cS_h)}\leq \max\set{ \gamma\paren{\M_{h,\a}(\cS^1_h)}, \cdots, \gamma\paren{\M_{h,\a}(\cS^n_h)}}$.
\end{proof}

An important observation is that, for matrix $M\in\R^{m\times 1}$, we have $\gamma(M)\leq \frac{1}{\lone{M}}$. Thus, when the sum of entries of $M$ equals $1$, then $\gamma(M)\leq 1$. With the lemmas above, we now provide the proof of \cref{lemma:regret-rev-2-step}.

\begin{proof}[Proof of \cref{lemma:regret-rev-2-step}]
We first show that the null model $0$ is 1-step 1-revealing. In this model, the state $\sp$ and $\ep$ are not reachable, and hence for each step $h$, we consider the set $\cS'=\Stree\bigsqcup\set{\sq,\eq,\termin}$. For different states $s,s'\in\cS'$, the support of $\O_h(\cdot|s)$ and $\O_h(\cdot|s')$ are disjoint by our construction, and hence applying \cref{lemma:rev-sep-obs} gives
\begin{align*}
    \min_{\O_{h}^+}\loneone{\O_{h}^+}\leq \max_{s\in\cS'} \gamma\paren{\O_{h}(s)}\leq 1.
\end{align*}
Applying \cref{prop:m-step-mp1-step} completes the proof for null model $0$.

We next consider the non-null model $M=M_{\theta,\mu}\in\cM \setminus \{ M_0\}$. By our construction, for $h\leq \hs$, state $\sp$ and $\ep$ are not reachable, and hence by the same argument as in the null model, we obtain that $\min_{\M_{h,m+1}^+}\nrmop{\M_{h,m+1}^+} \leq \min_{\O_{h}^+}\loneone{\O_{h}^+}\leq 1$. 

Hence, we only need to bound the quantity $\min_{\M_{h,m+1}^+}\nrmop{\M_{h,m+1}^+}$ for a fixed step $h>\hs$. In this case, there exists a $l\in\cH$ such that $h\leq l\leq h+m-1$, and we write $r=l-h+1$. By \cref{lemma:rev-inc-step}, we only need to bound $\min_{\M_{h,r+1}^+}\nrmop{\M_{h,r+1}^+}$. Consider the action sequence $\a=(\as_{h:l-1},\revs)\in\cA^r$, and we partition $\cS$ as
\begin{align*}
    \cS=\bigsqcup_{s\in\Stree}\set{s} \sqcup \{\sp,\sq\} \sqcup \{\ep,\eq\} \sqcup \set{\termin}.
\end{align*}
It is direct to verify that, in $M_{\theta,\mu}$, for states $s,s'$ come from different subsets in the above partition, the support of $\M_{h,\a}(\cdot|s)$ and $\M_{h,\a}(\cdot|s')$ are disjoint.
Then, we can apply \cref{lemma:rev-star-to-1} and \cref{lemma:rev-sep-obs}, and obtain
\begin{align*}
    \min_{\M_{h,r+1}^+}\nrmop{\M_{h,r+1}^+}\leq \min_{\M_{h,\a}^+} \loneone{\M_{h,\a}^+}
    \leq \max\set{1,  \gamma\paren{\M_{h,\a}(\{\sp,\sq\})}, \gamma\paren{\M_{h,\a}(\{\ep,\eq\})} }.
\end{align*}
Therefore, in the following we only need to consider left inverses of the matrix $\M_{h,\a}(\{\sp,\sq\})$ and $\M_{h,\a}(\{\ep,\eq\})$.

(1) The matrix $\M_{h,\a}(\{\sp,\sq\})$. By our construction, taking $\a$ at $s_h=\sp$ will lead to $o_{h:l}=\olock$ and $o_{l+1}\sim \O_{\mu}(\cdot|\ep)$;  taking $\a$ at $s_h=\sq$ will lead to $o_{h:l}=\olock$ and $o_{l+1}\sim \O_{\mu}(\cdot|\eq)$. Hence, $\M_{h,\a}(\{\sp,\sq\})$ can be written as (up to permutation of rows)
\begin{align*}
    \M_{h,\a}(\{\sp,\sq\})=\begin{bmatrix}
        \frac{\II_{2K}+\sigma \tmu}{2K} & \frac{\II_{2K}}{2K}\\
        \mathbf{0} & \mathbf{0}
\end{bmatrix}\in\R^{\cO^{r+1}\times2},
\end{align*}
where $\tmu=[\mu;-\mu]\in\set{-1,1}^{2K}$, $\II=\II_{2K}$ is the vector in $\R^{2K}$ with all entries being one. Similar to \cref{prop:1-step-pac-rev}, we can directly verify that $\gamma\paren{\M_{h,\a}(\set{\sp,\sq})}\leq \frac{2}{\sigma}+1$.

(2) The matrix $\M_{h,\a}(\{\ep,\eq\})$. By our construction, at $s_h=\ep$, we have $o_h\sim \O_{\mu}(\cdot|\ep)$ and $o_{h+1:l+1}=\termin$; at $s_h=\eq$, we have $o_h\sim \O_{\mu}(\cdot|\eq)$ and $o_{h+1:l+1}=\termin$. Thus, $\M_{h,\a}(\{\ep,\eq\})$ can also be written as (up to permutation of rows)
\begin{align*}
    \M_{h,\a}(\{\ep,\eq\})=\begin{bmatrix}
        \frac{\II_{2K}+\sigma \tmu}{2K} & \frac{\II_{2K}}{2K}\\
        \mathbf{0} & \mathbf{0}
\end{bmatrix}\in\R^{\cO^{r+1}\times2},
\end{align*}
and hence we also have $\gamma\paren{\M_{h,\a}(\{\ep,\eq\})}\leq \frac{1}{\sigma}+2$.

Combining the two cases above gives
\begin{align*}
    \min_{\M_{h,m+1}^+}\nrmop{\M_{h,m+1}^+}
    \leq \min_{\M_{h,r+1}^+}\nrmop{\M_{h,r+1}^+}\leq \min_{\M_{h,\a}^+} \loneone{\M_{h,\a}^+}
    \leq \frac{2}{\sigma}+1,
\end{align*}
and hence completes the proof of \cref{lemma:regret-rev-2-step}.
\end{proof}

\begin{remark}\label{remark:mp-to-m-rev}
From the proof above, it is not easy to see the POMDP $M=M_{\theta,\mu}$ is not $m$-step revealing for any parameters $(\theta,\mu)$. Actually, for $\theta=(\hs,\ss,\acs,\revs,\as)$, we can show that the matrix $\M_{\hs+1,m}$ \emph{does not} admit a generalized left inverse. This is because for any $\a\in\cA^{m-1}$, we have
\begin{align*}
    \PP_{\theta,\mu}(o_{\hs+1:\hs+m}=\cdot|s_{\hs+1}=\sp,a_{\hs+1:\hs+m-1}=\a)
    =
    \PP_{\theta,\mu}(o_{\hs+1:\hs+m}=\cdot|s_{\hs+1}=\sq,a_{\hs+1:\hs+m-1}=\a),
\end{align*}
because both of the distributions are supported on the dummy observation $\olock^{\otimes m}$.
However, it is clear that $\e_{\sp}, \e_{\sq}\in\colspan{\T_{\hs}}$, and hence if $\M_{\hs+1,m}$ admits a generalized left inverse $\M_{\hs+1,m}^+$, then $\e_{\sp}=\M_{\hs+1,m}^+\M_{\hs+1,m}\e_{\sp}=\M_{\hs+1,m}^+\M_{\hs+1,m}\e_{\sq}=\e_{\sq}$, a contradiction! Therefore, we can conclude that $\M_{\hs+1,m}$ does not admit a generalized left inverse, and hence $M$ is not $m$-step revealing.
\end{remark}

\subsection{Proof of Lemma \ref{lemma:no-regret-value-func}}\label{app:proof-no-regret-value-func}

In the following, we prove \cref{eqn:lemma:no-regret-value-func-regret1} and \cref{eqn:lemma:no-regret-value-func-regret2}. This proof is very similar to the proof of \cref{lemma:1-step-pac-value-func}. The proof of \cref{eqn:no-regret-PAC-value-func} is very similar and hence omitted for succinctness.

Notice that by the definition of $\Regret$ and our construction of reward function, we have
\begin{align*}
    \EE_M^{\fA}\brac{\Regret}
    =&~ T\cdot V_M^\star - \EE_M^{\fA}\brac{\sum_{t=1}^T r_H(o_H^{(t)})}
    = T\cdot V_M^\star - \EE_M^{\fA}\brac{\frac{1+\epsilon}{4}\cdot N(o_H=s_0) + N(o_H=\oba)}\\
    =&~ \paren{V_M^\star-\frac{1+\epsilon}{4}}\EE_M^{\fA}\brac{N(o_H=s_0)} +  V_M^\star \EE_M^{\fA}\brac{N(o_H\neq s_0)}-\EE_M^{\fA}\brac{N(o_H=\oba)}
\end{align*}
and
\begin{align*}
    \EE_M^{\fA}\brac{N(o_H=\oba)}=&~\EE_M^{\fA}\brac{\sum_{t=1}^T \EE_M\cdbrac{\II(o_H^{(t)}=\oba)}{\taut_{H-1}}}\\
    =&~
    \EE_M^{\fA}\brac{\sum_{t=1}^T \sum_{\tau_{H-1}} \PP_M(o_H=\oba|\tau_{H-1}) \cdot\II(\taut_{H-1}=\tau_{H-1})}\\
    =&~\sum_{\tau_{H-1}} \EE_M^{\fA}\brac{N(\tau_{H-1})}\cdot \PP_M(o_H=\oba|\tau_{H-1}).
\end{align*}
We prove the result for the $M\neq 0$ and the case $M=0$ separately.

\textbf{Case 1:} $M=(\theta,\mu)\neq 0$. In this case, we have
\begin{align*}
    \PP_M(o_H=\oba|\tau_{H-1})=\frac34\PP_M(s_H=\sp|\tau_{H-1})+\frac14\PP_M(s_H=\sq|\tau_{H-1})\leq \frac14+\frac12\PP_M(s_H=\sp|\tau_{H-1})\leq \frac14+\frac12\epsilon,
\end{align*}
because $\PP_M(s_H=\sp|\tau_{H-1})\leq \epsilon$ by our construction. Thus, we have shown that
$$
\EE_M^{\fA}\brac{N(o_H=\oba)}\leq \paren{\frac14+\frac12\epsilon}\EE_M^{\fA}\brac{N(o_H\neq s_0)}.
$$

Notice that by this way we can also show that $V_M^\star=\frac{1+2\epsilon}{4}$. Therefore, combining the equations above, we conclude that
\begin{align*}
    \EE^{\fA}_M\brac{\Regret}\geq \frac{\epsilon}{4}\EE^{\fA}_M\brac{N(o_H=s_0)}.
\end{align*}

\textbf{Case 2:} $M=0$. In this case, $\sp$ is not reachable, and hence we have 
$$
\PP_0(o_H=\oba|\tau_{H-1})=\frac14\PP_0(s_H=\sq|\tau_{H-1})\leq \frac14.
$$
Also notice that, for any trajectory $\tau\in \Erev$, we have $\PP_0(o_H=\oba|\tau_{H-1})=0$. Thus, we have shown that
$$
\EE_0^{\fA}\brac{N(o_H=\oba)}\leq \frac14 \EE_0^{\fA}\brac{N(\set{o_H\neq s_0}-\Erev)}=\frac14 \EE_0^{\fA}\brac{N(o_H\neq s_0)}-\frac14 \EE_0^{\fA}\brac{N(\Erev)}.
$$

By this way we can also show that $V_0^\star=\frac{1+\epsilon}{4}$. Therefore, we can conclude that
\begin{align*}
    \EE_0^{\fA}\brac{\Regret}
    =&~ \frac{1+\epsilon}{4}\EE_0^{\fA}\brac{N(o_H\neq s_0)}-\EE_0^{\fA}\brac{N(o_H=\oba)}\\
    \geq&~ \frac{\epsilon}{4}\EE_0^{\fA}\brac{N(o_H\neq s_0)}+\frac14 \EE_0^{\fA}\brac{N(\Erev)}.
\end{align*}
This completes the proof of \cref{lemma:no-regret-value-func}. 
\qed

\subsection{Proof of Lemma \ref{lemma:no-regret-TV-lower}}\label{app:proof-no-regret-TV-lower}

We first consider case (a), i.e. suppose that $\EE^{\fA}_M\brac{\Regret} \leq T\epsilon/32$ for all $M\in\cM$. By Markov's inequality and \cref{eqn:lemma:no-regret-value-func-regret1} and \cref{eqn:lemma:no-regret-value-func-regret2}, it holds that
\begin{align*}
    &\PP^{\fA}_0\paren{N(o_H\neq s_0)\geq T/2}\leq \frac14,\\
    &\PP^{\fA}_M\paren{N(o_H=s_0)\geq T/2}\leq \frac14, \qquad \forall M \neq 0.
\end{align*}
In particular, for any fixed $\theta$, we consider the prior distribution of $M=(\theta,\mu)$ with $\mu\sim \Unif(\set{-1,1}^K)$, then
\begin{align*}
    \Emu\brac{\PP^{\fA}_{\theta,\mu}}\paren{N(o_H=s_0)\geq T/2}\leq \frac14.
\end{align*}
However, we also have
$$
\PP^{\fA}_0\paren{N(o_H=s_0)\geq T/2}=\PP^{\fA}_0\paren{N(o_H\neq s_0)\leq T/2}=1-\PP^{\fA}_0\paren{N(o_H\neq s_0)> T/2}\geq \frac34,
$$
and then by the definition of TV distance it holds
\begin{align*}
    \DTV{ \PP^{\fA}_0, \Emu\brac{\PP^{\fA}_{\theta,\mu}} } \geq \frac12.
\end{align*}
The proof of case (b) follows from an argument which is the same as the proof of \eqref{eqn:1-step-pac-TV-lower}, and hence omitted.
\qed

\subsection{Proof of Lemma \ref{lemma:no-regret-alternative}}\label{appendix:proof-no-regret-alternative}

We first prove the following version of \cref{lemma:no-regret-alternative} with an additional condition that the visitation counts are almost surely bounded under $\PP_0^{\fA}$, and then prove \cref{lemma:no-regret-alternative} by reducing to this case using a truncation argument.

To upper bound the above quantity, we invoke the following lemma, which serves a key step for bounding the above ``$\chi^2$-inner product'' \citep[Section 3.1]{canonne2022topics} between $\P_{\theta, \mu}/\P_0$ and $\P_{\theta,\mu'}/\P_0$ (proof in~\cref{appendix:proof-no-regret-MGF}).
\begin{lemma}[Bound on the $\chi^2$-inner product]\label{lemma:no-regret-alternative-weak}
   Suppose that algorithm $\fA$ (with possibly random stopping time $\stoptime$) satisfies  $N(\Erev^\theta)\leq \oN_o$ and $N(\Ecor^\theta)\leq\oN_r$ almost surely, for some fixed $\oN_o,\oN_r$. Then 
    \begin{align*}
        \text{either }\oN_o \geq \frac34\frac{\delta^2\sqrt{K}}{\epsilon^2\sigma^2}, \text{  or }\oN_r\geq \frac34\frac{\delta^2}{\epsilon^2},
    \end{align*}
    where $\delta=\DTV{\PP_{0}^{\fA},\Emu\brac{\PP_{\theta,\mu}^{\fA}}}$.
\end{lemma}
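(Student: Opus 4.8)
The plan is to follow the proof of \cref{lemma:1-step-pac-alternative-weak} almost line by line, the only genuine difference being the per-step likelihood-ratio computation, which here turns out \emph{simpler}. Fix $\theta=(\hs,\ss,\acs,\revs,\as)$ and take $\Unif(\set{-1,1}^K)$ as the prior over $\mu$. Applying Ingster's method (\cref{lemma:ingster}) with reference model $0$ gives
\begin{align*}
1+\chis{\Emu\brac{\PP_{\theta,\mu}^{\fA}}}{\PP_{0}^{\fA}}
=\Emumu\EE_{\tau^{(1)},\cdots,\tau^{(\stoptime)}\sim\PP_{0}^{\fA}}\brac{\prod_{t=1}^{\stoptime}\frac{\PP_{\theta,\mu}(\taut)\PP_{\theta,\mu'}(\taut)}{\PP_{0}(\taut)^2}},
\end{align*}
so by \cref{lemma:TV-to-chi} it suffices to upper bound the right-hand side, and the core step is a pointwise-in-$(\mu,\mu')$ bound on the $\chi^2$-inner product, the analogue of \cref{lemma:1-step-pac-MGF}: using \cref{eqn:ingster-prod-to-i} (which rests on \cref{lemma:martingale-eqn}) I would establish
\begin{align*}
\EE_{0}^{\fA}\brac{\prod_{t=1}^{\stoptime}\frac{\PP_{\theta,\mu}(\taut)\PP_{\theta,\mu'}(\taut)}{\PP_{0}(\taut)^2}}
\le \exp\paren{\oN_o\cdot\frac{\sigma^2\epsilon^2}{K}\abs{\iprod{\mu}{\mu'}}+\frac43\epsilon^2\oN_r}.
\end{align*}

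To prove this bound, the key observation is that the per-step ratio $I(\tau_{h-1})\defeq \EE_0\brcond{\PP_{\theta,\mu}(o_h|\tau_{h-1})\PP_{\theta,\mu'}(o_h|\tau_{h-1})/\PP_{0}(o_h|\tau_{h-1})^2}{\tau_{h-1}}$ equals $1$ unless $\tau_{h-1}$ has already gone down the $\ss$-pathway (i.e.\ $o_{\hs}=\ss$, $a_{\hs}=\acs$) with all correct password actions so far; on that event there are two subcases. (i) If the agent has just played a reveal action $\revs$ at some step in $\cH_{>\hs}$, then --- crucially --- because the lock emits the dummy observation $\olock$ with probability one from \emph{both} $\sp$ and $\sq$, no information about the hidden state leaks during the lock phase, so the conditional probability of having been at $\sp$ is \emph{exactly} $\epsilon$, and a one-line computation gives $I(\tau_{h-1})=1+\frac{\epsilon^2\sigma^2}{K}\iprod{\mu}{\mu'}\le 1+\frac{\epsilon^2\sigma^2}{K}\abs{\iprod{\mu}{\mu'}}$. (ii) At $h=H$, after the full correct password (which by construction never reveals), $I(\tau_{H-1})\le 1+\frac43\epsilon^2$. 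Summing $\log(1+x)\le x$ over $t\le\stoptime$ and $h\le H$, and using that the reveal events at distinct steps are disjoint within an episode (a reveal action leads to the absorbing $\termin$ state), so that their total count is $N(\Erev^\theta)\le\oN_o$, while the step-$H$ contributions are bounded by $N(\Ecor^\theta)\le\oN_r$, yields the displayed bound. This is exactly where the multi-step construction improves on the single-step one: the dummy-observation lock eliminates the $(1+\sigma)^{2H}$ factor appearing in \cref{lemma:1-step-pac-MGF}, which is what allows $\sigma$ to be taken as large as $1$.

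The remainder is mechanical. Averaging over $\mu,\mu'\sim\Unif(\set{\pm1}^K)$: since $\mu_1\mu_1',\cdots,\mu_K\mu_K'$ are i.i.d.\ Rademacher, Hoeffding's lemma gives $\Emumu\brac{\exp(x\sum_i\mu_i\mu_i')}\le\exp(Kx^2/2)$, so by \cref{lemma:MGF-abs}
\begin{align*}
\Emumu\brac{\exp\paren{\tfrac{\oN_o\sigma^2\epsilon^2}{K}\abs{\iprod{\mu}{\mu'}}}}\le\exp\paren{\max\set{\tfrac{\sigma^4\epsilon^4\oN_o^2}{K},\ \tfrac43\tfrac{\sigma^2\epsilon^2\oN_o}{\sqrt K}}}.
\end{align*}
Combining with \cref{lemma:TV-to-chi} gives $2\delta^2\le \max\set{\frac43\frac{\sigma^2\epsilon^2\oN_o}{\sqrt K},\frac{\sigma^4\epsilon^4\oN_o^2}{K}}+\frac43\epsilon^2\oN_r$, and a two-way case split finishes: either $\frac43\epsilon^2\oN_r\ge\delta^2$, giving $\oN_r\ge\frac34\delta^2/\epsilon^2$; or the maximum is $\ge\delta^2$, which --- as $\delta\le1$ --- forces $\frac{\sigma^2\epsilon^2\oN_o}{\sqrt K}\ge\min\set{\frac43,\frac34\delta^2}=\frac34\delta^2$, i.e.\ $\oN_o\ge\frac34\delta^2\sqrt K/(\sigma^2\epsilon^2)$. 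I expect the main obstacle to be the pointwise MGF bound of the second paragraph --- concretely, the bookkeeping that pins $I(\tau_{h-1})$ to $1$ off the revealing/correct events and pins the conditional hidden-state probability to exactly $\epsilon$ on them (using that an intermediate reveal is incompatible with remaining on the $\ss$-pathway and that $\olock$ is uninformative); everything else is routine.
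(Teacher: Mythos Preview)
Your proposal is correct and follows essentially the same approach as the paper: Ingster's method for the $\chi^2$ representation, the pointwise MGF bound (the paper's \cref{lemma:no-regret-MGF}) via the per-step ratio computation you describe, then averaging over $\mu,\mu'$ with \cref{lemma:MGF-abs} and concluding via \cref{lemma:TV-to-chi} and the same case split. Your identification of the key simplification---that the dummy observation $\olock$ makes the posterior probability of $\sp$ exactly $\epsilon$, eliminating the $(1+\sigma)^{2H}$ factor from the single-step case---matches the paper's computation precisely.
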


\begin{proof}[Proof of \cref{lemma:no-regret-alternative-weak}]
By \cref{lemma:ingster}, it holds that
\begin{align*}
    1+\chis{\Emu\brac{\PP_{\theta,\mu}^{\fA}}}{\PP_{0}^{\fA}}
    =
    \Emumu\EE_{\tau^{(1)},\cdots,\tau^{(\stoptime)}\sim\PP_{0}^{\fA}}\brac{
        \prod_{t=1}^\stoptime\frac{\PP_{\theta,\mu}(\taut)\PP_{\theta,\mu'}(\taut)}{\PP_{0}(\taut)^2}
    }.
\end{align*}
In the following lemma (proof in \cref{appendix:proof-no-regret-MGF}), we bound the LHS of the equality above.
\begin{lemma}
\label{lemma:no-regret-MGF}
Under the conditions of \cref{lemma:no-regret-alternative-weak}, it holds that for any $\mu,\mu'\in\set{-1,1}^K$,
\begin{align}\label{eqn:no-regret-MGF}
    \EE_0^\fA\brac{
        \prod_{t=1}^\stoptime\frac{\P_{\theta,\mu}(\taut)\P_{\theta,\mu'}(\taut)}{\P_{0}(\taut)^2}
    }
    \leq \exp\paren{ \oN_o\cdot \frac{\sigma^2\epsilon^2}{K}\abs{\iprod{\mu}{\mu'}}+\frac43\epsilon^2\oN_r }.
\end{align}
\end{lemma}
With \cref{lemma:no-regret-MGF}, we can take expectation of \eqref{eqn:no-regret-MGF} over $\mu,\mu'\sim \unif(\set{-1,+1}^K)$, and then
\begin{align*}
    1+\chis{\Emu\brac{\P_{\theta,\mu}^{\fA}}}{\P_{0}^{\fA}}
    =&
    \Emumu\EE_{\tau^{(1)},\cdots,\tau^{(\stoptime)}\sim\P_{0}^{\fA}}\brac{
        \prod_{t=1}^{\stoptime}\frac{\P_{\theta,\mu}(\taut)\P_{\theta,\mu'}(\taut)}{\P_{0}(\taut)^2}
    }\\
    \leq & \Emumu \brac{\exp\paren{ \oN_o\cdot \frac{\sigma^2\epsilon^2}{K}\abs{\iprod{\mu}{\mu'}}+\frac43\epsilon^2\oN_r }}.
\end{align*}
Notice that $\mu_i,\mu_i'$ are i.i.d. $\Unif(\{\pm 1\})$, and hence $\mu_1\mu_1',\cdots,\mu_K\mu_K'$ are i.i.d. $\Unif(\{\pm 1\})$. Then by Hoeffding's lemma, it holds that $\Emumu\brac{\exp\paren{x\sum_{i=1}^K\mu_i\mu_i'}}\leq \exp\paren{Kx^2/2}$ for all $x\in\R$, and thus by \cref{lemma:MGF-abs}, we have 
\begin{align*}
    \Emumu\brac{\exp\paren{\frac{\oN_o\sigma^2\epsilon^2}{K}\abs{\iprod{\mu}{\mu'}}}}\leq \exp\paren{\max\set{\frac{\sigma^4\epsilon^4\oN_o^2}{K}, \frac43\frac{\sigma^2\epsilon^2\oN_o}{\sqrt{K}}}}.
\end{align*}
Therefore, combining the above inequalities with \cref{lemma:TV-to-chi}, we obtain
\begin{align*}
    2\delta^2=2 \DTV{\Emu\brac{\PP_{\theta,\mu}^{\fA}}, \PP_{0}^{\fA}}^2
    \leq \log\paren{1+\chis{\Emu\brac{\PP_{\theta,\mu}^{\fA}}}{\PP_{0}^{\fA}}}
    \leq \max\set{ \frac43\frac{\oN_o\sigma^2\epsilon^2}{\sqrt{K}}, \frac{\oN_o^2\sigma^4\epsilon^4}{K}} +\frac43\epsilon^2\oN_r.
\end{align*}
Then, we either have $\oN_r \geq \frac{3\delta^2}{4\epsilon^2}$, or it holds
$$
\max\set{ \frac43\frac{\oN_o\sigma^2\epsilon^2}{\sqrt{K}}, \frac{\oN_o^2\sigma^4\epsilon^4}{K}} \geq \delta^2,
$$
which implies that $\frac{\oN_o\sigma^2\epsilon^2}{\sqrt{K}}\geq \min\set{\frac43,\frac34\delta^2}=\frac34\delta^2$ (as $\delta\leq 1$). The proof of \cref{lemma:no-regret-alternative-weak} is completed by rearranging.
\end{proof}

\begin{proof}[Proof of Lemma \ref{lemma:no-regret-alternative}]
We perform a truncation type argument to reduce Lemma \ref{lemma:no-regret-alternative} to Lemma \ref{lemma:no-regret-alternative-weak}, which is similar to the proof of \cref{lemma:1-step-pac-alternative}.

Let us take $\oN_o=\ceil{6\delta^{-1}\E_0^\fA \brac{N(\Erev^\theta)}}$ and $\oN_r=\ceil{6\delta^{-1}\E_0^\fA \brac{N(\Ecor^\theta)}}$.
By Markov's inequality, we have
\begin{align*}
    \PP_{0}^{\fA}\paren{N(\Erev^\theta)\geq \oN_o}\leq \frac\delta6, \qquad 
    \PP_{0}^{\fA}\paren{N(\Ecor^\theta)\geq \oN_r}\leq \frac\delta6.
\end{align*}
Therefore, we can consider the following exit criterion $\cexit$ for the algorithm $\fA$:
\[
{\cexit(\tau^{(1:T')})}=\TRUE \quad \textrm{iff} \quad \sum_{t=1}^{T'} {{\mathbb{I}}\paren{\taut\in\Erev^\theta}\geq \oN_o \text{ or } \sum_{t=1}^{T'} {{\mathbb{I}}\paren{\taut\in \Ecor^\theta}}\geq \oN_r.}
\]
The criterion $\cexit$ induces a stopping time $T_{\cexit}$, and we have
\begin{align*}
    \PP_0^\fA(\exists t < T, \cexit(\tau^{(1:t)})=\TRUE)
    \leq \PP_{0}^{\fA}\paren{N(\Erev^\theta)\geq \oN_o \text{ or } N(\Ecor^\theta)\geq \oN_r}\leq \frac\delta6+\frac\delta6\leq \frac\delta3.
\end{align*}
Therefore, we can consider the early stopped algorithm $\fA(\cexit)$ with exit criterion $\cexit$ (cf. \cref{appendix:ingster-method}), and by \cref{lemma:alg-stop} we have
\begin{align*}
\DTV{\PP_{0}^{\fA(\cexit)},\Emu\brac{\PP_{\theta,\mu}^{\fA(\cexit)}}}
\geq \DTV{\PP_{0}^{\fA},\Emu\brac{\PP_{\theta,\mu}^{\fA}}} - \PP_0^\fA(\exists t < T, \cexit(\tau^{(1:t)})=\TRUE) \geq \frac{2\delta}{3}.
\end{align*}

Notice that by our definition of $\cexit$ and stopping time $T_{\cexit}$, in the execution of $\fA(\cexit)$, we also have
\begin{align*}
    \sum_{t=1}^{T_{\cexit}-1} \IIc{\taut\in\Erev^\theta}< \oN_o, \qquad \sum_{t=1}^{T_{\cexit}-1} \IIc{\taut\in \Ecor^\theta}< \oN_r.
\end{align*}
Therefore, algorithm $\fA(\cexit)$ ensures that
\begin{align*}
    N(\Erevh^\theta)=\sum_{t=1}^{T_{\cexit}} \IIc{\taut\in\Erev^\theta} \leq \oN_o, \qquad N(\Ecor^\theta)=\sum_{t=1}^{T_{\cexit}} \IIc{\taut\in \Ecor^\theta}\leq \oN_r.
\end{align*}
Applying \cref{lemma:no-regret-alternative-weak} to the algorithm $\fA(\cexit)$ (and $\delta'=\frac23\delta$), we can obtain
\begin{align*}
    \text{either }~  \frac{\delta^2\sqrt{K}}{3\epsilon^2\sigma^2}
    \leq \oN_o \leq 6\delta^{-1}\E_0^\fA \brac{N(\Erev^\theta)} + 1, \qquad
    \text{  or }~ \frac{\delta^2}{3\epsilon^2} \leq \oN_r \leq 6\delta^{-1}\E_0^\fA \brac{N(\Ecor^\theta)} +1,
\end{align*}
and rearranging gives the desired result of \cref{lemma:no-regret-alternative}.
\end{proof}

\subsection{Proof of Lemma \ref{lemma:no-regret-MGF}}\label{appendix:proof-no-regret-MGF}
Throughout the proof, the parameters $\theta,\mu,\mu'$ are fixed.

By our discussion in \cref{appendix:ingster-method}, 
using~\cref{eqn:ingster-prod-to-i}, we have
\begin{align}\label{eqn:no-regret-ingster-prod-to-i}
    \EE_{\tau^{(1)},\cdots,\tau^{(\stoptime)}\sim\PP_{0}^{\fA}}\brac{
        \prod_{t=1}^\stoptime\frac{\PP_{M}(\taut)\PP_{M'}(\taut)}{\PP_{0}(\taut)^2}\cdot
        \exp\paren{-\sum_{t=1}^\stoptime\sum_{h=1}^H \log I(\taut_{h-1})}
    }=1,
\end{align}
where we have defined $I(\tau_l)$ for any partial trajectory $\tau_l$ up to step $l\in[H]$ as
\begin{align*}
    I(\tau_l)\defeq \EE_0\brcond{\frac{\PP_{\theta,\mu}(o_{l+1}|\tau_l)\PP_{\theta,\mu'}(o_{l+1}|\tau_l)}{\PP_{0}(o_{l+1}|\tau_l)^2}}{\tau_l}.
\end{align*}

Notice that the model $\PP_{\theta,\mu}$ and $\PP_0$ are different only at the transition from $s_{\hs}=\ss,a_{\hs}=\acs$ to $\sp$ and the dynamic at the component $\set{\sp,\ep}$.
Therefore, for any (reachable) trajectory $\tau_l=(o_1,a_1,\cdots,o_l,a_l)$, we can consider the implication of $\PP_{\theta,\mu}(o_{l+1}=\cdot|\tau_l)\neq \PP_{0}(o_{l+1}=\cdot|\tau_l)$: \\
1. Clearly, $o_{\hs}=\ss$, $a_{\hs}=\acs$ (i.e. $l\geq \hs+1$ and taking action $a_{1:\hs-1}$ from $s_0$ will result in $\ss$ at step $\hs$). \\
2. Either $a_{\hs+1:l}=(\as_{\hs+1:l-1},\revs)$ for some $l\in\cH_{>\hs}$, or $l=H-1$ and $a_{\hs+1:H-1}=\as$.

Hence, for $l\in\cH_{>\hs}$, we define
$$
E_{\rev,l}^\theta\defeq \set{
o_{\hs}=\ss, a_{\hs:l}=(\acs,\as_{\hs+1:l-1},\revs)
}.
$$
Also recall that we define $E_{\corr}^\theta\defeq \set{o_{\hs}=\ss, a_{\hs:H-1}=(\acs,\as)}$.
Then if $\PP_{\theta,\mu}(\cdot|\tau_l)\neq \PP_{0}(\cdot|\tau_l)$, then either $l\in\cH_{>\hs}, \tau_l\in E_{\rev,l}^\theta$, or $l=H-1, \tau_{H-1}\in\Ecor^\theta $. %
In other words, for any $\tau_l$ (that is reachable under $\PP_0$), we have $I(\tau_l)=1$ except for these two cases, and it remains to compute $I(\tau_l)$ for these two cases.

\textbf{Case 1:} $l\in\cH_{>\hs}, \tau_l\in E_{\rev,l}^\theta$. In this case, we have
\begin{align*}
    \P_{\theta,\mu}(o_{l+1}=o|\tau_l)
    =&~
    \P_{\theta,\mu}(o_{l+1}=o|s_{l+1}=\ep )\P_{\theta,\mu}(s_{l+1}=\ep|\tau_l)
    +
    \P_{\theta,\mu}(o_{l+1}=o|s_{l+1}=\eq)\P_{\theta,\mu}(s_{l+1}=\eq|\tau_l)\\
    =&~
    \paren{\O_\mu(o|\ep)-\O(o|\eq)}\cdot \P_{\theta,\mu}(s_{l+1}=\ep|\tau_l)+\O(o|\eq),
\end{align*}
where the second equality is because conditional on $\tau_l$, we have $s_{l+1}\in\set{\ep,\eq}$. Now, we have
\begin{align*}
    \P_{\theta,\mu}(s_{l+1}=\ep|\tau_l)
    =&~
    \P_{\theta,\mu}(s_{l+1}=\ep|o_{\hs}=\ss,a_{\hs:l}=(\acs,\as_{\hs+1:l-1},\revs))\\
    =&~
    \P_{\theta,\mu}(s_{\hs+1}=\sp|o_{\hs}=\ss,a_{\hs}=\acs)=\epsilon.
\end{align*}
Hence, by the definition of $\O_\mu(\cdot|\ep)$ and $\O(\cdot|\eq)$, we can conclude that
\begin{align*}
    \P_{\theta,\mu}(o_{l+1}=o_i^+|\tau_l)=\frac{1+\epsilon\sigma\mu_{i}}{2K}, \qquad 
    \P_{\theta,\mu}(o_{l+1}=o_i^-|\tau_l)=\frac{1-\epsilon\sigma\mu_{i}}{2K}, \qquad \forall i\in[K].
\end{align*}
On the other hand, clearly $\PP_{0}(o_{l+1}=\cdot|\tau_l)=\Unif(\{o_1^+,o_1^-,\cdots,o_K^+,o_K^-\})$.
Hence, it holds that
\begin{align*}
    I(\tau_l)
    =&
    \EE_0\brcond{\frac{\PP_{\theta,\mu}(o_{l+1}|\tau_l)\PP_{\theta,\mu'}(o_{l+1}|\tau_l)}{\PP_{0}(o_{l+1}|\tau_l)^2}}{\tau_l}\\
    =&
    \frac{1}{2K}\sum_{o\in\cO_o} \frac{\PP_{\theta,\mu}(o_{l+1}=o|\tau_l)\PP_{\theta,\mu'}(o_{l+1}=o|\tau_l)}{\PP_{0}(o_{l+1}=o|\tau_l)^2}\\
    =&
    \frac{1}{2K} \sum_{i=1}^K (1+\epsilon\sigma\mu_i)(1+\epsilon\sigma\mu_i')+(1-\epsilon\sigma\mu_i)(1-\epsilon\sigma\mu_i')\\
    =&1+\frac{\epsilon^2\sigma^2}{K}\sum_{i=1}^K \mu_i\mu_i'=1+\frac{\epsilon^2\sigma^2}{K}\iprod{\mu}{\mu'}.
\end{align*}

\textbf{Case 2:} $l=H-1. \tau_{H-1}\in\Ecor^\theta$. In this case, the distribution $\PP(o_{H}=\cdot|\tau_{H-1})$ is supported on $\set{\oba,\obb}$. Similar to case 1, we have
\begin{align*}
    \P_{\theta,\mu}(o_{H}=\cdot|\tau_l)
    =&~
    \P_{\theta,\mu}(o_{H}=\cdot|s_{H}=\sp )\P_{\theta,\mu}(s_{H}=\sp|\tau_{H-1})
    +
    \P_{\theta,\mu}(o_{H}=\cdot|s_{H}=\sq )\P_{\theta,\mu}(s_{H}=\sq|\tau_{H-1})\\
    =&~
    \paren{\O_H(\cdot|\sp)-\O_H(\cdot|\sq)}\cdot \P_{\theta,\mu}(s_{H}=\sp|\tau_{H-1})+\O_H(\cdot|\sq),
\end{align*}
where the second equality is because conditional on $\tau_{H-1}\in\Ecor^\theta$, we have $s_{H}\in\set{\sp,\sq}$. Now, we have
\begin{align*}
    \P_{\theta,\mu}(s_{H}=\sp|\tau_{H-1})
    =
    \P_{\theta,\mu}(s_{H}=\sp|o_{\hs}=\ss,a_{\hs:H-1}=(\acs,\as))
    =
    \P_{\theta,\mu}(s_{\hs+1}=\sp|o_{\hs}=\ss,a_{\hs}=\acs)=\epsilon.
\end{align*}
Hence, by the definition of $\O_H(\cdot|\sp)$ and $\O_H(\cdot|\sq)$, we have
\begin{align*}
    \PP_{\theta,\mu}(o_{H}=o|\tau_{H-1})
    =&
    \begin{cases}
        \frac{1+2\epsilon}{4}, & o=\oba,\\
        \frac{3-2\epsilon}{4}, & o=\obb.\\
    \end{cases}
\end{align*}
On the other hand, clearly $\PP_{0}(o_{H}=\oba|\tau_{H-1})=\frac14, \PP_{0}(o_{H}=\obb|\tau_{H-1})=\frac34$.
Therefore, in this case, we have
\begin{align*}
    I(\tau_{H-1})
    =
    \EE_0\brcond{\frac{\PP_{\theta,\mu}(o_{H}|\tau_{H-1})\PP_{\theta,\mu'}(o_{H}|\tau_{H-1})}{\PP_{0}(o_{H}|\tau_{H-1})^2}}{\tau_{H-1}}
    =
    \frac14\times \paren{ \frac{(1+2\epsilon)/4}{1/4}}^2+\frac34\times \paren{ \frac{(3-2\epsilon)/4}{3/4}}^2=1+\frac43 \epsilon^2.
\end{align*}

Combining the two cases above, we obtain
\begin{align*}
    I(\tau_l)=\begin{cases}
        1+\frac{\epsilon^2\sigma^2}{K}\iprod{\mu}{\mu'}, & l\in\cH_{>\hs}, \tau_l\in E_{\rev,l}^\theta,\\
        1+\frac43 \epsilon^2, & l=H-1, \tau_{H-1}\in\Ecor^\theta,\\
        1, & \text{otherwise}.
    \end{cases}
\end{align*}
Hence, for each $t\in[\stoptime]$,
\begin{align*}
    \sum_{l=0}^{H-1} \log I(\taut_{l})
    =&~
    \sum_{l=0}^{H-1} \IIc{l\in\cH_{>\hs}, \taut_l\in E_{\rev,l}^\theta}\cdot \log\paren{1+\frac{\epsilon^2\sigma^2}{K}\iprod{\mu}{\mu'}}+\IIc{\taut_{H-1}\in\Ecor^\theta}\cdot \log\paren{1+\frac43 \epsilon^2}\\
    \leq&~ 
    \sum_{l\in\cH_{>\hs}} \IIc{\taut_l\in E_{\rev,l}^\theta}\cdot \frac{\epsilon^2\sigma^2}{K}\iprod{\mu}{\mu'}
    +\IIc{\taut_{H-1}\in\Ecor^\theta}\cdot \frac43 \epsilon^2\\
    =&~
    \IIc{\taut_H\in E_{\rev}^\theta}\cdot \frac{\epsilon^2\sigma^2}{K}\iprod{\mu}{\mu'}+\IIc{ \taut_H\in\Ecor^\theta} \cdot \frac43 \epsilon^2,
\end{align*}
where the last equality is because $\Erev^\theta=\bigsqcup_{l:l\in\cH_{>\hs}} E_{\rev,l}^\theta$. 
Taking summation over $t\in[\stoptime]$, we obtain 
\begin{align*}
    \sum_{t=1}^\stoptime \sum_{l=0}^{H-1} \log I(\taut_{l})
    \leq&~ 
    \sum_{t=1}^\stoptime \IIc{ \taut_H\in\Ecor^\theta} \cdot \frac43 \epsilon^2+ \IIc{\taut_H\in E_{\rev}^\theta}\cdot \frac{\epsilon^2\sigma^2}{K}\iprod{\mu}{\mu'}\\
    =&~ 
    N(\Ecor^\theta)\cdot \frac43 \epsilon^2+ N(\Erev^\theta)\cdot \frac{\epsilon^2\sigma^2}{K}\iprod{\mu}{\mu'}\\
    \leq&~
    \oN_r\cdot \frac43 \epsilon^2+ \oN_o\cdot \frac{\epsilon^2\sigma^2}{K}\abs{\iprod{\mu}{\mu'}}.
\end{align*}
Plugging the above inequality into \eqref{eqn:no-regret-ingster-prod-to-i} completes the proof of \cref{lemma:no-regret-MGF}. 
\qed

\subsection{Regret calculation for hard instance in Section~\ref{section:proof-overview}}
\label{appdx:heuristic_regret}

For the hard instance presented in~\cref{section:proof-overview}, we notice that any algorithm either incurs a $\Omega(\epsilon T)$ regret, or must have successfully identified $(\hs,\ss,\acs)$ within $T$ episodes of play, which requires either at least $\Omega(SAH\times\sqrt{O}/(\sigma^2\eps^2))$ episodes of taking revealing actions, each being $\Theta(1)$-suboptimal, or at least $\Omega(SAH\times A^{\Theta(H)}/\eps^2)$ episodes of trying out all possible action sequences, each being $\Theta(\eps)$-suboptimal. This yields a regret lower bound 
\begin{align*}\textstyle
\Omega\Big(SAH\times \min \Big\{ \frac{\sqrt{O}}{\sigma^2\epsilon^2}, \frac{A^{\ThO{H}}}{\epsilon^2}\cdot \epsilon \Big\} \Big) \wedge \Omega(\eps T).
\end{align*}
Optimizing over $\eps>0$, we obtain a $\Omega(T^{2/3})$-type regret lower bound (for $T\ll A^{\cO(H)}$) similar as (though slightly worse rate than)~\cref{thm:no-regret-demo}.

\section{Proof of Theorem \ref{thm:multi-step-pac-demo}}\label{appdx:multi-step-pac}

We first construct a family of hard instances in~\cref{appendix:construction-multi-step-pac}. We then state the PAC lower bound of this family of hard instances in~\cref{prop:multi-step-prop}.~\cref{thm:multi-step-pac-demo} then follows from combining~\cref{prop:multi-step-prop} with \cref{prop:no-regret-prop}.

\subsection{Construction of hard instances and proof of Theorem~\ref{thm:multi-step-pac-demo}}
\label{appendix:construction-multi-step-pac}

We consider the following family of $m$-step revealing POMDPs $\cM$ that admits a tuple of hyperparameters $(\epsilon,\sigma,n,m,K,L,H)$. All POMDPs in $\cM$ share the state space $\cS$, action space $\cA$, observation space $\cO$, and horizon length $H$, defined as following.
\begin{itemize}[topsep=0pt, itemsep=0pt]
    \item The state space $\cS=\Stree \bigsqcup_{j=1}^L \set{\sp^j, \sq^j, \ep^j, \eq^j, \termin^j}$, where $\Stree$ is a binary tree with level $n$ (so that $\abs{\Stree}=2^{n}-1$).  
    Let $s_0$ be the root of $\Stree$, and $\Sl$ be the set of leaves of $\Stree$, with $\abs{\Sl}=2^{n-1}$.
    \item The observation space $\cO=\Stree \bigsqcup \set{o_1^+,o_1^-,\cdots,o_K^+,o_K^-}\bigsqcup\set{\olock,\oba,\obb}\bigsqcup_{j=1}^L \set{\olock^j, \termin^j}$.
    \item The action space $\cA=\set{0,1,\cdots,A-1}$.
\end{itemize}
We further define $\acrev=0\in\cA, \Ac=\set{1,\cdots,A-1}.$

\paragraph{Model parameters} 
Each non-null POMDP model $M=M_{\theta,\mu}\in\cM \setminus \{ M_0\}$ is specified by parameters $(\theta, \mu)$, where $\mu\in\set{-1,+1}^{L\times K}$, and $\theta=(\hs,\ss,\acs,\as)$, where 
\begin{itemize}[topsep=0pt, itemsep=0pt]
    \item $\ss\in\Sl$, $\acs\in \Ac\defeq\set{1,\cdots,A-1}$.
    \item $\hs\in\cH\defeq \set{h=n+lm: l\in\Z_{\geq 0}, h<H}$.
    \item $\as=(\as_{\hs+1},\dots,\as_{H-1})\in\cA^{H-\hs-1}$ is an action sequence indexed by $\hs+1,\cdots,H-1$, such that when $h\in\cH$, we have $\as_h\neq\acrev$. We use $\Apass$ to denote the set of all such $\as$.
\end{itemize}
Our construction will ensure that, only at steps $h\in\cH$ and states $s_h\in\set{\sp,\sq}$, the agent can observe $\olock^j$ and take action $\acrev$ to transit to $\set{\ep^j,\eq^j}$.

For any POMDP $M_{\theta, \mu}$, its system dynamics $\PP_{\theta,\mu}\defeq \PP_{M_{\theta, \mu}}$ is defined as follows.

\paragraph{Emission dynamics} At state $s\in\Stree\cup \set{\termin}$, the agent always receives (the unique observation corresponding to) $s$ itself as the observation.
\begin{itemize}[topsep=0pt, itemsep=0pt]
    \item At state $\ep^j$, the emission dynamics is given by
    \begin{align*}
        \O_{\mu}(o_i^+|\ep^j)=\frac{1+\sigma\mu_{j,i}}{2K}, \qquad 
        \O_{\mu}(o_i^-|\ep^j)=\frac{1-\sigma\mu_{j,i}}{2K}, \qquad
        \forall i\in[K],
    \end{align*}
    where we omit the subscript $h$ because the emission distribution does not depend on $h$.
    \item At state $\eq^j$, the observation is uniformly drawn from $\cO_o\defeq \set{o_1^+,o_1^-,\cdots,o_K^+,o_K^-}$, i.e. $\O(\cdot|\eq^j)=\Unif(\cO_o)$.
    \item At states $s\in\{\sp^j,\sq^j\}$:
    \begin{itemize}
        \item For steps $h\in\cH$, the agent always receives $\olock^j$ as the observation.
        \item For steps $h\leq H-1$ that does not belong to $\cH$, the agent always receives $\olock$ as the observation.
        \item At step $H$, the emission dynamics at $\{\sp^j,\sq^j\}$ is given by
        \begin{align*}
            &\O_H(\oba|\sp^j)=\frac34, \qquad \O_H(\obb|\sp^j)=\frac14, \\
            &\O_H(\oba|\sq^j)=\frac14, \qquad \O_H(\obb|\sq^j)=\frac34.
        \end{align*}
    \end{itemize}
\end{itemize}

\paragraph{Transition dynamics} In each episode, the agent always starts at $s_0$. 
\begin{itemize}[topsep=0pt, itemsep=0pt]
    \item At any node $s\in\Stree\setminus \Sl$, there are three types of available actions: $\await=0$, $\aleft=1$ and $\aright=2$, such that the agent can take $\await$ to stay at $s$, $\aleft$ to transit to the left child of $s$ and $\aright$ to transit to the right child of $s$.
    \item At any $s\in\Sl$, the agent can take action $\await=0$ to stay at $s$ (i.e. $\PP(s|s,\await)=1$); otherwise, for $s\in\Sl$, $h\in[H-1]$, $a\neq \await$,
    \begin{align*}
        \PP_{h;\theta}(\sp^j|s,a)&=\frac{\epsilon}{L}\cdot \II(h=\hs,s=\ss,a=\acs), \\
        \PP_{h;\theta}(\sq^j|s,a)&=\frac{1}{L}-\frac{\epsilon}{L}\cdot \II(h=\hs,s=\ss,a=\acs).
    \end{align*}

    \item The states $s\in\{\ep^j,\eq^j\}$ always transit to $\termin^j$, regardless of the action taken.
    
    \item The states $\termin^1,\cdots,\termin^L$ are absorbing states.

    \item At states $s\in\{\sp^j,\sq^j\}$: %
\begin{itemize}
    \item For the step $h\in\cH$, we set
    \begin{align*}
        \PP_{h;\theta}(\ep^j|\sp^j,\acrev)=1, \qquad
        \PP_{h;\theta}(\eq^j|\sq^j,\acrev)=1.
    \end{align*}
    In words, at steps $h\in\cH$ and states $s\in\{\sp^j,\sq^j\}$ (corresponding to $o=\olock^j$), the agent can take action $\acrev$ to transit to $\{\ep^j,\eq^j\}$, respectively. Note that $\cH = \set{h=n+lm: h<H, l \in \Z_{\ge 0}}$, so we only allow the agent to take the reveal action $\acrev$ every $m$ steps, which ensures that our construction is $(m+1)$-step revealing.
    \item For $h\not\in\cH$ or $a\neq\acrev$, we set
    \begin{align*}
        \PP_{h;\theta}(\sp^j|\sp^j,a)=\II(a=\as_h), \qquad \PP_{h;\theta}(\sq^j|\sp^j,a)&=\II(a\neq\as_h),\\
        \PP_{h;\theta}(\sq^j|\sq^j,a)&=1.
    \end{align*}
\end{itemize}
\end{itemize}

\paragraph{Reward} The reward function is known (and only depends on the observation): at the first $H-1$ steps, no reward is given; at step $H$, we set $r_H(\oba)=1$, $r_H(\obb)=0$, 
$r_H(s_0)=(1+\epsilon)/4$, and $r_H(o)=0$ for any other $o\in\cO$. 

\paragraph{Reference model}
We use $M_0$ (or simply $0$) to refer to the null model (reference model). The null model $M_0$ has transition and emission the same as any non-null model, except that the agent always arrives at $\sq^j$ (with $j\sim\Unif([L])$) by taking any action $a\neq\await$ at $s \in\Sl$ and $h \in [H - 1]$ (i.e., $\PP_{h;M_0}(\sq^j|s,a) = \frac{1}{L}$ for any $s \in \Sl$, $a \in \Ac$, $h \in [H-1]$). In this model, states in $\{\sp^1,\ep^1,\cdots,\sp^L,\ep^L\}$ are all not reachable, and hence we do not need to specify the transition and emission dynamics at these states.

We summarize the results of the hard instances we construct in the following proposition, which we prove in~\cref{appendix:proof-multi-step-prop}.

\begin{proposition}\label{prop:multi-step-prop}
For given $\epsilon \in(0,0.1], \sigma\in(0,1]$, $m,n\geq 1$, $K,L\geq 1$, $H\geq 8n+m+1$, the above model class $\cM$ satisfies the following properties.

\begin{enumerate}[topsep=0pt, itemsep=0pt]
\item $\nS=2^n+5L$, $\nO=2^n+2K+2L+3$, $\nA=A$.

\item For each $M\in\cM$, $M$ is $(m+1)$-step revealing with $\arevmp(M)^{-1}\leq 1+\frac2\sigma$.

\item $\log\abs{\cM}\leq LK\log 2 + H\log A + \log(SAH)$.

\item Suppose algorithm $\fA$ interacts with the environment for $T$ episodes and returns $\piout$ such that
$$
\PP^{\fA}_M\paren{V_M^\star-V_M(\piout)<\frac{\epsilon}{8}}\geq \frac{3}{4}.
$$
for any $M\in\cM$. Then it must hold that
\begin{align*}
    T\geq \frac{1}{10000m}\min\set{
        \frac{\abs{\Sl}\sqrt{LK}A^{m}H}{\sigma^2\epsilon^2}, \frac{\abs{\Sl}A^{H/2}H}{\epsilon^2}
    }.
\end{align*}
\end{enumerate}
\end{proposition}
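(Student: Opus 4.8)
The plan is to follow the same three-layer architecture used for \cref{prop:1-step-pac-prop} and \cref{prop:no-regret-prop}: dispatch the bookkeeping claims, verify the revealing constant by block-diagonalization, and prove the sample-complexity bound (Claim~4) by Ingster's method, the new ingredient being an extra $\sqrt{L}$ coming from the $L$ parallel locks. Claims~1 and~3 are immediate: count $\nS,\nO,\nA$ and bound $|\cM|\le HSA^H\cdot 2^{LK}+1$. For Claim~2 I would mirror the proof of \cref{lemma:regret-rev-2-step}. For each step $h$, pick an action sequence $\a$ that plays the password up to the next step of $\cH$ and then plays $\acrev$ there; by \cref{lemma:rev-star-to-1} it suffices to bound $\loneone{\M_{h,\a}^{+}}$, and by \cref{lemma:rev-sep-obs} with the partition $\cS=\Stree\sqcup\bigsqcup_{j}\{\sp^{j},\sq^{j}\}\sqcup\bigsqcup_{j}\{\ep^{j},\eq^{j}\}\sqcup\{\termin^{j}\}_{j}$ it suffices to bound each block separately: the $\olock^{j}$ and $\termin^{j}$ tags make the $(m{+}1)$-step emission-action supports disjoint across $j$, and each $2$-column block $\M_{h,\a}(\{\sp^{j},\sq^{j}\})$, $\M_{h,\a}(\{\ep^{j},\eq^{j}\})$ has exactly the form in \cref{prop:1-step-pac-rev}, hence a left inverse of $(1\to1)$-norm at most $2/\sigma+1$; the tree/terminal columns each sum to one. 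This gives $\arevmp(M)^{-1}\le 2/\sigma+1$.

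For Claim~4, the first steps are standard and parallel \cref{appendix:proof-1-step-pac-prop,appendix:proof-no-regret-prop}: a value lemma analogous to \cref{lemma:no-regret-value-func} gives $V_M^\star-V_M(\pi)\ge\frac{\epsilon}{4}\PP_M^{\pi}(o_H=s_0)$ for $M\ne0$ and $V_0^\star-V_0(\pi)\ge\frac{\epsilon}{4}\PP_0^{\pi}(o_H\ne s_0)$, so with prior $\mu\sim\Unif(\{\pm1\}^{L\times K})$ the PAC guarantee forces $\DTV{\PP_0^{\fA},\Emu[\PP_{\theta,\mu}^{\fA}]}\ge\frac12$ for every $\theta$. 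Then apply \cref{lemma:ingster} and compute the one-step likelihood ratio $I(\tau_l)$: it equals $1$ except (a) on the reveal event at a step $l\in\cH_{>\hs}$, where $\tau_l$ already records the lock index $j$ via $o_l=\olock^{j}$ and $I(\tau_l)=1+\frac{\epsilon^2\sigma^2}{K}\iprod{\mu_{j,\cdot}}{\mu'_{j,\cdot}}$, and (b) on the correct-password event, where $I=1+\frac43\epsilon^2$. Via $\log(1+x)\le x$ the log-likelihood telescopes into $\frac{\epsilon^2\sigma^2}{K}\sum_{j=1}^{L}N_j^{\theta}\iprod{\mu_{j,\cdot}}{\mu'_{j,\cdot}}+\frac43\epsilon^2 N(E_{\corr}^{\theta})$, where $N_j^{\theta}$ counts episodes that reveal at entrance $\theta$ with correct prefix and observe lock $j$. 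Since $(N_j^{\theta})_j$ is $\PP_0$-measurable and independent of $(\mu,\mu')$, averaging over $\mu,\mu'$ conditionally on $(N_j^{\theta})_j$ and using the Rademacher MGF bound (Hoeffding, as in the proof of \cref{lemma:no-regret-alternative-weak}) gives $1+\chis{\Emu[\PP_{\theta,\mu}^{\fA}]}{\PP_0^{\fA}}\le\EE_0^{\fA}\big[\exp\!\big(\tfrac43\epsilon^2 N(E_{\corr}^{\theta})+\tfrac{\epsilon^4\sigma^4}{2K}\sum_{j}(N_j^{\theta})^2\big)\big]$.

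The crucial new step — and the one I expect to be the main obstacle — is converting the quadratic $\sum_j(N_j^{\theta})^2$ into the $\sqrt{LK}$ scaling. The key is that the agent cannot choose the lock a reveal lands in: entering from the good entrance sends the episode to $\sq^{j}$ with $j\sim\Unif([L])$ drawn afresh and independent of the past, and the lock stays fixed for the rest of the episode; hence $(N_j^{\theta})_j$ is dominated by the per-bin counts of a uniform balls-into-bins process with (an algorithm-controlled, stopping-time) number $R^{\theta}$ of balls — the episodes reaching entrance $\theta$ with correct prefix — which forces $\sum_j(N_j^{\theta})^2\le\sum_j(R_j^{\theta})^2$ with $\EE_0[\sum_j(R_j^{\theta})^2]\le\EE_0[R^{\theta}]+\EE_0[(R^{\theta})^2]/L$. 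I would then run a truncation argument in the spirit of \cref{lemma:1-step-pac-alternative}, now also stopping the algorithm when $R^{\theta}$ (resp.\ $N(E_{\corr}^{\theta})$, resp.\ $\sum_j(N_j^{\theta})^2$) exceeds a $\Theta(\delta^{-1})$ multiple of its $\PP_0$-expectation, so that on the stopped algorithm $\DTV\ge\frac13$ while $1+\chi^2\le\exp\!\big(O(\epsilon^2\oN_r)+O(\epsilon^4\sigma^4\oN_o^2/(LK))\big)$; combining with \cref{lemma:TV-to-chi} yields the dichotomy: either $\EE_0^{\fA}[N(E_{\corr}^{\theta})]\gtrsim 1/\epsilon^2$ or $\EE_0^{\fA}[N(E_{\rev}^{\theta})]\gtrsim\sqrt{LK}/(\epsilon^2\sigma^2)$. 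Making this triple truncation coexist cleanly — and controlling the $(R^{\theta})^2$-type expectations under the \emph{stopped} law rather than the original law — is where most of the technical care goes.

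The remaining combinatorics are essentially identical to \cref{appendix:proof-no-regret-prop}: apply the dichotomy with $\delta=\frac12$, weight the $E_{\corr}^{\theta}$ branch by $|\Apass|/A^{m-1}$ and the $E_{\rev}^{\theta}$ branch by one (here $\acrev$ is fixed, but the password-prefix freedom still contributes $A^{m-1}$, via a count analogous to \cref{lemma:password-number}), sum over $\as\in\Apass$, collapse the events into $\EE_0^{\fA}[\Nc{o_{\hs}=\ss,a_{\hs}=\acs}]$, and sum the disjoint entrance events over $\ss\in\Sl$, $\acs\in\Ac$, and $\hs\in\cH$ with $\hs\le n+m\floor{H/10m}$; this produces $T\geq\frac{1}{10000m}\min\{|\Sl|\sqrt{LK}A^{m}H/(\sigma^2\epsilon^2),\,|\Sl|A^{H/2}H/\epsilon^2\}$. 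Finally, \cref{thm:multi-step-pac-demo} follows by choosing $n,K,L$ and rescaling $\epsilon$ as in the proof of \cref{thm:no-regret-demo}, and taking the larger of this bound and the Claim~5 bound of \cref{prop:no-regret-prop}, which supplies the $SA$ alternative and hence the $(S^{1.5}\vee SA)$ factor.
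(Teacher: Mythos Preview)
Your overall architecture is right, and Claims 1--3 and the reduction to a TV lower bound go through as you describe. The gap is exactly at the step you flag as the main obstacle, and it is more serious than a truncation issue. After telescoping with $\log(1+x)\le x$ you get
\[
\sum_{t,l}\log I(\tau_l^{(t)}) \;\le\; \frac{\epsilon^2\sigma^2}{K}\sum_{j=1}^{L}N_j^{\theta}\,\langle\mu_{j,\cdot},\mu'_{j,\cdot}\rangle \;+\; \tfrac43\epsilon^2 N(E_{\corr}^{\theta}),
\]
but this bound is \emph{random in the trajectory through $(N_j^{\theta})_j$}. The Ingster identity $\EE_0[\text{product}\cdot e^{-\sum\log I}]=1$ lets you pull out only a \emph{deterministic} (given $\mu,\mu'$) upper bound on $\sum\log I$; a trajectory-dependent bound cannot be extracted from the martingale. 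Swapping $\EE_{\mu,\mu'}$ and $\EE_0^{\fA}$ does not help either: inside $\EE_0^{\fA}$ you are holding the full likelihood ratio (which depends on the actual observed symbols, not only on the counts $N_j^{\theta}$), so ``conditioning on $(N_j^{\theta})_j$'' does not reduce the inner $\EE_{\mu,\mu'}$ to a sub-Gaussian MGF of $\sum_j N_j^{\theta}\langle\mu_{j,\cdot},\mu'_{j,\cdot}\rangle$. Your claimed inequality $1+\chi^2\le \EE_0^{\fA}[\exp(\tfrac43\epsilon^2 N(E_{\corr}^{\theta})+\tfrac{\epsilon^4\sigma^4}{2K}\sum_j(N_j^{\theta})^2)]$ is therefore unjustified as written, and the subsequent balls-into-bins calculation never gets off the ground.

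The paper resolves this with a genuinely different device: it passes to an \emph{augmented} dynamics $\overline{\PP}_M$ that appends an extra observation $o_{H+1}=\oaug$ to each episode, equal to a fresh reveal sample whenever the episode lies in $E_{\reach}^{\theta}\setminus E_{\rev}^{\theta}$ and a dummy otherwise. Under $\overline{\PP}$, every episode in $E_{\reach}^{\theta}$ contributes exactly one reveal-type observation, and since the lock label $j$ is uniform over $[L]$ \emph{conditionally on $\tau_{\hs+m-1}$} (the reach decision is made before $o_{\hs+m}=\olock^{j}$ is seen), one can average over $j$ \emph{inside the single-episode} computation to get the per-episode bound $I\le 1+\frac{\epsilon^2\sigma^2}{LK}\langle\mu,\mu'\rangle$ (\cref{lemma:multi-step-single-episode}). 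Summed over episodes this is $N(E_{\reach}^{\theta})\cdot\frac{\epsilon^2\sigma^2}{LK}\langle\mu,\mu'\rangle$, which \emph{is} deterministic given $\mu,\mu'$ once $N(E_{\reach}^{\theta})\le\overline{N}_o$. Consequently the paper's dichotomy (\cref{lemma:multi-step-alternative}) is phrased in terms of $E_{\reach}^{\theta}$ rather than $E_{\rev}^{\theta}$, and the final combinatorics weights by $A^{m-1}$ via the password-prefix freedom in $E_{\reach}^{\theta}$, not via a separate reveal-action count.
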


\begin{proofof}[thm:multi-step-pac-demo]
We have to suitably choose parameters when applying \cref{prop:multi-step-prop}. More specifically, given $(S,O,A,H,\arev,m)$, we can let $m'=m-1$, and take $n\geq 1$ to be the largest integer such that $2^n\leq S/4$, and take $L=\floor{(S-2^n)/5}$, $K=\floor{\frac{O-2^n-2L-3}{2}}\geqsim O$ (because $O\geq S\geq 10$), $\epsilon'=\epsilon/8$, and $\sigma=\frac{2}{\arev^{-1}-1}\leq 1$. Applying \cref{prop:multi-step-prop} to the parameters $(\epsilon,\sigma,n,m',K,L,H)$, we obtain a model class $\cM$ of $m$-step $\arev$-revealing POMDPs, such that if there exists an algorithm $\fA$ that interacts with the environment for $T$ episodes and returns a $\piout$ such that 
$V_M^\star-V_M (\piout) <\epsilon$ with probability at least $3/4$ for all $M\in\cM$, then
\begin{align*}
    T\geq \frac{c_0}{m}\min\set{
        \frac{S^{3/2}O^{1/2}A^{m-1}H}{\arev^2\epsilon^2}, \frac{SA^{H/2}H}{\epsilon^2}
    },
\end{align*}
where $c_0$ is a universal constant. 

Furthermore, we can apply \cref{prop:no-regret-prop} (claim 5) instead, and similarly obtain a model class $\cM'$ of $m$-step $\arev$-revealing POMDPs, such that if there exists an algorithm $\fA$ that interacts with the environment for $T$ episodes and returns a $\piout$ such that 
$V_M^\star-V_M (\piout) <\epsilon$ with probability at least $3/4$ for all $M\in\cM'$, then
\begin{align*}
    T\geq \frac{c_0'}{m}\min\set{
        \frac{SO^{1/2}A^{m}H}{\arev^2\epsilon^2}, \frac{SA^{H/2}H}{\epsilon^2}
    },
\end{align*}
where $c_0'$ is a universal constant.

Combining these two cases completes the proof of \cref{thm:multi-step-pac-demo}.
\end{proofof}

\subsection{Proof of Proposition \ref{prop:multi-step-prop}}
\label{appendix:proof-multi-step-prop}

All propositions and lemmas stated in this section are proved in~\cref{appendix:proof-multi-step-revealing}-\ref{appendix:proof-multi-step-alternative}. %

Claim 1 follows directly by counting the number of states, observations, and actions in models in $\cM$. Claim 3 follows as we have $\abs{\cM}=\abs{\set{(h^\star,s^\star,a^\star,\a^\star)}}\times \abs{\set{\pm 1}^{L\times K}}+1\le HSA^H\times 2^{LK}$. Taking logarithm yields the claim.

Claim 2 follows from this lemma, which is proved in \cref{appendix:proof-multi-step-revealing}.
\begin{lemma}\label{lemma:multi-step-rev}
    For each $M\in\cM$, it holds that $\arevmp(M)^{-1}\leq \frac{2}{\sigma}+1$.
\end{lemma}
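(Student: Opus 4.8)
The plan is to follow the proof of \cref{lemma:regret-rev-2-step} essentially verbatim; the only new element is that one must also recover the lock index $j\in[L]$ from the observations. The toolkit is the same: the monotonicity \cref{lemma:rev-inc-step} (so it suffices to control $\min_{\M_{h,r}^+}\loneone{\M_{h,r}^+}$ for \emph{any} window length $r\le m+1$ of our choosing); the reduction \cref{lemma:rev-star-to-1} (which replaces this by $\min_{\M_{h,\a}^+}\loneone{\M_{h,\a}^+}$ for a single action sequence $\a$); the block decomposition \cref{lemma:rev-sep-obs} (if the states that can occur at step $h$ partition into groups with pairwise-disjoint observation supports under $\a$, the quantity is at most the worst $\gamma(\cdot)$ over the groups); and the estimate $\gamma(v)\le 1/\lone v$ for a column vector $v$, so any stochastic column has $\gamma\le 1$.

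First I would handle the regimes where no $\sp^j$ or $\ep^j$ is reachable, namely the reference model $M_0$ and every step $h\le\hs$ of a non-null model. If $h<n$ only tree nodes occur at step $h$ and they have pairwise-distinct emissions, giving $\gamma\le1$ at once. For $n\le h\le\hs$ the possible values of $s_h$ lie in $\Stree\sqcup\bigsqcup_j\set{\sq^j,\eq^j,\termin^j}$; I would take the least $l\in\cH$ with $l\ge h$ (which satisfies $l\le h+m-1$ because $h\ge n$ and $\cH$ is an arithmetic progression of step $m$), use a window of length $r=\max\set{l-h+1,2}\le m+1$, and choose $\a$ to avoid $\acrev$, so that every $\sq^j$ self-loops until step $l$ and emits the lock-indexed dummy $\olock^j$ there, while any $\eq^j$ reveals its index through the $\termin^j$ it moves to. Then $\Stree$, the $\set{\sq^j}$'s, the $\set{\eq^j}$'s, and the $\set{\termin^j}$'s are singleton blocks with pairwise-disjoint supports, each with $\gamma\le1$, so $\min_{\M_{h,m+1}^+}\nrmop{\M_{h,m+1}^+}\le1\le\tfrac2\sigma+1$.

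For the main case — a non-null $M_{\theta,\mu}$ with $\theta=(\hs,\ss,\acs,\as)$ and a step $\hs<h\le H-m$ — I would reproduce the argument of \cref{lemma:regret-rev-2-step}: take the least $l\in\cH$ with $l\ge h$, set $r=l-h+1\le m$, pass to the length-$(r+1)$ window $[h,l+1]$ via \cref{lemma:rev-inc-step}, and use $\a=(\as_h,\dots,\as_{l-1},\acrev)$; since $\as_{h'}\ne\acrev$ for $h'\in\cH$, every $\sp^j$ (resp.\ $\sq^j$) stays put through step $l$, after which $\acrev$ sends it to $\ep^j$ (resp.\ $\eq^j$), which emits an observation in $\cO_o$ and then collapses to $\termin^j$. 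Partitioning
\begin{align*}
\cS=\Big(\bigsqcup_{s\in\Stree}\set{s}\Big)\sqcup\Big(\bigsqcup_{j=1}^{L}\set{\sp^j,\sq^j}\Big)\sqcup\Big(\bigsqcup_{j=1}^{L}\set{\ep^j,\eq^j}\Big)\sqcup\Big(\bigsqcup_{j=1}^{L}\set{\termin^j}\Big),
\end{align*}
the blocks have pairwise-disjoint observation supports under $\a$ (a tree observation; a sequence carrying $\olock^j$; a sequence starting in $\cO_o$ and then carrying $\termin^j$; a single $\termin^j$), and on each block $\set{\sp^j,\sq^j}$ or $\set{\ep^j,\eq^j}$ the two-column submatrix of $\M_{h,\a}$ equals, up to a row permutation and appended zero rows, exactly the $2K\times2$ matrix $\big[\tfrac{\II_{2K}+\sigma\tmu_j}{2K},\,\tfrac{\II_{2K}}{2K}\big]$ with $\tmu_j=[\mu_{j,\cdot};-\mu_{j,\cdot}]$ analyzed in \cref{prop:1-step-pac-rev}, so $\gamma\le\tfrac2\sigma+1$ there and $\gamma\le1$ on the singletons. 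Combining \cref{lemma:rev-star-to-1}, \cref{lemma:rev-sep-obs} and \cref{lemma:rev-inc-step} then gives $\min_{\M_{h,m+1}^+}\nrmop{\M_{h,m+1}^+}\le\tfrac2\sigma+1$, hence $\arevmp(M)^{-1}\le\tfrac2\sigma+1$ by \cref{definition:m-step-revealing}.

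I expect the only real work to be bookkeeping: checking that the $L$ parallel locks have pairwise-disjoint observation supports along the chosen window — which is exactly what the construction provides, by making $\sp^j,\sq^j$ emit the lock-indexed $\olock^j$ at every step of $\cH$ and making $\ep^j,\eq^j$ collapse to the lock-indexed $\termin^j$ — together with a little care about the window length (short windows near the horizon or inside the tree are padded up to $m+1$ via \cref{lemma:rev-inc-step}, and the length is taken $\ge2$ when an $\eq^j$ must be told apart from some $\eq^{j'}$). The $2K\times2$ computation of $\gamma$, and everything else, is identical to \cref{prop:1-step-pac-rev} and \cref{lemma:regret-rev-2-step}.
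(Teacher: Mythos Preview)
Your proposal is correct and follows the same blueprint as the paper: the same toolkit (\cref{lemma:rev-inc-step}, \cref{lemma:rev-star-to-1}, \cref{lemma:rev-sep-obs}), the same partition of $\cS$ into tree singletons, lock pairs $\{\sp^j,\sq^j\}$ and $\{\ep^j,\eq^j\}$, and terminal singletons, and the same $2K\times2$ computation giving $\gamma\le\tfrac2\sigma+1$ on each pair. Your treatment of the null model and the $h\le\hs$ regime is in fact more careful than the paper's own argument: the paper asserts there that the single-step emissions already have pairwise-disjoint supports on $\cS'=\Stree\sqcup\bigsqcup_j\{\sq^j,\eq^j,\termin^j\}$, but when $L\ge2$ this fails at steps $h\notin\cH$ (all $\sq^j$ emit the common $\olock$) and at any step for the $\eq^j$ (all emit from $\cO_o$). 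Your use of a longer window reaching the next step of $\cH$ (so each $\sq^j$ is tagged by $\olock^j$) and of length at least $2$ (so each $\eq^j$ is tagged by $\termin^j$) is exactly what is needed to separate the $L$ parallel locks; the main case $h>\hs$ is handled identically in both proofs.
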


By our construction, we can relate the sub-optimality of the output policy to the TV distance between models (under the prior distribution of parameter $\mu\sim\Unif(\set{-1,+1}^{L\times K})$), by an argument similar to the one in \cref{appendix:proof-1-step-pac-prop}. We summarize the results in the following lemma, whose proof is omitted for succinctness. 
\begin{lemma}[Relating learning to testing]
In holds that
\begin{align*}
    V_M^\star-V_M(\pi)\geq \frac{\epsilon}{4}\PP^{\pi}_M\paren{o_H=s_0} \ \ \forall M\neq 0, \qquad \text{and}\qquad V_0^\star-V_0(\pi)\geq \frac{\epsilon}{4}\PP^{\pi}_0\paren{o_H\neq s_0}.
\end{align*}
Therefore, suppose that the algorithm $\fA$ outputs a policy $\piout$ such that $\PP^{\fA}_M\paren{V_M^\star-V_M(\piout)<\frac{\epsilon}{8}}\geq \frac{3}{4}$ for any model $M\in\cM$, then we have
\begin{align}\label{eqn:multi-step-TV-lower}
    \DTV{ \PP^{\fA}_0, \Emu\brac{\PP^{\fA}_{\theta,\mu}} } \geq \frac12, \qquad \forall \theta.
\end{align}
\end{lemma}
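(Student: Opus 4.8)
The statement has two essentially independent halves, and the plan is to prove each by a routine adaptation of calculations already in the paper, the only genuinely new bookkeeping being the $L$ parallel copies $\{\sp^j,\sq^j,\ep^j,\eq^j\}_{j\in[L]}$ of the lock/testing gadget.

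For the value-function inequalities I would run the argument of \cref{lemma:1-step-pac-value-func} (equivalently \eqref{eqn:no-regret-PAC-value-func}), summing over the index $j$. Since the reward depends only on the observation, for any $M$ and $\pi$ we have $V_M(\pi)=\tfrac{1+\epsilon}{4}\PP^\pi_M(o_H=s_0)+\PP^\pi_M(o_H=\oba)$, and decomposing through $\{s_H=\sp^j\}$, $\{s_H=\sq^j\}$,
\[
\PP^\pi_M(o_H=\oba)=\tfrac34\,\PP^\pi_M\big(s_H\in\{\sp^1,\ldots,\sp^L\}\big)+\tfrac14\,\PP^\pi_M\big(s_H\in\{\sq^1,\ldots,\sq^L\}\big).
\]
For $M=0$ the states $\sp^j$ are unreachable, so $V_0^\star=\tfrac{1+\epsilon}{4}$ (attained by staying at $s_0$), and the claim follows from $\PP^\pi_0(s_H\in\{\sq^1,\ldots,\sq^L\})\le\PP^\pi_0(o_H\neq s_0)$ exactly as in Case~1 of \cref{lemma:1-step-pac-value-func}. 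For $M=M_{\theta,\mu}\neq 0$, each $\sp^j$ is reached only through the hot entrance and with probability $\epsilon/L$ per copy, so $\PP^\pi_M(s_H\in\{\sp^1,\ldots,\sp^L\})\le\epsilon\,\PP^\pi_M(o_{\hs}=\ss,a_{\hs}=\acs)\le\epsilon\,\PP^\pi_M(o_H\neq s_0)$ (leaving $s_0$ at the hot entrance forces $o_H\neq s_0$), whence $V_M^\star=\tfrac{1+2\epsilon}{4}$; the bound $V_M^\star-V_M(\pi)\ge\tfrac\epsilon4\PP^\pi_M(o_H=s_0)$ then follows by the same rearrangement as in Case~2 of \cref{lemma:1-step-pac-value-func}, using $\PP^\pi_M(s_H\in\{\sp^1,\ldots,\sp^L\})+\PP^\pi_M(s_H\in\{\sq^1,\ldots,\sq^L\})\le\PP^\pi_M(o_H\neq s_0)$.

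For the TV consequence I would follow the argument around \eqref{eqn:1-step-pac-TV-lower}. Put $w(\pi):=\PP^\pi_0(o_H=s_0)$; because the tree transitions are deterministic and the agent never returns to $s_0$ once it leaves, $\PP^\pi_M(o_H=s_0)=w(\pi)$ for \emph{every} $M\in\cM$, so $w(\piout)$ is a deterministic function of the observed histories $\tau^{(1:T)}$. Combining the value-function bounds with the hypothesis $\PP^\fA_M(V_M^\star-V_M(\piout)<\epsilon/8)\ge 3/4$ gives $\PP^\fA_{\theta,\mu}(w(\piout)<1/2)\ge 3/4$ for every $\mu$ (from the $M\neq 0$ bound) and $\PP^\fA_0(w(\piout)<1/2)\le 1/4$ (from the $M=0$ bound, since $\{w(\piout)<1/2\}\subseteq\{w(\piout)\le 1/2\}$ and $\PP^\fA_0(w(\piout)>1/2)\ge 3/4$). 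Averaging the first estimate over $\mu\sim\Unif(\{\pm1\}^{L\times K})$ keeps it $\ge 3/4$, and the variational characterization of total variation applied to the event $\{w(\piout)<1/2\}$ yields $\DTV{\PP^\fA_0,\Emu\brac{\PP^\fA_{\theta,\mu}}}\ge 3/4-1/4=1/2$ for every $\theta$.

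There is no deep obstacle in this lemma; the step requiring the most care is pinning down the \emph{exact} values $V_0^\star=\tfrac{1+\epsilon}{4}$ and $V_M^\star=\tfrac{1+2\epsilon}{4}$ — i.e.\ that the claimed optimal policies (stay at $s_0$; resp.\ navigate to the hot entrance and then play the correct password $\as$) cannot be beaten by any other behaviour, in particular that taking a reveal action only ever leads to reward $0$ thereafter and that no state other than $s_0$ and the $\sp^j$'s carries positive reward at step $H$. All of this is immediate from the reward function and transition dynamics of the construction in \cref{appendix:construction-multi-step-pac}. (The substantive work — the Ingster-method / uniformity-testing computation that converts \eqref{eqn:multi-step-TV-lower} into the $\sqrt{LK}$-type sample-complexity bound — lies downstream of this lemma, in the remainder of the proof of \cref{prop:multi-step-prop}.)
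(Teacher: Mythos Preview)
Your proposal is correct and is precisely the argument the paper intends: the paper explicitly omits this proof ``for succinctness,'' pointing to the analogous computation in \cref{appendix:proof-1-step-pac-to-testing} and the TV-distance step around \eqref{eqn:1-step-pac-TV-lower}, and your write-up carries out exactly that adaptation (summing over the $L$ parallel lock copies). The only points worth tightening are cosmetic: when asserting $\PP^\pi_M(o_H=s_0)=w(\pi)$ for all $M$, you might note that $\{o_H=s_0\}=\{s_1=\cdots=s_H=s_0\}$ so the entire trajectory on this event lives in the model-independent part of the dynamics; and for $V_M^\star=\tfrac{1+2\epsilon}{4}$ the matching upper bound on $\PP^\pi_M\big(s_H\in\bigcup_j\{\sp^j\}\big)\le\epsilon$ follows from $\sum_j\PP^\pi_M(s_{\hs+1}=\sp^j)=\epsilon\,\PP^\pi_M(o_{\hs}=\ss,a_{\hs}=\acs)$, which you have.
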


In the following, we use \eqref{eqn:multi-step-TV-lower} to derive lower bounds of the expected visitation count of some good events, and then deduce a lower bound of $T$, giving the following lemma whose proof is contained in \cref{appendix:proof-multi-step-alternative}.
\begin{lemma}\label{lemma:multi-step-alternative}
    Fix a $\theta=(\hs,\ss,\acs,\as)$. We consider events
    \begin{align*}
        \Ereach^\theta&\defeq \set{o_{\hs}=\ss, a_{\hs:\hs+m-1}=(\acs,\as_{\hs+1:\hs+m-1})}, \\
        E_{\corr}^\theta&\defeq \set{o_{\hs}=\ss, a_{\hs:H-1}=(\acs,\as)}.
    \end{align*}
    Then for any algorithm $\fA$ with $\delta\defeq\DTV{\PP_{0}^{\fA},\Emu\brac{\PP_{\theta,\mu}^{\fA}}}>0$, we have
    \begin{align*}
        \text{either }\EE^{\fA}_0\brac{N(\Ereach^\theta)} \geq \frac{\delta^3\sqrt{LK}}{18\epsilon^2\sigma^2}-\frac\delta6, \text{  or }\EE^{\fA}_0\brac{N(\Ecor^\theta)}\geq \frac{\delta^3}{18\epsilon^2}-\frac\delta6.
    \end{align*}
\end{lemma}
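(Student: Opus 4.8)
The plan is to follow exactly the two-layer structure already used for \cref{lemma:no-regret-alternative} (which in turn mirrors \cref{lemma:1-step-pac-alternative}): first prove a ``weak'' version under the additional assumption that the visitation counts $N(\Ereach^\theta)$ and $N(\Ecor^\theta)$ are almost surely bounded by fixed constants $\oN_o, \oN_r$ under $\PP_0^\fA$, and then remove this assumption via the truncation/early-stopping argument with an exit criterion $\cexit$, invoking \cref{lemma:alg-stop} and Markov's inequality to lose only a constant factor in $\delta$. The truncation step is essentially verbatim the proof of \cref{lemma:no-regret-alternative} with $\Erev^\theta$ replaced by $\Ereach^\theta$ and $\sqrt{K}$ replaced by $\sqrt{LK}$, so I would state it briefly and refer back.

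For the weak version, the core is again Ingster's method (\cref{lemma:ingster}), which reduces bounding $\chis{\Emu[\PP_{\theta,\mu}^\fA]}{\PP_0^\fA}$ to bounding the $\chi^2$-inner product $\EE_0^\fA[\prod_{t=1}^{\stoptime} \P_{\theta,\mu}(\taut)\P_{\theta,\mu'}(\taut)/\P_0(\taut)^2]$ for fixed $\mu,\mu'$, which by \cref{lemma:martingale-eqn} (i.e.\ \cref{eqn:ingster-prod-to-i}) amounts to computing the per-step factor $I_{M,M'}(\tau_l)$. The key computation: $I(\tau_l)=1$ unless $\PP_{\theta,\mu}(\cdot|\tau_l)\neq\PP_0(\cdot|\tau_l)$, which (as in \cref{appendix:proof-no-regret-MGF}) forces $o_{\hs}=\ss$, $a_{\hs}=\acs$, and either the trajectory is ``revealing'' at some $l\in\cH_{>\hs}$ having followed the prefix $a_{\hs+1:l-1}=\as_{\hs+1:l-1}$ before taking $a_l=\acrev$, or $l=H-1$ with $a_{\hs+1:H-1}=\as$. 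The new wrinkle compared with \cref{lemma:no-regret-MGF} is the parallel-lock structure: on a revealing step the next state is $\ep^j$ (resp.\ $\eq^j$) with $j\sim\Unif([L])$, and $\ep^j$ emits $o_i^\pm$ with probability $(1\pm\sigma\mu_{j,i})/(2K)$ while $\eq^j$ emits uniformly; since $\PP_{\theta,\mu}(s_{l+1}=\ep^j|\tau_l) = \epsilon/L$ for each $j$, one gets $\PP_{\theta,\mu}(o_{l+1}=o_i^\pm|\tau_l) = (1 \pm \epsilon\sigma\mu_{j,i})/(2K)$ \emph{conditioned on landing in block $j$}. Averaging $I(\tau_l) = \EE_0[\cdots]$ over the uniform choice of block $j$ (which the reference model also draws uniformly) and over $o_{l+1}$, the cross terms yield $I(\tau_l) \le 1 + \frac{\epsilon^2\sigma^2}{LK}\sum_{j,i}\mu_{j,i}\mu'_{j,i} = 1+\frac{\epsilon^2\sigma^2}{LK}\iprod{\mu}{\mu'}$, with $\iprod{\mu}{\mu'}$ now over $LK$ coordinates; the step-$H$ factor is $1+\tfrac43\epsilon^2$ on $\Ecor^\theta$ exactly as before. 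Note that here the $\olock$-vs-$\olock^j$ observation bookkeeping means the relevant revealing event is $\Ereach^\theta$ (reaching the lock block with the right length-$(m-1)$ prefix) — this is why $\Ereach^\theta$, not a set indexed by the exact revealing step, is the right object; one shows $\sum_{l\in\cH_{>\hs}}\IIc{\taut_l \in E_{\rev,l}^\theta} \le N(\Ereach^\theta)$ for a suitable per-step event, since after reaching $\{\sp^j,\sq^j\}$ the process stays there until $\acrev$ is played. Summing $\log I$ over $t \le \stoptime$ and plugging into \cref{eqn:ingster-prod-to-i} gives the analogue of \cref{eqn:no-regret-MGF} with $K \rightsquigarrow LK$; then taking $\mu,\mu'\sim\Unif(\{\pm1\}^{L\times K})$, applying Hoeffding's lemma and \cref{lemma:MGF-abs} to the $LK$ i.i.d.\ signs, and finishing with \cref{lemma:TV-to-chi} yields the dichotomy $\oN_o \gtrsim \delta^2\sqrt{LK}/(\epsilon^2\sigma^2)$ or $\oN_r \gtrsim \delta^2/\epsilon^2$.

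The main obstacle — and the only genuinely new point over \cref{lemma:no-regret-MGF} — is verifying that the $\sqrt{L}$ gain is real: one must check that (i) the reference model $M_0$ and every $M_{\theta,\mu}$ draw the block index $j$ from the \emph{same} $\Unif([L])$, so that $I(\tau_l)$ really is a $j$-average of the single-block $\chi^2$ factors rather than something degenerate, and (ii) the emissions from distinct blocks $\ep^j, \eq^j$ live on disjoint observation supports only through the $j$-label carried by $\olock^j$ at the entry step, not at the emission step (the $o_i^\pm$ are shared across blocks), so the averaging genuinely couples all $LK$ coordinates of $\mu$ into one inner product. Once the conditional law $\PP_{\theta,\mu}(o_{l+1}=\cdot|\tau_l) = \frac1L\sum_j (\text{block-}j\text{ law})$ is written correctly, the rest is the same algebra as \cref{appendix:proof-no-regret-MGF}. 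I would present the weak lemma, its proof via a ``Bound on the $\chi^2$-inner product'' sub-lemma analogous to \cref{lemma:no-regret-MGF}, and then the one-paragraph truncation reduction, keeping routine manipulations compressed by citing the parallel arguments in \cref{appdx:no-regret}.
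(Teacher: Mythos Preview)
Your two-layer structure (weak version via Ingster plus truncation via early stopping) matches the paper, and the truncation step is indeed verbatim from \cref{lemma:no-regret-alternative}. The gap is in the core computation of $I(\tau_l)$ for the weak version, and it is exactly the point you flag as ``the only genuinely new point'': your verification of the $\sqrt{L}$ gain is incorrect.

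You claim $\PP_{\theta,\mu}(s_{l+1}=\ep^j\mid\tau_l)=\epsilon/L$ for each $j$, and hence that $I(\tau_l)$ is a $j$-average yielding $1+\tfrac{\epsilon^2\sigma^2}{LK}\iprod{\mu}{\mu'}$. But in this construction the agent \emph{observes} $o_{\hs+m}=\olock^j$ at the first step in $\cH$ after entering the lock, so for any revealing step $l\in\cH_{>\hs}$ the history $\tau_l$ already determines $j$. Consequently $I(\tau_l)=1+\tfrac{\epsilon^2\sigma^2}{K}\iprod{\mu_j}{\mu'_j}$ for that specific $j$, with no averaging. If you then sum $\log I$ and bound pathwise, you get a factor $\tfrac{\epsilon^2\sigma^2}{K}\lvert\iprod{\mu_{j_t}}{\mu'_{j_t}}\rvert$ per episode; averaging over the uniformly random $j_t$ turns this into $\tfrac{\epsilon^2\sigma^2}{LK}\sum_j\lvert\iprod{\mu_j}{\mu'_j}\rvert$, whose mean under $\mu,\mu'\sim\Unif$ is of order $\tfrac{\epsilon^2\sigma^2}{\sqrt{K}}$---the absolute values destroy the cross-block cancellation, and you recover only $\sqrt{K}$, not $\sqrt{LK}$.

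The paper's fix is an \emph{augmentation} trick: append to each episode an extra observation $\oaug$, drawn from the block-$j$ revealing distribution $(1\pm\epsilon\sigma\mu_{j,i})/(2K)$ whenever $\tau_H\in\Ereach^\theta\setminus\Erev^\theta$ (and a dummy otherwise). By data processing, the TV distance in the augmented model $\oP$ dominates the original, so it suffices to bound the $\chi^2$ there. The point of the augmentation is that now, conditional on $\tau_{\hs+m}\in\Ereach^\theta\cap\cL_j$, the per-episode ratio (after factoring out the $\Ecor$ term) equals \emph{exactly} $1+\tfrac{\epsilon^2\sigma^2}{K}\iprod{\mu_j}{\mu'_j}$, regardless of whether the agent chose to play $\acrev$. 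Because this is an equality rather than an inequality, one can now condition one step earlier on $\tau_{\hs+m-1}$ (where $j$ is not yet revealed and is $\Unif([L])$ under both $\oP_0$ and $\oP_{\theta,\mu}$) and average over $j$ to obtain the \emph{signed} factor $1+\tfrac{\epsilon^2\sigma^2}{LK}\iprod{\mu}{\mu'}$; it is precisely this signed full inner product over $LK$ coordinates that yields $\sqrt{LK}$ after the Hoeffding/MGF step. Without the augmentation, the agent's freedom to take or not take $\acrev$ (a decision that can depend on $j$) blocks this equality, and the sign cancellation is lost.
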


Applying \cref{lemma:multi-step-alternative} for any parameter tuple $\theta=(\hs,\ss,\acs,\as)$ with $\delta=\frac12$, we obtain 
\begin{align}\label{eqn:multi-step-pac-alternative}
    \text{either }\EE^{\fA}_0\brac{\Nc{\Ereach^{(\hs,\ss,\acs,\as)}}} \geq \frac{\sqrt{LK}}{300\epsilon^2\sigma^2}, \quad \text{  or } \quad \EE^{\fA}_0\brac{\Nc{\Ecor^{(\hs,\ss,\acs,\as)}}}\geq \frac{1}{300\epsilon^2},
\end{align}
by our choice that $\epsilon\in(0,0.1]$.

Fix a tuple $(\hs,\ss,\acs)$ such that $\hs\in\cH$ and $\hs\leq n+m\floor{H/10m}$, $\ss\in\Sl$, $\acs\in\Ac$. By \eqref{eqn:multi-step-pac-alternative}, we know that for all $\as\in\Apass$, it holds that
\begin{align}\label{eqn:multi-step-pac-alternative-sum}
\begin{aligned}
    &~A^{m-1}\cdot\EE^{\fA}_0\brac{N\paren{\Ereach^{(\hs,\ss,\acs,\as)}}} + \abs{\Apass}\cdot\EE^{\fA}_0\brac{N\paren{\Ecor^{(\hs,\ss,\acs,\as)}}} \\
    \geq&~ \frac1{300}\min\set{\frac{A^{m-1}\sqrt{LK}}{\epsilon^2\sigma^2}, \frac{\abs{\Apass}}{\epsilon^2}}
    \geq \frac1{300}\min\set{\frac{A^{m-1}\sqrt{LK}}{\epsilon^2\sigma^2}, \frac{A^{H/2-1}}{\epsilon^2}}=:\omega,
\end{aligned}
\end{align}
where the last inequality uses the fact that $\abs{\Apass}\geq A^{H/2-1}$ for $\hs\leq n+m\floor{H/10m}$, which follows from a direct calculation (\cref{lemma:password-number}).
Notice that by our definition of $\Ereach$,
\begin{align*}
    \sum_{\as\in\cA^{H-\hs-1}}\EE^{\fA}_0\brac{N\paren{\Ereach^{(\hs,\ss,\acs,\as)}}}
    =&
    \sum_{\as\in\Apass} \EE^{\fA}_0\brac{\Nc{o_{\hs}=\ss, a_{\hs:\hs+m-1}=(\acs,\as_{\hs+1:\hs+m-1})}}\\
    =&
    \sum_{\a\in\cA^{m-1}} \EE^{\fA}_0\brac{\Nc{o_{\hs}=\ss, a_{\hs:\hs+m-1}=(\acs,\a)}}\cdot \sum_{\substack{\as\in\Apass\\
    \as\text{ begins with }\a}} 1\\
    =&
    \sum_{\a\in\cA^{m-1}} \EE^{\fA}_0\brac{\Nc{o_{\hs}=\ss, a_{\hs:\hs+m-1}=(\acs,\a)}}\cdot\frac{\abs{\Apass}}{A^{m-1}}\\
    =&
    \EE^{\fA}_0\brac{\Nc{o_{\hs}=\ss, a_{\hs}=\acs}}\cdot\frac{\abs{\Apass}}{A^{m-1}}.
\end{align*}
Similarly, by our definition of $\Ecor$, we have
\begin{align*}
    \sum_{\as\in\Apass} \EE^{\fA}_0\brac{N\paren{\Ecor^{(\hs,\ss,\acs,\as)}}}
    =&~
    \sum_{\as\in\Apass} \EE^{\fA}_0\brac{N\paren{o_{\hs}=\ss, a_{\hs:H-1}=(\acs,\as)}}\\
    =&~
    \EE^{\fA}_0\brac{N\paren{o_{\hs}=\ss, a_{\hs}=\acs,a_{\hs+1:H-1}\in\Apass}}
    \leq \EE^{\fA}_0\brac{\Nc{o_{\hs}=\ss, a_{\hs}=\acs}}.
\end{align*}
Therefore, taking average of~\cref{eqn:multi-step-pac-alternative-sum} over all $\a\in\Apass$ and using the equations above, we get
\begin{align*}
    \omega
    \leq& 
    \frac{1}{\abs{\Apass}}\sum_{\a\in\Apass} \brac{ A^{m-1}\cdot\EE^{\fA}_0\brac{N\paren{\Ereach^{(\hs,\ss,\acs,\as)}}} + \abs{\Apass}\cdot\EE^{\fA}_0\brac{N\paren{\Ecor^{(\hs,\ss,\acs,\as)}}} }\\
    \leq&
    2~\EE^{\fA}_0\brac{N\paren{o_{\hs}=\ss, a_{\hs}=\acs}}.
\end{align*}

Now, we have shown that $\EE^{\fA}_0\brac{N\paren{o_{\hs}=\ss, a_{\hs}=\acs}}\geq \frac{\omega}{2}$ for each $\ss\in\Sl, \acs\in\Ac$, $\hs\in\cH$ such that $\hs\leq n+m\floor{H/10m}$. Taking summation over all such $(\hs,\ss,\acs)$, we derive that
\begin{align*}
    \frac{\abs{\Sl}\abs{\Ac} \paren{\floor{H/10m}+1}}{600}\min\set{\frac{A^{m-1}\sqrt{LK}}{\epsilon^2\sigma^2}, \frac{A^{H/2-1}}{\epsilon^2}} \leq \sum_{\ss\in\Sl}\sum_{\acs\in\Ac}\sum_{\substack{\hs=n+lm:\\0\leq l\leq \floor{H/10m}}} \EE^{\fA}_0\brac{N\paren{o_{\hs}=\ss, a_{\hs}=\acs}}\leq T,
\end{align*}
where the second inequality is because events $\paren{\set{o_{\hs}=\ss, a_{\hs}=\acs}}_{\hs,\ss,\acs}$ are disjoint. Plugging in $\abs{\Ac}=A-1\geq \frac23A$, $\floor{H/10m}+1\geq H/10m$ completes the proof of~\cref{prop:multi-step-prop}.
\qed

\subsection{Proof of Lemma \ref{lemma:multi-step-rev}}\label{appendix:proof-multi-step-revealing}

The proof is very similar to the proof of \cref{lemma:regret-rev-2-step}, with only slight modification.

\textbf{Case 1:} We first show that the null model $0$ is 1-step 1-revealing. In this model, the states in $\set{\sp^1,\ep^1,\cdots,\sp^L,\ep^L}$ are all not reachable, and hence for each step $h$, we consider the set $\cS'=\Stree\sqcup\bigsqcup_{j=1}^L\set{\sq^j,\eq^j,\termin^j}$. For different states $s,s'\in\cS'$, the support of $\O_h(\cdot|s)$ and $\O_h(\cdot|s')$ are disjoint by our construction, and hence applying \cref{lemma:rev-sep-obs} gives
\begin{align*}
    \min_{\O_{h}^+}\loneone{\O_{h}^+}\leq \max_{s\in\cS'} \gamma\paren{\O_{h}(s)}\leq 1.
\end{align*}
Applying \cref{prop:m-step-mp1-step} completes the proof for null model $0$.

\textbf{Case 2:} We next consider the model $M=M_{\theta,\mu}\in\cM$. By our construction, for $h\leq \hs$, the states in $\set{\sp^1,\ep^1,\cdots,\sp^L,\ep^L}$ are all not reachable, and hence by the same argument as in the null model, we obtain
$$
\min_{\M_{h,m+1}^+}\nrmop{\M_{h,m+1}^+} \leq \min_{\O_{h}^+}\loneone{\O_{h}^+}\leq 1.
$$

Hence, we only need to bound the quantity $\min_{\M_{h,m+1}^+}\nrmop{\M_{h,m+1}^+}$ for a fixed step $h>\hs$. In this case, there exists a $l\in\cH$ such that $h\leq l\leq h+m-1$, and we write $r=l-h+1$. By \cref{lemma:rev-inc-step}, we only need to bound $\min_{\M_{h,r+1}^+}\nrmop{\M_{h,r+1}^+}$. Consider the action sequence $\a=(\as_{h:l-1},\acrev)\in\cA^r$, and we partition $\cS$ as
\begin{align*}
    \cS=\bigsqcup_{s\in\Stree}\set{s} \sqcup \bigsqcup_{j=1}^L \{\sp^j,\sq^j\} \sqcup \{\ep^j,\eq^j\} \sqcup \set{\termin^j}.
\end{align*}
It is direct to verify that, in $M_{\theta,\mu}$, for states $s,s'$ come from different subsets in the above partition, the support of $\M_{h,\a}(\cdot|s)$ and $\M_{h,\a}(\cdot|s')$ are disjoint.
Then, we can apply \cref{lemma:rev-star-to-1} and \cref{lemma:rev-sep-obs}, and obtain
\begin{align*}
    \min_{\M_{h,r+1}^+}\nrmop{\M_{h,r+1}^+}\leq \min_{\M_{h,\a}^+} \loneone{\M_{h,\a}^+}
    \leq \max_j\set{1,  \gamma\paren{\M_{h,\a}(\{\sp^j,\sq^j\})}, \gamma\paren{\M_{h,\a}(\{\ep^j,\eq^j\})} }.
\end{align*}
Therefore, in the following we only need to consider left inverses of the matrix $\M_{h,\a}(\{\sp^j,\sq^j\})$ and $\M_{h,\a}(\{\ep^j,\eq^j\})$ for each $j\in[L]$.

(1) The matrix $\M_{h,\a}(\{\sp^j,\sq^j\})$. By our construction, taking $\a$ at $s_h=\sp^j$ will lead to $o_{h:l-1}=\olock$, $o_l=\olock^j$ and $o_{l+1}\sim \O_{\mu}(\cdot|\ep^j)$;  taking $\a$ at $s_h=\sq^j$ will lead to $o_{h:l-1}=\olock$, $o_l=\olock^j$ and $o_{l+1}\sim \O_{\mu}(\cdot|\eq^j)$. Hence, $\M_{h,\a}(\{\sp^j,\sq^j\})$ can be written as (up to permutation of rows)
\begin{align*}
    \M_{h,\a}(\{\sp^j,\sq^j\})=\begin{bmatrix}
        \frac{\II_{2K}+\sigma \tmu_j}{2K} & \frac{\II_{2K}}{2K}\\
        \mathbf{0} & \mathbf{0}
\end{bmatrix}\in\R^{\cO^{r+1}\times2},
\end{align*}
where $\tmu_j=[\mu_j;-\mu_j]\in\set{-1,1}^{2K}$, $\II=\II_{2K}$ is the vector in $\R^{2K}$ with all entry being one. Similar to \cref{prop:1-step-pac-rev}, we can directly verify that $\gamma\paren{\M_{h,\a}(\set{\sp,\sq})}\leq \frac{2}{\sigma}+1$.

(2) The matrix $\M_{h,\a}(\{\ep^j,\eq^j\})$. By our construction, at $s_h=\ep^j$, we have $o_h\sim \O_{\mu}(\cdot|\ep^j)$ and $o_{h+1:l+1}=\termin^j$; at $s_h=\eq$, we have $o_h\sim \O_{\mu}(\cdot|\eq^j)$ and $o_{h+1:l+1}=\termin^j$. Thus, $\M_{h,\a}(\{\sp^j,\sq^j\})$ can also be written as
\begin{align*}
    \M_{h,\a}(\{\ep^j,\eq^j\})=\begin{bmatrix}
        \frac{\II_{2K}+\sigma \tmu_j}{2K} & \frac{\II_{2K}}{2K}\\
        \mathbf{0} & \mathbf{0}
\end{bmatrix}\in\R^{\cO^{r+1}\times2},
\end{align*}
and hence we also have $\gamma\paren{\M_{h,\a}(\{\ep^j,\eq^j\})}\leq \frac{2}{\sigma}+1$.

Combining the two cases above gives
\begin{align*}
    \min_{\M_{h,m+1}^+}\nrmop{\M_{h,m+1}^+}
    \leq \min_{\M_{h,r+1}^+}\nrmop{\M_{h,r+1}^+}\leq \min_{\M_{h,\a}^+} \loneone{\M_{h,\a}^+}
    \leq \frac{2}{\sigma}+1,
\end{align*}
and hence completes the proof of \cref{lemma:multi-step-rev}.
\qed

\subsection{Proof of Lemma \ref{lemma:multi-step-alternative}}\label{appendix:proof-multi-step-alternative}

Similar to the proof of \cref{lemma:no-regret-alternative}, we only need to show the following lemma, and the proof of \cref{lemma:multi-step-alternative} follows by a reduction argument (see \cref{appendix:proof-no-regret-alternative}).

\begin{lemma}\label{lemma:multi-step-alternative-weak}
    Suppose that algorithm $\fA$ (with possibly random stopping time $\stoptime$) satisfies $N(\Ereach^\theta)\leq \oN_o$ and $N(\Ecor^\theta)\leq\oN_r$ almost surely, for some fixed $\oN_o,\oN_r$. Then 
    \begin{align*}
        \text{either }\oN_o \geq \frac34\frac{\delta^2\sqrt{LK}}{\epsilon^2\sigma^2}, \text{  or }\oN_r\geq \frac34\frac{\delta^2}{\epsilon^2},
    \end{align*}
    where $\delta=\DTV{\PP_{0}^{\fA},\Emu\brac{\PP_{\theta,\mu}^{\fA}}}$.
\end{lemma}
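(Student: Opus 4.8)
The plan is to mirror the proof of~\cref{lemma:no-regret-alternative-weak}. First I would invoke Ingster's method (\cref{lemma:ingster}) to rewrite $1+\chis{\Emu\brac{\PP_{\theta,\mu}^{\fA}}}{\PP_{0}^{\fA}}=\Emumu\EE_{0}^{\fA}\brac{\prod_{t=1}^{\stoptime}\tfrac{\PP_{\theta,\mu}(\taut)\PP_{\theta,\mu'}(\taut)}{\PP_{0}(\taut)^2}}$, reducing the lemma to a bound on this $\chi^2$-inner product. Concretely I would prove the analogue of~\cref{lemma:no-regret-MGF}: for every fixed $\theta$ and $\mu,\mu'\in\set{-1,1}^{L\times K}$,
\[
\EE_0^\fA\brac{\prod_{t=1}^{\stoptime}\frac{\P_{\theta,\mu}(\taut)\P_{\theta,\mu'}(\taut)}{\P_{0}(\taut)^2}}\le\exp\paren{\oN_o\cdot\frac{\epsilon^2\sigma^2}{LK}\abs{\iprod{\mu}{\mu'}}+\frac43\epsilon^2\oN_r},
\]
where now $\iprod{\mu}{\mu'}=\sum_{j\in[L],i\in[K]}\mu_{j,i}\mu'_{j,i}$ is a sum of $LK$ terms. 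Granting this bound, the remainder is identical to the proof of~\cref{lemma:no-regret-alternative-weak}: take $\Emumu$ of both sides, observe that the $LK$ products $\mu_{j,i}\mu'_{j,i}$ are i.i.d.\ $\unif(\set{\pm1})$ so Hoeffding's lemma makes $\iprod{\mu}{\mu'}$ a $\sqrt{LK}$-sub-Gaussian, apply~\cref{lemma:MGF-abs} to get $\Emumu\brac{\exp(\tfrac{\oN_o\epsilon^2\sigma^2}{LK}\abs{\iprod{\mu}{\mu'}})}\le\exp(\max\set{\tfrac{\oN_o^2\epsilon^4\sigma^4}{LK},\,\tfrac43\tfrac{\oN_o\epsilon^2\sigma^2}{\sqrt{LK}}})$, and combine with~\cref{lemma:TV-to-chi} and rearrange; since $\delta\le1$ this yields $\oN_o\ge\tfrac34\tfrac{\delta^2\sqrt{LK}}{\epsilon^2\sigma^2}$ or $\oN_r\ge\tfrac34\tfrac{\delta^2}{\epsilon^2}$. (As in~\cref{appendix:proof-no-regret-alternative}, the unconditional~\cref{lemma:multi-step-alternative} then follows from this ``weak'' version by a truncation argument against an exit criterion on $N(\Ereach^\theta)$ and $N(\Ecor^\theta)$.)

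The heart of the matter is the displayed $\chi^2$-inner product bound, which I would establish through~\cref{eqn:ingster-prod-to-i}: it suffices to bound $\sum_{t}\sum_{h}\log I(\taut_{h-1})$ with $I(\tau_l)=\EE_0\brcond{\tfrac{\P_{\theta,\mu}(o_{l+1}|\tau_l)\P_{\theta,\mu'}(o_{l+1}|\tau_l)}{\P_0(o_{l+1}|\tau_l)^2}}{\tau_l}$. Exactly as in the no-regret construction, $\P_{\theta,\mu}$ and $\P_0$ differ only at the transition out of $(\ss,\acs)$ at step $\hs$ and at the components $\set{\sp^j,\ep^j}$, so $I(\tau_l)=1$ unless $\tau_l$ lies in a ``reveal'' event $E^{\theta}_{\rev,l,j}=\set{o_{\hs}=\ss,\ a_{\hs:l}=(\acs,\as_{\hs+1:l-1},\acrev),\ o_l=\olock^j}$ for some $l\in\cH_{>\hs}$ and $j\in[L]$ --- in which case the posterior probability of $\sp^j$ is exactly $\epsilon$ and a direct computation (as in~\cref{lemma:no-regret-MGF}) gives $I(\tau_l)=1+\tfrac{\epsilon^2\sigma^2}{K}\iprod{\mu_j}{\mu'_j}$ --- or $\tau_{H-1}\in\Ecor^\theta$, where $I(\tau_{H-1})=1+\tfrac43\epsilon^2$. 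The key new point is that the lock $j$ seen in a reveal is \emph{not chosen by the algorithm}: under $\P_0$ the whole trajectory law is free of $\mu$, so the algorithm's decisions do not depend on $\mu$ and the lock visited in each episode is a fresh $\unif([L])$ draw. Averaging the reveal factor over that draw replaces $1+\tfrac{\epsilon^2\sigma^2}{K}\iprod{\mu_j}{\mu'_j}$ by $\tfrac1L\sum_j\paren{1+\tfrac{\epsilon^2\sigma^2}{K}\iprod{\mu_j}{\mu'_j}}=1+\tfrac{\epsilon^2\sigma^2}{LK}\iprod{\mu}{\mu'}$, i.e.\ the effective per-episode Ingster factor is that of uniformity testing over $\Theta(LK)$ (rather than $\Theta(K)$) elements. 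Iterating over episodes via~\cref{lemma:martingale-eqn}, together with the a.s.\ bounds $N(\Ereach^\theta)\le\oN_o$, $N(\Ecor^\theta)\le\oN_r$ and $\log(1+x)\le x$, then produces the claimed inequality.

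I expect the main obstacle to be making this ``$1/L$ improvement'' fully rigorous against an adaptive algorithm. The subtlety is that, although the algorithm cannot steer which lock it enters, it observes $\olock^j$ \emph{before} deciding whether to play the reveal action, so a priori its per-episode reveal probability $p_j$ could depend on $j$; carried naively through the computation this would leave $\sum_j p_j\iprod{\mu_j}{\mu'_j}$ rather than the clean $\iprod{\mu}{\mu'}$, losing a $\sqrt L$. The resolution is to use that each $p_j$ is $\mu$-free (hence uncorrelated with $\mu_j$) together with $p_j\le1$ and the second-order estimate $\exp(x)-1\le x+x^2$ for $\abs{x}\le1$; taking $\Emumu$ and exploiting independence of the $\mu_j$ across $j$, the cross terms contribute only $\sum_j p_j^2\le L$, which reconstitutes exactly the $\sqrt{LK}$ variance proxy. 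A clean fallback, if the direct bookkeeping becomes unwieldy, is to add a high-probability event under $\P_0$ controlling the per-lock reveal counts by $\sum_j N_j^2\lesssim\oN_o+\oN_o^2/L$ --- valid for any adaptive reveal rule by a Freedman-type argument on the i.i.d.\ lock draws --- and run the Ingster bound conditionally on it, which directly yields the factor $\exp\paren{\tfrac{\epsilon^4\sigma^4}{2K}\sum_jN_j^2}$ with $\sum_jN_j^2$ of order $\oN_o^2/L$. Everything else --- the revealing-constant estimate (\cref{lemma:multi-step-rev}), the reduction of the learning goal to a testing problem, and the truncation step --- goes through exactly as in~\cref{appdx:no-regret}, with the $L$ parallel locks carried along verbatim.
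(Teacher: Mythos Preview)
Your high-level plan --- Ingster's method, a $\chi^2$-inner-product bound with exponent $\oN_o\cdot\tfrac{\epsilon^2\sigma^2}{LK}\abs{\iprod{\mu}{\mu'}}+\tfrac43\epsilon^2\oN_r$, then $\sqrt{LK}$-sub-Gaussianity of $\iprod{\mu}{\mu'}$ and the rearrangement of \cref{lemma:no-regret-alternative-weak} --- matches the paper's, and you correctly isolate the one nontrivial step: obtaining the $1/L$ against an adaptive algorithm that sees $\olock^j$ before choosing whether to reveal.

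The paper handles that step by a device you do not anticipate. It passes to an \emph{augmented} dynamics $\oP_M$: whenever $\tau_H\in\Ereach^\theta\setminus\Erev^\theta$ (the algorithm entered a lock but did not reveal), the environment appends one extra end-of-episode observation $\oaug$ drawn from that lock's reveal emission. By data-processing, $\delta\le\DTV{\oP_0^{\fA},\Emu\brac{\oP_{\theta,\mu}^{\fA}}}$, so it suffices to run Ingster on $\oP$ rather than on $\PP$, and the $\chi^2$-inner-product bound (\cref{lemma:multi-step-MGF}) is stated and proved for $\oP$. The point of the augmentation is that in $\oP$, \emph{every} $\Ereach^\theta$-episode contributes exactly one Ingster factor $1+\tfrac{\epsilon^2\sigma^2}{K}\iprod{\mu_j}{\mu'_j}$ --- from either the actual reveal or from $\oaug$ --- and this factor depends only on the lock index $j$, not on the algorithm's post-$\olock^j$ decisions. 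Since $j\sim\unif([L])$ under $\oP_0$ conditional on $\tau_{\hs+m-1}\in\Ereach^\theta$, taking conditional expectation at step $\hs+m-1$ yields the single-episode inequality (\cref{lemma:multi-step-single-episode}) with the clean per-episode exponent $\IIc{\Ereach^\theta}\cdot\tfrac{\epsilon^2\sigma^2}{LK}\iprod{\mu}{\mu'}+\IIc{\Ecor^\theta}\cdot\tfrac43\epsilon^2$, which then iterates via the usual martingale argument. This sidesteps your adaptivity concern entirely: augmentation forces a reveal-like factor on every $\Ereach^\theta$ visit, so the uniform-$j$ averaging happens before the algorithm can adapt. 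Your direct routes may be workable with more care, but as written they are incomplete --- without augmentation the per-episode factor carries adaptive weights $q_j^{(t)}$ on $\iprod{\mu_j}{\mu'_j}$, and \eqref{eqn:ingster-prod-to-i} needs a \emph{deterministic} upper bound on $\sum\log I$, which neither your second-order sketch nor the Freedman fallback cleanly supplies.
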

\begin{proof}
Fix a $\theta=(\hs,\ss,\acs,\as)$. Recall that we define $\Ereach^\theta\defeq \set{o_{\hs}=\ss, a_{\hs:\hs+m-1}=(\acs,\as_{\hs+1:\hs+m-1})}$, and we further define
\begin{align*}
    E_{\rev}^\theta&\defeq \set{o_{\hs}=\ss, a_{\hs:h}=(\acs,\as_{\hs+1:h-1},\acrev) \text{ for some }h\in\cH_{>\hs}}.
\end{align*}
For any model $M\in\cM_\theta\defeq\big\{M_{\theta,\mu}:\mu\in\set{-1,+1}^{L\times K}\big\}\cup\set{0}$, we consider the following ``augmented'' system dynamics $\oP_M$: \\
1. For each episode, after the interaction $\tau_H\sim \P_M$ is finished, the environment generated an extra observation $o_{H+1}=\oaug$.\\
2. If $\tau_H\not\in \Ereach^\theta$ or $\tau_H\in\Erev^\theta$, then $\oaug=\odum$.\\
3. If $\tau_H\in \Ereach^\theta-\Erev^\theta$, then in $\tau_H=(o_1,a_1,\cdots,o_H,a_H)$ we have $o_{\hs+m}=\olock^j$ for some $j\in[L]$, and then the environment generates $\oaug$ as
\begin{align*}
    M=M_{\theta,\mu}:\qquad 
    \oP_{\theta,\mu}(\oaug=o_i^+|\tau_H)=\frac{1+\epsilon\sigma\mu_{j,i}}{2K}, \qquad 
    \oP_{\theta,\mu}(\oaug=o_i^+|\tau_H)=\frac{1-\epsilon\sigma\mu_{j,i}}{2K}, \qquad
    \forall i\in[K],
\end{align*}
and for $M=0$, $\oP_{0}(\oaug=\cdot|\tau_H)=\Unif(\set{o_1^+,o_1^-,\cdots,o_K^+,o_K^-})$. %

Clearly, for each $M\in\cM_\theta$, $\oP_M$ is still a sequential decision process. %
Under such construction, each policy $\pi$ induces a distribution of $\otau=(\tau_H,\oaug)\sim \oP^{\pi}_M$, and the algorithm $\fA$ induce a distribution of  $\otau^{(1)},\cdots,\otau^{(\stoptime)}\sim\oP_{0}^{\fA}$.
By data-processing inequality, we have
\begin{align*}
    \DTV{\PP_{0}^{\fA},\Emu\brac{\PP_{\theta,\mu}^{\fA}}} \leq \DTV{\oP_{0}^{\fA}, \Emu\brac{\oP_{\theta,\mu}^{\fA}}}.
\end{align*}
Hence, by \cref{lemma:ingster}, we only need to bound
\begin{align*}
    1+\chis{\Emu\brac{\oP_{\theta,\mu}^{\fA}}}{\oP_{0}^{\fA}}
    =
    \Emumu\EE_{\otau^{(1)},\cdots,\otau^{(\stoptime)}\sim\oP_{0}^{\fA}}\brac{
        \prod_{t=1}^\stoptime\frac{\oP_{\theta,\mu}(\otaut)\oP_{\theta,\mu'}(\otaut)}{\oP_{0}(\otaut)^2}
    }.
\end{align*}
To upper bound the above quantity, we invoke the following lemma (proof in \cref{appendix:proof-multi-step-MGF}).
\begin{lemma}[Bound on the $\chi^2$-inner product]
\label{lemma:multi-step-MGF}
Under the conditions of \cref{lemma:multi-step-alternative-weak} (for a fixed $\theta$), it holds that for any $\mu,\mu'\in\set{-1,1}^K$,
\begin{align}\label{eqn:multi-step-MGF}
    \oE_0^\fA\brac{
        \prod_{t=1}^\stoptime\frac{\oP_{\theta,\mu}(\otaut)\oP_{\theta,\mu'}(\otaut)}{\oP_{0}(\otaut)^2}
    }
    \leq \exp\paren{ \oN_o\cdot \frac{\sigma^2\epsilon^2}{LK}\abs{\iprod{\mu}{\mu'}}+\frac43\epsilon^2\oN_r }.
\end{align}
\end{lemma}
Given \cref{lemma:multi-step-MGF}, the desired result follows from a standard argument (see e.g. the proof of \cref{lemma:no-regret-alternative-weak}).
\end{proof}

\subsection{Proof of Lemma~\ref{lemma:multi-step-MGF}}\label{appendix:proof-multi-step-MGF}

We first show the following lemma, which is a single-episode version of \cref{lemma:multi-step-MGF}.
\begin{lemma}\label{lemma:multi-step-single-episode}
For any policy $\pi$ and parameter $\theta,\mu,\mu'$, it holds that
\begin{align}\label{eqn:multi-step-ingster-single-step}
    \EE_{\otau\sim\oP_0^\pi}\brac{\frac{\oP_{\theta,\mu}(\otau)\oP_{\theta,\mu'}(\otau)}{\oP_{0}(\otau)^2} \exp\paren{-\II(\otau\in\Ereach^\theta)\cdot \frac{\epsilon^2\sigma^2}{LK}\iprod{\mu}{\mu'}-\II(\otau\in\Ecor^\theta)\cdot\frac43 \epsilon^2}}\leq 1.
\end{align}
\end{lemma}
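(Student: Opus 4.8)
The plan is to establish \cref{lemma:multi-step-single-episode} by a careful case analysis on where the ``augmented'' trajectory $\otau=(\tau_H,\oaug)$ can cause the likelihood ratios to differ from $1$, exactly paralleling the structure of the proof of \cref{lemma:no-regret-MGF} but with the uniformity-testing information now concentrated in the single augmented observation $\oaug$ rather than spread over observations $o_{l+1}$ at revealing steps. First I would recall, as in \cref{appendix:ingster-method}, that $\oP_{\theta,\mu}(\cdot|\tau_l)$ and $\oP_0(\cdot|\tau_l)$ can only differ at steps where the construction has branched on $\theta$: namely the transition out of $(s_{\hs}=\ss,a_{\hs}=\acs)$ into $\{\sp^j,\sq^j\}$, the emission/transition dynamics inside each lock component $\{\sp^j,\sq^j,\ep^j,\eq^j\}$, and now the generation of $\oaug$. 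For the first $H$ steps, the key observation (to be verified directly from the dynamics) is that conditioned on any reachable history $\tau_l$ under $\oP_0$, the conditional law of $o_{l+1}$ is \emph{identical} under all three models $\oP_{\theta,\mu},\oP_{\theta,\mu'},\oP_0$: this is because in $\oP_0$ the state $\sp^j$ is unreachable, so the agent is always in $\sq^j$ (or the tree, or $\eq^j$, or $\termin^j$), and in that case the emissions $\olock$, $\olock^j$, $\Unif(\cO_o)$, $\termin^j$ do not depend on $\mu$ or on whether we transitioned with probability $\epsilon$ or $0$ — crucially, unlike the construction in \cref{appdx:no-regret}, here $o_H$ is always $\olock^j$-or-tree-based and the high-reward split $\{\oba,\obb\}$ does \emph{not} carry $\theta$-dependent information under $\oP_0$ because under $\oP_0$ we never reach $\sp^j$. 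Wait — I must be careful: $\Ecor^\theta$ does contribute, so in fact the $o_H\in\{\oba,\obb\}$ emission at the end of a $\Ecor^\theta$ trajectory \emph{is} $\theta$-dependent; I would handle that exactly as Case 2 of the proof of \cref{lemma:no-regret-MGF}, obtaining a factor $I(\tau_{H-1})=1+\frac43\epsilon^2$ on the event $\Ecor^\theta$, and $I(\tau_l)=1$ for all other $l\le H-1$ and all other histories.

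The genuinely new ingredient is the augmented observation. I would compute $I(\tau_H):=\EE_0[\tfrac{\oP_{\theta,\mu}(\oaug|\tau_H)\oP_{\theta,\mu'}(\oaug|\tau_H)}{\oP_0(\oaug|\tau_H)^2}\mid \tau_H]$. If $\tau_H\notin\Ereach^\theta$ or $\tau_H\in\Erev^\theta$, then $\oaug=\odum$ deterministically under all three models, so $I(\tau_H)=1$. If $\tau_H\in\Ereach^\theta\setminus\Erev^\theta$, then $o_{\hs+m}=\olock^j$ for a \emph{determined} index $j$ (determined by the random transition at step $\hs$, hence observable through $o_{\hs+m}$), and under $\oP_{\theta,\mu}$ the augmented observation is $o_i^{\pm}$ with probability $\tfrac{1\pm\epsilon\sigma\mu_{j,i}}{2K}$ while under $\oP_0$ it is $\Unif(\cO_o)$; the standard two-point $\chi^2$ computation (identical to Case 1 of \cref{lemma:no-regret-MGF}) then gives $I(\tau_H)=1+\tfrac{\epsilon^2\sigma^2}{K}\sum_{i=1}^K\mu_{j,i}\mu'_{j,i}\le 1+\tfrac{\epsilon^2\sigma^2}{K}\cdot\tfrac{|\iprod{\mu}{\mu'}|}{?}$ — here I need to be slightly careful about indexing, since $\mu\in\{-1,+1\}^{L\times K}$ and the inner product $\iprod{\mu}{\mu'}$ is over all $LK$ coordinates while only the $j$-th block appears. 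I would bound $\sum_{i=1}^K\mu_{j,i}\mu'_{j,i}\le |\iprod{\mu}{\mu'}|$ is \emph{not} quite right; rather, using $\log(1+x)\le x$ and then summing over episodes, the relevant quantity is $\sum_t \II(\otaut\in\Ereach^\theta)\sum_{i=1}^K\mu_{j^{(t)},i}\mu'_{j^{(t)},i}$, and the factor $\tfrac{1}{LK}$ (rather than $\tfrac1K$) in \eqref{eqn:multi-step-MGF} comes from the fact that under $\oP_0$ the index $j$ is uniform over $[L]$, so in expectation each of the $L$ blocks is visited a $1/L$ fraction of the time — this averaging is what gives the improved $\sqrt{LK}$ rate. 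Thus the single-episode bound \eqref{eqn:multi-step-ingster-single-step} follows with the understanding that $\iprod{\mu}{\mu'}$ there is really $\EE_j[\sum_i \mu_{j,i}\mu'_{j,i}] = \tfrac1L\iprod{\mu}{\mu'}$ after one averages; I would state the single-episode lemma with the conditional-on-$j$ version and perform the $\tfrac1L$ averaging when passing to the product.

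To go from \cref{lemma:multi-step-single-episode} to \cref{lemma:multi-step-MGF}, I would apply \cref{lemma:martingale-eqn} with $R_t = \prod_{h}\tfrac{\oP_{\theta,\mu}(o_h^{(t)}|\cdot)\oP_{\theta,\mu'}(o_h^{(t)}|\cdot)}{\oP_0(o_h^{(t)}|\cdot)^2}\cdot\tfrac{\oP_{\theta,\mu}(\oaug^{(t)}|\cdot)\oP_{\theta,\mu'}(\oaug^{(t)}|\cdot)}{\oP_0(\oaug^{(t)}|\cdot)^2}$ and the stopping time $\stoptime$, exactly as in \eqref{eqn:ingster-prod-to-i}, so that $\oE_0^\fA[\prod_{t\le\stoptime} R_t \exp(-\sum_{t\le\stoptime}\log\EE_0[R_t|\cF_{t-1}])]=1$. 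By the case analysis above, $\EE_0[R_t|\cF_{t-1}]\le 1 + \II(\text{episode }t\text{ lands in }\Ereach^\theta)\cdot\tfrac{\epsilon^2\sigma^2}{K}\sum_i\mu_{j^{(t)},i}\mu'_{j^{(t)},i} + \II(\text{lands in }\Ecor^\theta)\cdot\tfrac43\epsilon^2$ — but to get a bound not depending on the (random) visited index $j^{(t)}$ I would instead note that within a single episode $\EE_0[R_t|\cF_{t-1}]$ must be further conditioned: given the history $\tau^{(1:t-1)}$ the index $j^{(t)}$ on a $\Ereach^\theta$-episode is uniform over $[L]$ under $\oP_0$ (independent of $\mu$), so taking that inner expectation yields $\EE_0[R_t|\cF_{t-1}] \le 1 + \II(\cdots\Ereach^\theta)\cdot\tfrac{\epsilon^2\sigma^2}{LK}|\iprod{\mu}{\mu'}| + \II(\cdots\Ecor^\theta)\cdot\tfrac43\epsilon^2$; then $\log$ of this is at most the same thing, summing over $t\le\stoptime$ gives $N(\Ereach^\theta)\tfrac{\epsilon^2\sigma^2}{LK}|\iprod{\mu}{\mu'}| + N(\Ecor^\theta)\tfrac43\epsilon^2 \le \oN_o\tfrac{\epsilon^2\sigma^2}{LK}|\iprod{\mu}{\mu'}| + \oN_r\tfrac43\epsilon^2$ using the almost-sure bounds, and plugging into the martingale identity gives \eqref{eqn:multi-step-MGF}. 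The main obstacle — and the one subtlety I would need to get exactly right — is this index-averaging step: arguing cleanly that conditioned on reaching $\Ereach^\theta$ the lock index $j$ is uniform on $[L]$ and independent of the parameter $\mu$ under the reference model $\oP_0$, so that the per-episode contribution is $\tfrac{1}{L}$ of the naive block bound; getting this right is precisely what produces the $\sqrt{LK}$ (hence the extra $\sqrt{S}$) in \cref{thm:multi-step-pac-demo} rather than a mere $\sqrt{K}$. Everything else is a routine adaptation of \cref{appendix:proof-no-regret-MGF}.
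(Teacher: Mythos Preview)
Your proposal has a genuine gap: you claim that $I(\tau_l)=1$ for all $l\le H-1$ outside of the event $\Ecor^\theta$, on the grounds that under $\oP_0$ the state $\sp^j$ is unreachable and the emissions therefore do not depend on $\mu$. This conflates the law under $\oP_0$ with the likelihood ratio. The quantity $I(\tau_l)$ involves $\oP_{\theta,\mu}(o_{l+1}\mid\tau_l)$ in the numerator, and under $\oP_{\theta,\mu}$ the state $\sp^j$ \emph{is} reachable with probability $\epsilon$. Concretely, whenever $l\in\cH_{>\hs}$ and $\tau_l\in E_{\rev,l}^\theta\defeq\{o_{\hs}=\ss,\,a_{\hs:l}=(\acs,\as_{\hs+1:l-1},\acrev)\}$ with $o_{\hs+m}=\olock^j$, the agent has taken $\acrev$ and under $\oP_{\theta,\mu}$ sits at $\ep^j$ with probability $\epsilon$; the next emission is then $\epsilon\,\O_\mu(\cdot\mid\ep^j)+(1-\epsilon)\Unif(\cO_o)$, which \emph{does} depend on $\mu_{j,\cdot}$. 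The paper's Case~1 computes $I(\tau_l)=1+\tfrac{\epsilon^2\sigma^2}{K}\sum_i\mu_{j,i}\mu'_{j,i}$ here, and this term is essential: omitting it renders the martingale identity \eqref{eqn:proof-multi-step-trunc} false, and since the omitted factor can be either above or below $1$ you cannot recover a one-sided inequality.

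The actual role of the augmented observation is more subtle than you suggest. It is not that $\oaug$ is the sole carrier of $\mu$-information; rather, Case~1 (taking $\acrev$ during the episode) and Case~3 (not taking $\acrev$, so $\oaug$ is informative) together cover $\Erev^\theta$ and $\Ereach^\theta\setminus\Erev^\theta$ respectively, and because the events $E_{\rev,l}^\theta$ are disjoint and exhaust $\Erev^\theta$, the sum $\sum_l\log I(\tau_l)$ collapses to exactly $\II(\tau_H\in\Ereach^\theta)\log(1+\tfrac{\epsilon^2\sigma^2}{K}\sum_i\mu_{j,i}\mu'_{j,i})+\II(\tau_H\in\Ecor^\theta)\log(1+\tfrac43\epsilon^2)$ on every trajectory in $\Ereach^\theta\cap\cL_j$. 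This clean collapse to the indicator of $\Ereach^\theta$ is the whole point of introducing $\oaug$. Finally, the $\tfrac1L$ averaging over $j$ happens \emph{within} the single episode---by conditioning on $\tau_{\hs+m-1}$ and then integrating out $o_{\hs+m}\sim\Unif(\{\olock^1,\dots,\olock^L\})$ under $\oP_0$---not across episodes as you propose; this is what produces the $\tfrac{1}{LK}\langle\mu,\mu'\rangle$ already in the single-episode statement \eqref{eqn:multi-step-ingster-single-step}.
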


\begin{proof}[Proof of \cref{lemma:multi-step-single-episode}]
In the following, all expectation and conditional expectation is taken with respect to $\otau=(\tau_H,\oaug)\sim\oP_0^\pi$.

Similar to the proof of \cref{lemma:no-regret-MGF} (in \cref{appendix:proof-no-regret-MGF}), the core of our analysis is still computing the quantity $I(\tau_l)$, defined as
\begin{align}
    I(\tau_l)
    \defeq&~ \EE_{o_{l+1}\sim \oP_0(\tau_l)} \brac{\frac{\oP_{\theta,\mu}(o_{l+1}|\tau_l)\oP_{\theta,\mu'}(o_{l+1}|\tau_l)}{\oP_{0}(o_{l+1}|\tau_l)^2}}
    =
    \left\{
    \begin{aligned}
        &\cond{\frac{\P_{\theta,\mu}(o_{l+1}|\tau_l)\P_{\theta,\mu'}(o_{l+1}|\tau_l)}{\P_{0}(o_{l+1}|\tau_l)^2}}{\tau_l}, && l<H, \\
        \\
         &\EE_{\oaug\sim \P_0(\tau_H)} \brac{\frac{\oP_{\theta,\mu}(\oaug|\tau_H)\oP_{\theta,\mu'}(\oaug|\tau_H)}{\oP_{0}(\oaug|\tau_H)^2}},&& l=H.
    \end{aligned}\right.
\end{align}
Basically, by \cref{lemma:martingale-eqn}, we have
\begin{align}\label{eqn:proof-multi-step-trunc}
    1=\EE_{\otau\sim \oP^{\pi}_0}\cdbrac{\frac{\oP_{\theta,\mu}(\otau|\tau_{\hs+m})\oP_{\theta,\mu'}(\otau|\tau_{\hs+m})}{\oP_{0}(\otau|\tau_{\hs+m})^2}\cdot\exp\paren{ - \sum_{l=\hs+m}^H \log I(\tau_l) }}{\tau_{\hs+m}}.
\end{align}

In the following, we first compute $I(\tau_l)$ for each (reachable) $\tau_l$.

An important observation is that, for a trajectory $\tau_l=(o_1,a_1,\cdots,o_l,a_l)$ with $l<H$, if $\PP_{\theta,\mu}(o_{l+1}=\cdot|\tau_l)\neq \PP_{0}(o_{l+1}=\cdot|\tau_l)$, then \\
1. Clearly, $o_{\hs}=\ss$, $a_{\hs}=\acs$ (i.e. $l\geq \hs+1$ and taking action $a_{1:\hs-1}$ from $s_0$ will result in $\ss$ at step $\hs$). \\
2. Either $a_{\hs+1:l}=(\as_{\hs+1:l-1},\acrev)$ for some $l\in\cH_{>\hs}$, or $l=H-1$ and $a_{\hs+1:H-1}=\as$.

Therefore, for each $l\in\cH_{>\hs}$ we define
$$
E_{\rev,l}^\theta\defeq \set{
o_{\hs}=\ss, a_{\hs:l}=(\acs,\as_{\hs+1:l-1},\acrev)
}.
$$
Then, if $\PP_{\theta,\mu}(\cdot|\tau_l)\neq \PP_{0}(\cdot|\tau_l)$, either (case 1) $l\in\cH_{>\hs}, \tau_{H-1}\in E_{\rev,l}^\theta$, or (case 2) $l=H-1, \tau_l\in\Ecor^\theta$, or (case 3) $l=H, \tau_H\in \Ereach^\theta-\Erev^\theta$.

In the following, we compute $I(\tau_l)$ for these three cases separately. We consider the events $\cL_j\defeq \set{ o_{\hs+m}=\olock^j } (j\in[L])$ to simplify our discussion.

\textbf{Case 1:} $l\in\cH_{>\hs}, \tau_l\in E_{\rev,l}^\theta$. In this case, there exists a $j\in[L]$ such that $o_{\hs+m}=\olock^j$, i.e. $\tau_l\in\cL_j$. In other words, observing $\tau_l$ implies that $s_{h'}\in\{\sp^j,\sq^j\}$ for $h<h'\leq l$, and $s_{l+1}\in\{\ep^j,\eq^j\}$ (because $a_l=\acrev$). Therefore,
\begin{align*}
    \P_{\theta,\mu}(o_{l+1}=o|\tau_l)
    =&~
    \P_{\theta,\mu}(o_{l+1}=o|s_{l+1}=\ep^j)\P_{\theta,\mu}(s_{l+1}=\ep^j|\tau_l)
    +
    \P_{\theta,\mu}(o_{l+1}=o|s_{l+1}=\eq^j)\P_{\theta,\mu}(s_{l+1}=\eq^j|\tau_l)\\
    =&~
    \paren{\O_\mu(o|\ep^j)-\O(o|\eq^j)}\cdot \P_{\theta,\mu}(s_{l+1}=\ep^j|\tau_l)+\O(o|\eq^j),
\end{align*}
Notice that by our construction,
\begin{align*}
    \P_{\theta,\mu}(s_{l+1}=\ep^j|\tau_l)
    =&~
    \P_{\theta,\mu}(s_{l}=\sp^j|\tau_{l-1},o_l)\\
    =&~
    \P_{\theta,\mu}(s_{l}=\sp^j|o_{\hs}=\ss,a_{\hs:l-1}=(\acs,\as_{\hs+1:l-1}),\cL_j)\\
    =&~
    \P_{\theta,\mu}(s_{\hs+1}=\sp^j|o_{\hs}=\ss,a_{\hs:l-1}=(\acs,\as_{\hs+1:l-1}),\cL_j)\\
    =&~
    \P_{\theta,\mu}(s_{\hs+1}=\sp^j|o_{\hs}=\ss,a_{\hs:l-1}=\acs,s_{\hs+1}\in\{\sp^j,\sq^j\})\\
    =&~\epsilon,
\end{align*}
where the first equality is because $s_{l+1}=\ep^j$ if and only if $s_l=\sp^j, a_l=\acrev$, the second inequality is because there are only $\olock$ and $\olock^j$ in $o_{\hs+1:l}$ are the third equality is because $s_l=\sp^j$ if and only if $s_{\hs+1}=\sp^j, a_{\hs+1:l-1}=\as_{\hs+1:l-1}$. Combining the above equations with our definition of $\O_\mu(\cdot|\ep^j)$ and $\O(\cdot|\eq^j)$ gives
\begin{align*}
    \P_{\theta,\mu}(o_{l+1}=o_i^+|\tau_l)=\frac{1+\epsilon\sigma\mu_{j,i}}{2K}, \qquad 
    \P_{\theta,\mu}(o_{l+1}=o_i^-|\tau_l)=\frac{1-\epsilon\sigma\mu_{j,i}}{2K}, \qquad \forall i\in[K].
\end{align*}
On the other hand, clearly $\PP_{0}(o_{l+1}=\cdot|\tau_l)=\Unif(\{o_1^+,o_1^-,\cdots,o_K^+,o_K^-\})$.
Hence, it holds that
\begin{align}\label{eqn:proof-multi-step-MGF-case1}
    I(\tau_l)
    =
    \frac{1}{2K}\sum_{o\in\cO_o} \frac{\PP_{\theta,\mu}(o_{l+1}=o|\tau_l)\PP_{\theta,\mu'}(o_{l+1}=o|\tau_l)}{\PP_{0}(o_{l+1}=o|\tau_l)^2}
    =1+\frac{\epsilon^2\sigma^2}{K}\sum_{i=1}^K \mu_{j,i}\mu_{j,i}', \quad\text{for any } \tau_l\in E_{\rev,l}^\theta \cap \cL_j.
\end{align}

\textbf{Case 2:} $l=H-1, \tau_{H-1}\in\Ecor^\theta$. In this case, by a calculation exactly the same as the proof of \cref{lemma:no-regret-MGF} (\cref{appendix:proof-no-regret-MGF}, case 2), we can obtain
\begin{align}\label{eqn:proof-multi-step-MGF-case2}
    I(\tau_{H-1})
    =1+\frac43 \epsilon^2, \quad\text{for any } \tau_{H-1}\in \Ecor^\theta.
\end{align}

\textbf{Case 3:} $l=H, \tau_H\in \Ereach^\theta-\Erev^\theta$. Suppose that for some $j\in[L]$, $\tau_H\in\paren{\Ereach^\theta-\Erev^\theta}\cap \cL_j$, then by our construction of $\oP$, we have
\begin{align}\label{eqn:proof-multi-step-MGF-case3}
    I(\tau_H)
    =
    1+\frac{\epsilon^2\sigma^2}{K} \sum_{i=1}^K \mu_{j,i}\mu_{j,i}', \quad\text{for any }\tau_H\in\paren{\Ereach^\theta-\Erev^\theta}\cap \cL_j.
\end{align}

Combining \eqref{eqn:proof-multi-step-MGF-case1} \eqref{eqn:proof-multi-step-MGF-case2} \eqref{eqn:proof-multi-step-MGF-case3} together, we have shown that for any $\tau_H$ that begins with $\tau_{\hs+m}\in\Ereach^\theta \cap \cL_j$,
\begin{align*}
    \sum_{l=\hs+m}^H \log I(\tau_l)
    =&~\sum_{l\in\cH_{>\hs}} \IIc{\tau_l\in E_{\rev,l}^\theta} \cdot \log\paren{1+\frac{\epsilon^2\sigma^2}{K}\sum_{i=1}^K \mu_{j,i}\mu_{j,i}'}
    + \IIc{\tau_{H-1}\in\Ecor^\theta}\cdot\log\paren{1+\frac43 \epsilon^2} \\
    &~+ \IIc{\tau_H \in \Ereach^\theta-\Erev^\theta} \cdot \log\paren{1+\frac{\epsilon^2\sigma^2}{K}\sum_{i=1}^K \mu_{j,i}\mu_{j,i}'}\\
    =&~ 
    \IIc{\tau_H\in E_{\rev}^\theta} \cdot \log\paren{1+\frac{\epsilon^2\sigma^2}{K}\sum_{i=1}^K \mu_{j,i}\mu_{j,i}'}
    + \IIc{\tau_{H}\in\Ecor^\theta}\cdot\log\paren{1+\frac43 \epsilon^2} \\
    &~+ \IIc{\tau_H \in \Ereach^\theta-\Erev^\theta} \cdot \log\paren{1+\frac{\epsilon^2\sigma^2}{K}\sum_{i=1}^K \mu_{j,i}\mu_{j,i}'}\\
    =&~ 
    \IIc{\tau_H\in \Ereach^\theta} \cdot \log\paren{1+\frac{\epsilon^2\sigma^2}{K}\sum_{i=1}^K \mu_{j,i}\mu_{j,i}'}
    + \IIc{\tau_{H}\in\Ecor^\theta}\cdot\log\paren{1+\frac43 \epsilon^2},
\end{align*}
where the second equality is because $\Erev^\theta=\bigsqcup_l E_{\rev,l}^\theta$. We also have $\sum_{l=\hs+m}^H \log I(\tau_l)=0$ for $\tau_{\hs+m}\not\in\Ereach^\theta$. Plugging the value of
$\sum_{l=\hs+m}^H \log I(\tau_l)$ into \eqref{eqn:proof-multi-step-trunc} and using the fact that $\PP_{\theta,\mu}(\tau_{\hs+m})=\PP_{0}(\tau_{\hs+m})$ by our construction, we have
\begin{align*}
    \EE_{\otau\sim \oP^{\pi}_0}\cdbrac{\frac{\oP_{\theta,\mu}(\otau)\oP_{\theta,\mu'}(\otau)}{\oP_{0}(\otau)^2}\cdot \paren{ 1+\II(\tau_H\in\Ecor^\theta)\cdot\frac43 \epsilon^2 }^{-1} }{\tau_{\hs+m}}
    =
    \begin{cases}
    1+\frac{\epsilon^2\sigma^2}{K} \sum_{i=1}^K \mu_{j,i}\mu_{j,i}',  & \text{for }\tau_{\hs+m}\in\Ereach^\theta \cap \cL_j, \\
    1, & \text{if }\tau_{\hs+m}\not\in\Ereach^\theta.
    \end{cases}
\end{align*}
Notice that in $\PP_0$, conditional on $\tau_{\hs+m-1}\in\Ereach^\theta$, $o_{\hs+m}$ is uniformly distributed over $\set{\olock^1,\cdots,\olock^L}$, and hence
\begin{align*}
    &~\cond{\frac{\oP_{\theta,\mu}(\otau)\oP_{\theta,\mu'}(\otau)}{\oP_{0}(\otau)^2}\cdot \paren{ 1+\II(\tau_H\in\Ecor^\theta)\cdot\frac43 \epsilon^2 }^{-1} }{\tau_{\hs+m-1}}\\
    =&~
    \cond{\cond{\frac{\oP_{\theta,\mu}(\otau)\oP_{\theta,\mu'}(\otau)}{\oP_{0}(\otau)^2}\cdot \paren{ 1+\II(\tau_H\in\Ecor^\theta)\cdot\frac43 \epsilon^2 }^{-1} }{\tau_{\hs+m}}}{\tau_{\hs+m-1}}\\
    =&~
    \frac1L\sum_{j=1}^L \paren{ 1+\frac{\epsilon^2\sigma^2}{K} \sum_{i=1}^K \mu_{j,i}\mu_{j,i}' }
    =1+\frac{\epsilon^2\sigma^2}{LK}\iprod{\mu}{\mu'}.
\end{align*}
On the other hand, for $\tau_{\hs+m-1}\not\in\Ereach^\theta$,
\begin{align*}
    \EE_{\otau\sim \oP^{\pi}_0}\cdbrac{\frac{\oP_{\theta,\mu}(\otau)\oP_{\theta,\mu'}(\otau)}{\oP_{0}(\otau)^2}\cdot \paren{ 1+\II(\otau\in\Ecor^\theta)\cdot\frac43 \epsilon^2 }^{-1} }{\tau_{\hs+m-1}}=1.
\end{align*}
Hence, taking expectation over $\tau_{\hs+m-1}$ gives
\begin{align*}
    \EE_{\otau\sim \oP^{\pi}_0}\brac{\frac{\oP_{\theta,\mu}(\otau)\oP_{\theta,\mu'}(\otau)}{\oP_{0}(\otau)^2}\cdot \paren{ 1+\II(\otau\in\Ecor^\theta)\cdot\frac43 \epsilon^2 }^{-1}\cdot \paren{ 1+\II(\otau\in\Ereach^\theta)\cdot\frac{\epsilon^2\sigma^2}{LK}\iprod{\mu}{\mu'} }^{-1} }=1.
\end{align*}
Using the fact $(1+x)^{-1}\geq \exp(-x)$ completes the proof.
\end{proof}

With \cref{lemma:multi-step-single-episode} proven, we continue to prove \cref{lemma:multi-step-MGF}. Applying \cref{lemma:multi-step-single-episode} to algorithm $\fA$, we obtain that for each $t\in[\stoptime]$,
\begin{align*}
    \EE_{\otau^{(1)},\cdots,\otau^{(\stoptime)}\sim\oP_{0}^{\fA}}\brcond{
        \frac{\oP_{\theta,\mu}(\otaut)\oP_{\theta,\mu'}(\otaut)}{\oP_{0}(\otaut)^2} 
        \cdot
        \exp\paren{
            -\II(\otaut\in\Ereach^\theta)\cdot\frac{\epsilon^2\sigma^2}{LK}\iprod{\mu}{\mu'}
            -\II(\otaut\in\Ecor^\theta)\cdot\frac43 \epsilon^2
        }
    }{\otau^{(1:t-1)}}\leq 1.
\end{align*}
Therefore, by the martingale property, it holds that
\begin{align*}
    1\geq&~ \EE_{\otau^{(1)},\cdots,\otau^{(\stoptime)}\sim\oP_{0}^{\fA}}\brac{
        \prod_{t=1}^\stoptime \frac{\oP_{\theta,\mu}(\otaut)\oP_{\theta,\mu'}(\otaut)}{\oP_{0}(\otaut)^2} 
        \cdot
        \exp\paren{
            -\II(\otaut\in\Ereach^\theta)\cdot\frac{\epsilon^2\sigma^2}{LK}\iprod{\mu}{\mu'}
            -\II(\otaut\in\Ecor^\theta)\cdot\frac43 \epsilon^2
        }
    }\\
    =&~
    \EE_{\otau^{(1)},\cdots,\otau^{(\stoptime)}\sim\oP_{0}^{\fA}}\brac{
        \prod_{t=1}^\stoptime \frac{\oP_{\theta,\mu}(\otaut)\oP_{\theta,\mu'}(\otaut)}{\oP_{0}(\otaut)^2} 
        \times
        \exp\paren{
            -\sum_{t=1}^T\II(\otaut\in\Ereach^\theta)\cdot\frac{\epsilon^2\sigma^2}{LK}\iprod{\mu}{\mu'}
            -\sum_{t=1}^T\II(\otaut\in\Ecor^\theta)\cdot\frac43 \epsilon^2
        }
    }\\
    =&~
    \EE_{\otau^{(1)},\cdots,\otau^{(\stoptime)}\sim\oP_{0}^{\fA}}\brac{
        \prod_{t=1}^\stoptime \frac{\oP_{\theta,\mu}(\otaut)\oP_{\theta,\mu'}(\otaut)}{\oP_{0}(\otaut)^2} 
        \times
        \exp\paren{
            -N(\Ereach^\theta)\cdot\frac{\epsilon^2\sigma^2}{LK}\iprod{\mu}{\mu'}
            -N(\Ecor^\theta)\cdot\frac43 \epsilon^2
        }
    }\\
    \geq&~
    \EE_{\otau^{(1)},\cdots,\otau^{(\stoptime)}\sim\oP_{0}^{\fA}}\brac{
        \prod_{t=1}^\stoptime \frac{\oP_{\theta,\mu}(\otaut)\oP_{\theta,\mu'}(\otaut)}{\oP_{0}(\otaut)^2} 
        \times
        \exp\paren{
            -\oN_o\cdot\frac{\epsilon^2\sigma^2}{LK}\abs{\iprod{\mu}{\mu'}}
            -\oN_r\cdot\frac43 \epsilon^2
        }
    }.
\end{align*}
Multiplying both sides by $\exp\paren{ \oN_o\cdot \frac{\sigma^2\epsilon^2}{LK}\abs{\iprod{\mu}{\mu'}}+\frac43\epsilon^2\oN_r }$ completes the proof of \cref{lemma:multi-step-MGF}.
\qed

\section{Regret for single-step revealing POMDPs}
\label{appdx:1-step-regret}

In this section, we establish~\cref{thm:regret-upper} on a broader class of sequential decision problems termed as \emph{strongly B-stable PSRs}, and then deduce the guarantee for single-step revealing POMDPs as a special case. The proof is largely parallel to the analysis of PAC learning for B-stable PSRs~\citep{chen2022partially}, and we follow the notations there: in the following we use $\theta$ to refer to the PSR model, and $\Theta$ to refer to the class of PSR models.

\subsection{Strongly B-stable PSRs}
\label{appdx:strong-b-stable}

\newcommand{\cesthth}{\cE_{\theta,h}^{\otheta}(\pi,\tau_{h-1})}
\newcommand{\cesththz}{\cE_{\theta,0}^{\otheta}(\pi)}
\newcommand{\omh}{\omega_{h}}
\newcommand{\omhp}{\omega_{h+1}}
\newcommand{\Omh}{\Omega_{h}}
\newcommand{\Omhp}{\Omega_{h+1}}
\newcommand{\Pihpi}{\Pi_h(\pi)}

We recall the definition of PSRs and B-stability in~\cref{appdx:psr}. To establish $\sqrt{T}$-regret upper bound for learning PSRs, we introduce the following structural condition.

\begin{definition}[Strong B-stability]
\label{def:B-stable-strong}
A PSR is \emph{strongly B-stable with parameter $\stab\ge 1$} (henceforth also \emph{$\stab$-strongly-stable}) if it admits a {\Bpara} such that for all step $h\in[H]$, policy $\pi$, $x\in\R^{\Uh}$, 
\begin{equation}\label{eqn:strong-B-stable}
\sum_{\tau_{h:H}} \pi(\tau_{h:H})\times \abs{\B_{H:h}(\tau_{h:H}) x} \le \stab\sum_{t_h\in\Uh}\pi(t_h)\times \abs{x(t_h)}.
\end{equation}
\end{definition}

For notational simplicity, from now on we assume that for each step $h$, $\Uh=(\cO\times\cA)^{m_h-1}\times\cO$ for some $m_h\in\Z_{\ge 1}$, and we define $\Omh\defeq (\cO\times\cA)^{m_h-1}$; our results also hold for any general $\Uh$ using slightly more involved notation.

\begin{proposition}[Error decomposition for strongly B-stable PSRs]
\label{prop:psr-err-decomp}
Suppose that two PSR models $\theta,\otheta$ admit $\{ \{ \BB_h^\theta(o_h, a_h) \}_{h, o_h, a_h} , \bq_0^\theta \}$ and $\{ \{ \BB_h^{\otheta}(o_h, a_h) \}_{h, o_h, a_h} , \bq_0^{\otheta} \}$ as {\Bpara} respectively. Define
\begin{align*}
    \cesthth\defeq&
    \frac12 \max_{\pi'\in\Pihpi}\sum_{\tau_{h:H}}\pi'(\tau_{h:H}|\tauhm)\times \abs{\B_{H:h+1}^\theta(\tau_{h+1:H}) \left(\B^\theta_h(o_h,a_h) - \B^{\otheta}_h(o_h,a_h)\right)\bd^{\otheta}(\tau_{h-1})},\\
    \cesththz\defeq&
    \frac12 \max_{\pi'\in\Pi_0(\pi)}\sum_{\tau_{1:H}}\pi'(\tau_{1:H})\times \abs{\B_{H:1}^\theta(\tau_{1:H}) \paren{\bd_0^{\theta}-\bd_0^{\otheta}}},
\end{align*}
where we define
\begin{align*}
    \Pihpi\defeq\set{\pi': \pi'|_{\cO\times\cA\times\Omhp}=\pi|_{\cO\times\cA\times\Omhp}}, \qquad
    \Pi_0(\pi)\defeq\set{\pi': \pi'|_{\Omega_1}=\pi|_{\Omega_1}},
\end{align*}
i.e. $\Pihpi$ is the set of all policy $\pi'$ such that for all $(o_h,a_h,\omhp)\in \cO\times\cA\times\Omhp$, $\pi'(o_h,a_h,\omhp|\tauhm)=\pi(o_h,a_h,\omhp|\tauhm)$.

Then the following claims hold.

1. (Performance decomposition) It holds that
\begin{align*}
    \DTV{ \PP_{\theta}^{\pi}, \PP_{\otheta}^{\pi} } \leq \cesththz + \sum_{h=1}^H \EE_{\otheta}^\pi\brac{\cesthth},
\end{align*}
where for $h\in[H]$, the expectation $\EE_{\otheta}^\pi$ is taking over $\tau_{h-1}$ under model $\otheta$ and policy $\pi$.

2. (Bounding errors by Hellinger distance) Suppose that $\theta$ is $\stab$-strong-stable and $\{ \{ \BB_h^\theta(o_h, a_h) \}_{h, o_h, a_h} , \bq_0^\theta \}$ satisfies the stability condition \eqref{eqn:strong-B-stable}. For any step $h$, policy $\pi$, it holds that
\begin{align*}
    \EE_{\otheta}^\pi\brac{\cesthth^2}\leq 2\stab^2 \DH{\PP_\theta^\pi,\PP_\otheta^\pi}.
\end{align*}
and $(\cesththz)^2\leq \stab^2\DH{ \PP_\theta^\pi, \PP_\otheta^\pi}$.
\end{proposition}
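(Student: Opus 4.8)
\textbf{Proof plan for Proposition~\ref{prop:psr-err-decomp}.}
The plan is to prove the two claims in sequence, both by telescoping along the horizon using the B-representation decomposition~\eqref{eqn:psr-op-prod-h}.

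For Claim 1, the first step is to express $\DTV{\PP_\theta^\pi,\PP_\otheta^\pi}$ as $\frac12\sum_{\tau_H}\pi(\tau_H)\abs{\PP_\theta(\tau_H)-\PP_\otheta(\tau_H)}$ and then, using that $\PP_\bullet(\tau_H)$ for a full trajectory equals $\e_{o_H,a_H}^\top \B^\bullet_{H:1}(\tau_H)\bq_0^\bullet$ (here $o_H$ is the last test, an element of $\cU_H$), telescope the difference $\B^\theta_{H:1}(\tau_H)\bq_0^\theta - \B^\otheta_{H:1}(\tau_H)\bq_0^\otheta$ over $h$. Concretely I would write
\[
\B^\theta_{H:1}\bq_0^\theta - \B^\otheta_{H:1}\bq_0^\otheta
= \B^\theta_{H:1}(\bq_0^\theta-\bq_0^\otheta) + \sum_{h=1}^H \B^\theta_{H:h+1}\bigl(\B^\theta_h(o_h,a_h)-\B^\otheta_h(o_h,a_h)\bigr)\B^\otheta_{h-1:1}(\tau_{h-1})\bq_0^\otheta,
\]
and recognize $\B^\otheta_{h-1:1}(\tau_{h-1})\bq_0^\otheta = \bd^\otheta(\tau_{h-1})\cdot\PP_\otheta(\tau_{h-1})$ (the predictive state scaled by the history probability under $\otheta$; I should double-check this normalization against Definition~\ref{def:Bpara}). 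Summing $\abs{\cdot}$ against $\pi(\tau_H)$, splitting $\pi(\tau_H) = \pi(\tau_{h-1})\,\pi(o_h,a_h|\tau_{h-1})\,\pi(\tau_{h+1:H}|\tau_h)$, and summing over $\tau_{h-1}$ gives the $\EE_\otheta^\pi[\cdot]$ factor; the inner sum over $(o_h,a_h,\tau_{h+1:H})$ is exactly bounded by $2\cesthth$ once one passes to the maximizing $\pi'\in\Pihpi$ (the $\Pihpi$ enlargement is needed because, after pulling out $\pi(o_h,a_h|\tau_{h-1})$, the indicator structure inside the test $o_h\in\cU_h$ vs. the action $a_h$ must be re-coupled; this is where I expect to need care). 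The $h=0$ term is handled identically and yields $\cesththz$.

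For Claim 2, I would apply the strong B-stability inequality~\eqref{eqn:strong-B-stable} to $x = \bigl(\B^\theta_h(o_h,a_h)-\B^\otheta_h(o_h,a_h)\bigr)\bd^\otheta(\tau_{h-1})$ at step $h+1$, which bounds the sum over $\tau_{h+1:H}$ appearing in $\cesthth$ by $\stab\sum_{t_{h+1}\in\cU_{h+1}}\pi(t_{h+1})\abs{x(t_{h+1})}$; by definition of the predictive state this $\ell_1$-type quantity is $2\stab$ times a one-step TV distance between $\PP_\theta(\cdot|\tau_{h-1},\doac(a_h))$ and $\PP_\otheta(\cdot|\tau_{h-1},\doac(a_h))$ on the relevant test observations, hence (TV $\le$ Hellinger, or directly the elementary inequality $\abs{p-q}\le\abs{\sqrt p-\sqrt q}(\sqrt p+\sqrt q)$) bounded by a constant times a one-step Hellinger. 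Then one squares, takes $\EE_\otheta^\pi$ over $\tau_{h-1}$, and applies the Hellinger conditioning lemma (\cref{lemma:Hellinger-cond}) to pass from the one-step conditional Hellinger to $\DH{\PP_\theta^\pi,\PP_\otheta^\pi}$ of the full trajectory, absorbing the $\Pihpi$ maximization into the conditioning argument. The $\cesththz$ bound is the same computation without the conditioning step, giving the $\stab^2$ (rather than $2\stab^2$) constant.

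\textbf{Main obstacle.} The delicate point is bookkeeping the policy enlargement $\Pihpi$ and the index sets $\cO\times\cA\times\Omega_{h+1}$: one must verify that after factoring $\pi$ across step $h$ and telescoping, the residual dependence of the remaining sum on the action $a_h$ (which is not part of the test $t_h=(o_h,\ldots)\in\cU_h$ but does appear in $\B^\theta_h(o_h,a_h)$ and in the downstream tests) is correctly captured by maximizing over $\pi'$ that agree with $\pi$ on $(o_h,a_h,\omega_{h+1})$. Getting this enlargement exactly right — so that it is both an upper bound (for Claim 1) and compatible with applying~\eqref{eqn:strong-B-stable}/\cref{lemma:Hellinger-cond} (for Claim 2) — is the crux; the rest is the standard telescoping-plus-Cauchy–Schwarz machinery from~\citet{chen2022partially}.
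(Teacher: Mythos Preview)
Your plan for Claim~1 is essentially the paper's approach (which it defers to \citet[Lemma~D.1]{chen2022partially}): the telescoping identity you write is correct, and the identification $\B^\otheta_{h-1:1}(\tau_{h-1})\bq_0^\otheta=\PP_\otheta(\tau_{h-1})\,\bd^\otheta(\tau_{h-1})$ is exactly what is needed to produce the $\EE_\otheta^\pi[\cdot]$ weight.

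For Claim~2, however, there is a real gap. After you apply strong B-stability at step $h+1$ to $x=(\B^\theta_h(o_h,a_h)-\B^\otheta_h(o_h,a_h))\bd^\otheta(\tau_{h-1})$, you assert that $\sum_{t_{h+1}}\pi(t_{h+1})|x(t_{h+1})|$ is a TV distance between $\PP_\theta$ and $\PP_\otheta$. But $[\B^\theta_h(o_h,a_h)\bd^\otheta(\tau_{h-1})](t_{h+1})$ is \emph{not} $\PP_\theta(o_h,t_{h+1}|\tau_{h-1},a_h)$: you have applied model~$\theta$'s operator to model~$\otheta$'s predictive state, and this cross term has no direct probabilistic meaning. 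So the passage to a TV (and then Hellinger) distance fails at this point.

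The paper's proof repairs this with an add-and-subtract step you are missing:
\[
(\B^\theta_h-\B^\otheta_h)\bd^\otheta(\tau_{h-1})
=\B^\theta_h\bigl(\bd^\otheta(\tau_{h-1})-\bd^\theta(\tau_{h-1})\bigr)
+\bigl(\B^\theta_h\bd^\theta(\tau_{h-1})-\B^\otheta_h\bd^\otheta(\tau_{h-1})\bigr).
\]
Each piece now \emph{does} have a probabilistic reading. For the first, you combine $\B^\theta_{H:h+1}\B^\theta_h=\B^\theta_{H:h}$ and apply stability at step~$h$, giving $\stab\sum_{t_h}\pi(t_h|\tau_{h-1})|\PP_\theta(t_h|\tau_{h-1})-\PP_\otheta(t_h|\tau_{h-1})|$. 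For the second, stability at step~$h+1$ yields $\stab\sum_{o_h,a_h,t_{h+1}}\pi(o_h,a_h,t_{h+1}|\tau_{h-1})|\PP_\theta(o_h,t_{h+1}|\tau_{h-1},a_h)-\PP_\otheta(o_h,t_{h+1}|\tau_{h-1},a_h)|$. Both are bounded by the conditional TV distance $\DTV{\PP_\theta^\pi(\tau_{h:H}|\tau_{h-1}),\PP_\otheta^\pi(\tau_{h:H}|\tau_{h-1})}$, and the rest of your plan (square, $\dTV\le\dH$, then \cref{lemma:Hellinger-cond}) goes through. Note also that the $\Pihpi$ maximization is harmless here precisely because the stability bound on each piece depends only on $\pi$ restricted to $\cO\times\cA\times\Omega_{h+1}$, which is fixed across $\Pihpi$.
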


\subsection{Algorithms and guarantees}

In this section, we state the $\sqrt{T}$-regret guarantee of the algorithm \omle~(\cref{alg:OMLE}, \citep{liu2022partially, chen2022partially}). Its proof is in presented \cref{appdx:proof-OMLE}, which is adapted from the analysis of the (explorative) \omle~algorithm in \citet{chen2022partially}. We also remark that the regret upper bound of \omle~in \cref{thm:regret-upper} can also be shown directly for single-step revealing POMDP, by strengthening the analysis in \citet{liu2022partially} using the ideas of \citet{chen2022partially}.

\begin{algorithm}[t]
	\caption{\textsc{Optimistic Maximum Likelihood Estimation (OMLE) \citep{liu2022partially, chen2022partially}}} \begin{algorithmic}[1]\label{alg:OMLE}
	\STATE \textbf{Input:} Model class $\Theta$, parameter $\beta>0$.
   \STATE \textbf{Initialize:} $\Theta^1=\Theta$, $\cD=\{\}$. 
	  \FOR{iteration $k=1,\ldots,T$}
	  \STATE Set $(\theta^k,\pi^k) = \argmax_{\theta\in\Theta^k,\pi} V_\theta(\pi)$. 
	  \label{line:greedy}
	\STATE Execute $\pi^k$ to collect a trajectory $\tau^{k}$, and add  $(\pi^{k},\tau^{k})$ into $\cD$. 
	\label{line:execute-policy}
	  \STATE Update confidence set
	  \begin{equation*}
	  \textstyle
	  \Theta^{k+1} = \bigg\{\hat\theta \in \Theta: \sum_{(\pi,\tau)\in\cD} \log \P_{{\hat\theta}}^{\pi} (\tau)
	  \ge \max_{ \theta \in\Theta} \sum_{(\pi,\tau)\in\cD} \log \P^{\pi}_{{\theta}}(\tau) -\beta \bigg\}. 
	  \end{equation*} 
	  \label{line:conf-set}
	  \ENDFOR
   \end{algorithmic}
\end{algorithm}

\begin{theorem}
\label{thm:OMLE-regret}
Suppose every $\theta\in\Theta$ is $\stab$-strongly stable (\cref{def:B-stable-strong}), and the true model $\theta^\star\in\Theta$ with rank $\dPSR\le d$. Then, choosing $\beta=C\log(\Nt(1/T)) /\delta)$ for some absolute constant $C>0$, with probability at least $1-\delta$,~\cref{alg:OMLE} achieves
\begin{equation}
     \sum_{t=1}^T V^\star-V_\ths(\pi^t)\leq \bigO{\sqrt{\stab^2 OAU_{\cT}dH^2 \clog \beta T}}
\end{equation}
where $U_{\cT}\defeq\max_h\abs{\Omh}$, $\clog\defeq \log\paren{1+ T d OAU_{\cT} \stab \rb}$ with $\rb\defeq 1+\max_{h,o,a} \lone{\B_h(o,a)}$.
\end{theorem}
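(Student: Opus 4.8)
The plan is to run the standard \omle{} analysis template --- \emph{optimism} $+$ an \emph{in-sample} maximum-likelihood accuracy guarantee $+$ a \emph{generalized-eluder (elliptical-potential)} argument --- instantiated for strongly B-stable PSRs via the error decomposition of \cref{prop:psr-err-decomp}. The three ingredients are: (i) an MLE concentration bound at scale $\beta=C\log(\Nt(1/T)/\delta)$ that simultaneously keeps the true model $\ths$ in every confidence set $\Theta^k$ (giving optimism) and forces the executed policies to have small cumulative trajectory Hellinger error against any surviving model; (ii) \cref{prop:psr-err-decomp}(1), converting the per-round total-variation gap $\DTV{\PP_{\theta^k}^{\pi^k},\PP_\ths^{\pi^k}}$ into a sum over steps $h$ of one-step predictive-state errors $\cE_{\theta^k,h}^{\ths}(\pi^k,\tauhm)$, plus \cref{prop:psr-err-decomp}(2), bounding their conditional second moments by $\stab^2\DH{\PP_{\theta^k}^{\pi^k},\PP_\ths^{\pi^k}}$; and (iii) the fact that the predictive states $\bq^{\ths}(\tauhm)$ live in a subspace of dimension $\dPSR\le d$, which lets an elliptical-potential estimate upgrade the in-sample guarantee to an $\bigO{\sqrt{T}}$ regret bound.

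First I would invoke the MLE concentration lemma of \citet{chen2022partially} (the exponential-tilting/bracketing argument also used to prove \cref{thm:PSR-PAC-upper}): with probability at least $1-\delta$, for every iteration $k\in[T]$ we have $\ths\in\Theta^{k}$ and, for every $\theta\in\Theta^{k+1}$, $\sum_{t=1}^{k}\DH{\PP_\theta^{\pi^t},\PP_\ths^{\pi^t}}\le\bigO{\beta}$. Optimism follows from Line~\ref{line:greedy}, namely $V^\star=\max_\pi V_\ths(\pi)\le\max_{\theta\in\Theta^k,\pi}V_\theta(\pi)=V_{\theta^k}(\pi^k)$; since the total reward lies in $[0,1]$ this gives $V^\star-V_\ths(\pi^k)\le V_{\theta^k}(\pi^k)-V_\ths(\pi^k)\le\DTV{\PP_{\theta^k}^{\pi^k},\PP_\ths^{\pi^k}}$. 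It therefore suffices to bound $\sum_{k=1}^{T}\DTV{\PP_{\theta^k}^{\pi^k},\PP_\ths^{\pi^k}}$.

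Applying \cref{prop:psr-err-decomp}(1) with $(\theta,\otheta)=(\theta^k,\ths)$ and expanding the B-representation, the one-step error $\cE_{\theta^k,h}^{\ths}(\pi^k,\tauhm)$ is --- after absorbing the strong-stability constant $\stab$ via \eqref{eqn:strong-B-stable} --- a policy-weighted sum of absolute values of linear functionals $\langle w,\bq^{\ths}(\tauhm)\rangle$, where each $w$ is an entry (indexed by $(o,a)\in\cO\times\cA$ and by a core test in $\Uhp$) of a \emph{model-error object} $G_h^k$ depending only on $\{\B_h^{\theta^k}(o,a)-\B_h^{\ths}(o,a)\}_{o,a}$ propagated through $\B_{H:h+1}^{\theta^k}$, and $\bq^{\ths}(\tauhm)\in\R^{\Uh}$ has reachable values in a subspace of dimension $\le d$. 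I would then introduce regularized covariances $\Sigma_{k,h}=\lambda\id+\sum_{t<k}\EE_\ths^{\pi^t}[\bq^{\ths}(\tauhm)\bq^{\ths}(\tauhm)^\top]$ and apply Cauchy--Schwarz: $\EE_\ths^{\pi^k}[\cE_{\theta^k,h}^{\ths}(\pi^k,\tauhm)]\lesssim\|G_h^k\|_{\Sigma_{k,h}}\,(\EE_\ths^{\pi^k}\|\bq^{\ths}(\tauhm)\|_{\Sigma_{k,h}^{-1}}^2)^{1/2}$. The in-sample Hellinger bound together with \cref{prop:psr-err-decomp}(2) controls $\|G_h^k\|_{\Sigma_{k,h}}^2\lesssim\stab^2\beta$ (choosing $\lambda$ small and using $\|G_h^k\|\lesssim\rb$ to absorb the ridge term), while the elliptical-potential lemma gives $\sum_{k}\EE_\ths^{\pi^k}\|\bq^{\ths}(\tauhm)\|_{\Sigma_{k,h}^{-1}}^2\lesssim d\,\clog$ with $\clog=\log(1+T\,dOAU_{\cT}\stab\rb)$. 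A further Cauchy--Schwarz over $k$ yields $\sum_k\EE_\ths^{\pi^k}[\cE_{\theta^k,h}^{\ths}(\pi^k,\tauhm)]\lesssim\sqrt{\stab^2\beta\cdot d\,\clog\cdot T}$, with an additional $\sqrt{OAU_{\cT}}$ coming from converting between the policy-weighted $\ell_1$ / $(\ast\!\to\!1)$-type norms in which strong B-stability is phrased and the Euclidean norms of the covariance argument ($\sqrt{OA}$ from the $(o,a)$-index of $G_h^k$, $\sqrt{U_{\cT}}$ from the core-test index). Summing over $h\in[H]$ and adding the $h=0$ term $\cesththz$ (handled identically) gives $\sum_k\DTV{\PP_{\theta^k}^{\pi^k},\PP_\ths^{\pi^k}}\lesssim H\sqrt{\stab^2OAU_{\cT}d\,\clog\,\beta\,T}=\bigO{\sqrt{\stab^2OAU_{\cT}dH^2\clog\,\beta\,T}}$.

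The step I expect to be the main obstacle is making the above bilinear decomposition rigorous while controlling the maximum over $\pi'\in\Pihpi$ in the definition of the one-step errors: the purpose of \emph{strong} B-stability (as opposed to plain B-stability) is precisely to furnish a policy-weighted $\ell_1$ bound that survives propagation through $\B_{H:h+1}^{\theta^k}$, so that the resulting scalar error is linear in the on-policy predictive state and hence amenable to the elliptical-potential / eluder treatment --- this is exactly what upgrades the $T^{2/3}$ rate of a black-box explore-then-exploit reduction to $\sqrt{T}$, and the remaining work is checking that the norm bookkeeping carries the stated powers of $O,A,U_{\cT},\stab,\rb$. Finally, \cref{thm:regret-upper} follows by specialization: by \cref{prop:rev-to-psr} a single-step $\arev$-revealing POMDP is $\arev^{-1}$-stable with core test set $\Uh=\cO$, so $U_{\cT}=U_A=1$ (whence the B-stability and strong-B-stability conditions coincide with constant $\arev^{-1}$) and $\dPSR\le S$; substituting $\stab=\arev^{-1}$ and $\log\Nt(1/T)=\tbO{H(S^2A+SO)}$ into the displayed bound recovers the regret in \cref{eqn:regret-upper}.
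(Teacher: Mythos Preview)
Your high-level plan---optimism from $\ths\in\Theta^k$, the MLE Hellinger concentration (the paper's \cref{thm:MLE}), the performance decomposition of \cref{prop:psr-err-decomp}(1), the second-moment bound of \cref{prop:psr-err-decomp}(2), and a rank-$d$ eluder/potential argument summed over $h$---matches the paper's proof exactly.

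The gap is in your implementation of the eluder step. Your single-covariance Cauchy--Schwarz, $\EE_\ths^{\pi^k}[\cE]\lesssim\|G_h^k\|_{\Sigma_{k,h}}\,(\EE\|\bq^\ths\|_{\Sigma_{k,h}^{-1}}^2)^{1/2}$, needs the error to be a bilinear form $\langle G_h^k,\bq^\ths(\tauhm)\rangle$ with $G_h^k$ policy-free. But in fact $\cE_{\theta^k,h}^{\ths}(\pi^t,\tauhm)=\sum_{o,a,\omhp}\pi^t(o,a,\omhp\mid\tauhm)\cdot\nrmpi{\text{(linear in }\bq^\ths)}$, whose weights depend on $\pi^t$: if you absorb $\pi^k$ into $G_h^k$, the in-sample bound (which runs over $\pi^t$, $t<k$) no longer controls $\|G_h^k\|_{\Sigma_{k,h}}$; if you do not, there is no single linear functional and the Cauchy--Schwarz step is ill-defined. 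The paper resolves this by moving the policy weight into the \emph{feature}: set $x_{t,l,i}\defeq\pi^t(o_h^l,a_h^l,\omhp^l\mid\tauhm^i)\,\bq^\ths(\tauhm^i)$, which is still in a rank-$\le d$ subspace since it is a scalar multiple of $\bq^\ths$. This yields $\cE_{\theta^k,h}^{\ths}(\pi^t,\tauhm^i)=\sum_{l=1}^N f_{k,l}(x_{t,l,i})$ with $N=OA|\Omhp|\le OAU_{\cT}$ and each $f_{k,l}(x)=\max_{\pi'}\sum_j|\langle y_{k,l,j,\pi'},x\rangle|$ depending only on $\theta^k$. A \emph{generalized $\ell_2$-Eluder} lemma (\cref{lemma:l2-eluder}, the $N$-index extension of the $N=1$ version in \citet{chen2022partially}) then turns the in-sample squared-error budget directly into the regret bound, producing exactly the $\sqrt{Nd}=\sqrt{OAU_{\cT}\,d}$ factor you attribute to ``norm conversion''. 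Your diagnosis that this bilinearization is ``the main obstacle'' is correct; the concrete resolution is the policy-into-feature trick together with the multi-index Eluder lemma, and your covariance argument as written does not go through without it.
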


Using analysis entirely parallel to \citet[Appendix G]{chen2022partially}, we can show that \etod~\citep{chen2022unified} and \mops~\citep[Algorithm 4]{chen2022partially} both achieve the same regret guarantees as \cref{thm:regret-upper}.

\begin{theorem}\label{thm:e2d-psr}
Suppose $\Theta$ is a PSR class with the same core test sets $\{ \Uh\}_{h \in [H]}$, and each $\theta\in\Theta$ admits a {\Bpara} that is $\stab$-strongly-stable~(cf. \cref{def:B-stable-strong}), and has PSR rank $\dPSR \le d$. Then for the coefficients $\dec$ and $\psc$ introduced in \citet{chen2022unified}, it holds that
\begin{align*}
    \odec_{\gamma}(\Theta)\leq \bigO{ \frac{\stab^2 OAU_{\cT}dH^2}{\gamma} }, \qquad \psc_{\gamma}(\Theta)\leq \bigO{ \frac{\stab^2 OAU_{\cT}dH^2}{\gamma} }.
\end{align*}
Therefore, we can apply \citet[Theorem D.1]{chen2022unified} (for \mops) and \citet[Theorem C.7]{chen2022unified} (for \etod) to show that, with suitably chosen parameters, \mops~and \etod~both achieve a regret of
\begin{equation}
     \Regret\leq \bigO{\sqrt{\stab^2 OAU_{\cT}dH^2 \log(\Nt(1/T)) /\delta) T}},
\end{equation}
with probability at least $1-\delta$.
\end{theorem}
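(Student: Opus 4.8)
The plan is to derive Theorem~\ref{thm:e2d-psr} from Proposition~\ref{prop:psr-err-decomp} by following the estimation-to-decisions analysis of \citet[Appendix~G]{chen2022partially} essentially verbatim, with ordinary B-stability replaced throughout by strong B-stability (\cref{def:B-stable-strong}); the one substantive change is that strong stability lets us control the per-step errors \emph{quadratically} in Hellinger distance, and it is exactly this that upgrades the PAC conclusion of \citet{chen2022partially} to a $\sqrt T$-regret conclusion. Concretely, one first bounds the two coefficients $\odec_\gamma(\Theta)$ and $\psc_\gamma(\Theta)$, and then invokes the black-box guarantees of \citet[Theorems~C.7 and D.1]{chen2022unified} for \etod{} and \mops{}, which turn a bound $\odec_\gamma,\psc_\gamma\lesssim C/\gamma$ together with the model-class covering number $\log\Nt(1/T)$ into a regret of order $\sqrt{C\log(\Nt(1/T)/\delta)\,T}$ by tuning the learning rate $\gamma$.

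For the coefficient bounds, the key input is Proposition~\ref{prop:psr-err-decomp}. Its performance-decomposition part writes $\DTV{\PP_\theta^\pi,\PP_{\otheta}^\pi}\le \cesththz+\sum_{h=1}^H\EE_{\otheta}^\pi\brac{\cesthth}$ by telescoping a product of {\Bpara} matrices of $\theta$ against those of $\otheta$; its second part---using the strong-stability inequality~\eqref{eqn:strong-B-stable} applied to the random predictive-state vector $\bd^{\otheta}(\tau_{h-1})$, followed by the Hellinger conditioning lemma (Lemma~\ref{lemma:Hellinger-cond})---shows $\EE_{\otheta}^\pi\brac{\cesthth^2}\le 2\stab^2\DH{\PP_\theta^\pi,\PP_{\otheta}^\pi}$ and $(\cesththz)^2\le \stab^2\DH{\PP_\theta^\pi,\PP_{\otheta}^\pi}$. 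With these in hand, one runs the estimation-to-decisions argument of \citet[Appendix~G]{chen2022partially}: for a reference model $\otheta$ in (the convex hull of) $\Theta$, the decision distribution is $p=(1-p_{\mathrm{exp}})\,\delta_{\pi_{\otheta}}+p_{\mathrm{exp}}\,p_{\mathrm{unif}}$, where $\pi_{\otheta}$ is the greedy policy of $\otheta$ and $p_{\mathrm{unif}}$ plays $\pi_{\otheta}$ up to a uniformly random step and then executes a uniformly random core action sequence while recording the corresponding core-test observations so as to reveal the predictive states. The suboptimality of $p$ against any adversarial $\theta$ then splits into an exploitation contribution---bounded through the error decomposition and the quadratic Hellinger estimate above---and an $O(p_{\mathrm{exp}})$ exploration contribution, and a PSR-rank/elliptic-potential bound (the per-step errors across the rounds of the game lying in a subspace of dimension $\le d$) together with tuning $p_{\mathrm{exp}}$ yields $\odec_\gamma(\Theta),\psc_\gamma(\Theta)\lesssim \stab^2 OAU_{\cT}dH^2/\gamma$, with $OAU_{\cT}$ accounting for the cost of uniformly exploring core tests, the $d$ coming from the elliptic potential, and $H^2$ from summing over steps together with the horizon-dependence of the value. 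Specializing, single-step $\arev$-revealing POMDPs are $\arev^{-1}$-strongly-stable (Proposition~\ref{prop:rev-to-psr}, with core tests $\Uh=\cO$, so $U_{\cT}=1$) and have PSR rank $\le S$, so substituting $\stab=\arev^{-1}$, $d=S$, $U_{\cT}=1$, and $\log\Nt(1/T)=\tbO{H(S^2A+SO)}$ recovers exactly Theorem~\ref{thm:regret-upper}.

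The main obstacle I anticipate is establishing Proposition~\ref{prop:psr-err-decomp}, and in particular its second-moment claim: the performance decomposition is a routine chain-rule telescoping, but upgrading first moments to a bound quadratic in Hellinger distance requires applying~\eqref{eqn:strong-B-stable} to the \emph{random} vector $\bd^{\otheta}(\tau_{h-1})$ rather than a fixed vector and then correctly passing from the resulting conditional Hellinger distance to the joint $\DH{\PP_\theta^\pi,\PP_{\otheta}^\pi}$ via Lemma~\ref{lemma:Hellinger-cond}, all while honoring the restriction that the inner maximizing policy $\pi'$ in the definition of $\cesthth$ ranges only over $\Pihpi$. A secondary point is verifying---rather than merely quoting---that the PAC-oriented eluder/elliptic-potential bookkeeping of \citet{chen2022partially} plugs into the regret-style accounting of \citet[Theorems~C.7, D.1]{chen2022unified}, so that it is $\psc_\gamma$ (the posterior-sampling coefficient needed by \mops{}), and not only $\odec_\gamma$, that admits the claimed $C/\gamma$ bound under the same exploration construction.
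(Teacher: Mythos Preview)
Your proposal is correct and matches the paper's approach: the paper itself does not give a detailed proof of this theorem, stating only that ``using analysis entirely parallel to \citet[Appendix~G]{chen2022partially}'' one obtains the $\odec_\gamma$ and $\psc_\gamma$ bounds and then invokes \citet[Theorems~C.7 and D.1]{chen2022unified}. Your outline fills in precisely that parallel analysis, with Proposition~\ref{prop:psr-err-decomp} as the key structural input; note that Proposition~\ref{prop:psr-err-decomp} is proved separately in the paper, so you may simply cite it rather than treat its second-moment claim as an obstacle within the proof of this theorem.
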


\begin{proof}[Proof of \cref{thm:regret-upper}]
To apply \cref{thm:OMLE-regret}, we first notice that \cref{prop:rev-to-psr} readily implies that any single-step $\arev$-revealing is strongly B-stable PSR, with $\stab\leq\arev^{-1}$ and core test sets $\Uh=\cO$ for all $h$. Therefore, applying \cref{thm:OMLE-regret} shows that with a model class $\cM$ of single-step $\arev$-revealing POMDPs, \omle~achieves a regret of 
\begin{align*}
    \Regret\leq \tbO{\sqrt{\arev^{-2} SOAH^2 \log\cN_{\cM}(1/T)\cdot T}},
\end{align*}
as $U_{\cT}=1, d\leq S, \stab\leq \arev^{-1}, \rb\leq \arev^{-1}$ and $\clog=\tbO{1}$. Similarly, \etod{} and \mops{} also achieve the same regret upper bound. Noticing that $ \log\cN_{\cM}(1/T)=\tbO{H(S^2A+SO)}$ \citep{chen2022partially} completes the proof. 
\end{proof}

\subsection{Proof of Proposition~\ref{prop:psr-err-decomp}}
Claim 1 follows from the proof of \citet[Lemma D.1]{chen2022partially} directly. In the following, we show claim 2.

Fix a step $h\in[H]$. An important observation is that, by the strong $\stab$-stability of $\theta$ (\cref{def:B-stable-strong}), for any $\pi'\in\Pihpi$, we have $\forall x\in\R^{\Uh}$
\begin{align}\label{eqn:strong-B-stability-imply-1}
\begin{aligned}
    &\sum_{\tau_{h:H}}\pi'(\tau_{h:H}|\tauhm)\times \abs{\BB_{H:h}^\theta(\tau_{h:H})x }\leq \stab\sum_{t_h\in\Uh}\pi'(t_h|\tauhm)\times\abs{x(t_h)}=\stab\sum_{t_h\in\Uh}\pi(t_h|\tauhm)\times\abs{x(t_h)},
\end{aligned}
\end{align}
and similarly, for $\forall x\in\R^{\Uhp}$,
\begin{align}\label{eqn:strong-B-stability-imply-2}
\begin{aligned}
    &\sum_{\tau_{h+1:H}}\pi'(\tau_{h+1:H}|\tau_h)\times \abs{\BB_{H:h+1}^\theta(\tau_{h+1:H})x }\leq \stab \sum_{t_{h+1}\in\Uhp}\pi(t_{h+1}|\tau_h)\times\abs{x(t_{h+1})}.
\end{aligned}
\end{align}
Therefore, using use the following formula:
\begin{align*}
    \left(\B^\theta_h(o_h,a_h) - \B^{\otheta}_h(o_h,a_h)\right)\bd^{\otheta}(\tau_{h-1})
    =
    \B^\theta_h(o_h,a_h)\paren{\bd^{\otheta}(\tau_{h-1})-\bd^{\theta}(\tau_{h-1})}
    +
    \paren{\B^\theta_h(o_h,a_h)\bd^{\theta}(\tau_{h-1}) - \B^{\otheta}_h(o_h,a_h)\bd^{\otheta}(\tau_{h-1})},
\end{align*}
we have
\begin{align*}
2\cesthth
=&
\max_{\pi'\in\Pihpi} \sum_{\tau_{h:H}}\pi'(\tau_{h:H}|\tauhm)\times \abs{\BB_{H:h+1}^\theta(\tau_{h+1:H}) \left(\B^\theta_h(o_h,a_h) - \B^{\otheta}_h(o_h,a_h)\right)\bd^{\otheta}(\tau_{h-1})} \\
\le&
\max_{\pi'\in\Pihpi} \sum_{\tau_{h:H}}\pi'(\tau_{h:H}|\tauhm)\times \abs{\BB_{H:h}^\theta(\tau_{h:H}) \left(\bd^\theta(\tau_{h-1})-\bd^{\otheta}(\tau_{h-1})\right) }  \\
 &+ 
\max_{\pi'\in\Pihpi} \sum_{\tau_{h:H}}\pi'(\tau_{h:H}|\tauhm)\times \abs{\BB_{H:h+1}^\theta(\tau_{h+1:H})\left( \B^\theta_h(o_h,a_h)\bd^\theta(\tau_{h-1}) - \BB_h^{\otheta}(o_h,a_h)\bd^{\otheta}(\tau_{h-1}) \right)} \\
\leq &
\stab\sum_{t_h\in\Uh}\pi(t_h|\tauhm)\times \abs{\e_{t_h}^\top\paren{\bq^{\theta}(\tauhm)-\bq^{\otheta}(\tauhm)}}\\
&+
\stab\sum_{o_h,a_h}\sum_{t_{h+1}\in\Uhp}\pi(o_h,a_h,t_{h+1}|\tauhm)\times\abs{\e_{t_{h+1}}^\top\paren{  \B^\theta_h(o_h,a_h)\bd^\theta(\tau_{h-1}) - \BB_h^{\otheta}(o_h,a_h)\bd^{\otheta}(\tau_{h-1})} },
\end{align*}
where the last inequality uses \eqref{eqn:strong-B-stability-imply-1} and \eqref{eqn:strong-B-stability-imply-2}.
Notice that $\bq^{\theta}(\tauhm)=\brac{\PP_\theta(t_h|\tauhm)}_{t_h\in\Uh}$, and hence
\begin{align*}
&~\sum_{t_h\in\Uh}\pi(t_h|\tauhm)\times \abs{\e_{t_h}^\top\paren{\bq^{\theta}(\tauhm)-\bq^{\otheta}(\tauhm)}}\\
=&~
\sum_{t_h\in\Uh}\pi(t_h|\tauhm)\times \abs{\PP_\theta(t_h|\tauhm)-\PP_\otheta(t_h|\tauhm)}\\
\leq&~ \DTV{ \PP_\theta^\pi(\tau_{h:H}=\cdot|\tau_{h-1}), \PP_\otheta^\pi(\tau_{h:H}=\cdot|\tau_{h-1})}.
\end{align*}
Also, by the definition of \Bpara~(cf. \cref{def:Bpara}), we have
$$
\brac{\B_h^{\theta}(o,a)\bq^{\theta}(\tau_{h-1})}(t_{h+1})=\PP_{\theta}(t_{h+1}|\tau_{h-1},o,a)\times\PP_\theta(o|\tauhm)=\PP_{\theta}(o,a,t_{h+1}|\tau_{h-1}),
$$
and therefore
\begin{align*}
    &~\sum_{o_h,a_h}\sum_{t_{h+1}\in\Uhp}\pi(o_h,a_h,t_{h+1}|\tauhm)\times\abs{\e_{t_{h+1}}^\top\paren{  \B^\theta_h(o_h,a_h)\bd^\theta(\tau_{h-1}) - \BB_h^{\otheta}(o_h,a_h)\bd^{\otheta}(\tau_{h-1})} }\\
    =&~
    \sum_{o_h,a_h}\sum_{t_{h+1}\in\Uhp}\pi(o_h,a_h,t_{h+1}|\tauhm)\times\abs{\PP_{\theta}(o_h,a_h,t_{h+1}|\tau_{h-1})-\PP_{\otheta}(o_h,a_h,t_{h+1}|\tau_{h-1}) }\\
    =&~
    \sum_{o_h,a_h}\sum_{t_{h+1}\in\Uhp}\abs{\PP_{\theta}^\pi(o_h,a_h,t_{h+1}|\tau_{h-1})-\PP_{\otheta}^\pi(o_h,a_h,t_{h+1}|\tau_{h-1}) }\\
    \leq &~
    \DTV{ \PP_\theta^\pi(\tau_{h:H}=\cdot|\tau_{h-1}), \PP_\otheta^\pi(\tau_{h:H}=\cdot|\tau_{h-1})}.
\end{align*}
Combining the inequalities above, we have already shown that
\begin{align*}
    \cesthth\leq \stab\DTV{ \PP_\theta^\pi(\tau_{h:H}=\cdot|\tau_{h-1}), \PP_\otheta^\pi(\tau_{h:H}=\cdot|\tau_{h-1})}
\end{align*}
for any step $h\in[H]$.  
Therefore, we can use that fact that $\dTV\leq \dH$ and apply \cref{lemma:Hellinger-cond} to obtain
\begin{align*}
    \EE_{\otheta}^\pi\brac{\cesthth^2}\leq& 
    \stab^2\EE_{\otheta}^\pi\brac{\DTV{ \PP_\theta^\pi(\tau_{h:H}=\cdot|\tau_{h-1}), \PP_\otheta^\pi(\tau_{h:H}=\cdot|\tau_{h-1})}^2}\\
    \leq &\stab^2\EE_{\otheta}^\pi\brac{\DH{ \PP_\theta^\pi(\tau_{h:H}=\cdot|\tau_{h-1}), \PP_\otheta^\pi(\tau_{h:H}=\cdot|\tau_{h-1})}}\leq2\stab^2\DH{\PP_\theta^\pi,\PP_\otheta^\pi}.
\end{align*}

A similar argument can also show that $(\cesththz)^2\leq \stab^2\DTV{ \PP_\theta^\pi, \PP_\otheta^\pi}^2 \leq \stab^2\DH{ \PP_\theta^\pi, \PP_\otheta^\pi}$.
\qed

\subsection{Proof of Theorem~\ref{thm:OMLE-regret}}\label{appdx:proof-OMLE}

The proof of \cref{thm:OMLE-regret} uses the following fast rate guarantee for the OMLE algorithm, which is standard (e.g.~\citet{van2000empirical,agarwal2020flambe}, and a simple proof can be found in \citep[Appendix E]{chen2022partially}).
\begin{proposition}[Guarantee of MLE]\label{thm:MLE}
Suppose that we choose $\beta\geq 2\logNt(1/T)+2\log(1/\delta)+2$ in \cref{alg:OMLE}. Then with probability at least $1-\delta$, the following holds:
\begin{enumerate}[wide, label=(\alph*)]
\item For all $k\in[K]$, $\ths\in\Theta^k$;
\item For all $k\in[K]$ and any $\theta\in\Theta^k$, it holds that
\begin{align*}
    \sum_{t=1}^{k-1} \DH{ \PP^{\pi^t}_{\theta}, \PP^{\pi^t}_{\ths} } \leq 2\beta.
\end{align*}
\end{enumerate}
\end{proposition}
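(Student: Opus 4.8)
The plan is to obtain both parts of \cref{thm:MLE} from a single nonnegative supermartingale built from Hellinger affinities, together with a union bound over an optimistic cover of $\Theta$; this is the standard maximum-likelihood concentration argument (cf.~\citet{van2000empirical,agarwal2020flambe}) specialized to the sequential setting. Write $\ell^k(\theta)\defeq\sum_{t=1}^{k}\log\frac{\PP^{\pi^t}_{\theta}(\tau^t)}{\PP^{\pi^t}_{\ths}(\tau^t)}$ for the in-sample log-likelihood ratio after $k$ episodes, and let $\cF_{t-1}$ be the history $\sigma$-field, so that $\pi^t$ is $\cF_{t-1}$-measurable by construction of \cref{alg:OMLE}. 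The one-step Hellinger affinity identity $\sum_{\tau}\sqrt{\PP^{\pi^t}_{\theta}(\tau)\PP^{\pi^t}_{\ths}(\tau)}=1-\tfrac12\DH{\PP^{\pi^t}_{\theta},\PP^{\pi^t}_{\ths}}$ gives $\EE_{\ths}[\sqrt{\PP^{\pi^t}_{\theta}(\tau^t)/\PP^{\pi^t}_{\ths}(\tau^t)}\mid\cF_{t-1}]=1-\tfrac12\DH{\PP^{\pi^t}_{\theta},\PP^{\pi^t}_{\ths}}\le\exp(-\tfrac12\DH{\PP^{\pi^t}_{\theta},\PP^{\pi^t}_{\ths}})$, so for each fixed $\theta\in\Theta$ the process $W^k_{\theta}\defeq\prod_{t=1}^{k}\big(\sqrt{\PP^{\pi^t}_{\theta}(\tau^t)/\PP^{\pi^t}_{\ths}(\tau^t)}\cdot\exp(\tfrac12\DH{\PP^{\pi^t}_{\theta},\PP^{\pi^t}_{\ths}})\big)$ is a nonnegative supermartingale with $W^0_{\theta}=1$, and $W^k_{\theta}=\exp(\tfrac12\ell^k(\theta)+\tfrac12\sum_{t\le k}\DH{\PP^{\pi^t}_{\theta},\PP^{\pi^t}_{\ths}})$.

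Next I would uniformize in $\theta$ and $k$. By Ville's maximal inequality, $\PP(\sup_{k\le T}W^k_{\theta}\ge1/\delta')\le\delta'$ for every $\theta$. Let $\Theta_0$ be an optimistic $(1/T)$-cover of $\Theta$ of cardinality $\Nt(1/T)$ in the sense of \citet[Definition A.4]{chen2022partially}; applying the tail bound with $\delta'=\delta/\Nt(1/T)$ and union bounding over $\theta\in\Theta_0$, then passing from $\Theta_0$ to an arbitrary $\theta\in\Theta$ via the defining property of the cover (replacing $\theta$ by its nearest cover element changes each realized log-likelihood $\sum_{t\le k}\log\PP^{\pi^t}_{\theta}(\tau^t)$ by at most an absolute constant over the $T$ episodes), one obtains an event $\cG$ of probability $\ge1-\delta$ on which, for all $\theta\in\Theta$ and all $k\in[T]$,
\[
\ell^k(\theta)+\sum_{t=1}^{k}\DH{\PP^{\pi^t}_{\theta},\PP^{\pi^t}_{\ths}}\le2\logNt(1/T)+2\log(1/\delta)+2\le\beta.
\]

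Finally I would read off the two claims on $\cG$. For (a): since $\ell^k(\ths)=0$ and the displayed bound gives $\sup_{\theta\in\Theta}\ell^k(\theta)\le\beta$, we have $\sum_{t\le k}\log\PP^{\pi^t}_{\ths}(\tau^t)\ge\sup_{\theta\in\Theta}\sum_{t\le k}\log\PP^{\pi^t}_{\theta}(\tau^t)-\beta$, which is exactly the membership condition for $\ths\in\Theta^{k+1}$; together with $\Theta^1=\Theta$ this gives $\ths\in\Theta^k$ for all $k\in[T]$. For (b): if $\theta\in\Theta^k$ then by the update rule $\sum_{t\le k-1}\log\PP^{\pi^t}_{\theta}(\tau^t)\ge\sup_{\theta'\in\Theta}\sum_{t\le k-1}\log\PP^{\pi^t}_{\theta'}(\tau^t)-\beta\ge\sum_{t\le k-1}\log\PP^{\pi^t}_{\ths}(\tau^t)-\beta$, i.e.~$\ell^{k-1}(\theta)\ge-\beta$; combining with the displayed bound at index $k-1$ yields $\sum_{t=1}^{k-1}\DH{\PP^{\pi^t}_{\theta},\PP^{\pi^t}_{\ths}}\le2\beta$, which is (b). The only step requiring care is the discretization in the second paragraph — controlling the change in the realized log-likelihoods under the cover — which is precisely what the optimistic covering number $\Nt$ of \citet{chen2022partially} is tailored to handle (a plain parameter-space cover does not suffice because log-ratios are unbounded); the supermartingale construction and the final rearrangements are routine.
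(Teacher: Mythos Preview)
Your proposal is correct and follows precisely the standard Hellinger-affinity supermartingale argument that the paper invokes by citation (\citet{van2000empirical,agarwal2020flambe}, with a detailed version in \citet[Appendix E]{chen2022partially}) rather than reproducing in full. The one step you flag as delicate---passing from the cover $\Theta_0$ to arbitrary $\theta\in\Theta$---is indeed exactly where the optimistic-cover definition is used, and your final rearrangements for (a) and (b) are clean and match the intended constants.
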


We next prove \cref{thm:OMLE-regret}. We adopt the definitions of $\cesthth$ as in \cref{prop:psr-err-decomp} and abbreviate $\cEs_{k,h}=\cE^{\ths}_{\theta^k,h}$. We also condition on the success of the event in \cref{thm:MLE}. %

\paragraph{Step 1.} By \cref{thm:MLE}, it holds that $\ths\in\Theta$. Therefore, $V_{\theta^k}(\pi^k)\geq \Vs$, and by \cref{prop:psr-err-decomp}, we have
\begin{equation}\label{eqn:performance-OMLE-proof}
\begin{aligned}
\sum_{t=1}^k \paren{ \Vs-V_{\ths}(\pi^t) }
\leq& 
\sum_{t=1}^k \paren{ V_{\theta^t}(\pi^t)-V_{\ths}(\pi^t) } \leq
\sum_{t=1}^k \DTV{ \PP_{\theta^t}^{\pi^t}, \PP_{\ths}^{\pi^t} } \\
\leq& 
\sum_{t=1}^k 1\wedge \paren{\cE_{t,0}^\star(\pi^t)+\sum_{h=1}^H \EE_{\pi^t}\brac{\cEs_{t,h}(\pi^t,\tauhm)} }\\
\leq& 
\sum_{t=1}^k \paren{1\wedge\cE_{t,0}^\star(\pi^t)+\sum_{h=1}^H 1\wedge \EE_{\pi^t}\brac{\cEs_{t,h}(\pi^t,\tauhm)} },
\end{aligned}
\end{equation}
where the expectation $\EE_{\pi^t}$ is taken over $\tauhm\sim\PP^{\pi^t}_\ths$.
On the other hand, by \cref{prop:psr-err-decomp}, we have
\begin{align*}
    \EE_{\pi^t}\brac{\cEs_{k,h}(\pi^t,\tauhm)^2}\leq 2\stab^2\DH{\PP_{\theta^k}^{\pi^t},\PP_\ths^{\pi^t}}, \qquad \cEs_{k,0}(\pi^t)^2\leq \stab^2\DH{\PP_{\theta^k}^{\pi^t},\PP_\ths^{\pi^t}}.
\end{align*}
Furthermore, by \cref{thm:MLE} we have $\sum_{t=1}^{k-1} \DH{ \PP^{\pi^t}_{\theta^k}, \PP^{\pi^t}_{\ths} } \leq 2\beta.$
Therefore, combining the two equations above gives
\begin{equation}\label{eqn:squared-B-bound-in-proof-OMLE}
\sum_{t<k} \EE_{\pi^t}[\cEs_{k,h}(\pi^t,\tauhm)^2] \leq 4\stab^2\beta, \qquad \forall k \in [K], 0\leq h\leq H.
\end{equation}

\paragraph{Step 2.} We would like to bridge the performance decomposition \eqref{eqn:performance-OMLE-proof} and the squared B-errors bound \eqref{eqn:squared-B-bound-in-proof-OMLE} using the generalized $\ell_2$-Eluder argument. We consider separately the case for $h \in [H]$ and $h = 0$. 

\textbf{Case 1: $h \in [H]$.} We denote $m=m_{h+1}$ such that $\Uhp=(\cO\times\cA)^{m-1}\times\cO$, $\Omhp=(\cO\times\cA)^{m-1}$. By definition,
\begin{align*}
\cEs_{k,h}(\pi^t,\tauhm)\defeq&
    \frac12 \max_{\pi'\in\Pi_h(\pi^t)}\sum_{\tau_{h:H}}\pi'(\tau_{h:H}|\tauhm)\times \abs{\B_{H:h+1}^k(\tau_{h+1:H}) \left(\B^k_h(o_h,a_h) - \Bs_h(o_h,a_h)\right)\bds(\tau_{h-1})},\\
    =&
    \frac12 \max_{\pi'}\sum_{\tau_{h:H}}\pi'(\tau_{h+m:H}|\tau_{h+m-1})\times\pi^t(\tau_{h:h+m-1}|\tauhm) \times \abs{\B_{H:h+1}^k(\tau_{h+1:H}) \left(\B^k_h(o_h,a_h) - \Bs_h(o_h,a_h)\right)\bds(\tau_{h-1})}\\
    =&
    \frac12 
    \sum_{o_h,a_h}\sum_{\omhp\in\Omhp} \pi^t(o_h,a_h,\omhp|\tauhm)\nrmpi{\cB^k_{H:h+m}\cdot\B^k_{h+m-1:h+1}(\omhp) \left(\B^k_h(o_h,a_h) - \Bs_h(o_h,a_h)\right)\bds(\tau_{h-1})},
\end{align*}
where in the last equality we adopt the notation introduced in \eqref{eqn:B-op}.

To bridge between \eqref{eqn:performance-OMLE-proof} and \eqref{eqn:squared-B-bound-in-proof-OMLE}, we invoke the following generalized $\ell_2$-Eluder lemma, which can be obtained directly by generalizing~\citet[Proposition C.1 \& Corollary C.2]{chen2022partially} (which correspond to the special case of the following result with $N=1$).

\begin{lemma}[Generalized $\ell_2$-Eluder argument]
\label{lemma:l2-eluder}
Suppose we have a sequence of functions $\{ \efunc_{k,l}:\R^n\to \R \}_{(k,l) \in [K]\times[N]}$:
\begin{align*}
    \efunc_{k,l}(x):=\max_{r \in \cR}\sum_{j=1}^J \abs{\iprod{x}{y_{k,l,j,r}}},
\end{align*}
which is given by the family of vectors $\set{y_{k,l,j,r}}_{(k,j,r)\in[K]\times[J]\times\cR}\subset\R^n$.
Further assume that there exists $L>0$ such that $\efunc_{k,l}(x)\leq L\nrm{x}_1$. 

Consider further a sequence of vector $(x_{t,l,i})_{(t,l,i)\in[K]\times[N]\times\cI}$, satisfying the following condition
\begin{align*}
    \sum_{t = 1}^{k-1} \EE_{i\sim q_t}\brac{ \paren{\sum_{l=1}^N \efunc_{k,l}(x_{t,l,i}) }^2 } \leq \beta_k, 
    \phantom{xxxx} \forall k\in[K],
\end{align*}
and the subspace spanned by  $(x_{t,l,i})$ has dimension at most $d$.
Then it holds that 
\begin{align*}
\sum_{t=1}^k 1\wedge \EE_{i\sim q_t} \brac{ \sum_{l=1}^N \efunc_{t,l}(x_{t,l,i}) }
\leq
\sqrt{4Nd\Big(k+\sum_{t=1}^k \beta_t\Big)\log\left(1+kdL\max_i \lone{x_i}\right)},\phantom{xxxx} \forall k\in[K].
\end{align*}
\end{lemma}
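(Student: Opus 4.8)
The plan is to establish \cref{lemma:l2-eluder} as a direct generalization of the $\ell_2$-Eluder argument in \citet[Proposition C.1 \& Corollary C.2]{chen2022partially}, with the only change being the extra sum over $l\in[N]$. First I would observe that the quantity $\sum_{l=1}^N \efunc_{t,l}(x_{t,l,i})$ plays exactly the role of a single scalar ``error'' in the $N=1$ case; so the goal is to treat the tuple $(x_{t,1,i},\dots,x_{t,N,i})$ as living in the product space $\R^{nN}$ and apply the known argument there. Concretely, define the block-diagonal ``feature'' $X_{t,i}\defeq (x_{t,1,i},\dots,x_{t,N,i})\in\R^{nN}$, and for each $r$ the block-stacked vector $Y_{k,j,r}\defeq (y_{k,1,j,r},\dots,y_{k,N,j,r})$, so that $\sum_{l=1}^N\efunc_{k,l}(x_{t,l,i}) = \max_r \sum_{j=1}^J \abs{\sum_{l=1}^N \iprod{x_{t,l,i}}{y_{k,l,j,r}}} = \max_r \sum_{j=1}^J \abs{\iprod{X_{t,i}}{Y_{k,j,r}}}$. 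This reduces the statement to the $N=1$ Eluder lemma of \citet{chen2022partially} applied in dimension $nN$, but with effective dimension bounded by $Nd$ since the span of $(X_{t,i})$ has dimension at most $Nd$ (each block spans a space of dimension $\le d$).

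Second I would run the covariance-potential bookkeeping: introduce $\Sigma_k \defeq \lambda I + \sum_{t<k}\EE_{i\sim q_t}[X_{t,i}X_{t,i}^\top]$ (with a suitable $\lambda$, e.g. $\lambda = 1/(kdL^2)$ or similar, as in \citet{chen2022partially}), bound $\EE_{i\sim q_t}[\abs{\iprod{X_{t,i}}{Y}}]$ by a Cauchy--Schwarz step in the $\Sigma$-weighted norm, split episodes into ``well-covered'' ($\nrm{X_{t,i}}_{\Sigma_t^{-1}}$ small) and ``poorly-covered'' rounds, control the number of poorly-covered rounds by the elliptical potential / log-determinant lemma (which contributes the $\log(1+kdL\max_i\lone{x_i})$ factor and the dimension factor $Nd$), and use the hypothesis $\sum_{t<k}\EE_{i\sim q_t}[(\sum_l \efunc_{k,l}(x_{t,l,i}))^2]\le\beta_k$ together with $\sum_{t\le k}(1+\beta_t)$ to bound the accumulated error on well-covered rounds. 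Putting these together yields $\sum_{t=1}^k 1\wedge \EE_{i\sim q_t}[\sum_l\efunc_{t,l}(x_{t,l,i})] \le \sqrt{4Nd(k+\sum_{t=1}^k\beta_t)\log(1+kdL\max_i\lone{x_i})}$, matching the claimed bound.

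I do not expect a serious obstacle here — the lemma is deliberately phrased so that the $N=1$ proof transfers verbatim after the block-stacking reduction. The one point requiring mild care is that the ``dimension'' appearing in the log-determinant step is the dimension of the span of the $(X_{t,i})$, which is at most $Nd$ rather than $d$; one must make sure the constant $4Nd$ in front (as opposed to $4d$) is tracked correctly through the potential argument, and that the max-$\ell_1$-norm bound $\max_i\lone{x_i}$ is the one over the original $x$'s (not the stacked $X$'s, which only changes constants). Since \citet{chen2022partially} already did the $N=1$ case including the $\max_r$ over the finite index set $\cR$ and the inner $\sum_{j=1}^J$, the multiplicity in $j$ and $r$ carries through unchanged.

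Finally, once \cref{lemma:l2-eluder} is available, the remainder of the proof of \cref{thm:OMLE-regret} is the application sketched in Step~1--Step~2 of the excerpt: for $h\in[H]$ one instantiates the lemma with $N$ indexing over pairs $(o_h,a_h,\omega_{h+1})$ (so $N = OAU_{\cT}$), with $x_{t,l,i}$ being $\B^k_{h+m-1:h+1}(\omega_{h+1})(\B^k_h(o_h,a_h)-\Bs_h(o_h,a_h))\bds(\tau_{h-1})$ indexed by $i=\tau_{h-1}$ drawn from $\PP^{\pi^t}_\ths$, with $L$ controlled by strong B-stability ($\le \stab$), and $\beta_k = 4\stab^2\beta$ from \eqref{eqn:squared-B-bound-in-proof-OMLE}; summing the resulting bound over $h\in[H]$ and adding the analogous $h=0$ term gives $\sum_{t=1}^T(\Vs - V_\ths(\pi^t)) = \bigO{\sqrt{\stab^2 OAU_{\cT}dH^2\clog\beta T}}$ via \eqref{eqn:performance-OMLE-proof} and Cauchy--Schwarz over $h$. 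The hard part of the whole section is really already done (it is the reduction to the product-space Eluder lemma); the rest is routine assembly.
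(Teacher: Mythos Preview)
Your overall plan—reduce to the $N=1$ case of \citet[Proposition C.1 \& Corollary C.2]{chen2022partially} by stacking into a product space and picking up an extra factor $N$ in the dimension—is exactly what the paper intends (it says only that the lemma ``can be obtained directly by generalizing'' the $N=1$ result). However, the specific block-stacking identity you wrote is false. You claim
\[
\sum_{l=1}^N \efunc_{k,l}(x_{t,l,i}) \;=\; \max_{r} \sum_{j=1}^J \Big|\sum_{l=1}^N \iprod{x_{t,l,i}}{y_{k,l,j,r}}\Big|,
\]
but the left side is $\sum_l \max_r \sum_j |\iprod{x_{t,l,i}}{y_{k,l,j,r}}|$, and neither the $\max_r$ nor the absolute value commutes with $\sum_l$. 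In fact the right side is only $\le$ the left side (triangle inequality, then $\max$ of a sum $\le$ sum of $\max$), so the $N=1$ lemma applied to your $(X,Y)$ bounds the \emph{smaller} quantity—not the one you need. The fix is to embed more carefully: keep $X_{t,i}=(x_{t,1,i},\dots,x_{t,N,i})$, enlarge the inner index to $(j,l')\in[J]\times[N]$ and the outer index to $\mathbf{r}=(r_1,\dots,r_N)\in\cR^N$, and let $Y_{k,(j,l'),\mathbf{r}}$ have $y_{k,l',j,r_{l'}}$ in block $l'$ and zeros elsewhere. Then $\iprod{X_{t,i}}{Y_{k,(j,l'),\mathbf{r}}}=\iprod{x_{t,l',i}}{y_{k,l',j,r_{l'}}}$ and
\[
\max_{\mathbf{r}\in\cR^N}\sum_{j,l'}\big|\iprod{X_{t,i}}{Y_{k,(j,l'),\mathbf{r}}}\big|
=\sum_{l'}\max_{r_{l'}}\sum_j\big|\iprod{x_{t,l',i}}{y_{k,l',j,r_{l'}}}\big|
=\sum_{l=1}^N \efunc_{k,l}(x_{t,l,i}),
\]
which is the identity you actually need. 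After this, the $N=1$ argument runs verbatim in $\R^{nN}$ with span-dimension $\le Nd$, giving the stated bound.

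A separate slip in your last paragraph: when instantiating the lemma for \cref{thm:OMLE-regret}, you put the $k$-dependent quantity $\B^k_{h+m-1:h+1}(\omhp)(\B^k_h(o_h,a_h)-\Bs_h(o_h,a_h))\bds(\tau_{h-1})$ in the $x$-slot. That cannot work: $x_{t,l,i}$ must not depend on $k$, and its span must have dimension $\le d$. The paper instead takes $x_{t,l,i}=\pi^t(o_h^l,a_h^l,\omhp^l\mid\tau_{h-1}^i)\cdot\bds(\tau_{h-1}^i)$ (a scalar multiple of the predictive state, so span $\le d$) and pushes all the $\B^k$-dependence into the $y$-vectors; the Lipschitz constant then becomes $L=\tfrac12\stab\rb$, not $\stab$.
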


We have the following three preparation steps to apply \cref{lemma:l2-eluder}. 

1. We define
\begin{align*}
    x_{t,l,i}&:=\pi^t(o_{h}^l,a_{h}^l,\omhp^l|\tauhm^i)\times\bds(\tauhm^i)\in\R^{\Uh}, \\
    y_{k,l,j,\pi}&:=\frac12 \pi(\tau_{h+m:H}^j)\times \brac{ \B_{H:h+m}^k(\tau_{h+1:H}^j)\B_{h+m-1:h+1}^k(\omhp^l) \left(\B^k_h(o_h^l,a_h^l) - \Bs_h(o_h^l,a_h^l)\right)}^\top \in \R^{\Uh},
\end{align*}
where $\{\tau_{h-1}^i \}_i$ is an ordering of all possible $\tau_{h-1} \in (\cO \times \cA)^{h-1}$, $\{\tau_{h+m:H}^j=(o_{h+m},a_{h+m},\cdots,o_H,a_H) \}_{j=1}^{n}$ is an ordering of all possible $\tau_{h+m:H}$ (and hence $n=(OA)^{H-h-m+1}$), $\{(o_{h}^l,a_{h}^l,\omhp^l) \}_{l=1}^{N}$ is an ordering of $\cO\times\cA\times\Omhp$ (and hence $N=OA\abs{\Omhp}\leq OAU_{\cT}$), $\pi$ is any policy that starts at step $h$. We then define
\begin{align*}
    f_{k,l}(x)= \max_{\pi}\sum_{j} \abs{\iprod{y_{k,l,j,\pi}}{x}}, \qquad x\in\R^{\Uh}.
\end{align*}
It follows from definition that 
$$
\cEs_{k,h}(\pi^t,\tauhm^i)=\sum_{l=1}^N \pi^t(o_{h}^l,a_{h}^l,\omhp^l|\tauhm^i)\times f_{k,l}(\bds(\tauhm^i))
=\sum_{l=1}^N f_{k,l}(x_{t,l,i}).
$$

2. By the assumption that $\theta^\star$ has PSR rank less than or equal to $d$, we have $\dim\spa(x_{t,l,i})\leq d$. Furthermore, we have $\lone{x_{t,l,i}}\leq U_A\leq U_{\cT}$ by definition. 

3. It remains to verify that $f_k$ is Lipschitz with respect to $1$-norm. Clearly,
\begin{align*}
    f_{k,l}(\bq)
    \leq&~ \frac12\brac{\nrmpi{\cBHh^k \bq}+\max_{o,a} \nrmpi{\cBHhp^k \Bs_h(o,a) \bq} }\\
    \leq&~ \frac12\brac{  \stab\lone{\bq}+ \stab  \max_{o,a} \lone{\Bs_h(o,a) \bq} }
    \leq \frac12 \stab \rb\lone{\bq}.
\end{align*}
Hence we can take $L=\frac12\stab \rb$ to ensure that $f_{k,l}(x)\leq L\nrm{x}_1$.

Therefore, applying \cref{lemma:l2-eluder} yields
\begin{align}\label{eqn:omle-proof-case2}
    \sum_{t=1}^k 1\wedge \EE_{\pi^t}\brac{\cEs_{t,h}(\pi^t,\tauhm) } \leq \bigO{\sqrt{\stab^2Nd\clog \beta k }}\leq \bigO{\sqrt{\stab^2OAU_{\cT}d\clog \beta k }}.
\end{align}
This completes case 1.

\textbf{Case 2: $h=0$.} This case follows similarly as
\begin{align}\label{eqn:omle-proof-case1}
    \sum_{t=1}^k 1\wedge \cEs_{t,0}(\pi^t) \leq \bigO{\sqrt{\stab^2OAU_{\cT}\clog \beta k }}.
\end{align} 

Combining these two cases, we obtain
\begin{align*}
\sum_{t=1}^k \paren{ \Vs-V_{\ths}(\pi^t) }
\stackrel{(i)}{\leq}&
\sum_{t=1}^k 1\wedge\cE_{t,0}^\star(\pi^t)+\sum_{h=1}^H \sum_{t=1}^k 1\wedge \EE_{\pi^t}\brac{\cEs_{t,h}(\pi^t,\tauhm)} \\
\stackrel{(ii)}{\leq}& 
\bigO{\sqrt{\stab^2OAU_{\cT}\clog \beta k }}+H\cdot \bigO{\sqrt{\stab^2OAU_{\cT}d\clog \beta k }}
\leq\bigO{\sqrt{H^2\stab^2Nd\clog \beta k }},
\end{align*}
where (i) used \eqref{eqn:performance-OMLE-proof}; (ii) used the above two cases \cref{eqn:omle-proof-case1} and \cref{eqn:omle-proof-case2}.
This completes the proof of \cref{thm:OMLE-regret}
\qed
\section{Additional discussions}

\subsection{Impossibility of a generic sample complexity in DEC + log covering number of value/policy class}
\label{appdx:discussion-dec}

A typical guarantee of DEC theory \cite{foster2021statistical, chen2022unified} asserts that for any model class $\cM$ and policy class $\Pi$, the E2D algorithm achieves
\begin{align}\label{eqn:DEC-assert}
    \EE\brac{\Regret}\leq \bigO{1}\cdot  \min_{\gamma>0} \paren{ T\cdot \dec_{\gamma}^{\rm H}(\cM) + \gamma\log\abs{\cM} }.
\end{align}
\citet{foster2021statistical} also showed that, letting $\co(\cM)$ denote the convex hull of $\cM$ (the set of all mixture models of $M\in\cM$), there is a variant of E2D that achieves
\begin{align*}
\EE\brac{\Regret}\leq \bigO{1}\cdot  \min_{\gamma>0} \paren{ T\cdot \dec_{\gamma}^{\rm H}(\co(\cM)) + \gamma\log\abs{\Pi} }.
\end{align*}
However, $\dec_{\gamma}^{\rm H}(\co(\cM))$ is typically intractable large---For example, when $\cM$ is the class of all tabular MDPs, $\dec_{\gamma}^{\rm H}(\co(\cM))$ scales exponentially in $S,H$~\citep{foster2022complexity}. Therefore, it is natural to ask the following

\begin{quote}
\textbf{Question:} Is it possible to obtain a regret upper bound that replaces the term $\log\abs{\cM}$ in \eqref{eqn:DEC-assert} by $\log\abs{\Pi}$ or $\log\abs{\cF_{\cM}}$ (where $\cF_{\cM}$ is a certain class of value functions induced by $\cM$)?
\end{quote}

The question above is of particular interest when the model class $\cM$ itself is much larger than the value class (e.g. Q-function class), for example when $\cM$ is a class of linear MDPs~\citep{jin2020provably} with a known feature $\phi(s,a)$ but unknown $\mu(s')$. Also, replacing $\log\abs{\cM}$ in \eqref{eqn:DEC-assert} by $\log\abs{\Pi}$ could be a decent improvement for specific problem classes, such as tabular MDPs in which case we can take $\Pi$ to be the class of deterministic Markov policies with $\log\abs{\Pi}=\tbO{SH}$, which is smaller than $\log\abs{\cM} = \tbO{\log\cN_{\cM}} =\tbO{S^2AH}$ by a factor of $SA$.

However, our lower bounds for revealing POMDPs---specifically our hard instance construction in~\cref{appdx:no-regret}---provides a (partially) negative answer to this question. 
For simplicity, consider the $m=2$ case, and assume $A^H\gg \poly(S,O,A,\alpha^{-1},T)$)
We have the following basic facts about our model class $\cM$.
\begin{enumerate}
    \item The structure of $\cM$ ensures that any possibly optimal policy is a deterministic action sequence (that does not depend on the history), and hence we can take $\Pi=\set{\text{determinstic action sequences}}$, with $\log{\Pi}=\tbO{H}$.
    \item The general results in~\citet{chen2022partially} shows that as long as $\cM$ is a subclass of $2$-step $\alpha$-revealing POMDPs, it holds that $\edec_\gamma(\cM) \le \tO(SA^2H^2\arev^{-2}/\gamma)$---where $\edec$ is a PAC-learning analogue of the $\dec$---which implies that $\dec_\gamma(\cM) \le \tO(\sqrt{SA^2H^2\arev^{-2}/\gamma})$~\citep{chen2022unified}.
    \item \cref{prop:no-regret-prop} states that worst-case regret within family $\cM$ for any algorithm is lower bounded by $\Om{(S\sqrt{O}A^2H\alpha^{-2})^{1/3}T^{2/3}}$.
\end{enumerate}
Note that the regret lower bound involves a ${\rm poly}(O)$ factor, which does not appear in the upper bound for the dec. This leads to the following

\begin{quote}
\textbf{Fact: } Without further structural assumptions for the problem, a regret upper bound of the form
\begin{align}\label{eqn:DEC-policy}
    \EE\brac{\Regret}\leq \bigO{1}\cdot  \min_{\gamma>0} \paren{ T\cdot \dec_{\gamma}^{\rm H}(\cM) + \gamma \log\abs{\Pi} }
\end{align}
is not achievable. 
\end{quote}
The above fact is because that if~\eqref{eqn:DEC-policy} were achievable, then combining with the aformentioned dec upper bound would result in a regret upper bound that does not scale with ${\rm poly}(O)$, contradicting the lower bound.

Similarly, if we view each POMDP $M\in\cM$ as an MDP by viewing each history $\tau_h$ as a ``mega-state'', then naturally the Q-function of $M$ is given by
\begin{align*}
    Q_M^\star(\tau_{h})=\EE_M^{\pi_M^\star}\cdbrac{\sum_{h'=1}^H r_{h'}}{\tau_{h}}, \qquad\tau_h\in(\cO\times\cA)^h, 0\leq h\leq H,
\end{align*}
where $\pi_M^\star$ is the optimal policy for $M$. For our family $\cM$, it is straightforward to check that $\log{\cQ_{\cM}}=\tbO{H}$, where $\cQ_{\cM}=\set{Q_M^\star:M\in\cM}$. Therefore, the answer to the question above is also negative if we take the value class to be such a Q-function class.

\subsection{Algorithms for hard instances of Theorem \ref{thm:multi-step-pac-demo}}
\label{appdx:discussion-algorithm}

We propose a brute-force algorithm $\fA$ to learn the class of hard instances provided in \cref{appdx:multi-step-pac} (for proving \cref{thm:multi-step-pac-demo}), which admits a PAC sample complexity $\tbO{S^{3/2}O^{1/2}A^{m}H /(\alpha^2\epsilon^2)}$. Algorithm $\fA$ contains two stages: 
\begin{enumerate}
    \item Stage 1: For each $h\in\cH, s\in\Sl, a\in\Ac, \a\in\cA^{m-1}$, the algorithm spends $N_1$ episodes on visiting $o_{h}=s$, taking actions $a_{h:h+m}=(a,\a,\acrev)$, and observing $(o_{h+m},o_{h+m+1})$. The observed $(o_{h+m},o_{h+m+1})$ should then satisfy the joint distribution 
    \begin{align*}
        \PP(\olock^j, o_i^+)=\frac{1+\sigma\epsilon\mu_{j,i}}{2KL}, \qquad \PP(\olock^j, o_i^-)=\frac{1-\sigma\epsilon\mu_{j,i}}{2KL}, \qquad \forall (j,i)\in[L]\times[K]
    \end{align*}
    if $(h,s,a,\a)=(\hs,\ss,\acs,\as_{\hs+1:\hs+m-1})$, and satisfy distribution $\Unif(\{\olock^1,\cdots,\olock^L\}\times\cO_o)$ otherwise. Using the standard uniformity testing algorithm \citep{canonne2020survey}, we can distinguish between 
    \[
    \begin{aligned}
    H_0:&~ (h,s,a,\a) = (\hs,\ss,\acs,\as_{\hs+1:\hs+m-1}), \\
    H_1:&~ (h,s,a,\a) \neq (\hs,\ss,\acs,\as_{\hs+1:\hs+m-1})
    \end{aligned}
    \]
    with high probability using $N_1=\tbO{\sqrt{KL}/(\sigma^2\epsilon^2)}$ samples for every fixed $(h,s,a,\a)$. The total sample size needed in Stage 1 is thus $\abs{\Sl}H\abs{\Ac}A^{m-1}\times \tbO{\sqrt{KL}/(\sigma^2\epsilon^2)}$. 
    \item Stage 2: Once Stage 1 is completed, the algorithm can correctly identify the parameter $(\hs,\ss,\acs,\as_{\hs+1:\hs+m-1})$ (if $M\neq 0$) or find out $M=0$. In the latter case, the algorithm can directly terminate and output the optimal policy of $M = 0$. In the former case, the algorithm needs to continue to learn the password $\as_{\hs+m:H-1}$: 
    \begin{itemize}
        \item For each $h=\hs+m,\hs+2m,\cdots$: 
        \begin{itemize}
            \item For each $\a\in\cA^{m}$, test whether $\a=\as_{h:h+m-1}$ by spending $N_1$ episodes on visiting $o_{\hs}=\ss$, taking actions $a_{\hs:h+m}=(a,\as_{\hs+1:h-1},\a,\acrev)$, and observing $(o_{h+m},o_{h+m+1})$.
            \item By the same reason as in Stage 1 and by our choice that $N_1 = \tbO{\sqrt{KL}/(\sigma^2\epsilon^2)}$, we can learn $\as_{h:h+m-1}$ with high probability, using the standard uniformity testing algorithm.
        \end{itemize}
    \end{itemize}
    Once the algorithm learns the $M = (\hs,\ss,\acs,\as)$, it terminates and outputs the optimal policy of $M$. 
    The total sample size needed in Stage 2 is at most $A^{m}H\times\tbO{\sqrt{KL}/(\sigma^2\epsilon^2)}$ many samples. 
\end{enumerate}
To summarize, the brute-force algorithm $\fA$ we construct above can learn $\cM$ with sample size
\begin{align*}
    \abs{\Sl}H\abs{\Ac}A^{m-1}\times \tbO{\frac{\sqrt{KL}}{\sigma^2\epsilon^2}}+A^{m}H\times \tbO{\frac{\sqrt{KL}}{\sigma^2\epsilon^2}}\leq \tbO{\frac{S^{3/2}O^{1/2}A^{m}H}{\alpha^2\epsilon^2}},
\end{align*}
where the bound is by our choice of $\sigma, \Sl, \Ac, K, L$.

\end{document}